\def\obrace{\iftrue{\else}\fi}
\newcommand{\Ncal}{\mathcal{N}}
\newcommand{\newedit}{}
\newcommand{\neweedit}{}
\newcommand{\Tcal}{\mathcal{T}}
\newcommand{\ie}{\textit{i.e.}}
\newcommand{\eg}{\textit{e.g.}}
\newcommand{\RNum}[1]{\uppercase\expandafter{\romannumeral #1\relax}}
\DeclareMathOperator*{\argmax}{arg\,max}
\DeclareMathOperator*{\argmin}{arg\,min}
\newtheorem{theorem}{Theorem}
\newtheorem{lemma}{Lemma}
\newtheorem{corollary}{Corollary}
\newtheorem{assumption}{Assumption}
\newtheorem{proposition}{Proposition}
\newtheorem{definition}{Definition}
\newtheorem{remark}{Remark}
\newtheorem{example}{Example}
\def\Holder{{H\"{o}lder}}
\newcommand{\bU}{\mathbf{U}}
\newcommand{\Hcal}{\mathcal{H}}
\newcommand{\ts}{\textstyle}
\newcommand{\bW}{\mathbf{W}}
\newcommand{\bZ}{\mathbf{Z}}
\newcommand{\hP}{\mathbb{P}_n}
\newcommand{\E}{\mathbb{E}}
\newcommand{\pa}{\mathrm{\pa}}
\newcommand{\rI}{\mathrm{I}}
\newcommand{\dm}{\mathrm{REG}}
\newcommand{\rd}{\mathrm{d}}
\newcommand{\prns}[1]{\left(#1\right)}
\newcommand{\braces}[1]{\left\{#1\right\}}
\newcommand{\bracks}[1]{\left[#1\right]}
\newcommand{\abs}[1]{\left|#1\right|}
\newcommand{\Rl}{\mathbb{R}}
\newcommand{\R}[1]{\mathbb{R}^{#1}}
\newcommand{\epol}{\pi}
\newcommand{\DR}{\mathrm{DR}}
\newcommand{\Star}{\mathrm{star}}
\newcommand{\Rad}{\Rcal}
\newcommand{\ipw}{\mathrm{IPW}}
\newcommand{\DM}{\mathrm{REG}}
\newcommand{\dr}{\mathrm{DR}}
\newcommand{\magd}[1]{\left\|#1\right\|}
\newcommand{\Fcal}{\mathcal{F}}
\newcommand{\Scal}{\mathcal{S}}
\newcommand{\Acal}{\mathcal{A}}
\newcommand{\Abbb}{\mathbb{A}}
\newcommand{\Wbbb}{\mathbb{W}}
\newcommand{\Ib}{\mathbb{I}}
\newcommand{\Hbbb}{\mathbb{H}}
\newcommand{\Qbbb}{\mathbb{Q}}
\newcommand{\Ycal}{\mathcal{Y}}
\newcommand{\Xcal}{\mathcal{X}}
\newcommand{\Ucal}{\mathcal{U}}
\newcommand{\Wcal}{\mathcal{W}}
\newcommand{\RR}{\mathbb{R}}
\newcommand{\Qcal}{\mathcal{Q}}
\newcommand{\Dcal}{\mathcal{D}}
\newcommand{\Gcal}{\mathcal{G}}
\newcommand{\Rcal}{\mathcal{R}}
\newcommand{\Zcal}{\mathcal{Z}}
\renewcommand{\eqref}[1]{(\ref{#1})}
\newcommand{\RN}[1]{%
  \textup{\uppercase\expandafter{\romannumeral#1}}%
}
\def\boxit#1{\vbox{\hrule\hbox{\vrule\kern6pt\vbox{\kern6pt#1\kern6pt}\kern6pt\vrule}\hrule}}
\newcommand{\kibitz}[2]{\ifnum\Comments=1\textcolor{#1}{#2}\fi}
\newcommand{\Eb}[1]{\mathbb{E}\left[#1\right]}
\newcommand{\indic}[1]{\mathbb{I}\left[#1\right]}
\newcommand{\Prb}[1]{\mathbb{P}\left[#1\right]}
\newcommand{\obs}{\mathrm{obs}}
 \newenvironment{continuance}[2][Example]
  {\par\noindent\textbf{Example #2, Cont'd} (#1).}
  {\par}
\Crefname{assumption}{Assumption}{Assumptions}
\Crefname{enumerate}{condition}{conditions}
\newcommand{\edit}[1]{}
\newcommand*\samethanks[1][\value{footnote}]{\footnotemark[#1]}
\title{Causal Inference Under Unmeasured Confounding With Negative Controls: A Minimax Learning Approach}
\author{Nathan Kallus$^1$\thanks{Alphabetical order.}, 
~Xiaojie Mao$^2$\samethanks,
~Masatoshi Uehara$^1$\samethanks}
\date{
$^1$Cornell University; ~~ 
$^2$Tsinghua University.
}
\begin{document}
\maketitle

\begin{abstract}
We study the estimation of causal parameters when not all confounders are observed and instead negative controls are available. Recent work has shown how these can enable identification and efficient estimation via two so-called bridge functions. In this paper, we tackle the primary challenge to causal inference using negative controls: the identification and estimation of these bridge functions. 
{Previous work has relied on completeness conditions on these functions to identify the causal parameters and required uniqueness assumptions in estimation, and they also focused on  parametric estimation of bridge functions}. Instead, we provide a new identification strategy that avoids the completeness condition. And, we provide new estimators for these functions based on minimax learning formulations. These estimators accommodate general function classes such as Reproducing Kernel Hilbert Spaces and neural networks. We study finite-sample convergence results both for estimating bridge functions themselves and for the final estimation of the causal parameter under a variety of combinations of assumptions. {We avoid uniqueness conditions on the bridge functions as much as possible.}
\end{abstract}

\section{Introduction}\label{sec: intro}

Causal inference from observational data is a necessity in many fields where experimentation and randomization is limited. Even when experimentation is feasible, observational data can help support initial or supplementary investigations. Compared to experimental-intervention data, the key difficulty with observational data is confounding or endogeneity: correlations between observed actions $A$ and outcomes $Y$ that are not due to a causal relationship, as might be induced by common causes such as a healthy lifestyle leading to both selection into a pharmaceutical therapy and good health outcomes regardless of therapy.
A common identification strategy is to control for many baseline covariates and assume they fully account for all such common causes, termed unconfoundedness, ignorability, or exchangeability. However, in practice, it is dubious that all confounders are ever truly accounted for, casting doubt on any resulting conclusion.

When some confounders are unobserved,
an alternative identification strategy is to use \emph{negative controls} \citep{miao2018identifying,miao2018a,tchetgen2020introduction,CuiYifan2020Spci,deaner2021proxy,shi2020multiply}, which play a similar role as instrumental variables. Negative controls are observed covariates that have a more restricted relationship with the action and outcome: \emph{negative control actions} do not directly impact the outcome of interest and \emph{negative control outcomes} are not directly impacted by either the negative control actions or the primary action of interest. See \cref{fig:causal_dag} for a typical causal diagram. 
When these negative control variables are sufficiently informative about the unmeasured confounders, there exist the so-called \emph{bridge functions} that enable identification and estimation of causal quantities. These bridge functions are analogues to the propensity score functions and outcome regression functions one would use if all confounders were observed.

Learning these bridge functions, however, is a nontrivial task, as it no longer amounts to a regression problem as in the unconfounded case, which can be outsourced to standard machine learning methods. 
{\newedit Many previous works focus on parametric settings to simplify the estimation of bridge functions. 
For example, 
\cite{CuiYifan2020Spci} recently proposed a doubly robust approach that may attain semiparametric efficiency lower bound, but they focused on parametric estimates, which may be restrictive in practice.}
Moreover, {their analysis for the estimation relied on the bridge functions' \emph{unique} existence (although their causal effect identification does not need so), which may be dubious in practice and even refutable in many examples}. 
It also requires certain completeness assumptions to achieve identification.
{These conditions are similarly required for other previous works \citep{miao2018identifying,miao2018a,tchetgen2020introduction,deaner2021proxy,shi2020multiply}.}
In this paper, we tackle these practical challenges to estimation with negative controls by relaxing such uniqueness and completeness assumptions and introducing new minimax estimators for the bridge functions that are amenable to general function approximation.
We catalog a variety of settings in which the functions are learnable and estimation with negative controls is practically feasible.

Our contributions are:

\begin{itemize}
    \item %
    {We introduce a new identification strategy, which requires only  existence of certain bridge functions, but not any completeness conditions. 
    This complements previous identification strategies based on bridge functions as these previous ones all require some completeness conditions. }
    We also consider a more general setting than average-effect estimation, where we allow %
    actions to be possibly stochastic and the action space to be possibly continuous. 
    \item We propose new estimators for bridge functions (even if nonunique) by introducing an adversarial critic function and formulating the learning problem as a minimax game. Our minimax approach accommodates the use of 
    {\newedit many flexible function classes}
    such as Reproducing Kernel Hilbert Spaces (RKHS) and neural networks. 
    Then, by plugging these bridge-function estimators into different estimating equations, we derive our estimators.
    \item 
    {We provide finite-sample convergence results under a variety of different assumptions, generally without assuming unique bridge functions (see \cref{tab:summary}).}
    One important assumption is the well-specification of the hypothesis classes, which we call \emph{realizability}. Another assumption, which we call \emph{closedness}, ensures that the critic function classes in our minimax estimators are sufficiently rich.
    Depending on how much of these we are willing to assume, we obtain different convergence rates for the final estimator. One surprising result is that when we assume realizability of both bridge and critic function classes, our estimator is consistent even though the bridge functions themselves are not consistently estimated (see row \RNum{1} in  \cref{tab:summary}). {This result generally cannot be  obtained by standard analyses in the instrumental variable literature.
\citep{AiChunrong2003EEoM,newey2003instrumental,Hall05IV,LewisGreg2018AGMo,ZhangRui2020MMRf,DikkalaNishanth2020MEoC,ChernozhukovVictor2020AEoR,LiaoLuofeng2020PENE,NIPS2019_8615,MuandetKrikamol2019DIVR}. 
Hence, our finite sample analysis fully exploits special structures of 
double negative controls instead of just applying existing analysis of instrumental variables.     }
\end{itemize}

{The rest of the paper is organized as follows. \cref{sec: setup} introduces our setup. In \cref{sec:identity}, we establish our new identification result and the comparison to the previous works. In \cref{sec: reformuate-moment}, we introduce our estimation method. In \cref{sec: realizable}--\ref{sec: closedness}, we present the finite sample results of proposed estimators. This finite sample result is summarized as the end of \cref{sec: reformuate-moment}. In \cref{sec: literature}, we review related literature. In \cref{sec:experiment}, we present numerical results in simulation studies and a real data analysis. 
In \Cref{sec: comparison}, we compare our proposed identification strategy with existing ones in detail. 
All proofs are deferred to the appendix.
}

\begin{table}
\caption{\label{tab:summary} Convergence rates of different GACE estimators under different assumptions.
The identification assumptions in \Cref{lemma: relaxed-ipw-dm} are always assumed.
The sets $\Hbbb^{\obs}_0,\Qbbb^{\obs}_0$ denote the sets of observed bridge functions (see \cref{lemma: observed-bridge}). The function classes $\Hbbb,\Qbbb'$ are used to construct the estimator $\hat h$ 
and $\Qbbb,\Hbbb'$ to construct the estimator $\hat q$. 
{We here summarize our conclusions (ignoring polylogs) when all relevant function classes are nonparametric with log covering number at radius $\epsilon$ scaling as $\epsilon^{-\beta}$.}
``Est'' refers to which estimating equation is used for the final estimator (see \cref{sec: overview}).
``Sta" refers to whether the bridge function estimators use stabilizers (see \cref{sec: reformuate-moment}). 
``Uni'' refers to whether we assume bridge functions are unique. The projection operators $P_z,P_w,P_u$ are defined in \cref{eq: operator,eq:Pu}.
}
{\newcommand{\notecolsize}{0.205}\newcommand{\ratecolsize}{0.105}\small\begin{tabular}{m{0.015\textwidth}m{0.455\textwidth}m{0.04\textwidth}m{\ratecolsize\textwidth}m{0.0125\textwidth}m{0.0125\textwidth}m{\notecolsize\textwidth}}\toprule
         & Main Assumptions  & {Est}
         & Rate wrt $n$ &  Sta &  Uni & ~Notes   \\ \midrule
        \multirow{4}{*}{\RNum{1}}  &  {$\Qbbb^{\obs}_0 \cap \Qbbb \neq \emptyset ,\Hbbb^{\obs}_0 \cap \Hbbb' \neq \emptyset$}  & IPW  &   \multirow{4}{*}{{$n^{-\min(\frac{1}{2},
        \beta%
        )}$}} &  \multirow{4}{*}{N}  &  \multirow{4}{*}{N}  &  \multirow{4}{\notecolsize\textwidth}{$\hat h, \hat q$ need not converge to any point in $\Hbbb^{\obs}_0,\Qbbb^{\obs}_0$}   \\  \cline{2-3}
        &  {$\Hbbb^{\obs}_0 \cap \Hbbb \neq \emptyset,\epol\Qbbb^{\obs}_0 \cap \Qbbb' \neq \emptyset$}  & REG  \\  \cline{2-3}
        &  {$\Qbbb^{\obs}_0\cap \Qbbb\neq \emptyset,\Hbbb^{\obs}_0\cap \{h: h-\Hbbb \subseteq \Hbbb'\}\neq \emptyset$} &  DR  \\\cline{2-3}
        & {$\Hbbb^{\obs}_0\cap \Hbbb\neq \emptyset,\Qbbb^{\obs}_0\cap \{q: \epol(q-\Qbbb)\subseteq \Qbbb'\}\neq \emptyset$} & DR \\
        \hline 
        \multirow{2}{*}{\RNum{2}} &   \multirow{2}{*}{$\Qbbb^{\obs}_0\cap \Qbbb\neq \emptyset,\epol P_w(\Qbbb-\Qbbb^{\obs}_{0})\subseteq\Hbbb'$} &  \multirow{2}{*}{\rotatebox{90}{IPW~}\,\,\rotatebox{90}{/\,DR~}}  &  $n^
        {-\frac{\beta}{2\beta+1}}%
        $  &   Y  &  \multirow{2}{*}{N} &    \multirow{2}{\notecolsize\textwidth}{$\|\epol P_w(\hat q-q_0)\|_2 \to 0$ for any $q_0\in \Qbbb^{\obs}_0$}    \\  \cline{4-5} & & &  $n^{-\min(\frac{1}{4},
        \frac{\beta}{2}%
        )}$& N  \\ \hline
      \multirow{2}{*}{\RNum{3}} &   \multirow{2}{*}{$\Hbbb^{\obs}_0\cap \Hbbb\neq \emptyset,P_z(\Hbbb-\Hbbb^{\obs}_{0})\subseteq \Qbbb'$} &  \multirow{2}{*}{\rotatebox{90}{REG~}\,\,\rotatebox{90}{/\,DR~}} & $
     n^
     {-\frac{\beta}{2\beta+1}}%
     $   &  Y & \multirow{2}{*}{N} &  \multirow{2}{\notecolsize\textwidth}{$\|P_z(\hat h-h_0)\|_2 \to 0$ for any $h_0\in \Hbbb^{\obs}_0$ 
     }  \\   \cline{4-5}
      &  &  &$n^{-\min(\frac{1}{4},
      \frac{\beta}{2}%
      )}$ & N  
     \\  \hline
      \multirow{6}{*}{\RNum{4}} &  {Assumptions in rows \RNum{2} and \RNum{3},} & \multirow{3}{*}{DR} & \multirow{3}{\ratecolsize\textwidth}{{$\max(n^{-\frac{1}{2}},\break\phantom.~~\tau^2_1 n^{-
      \frac{2\beta}{2\beta+1}%
      })$}} & \multirow{3}{*}{Y} & \multirow{3}{*}{N} & \multirow{3}{\notecolsize\textwidth}{Faster than row \RNum{1} if $\tau^{2}_{1}=o(n^{
      \frac{\beta\prns{1-2\beta}}{2\beta+1}%
      })$} \\
      & {$\|P_u(\hat q\epol-q_0\epol)\|_2\leq \tau_1\|P_w(\hat q\epol-q_0\epol)\|_2$,} \\
      & {$\|P_u(\hat h-h_0)\|_2\leq \tau_1\|P_z(\hat h-h_0)\|_2$} \\
      \cline{2-7}
      & {Assumptions in rows \RNum{2} and \RNum{3},} & \multirow{3}{*}{DR} & \multirow{3}{\ratecolsize\textwidth}{{$\max(n^{-\frac{1}{2}},\break\phantom.~~\tau_2 n^{-
      \frac{2\beta}{2\beta+1}%
      })$}}  &  \multirow{3}{*}{Y} & \multirow{3}{*}{Y}  & \multirow{3}{\notecolsize\textwidth}{Achieves efficiency if
     $\tau_2=o(n^{
     \frac{2\beta - 1}{4\beta+2}%
     })$} \\
     & {$\|\hat q\epol-q_0\epol\|_2\leq \tau_2\|P_w(\hat q\epol-q_0\epol)\|_2$,} \\
     & {$\|\hat h-h_0\|_2\leq \tau_2\|P_z(\hat h-h_0)\|_2$}\\
     \bottomrule
    \end{tabular}}
\end{table}

\section{Setup}\label{sec: setup}

We consider an action $A \in \Acal$ that can be discrete or continuous.
We associate $\Acal$ with a base measure $\mu$, \eg, the counting measure if $\Acal$ is finite or Lebesgue measure if $\Acal$ is continuous.
Let $Y(a)$ denote { the real-valued counterfactual outcome} that would be observed if the
action were set to $a \in \Acal$ and $Y = Y(A)$ be the observed outcome corresponding to the actually observed action. Moreover, let $X \in \Xcal \subseteq \R{d}$ be a collection of observed covariates.
For a given contrast function $\epol:\Acal\times\Xcal\mapsto\mathbb R$, 
we are interested in estimating the \emph{generalized average causal effect} (GACE):
\begin{align}\label{eq:estimand}
    J=\E\bracks{\int Y(a)\epol(a\mid X)\rd\mu(a)}. 
\end{align}

\begin{example}[Average treatment effect]\label{ex: atomic}
Consider $\Acal = \braces{0, 1}$. 
The counterfactual mean parameter $\Eb{Y(a_0)}$ for $a_0 \in \braces{0, 1}$ is an example of \cref{eq:estimand} with $\pi\prns{a \mid x} = \indic{a = a_0}$. 
If we are interested in the effect of ``treatment'' $A=1$ compared to ``control'' $A=0$, we can let $\epol(a\mid x)=2a-1$ and obtain $J=\E[Y(1)-Y(0)]$, the average treatment effect (ATE).
\end{example}

\begin{example}[Policy evaluation]
If $\epol(a\mid x)$ is a density on $\Acal$ for each $x$ with respect to (w.r.t.) $\mu$, then $J$ is the average outcome we experience when we follow the policy that assigns an action drawn from $\epol(\cdot\mid X)$ for an individual with covariates $X$ \citep[\eg, ][]{DudikMiroslav2014DRPE,Tian2008}. The measure $\mu$ can be the Lebesgue measure when the action space is continuous or the counting measure when the action space is discrete.
\end{example}

We \emph{do not} assume that the observed covariates $X$ include all confounders that affect both the action and the potential outcomes, and instead there exist some \emph{unmeasured} confounders $U \in \Ucal \subseteq \R{p_u}$  (discrete, continuous, or mixed):
\begin{align*}
    Y(a)\not\perp A \mid X, ~ \text{ but } ~ Y(a)\perp A \mid U,X.
\end{align*}
If $U$ were observed, we could identify the GACE $J$ simply by controlling for both $X,U$.
However, in this paper we assume that 
confounders $U$ \emph{cannot} be observed, in which case, the GACE $J$ is generally \emph{unidentifiable} from the distribution of the observed variables $(Y, X, A)$ alone.
To overcome the challenge of unmeasured confounding, in this paper we employ the negative control framework proposed in \citet{miao2018identifying,CuiYifan2020Spci,deaner2021proxy}. This framework involves two additional types of observed variables: negative control actions $Z \in \Zcal \subseteq \R{p_z}$ and negative control outcomes $W \in \Wcal \subseteq \R{p_w}$, which can be discrete, continuous, or mixed.
These variables are called ``negative'' controls due to the assumed absence of certain causal effects: negative control actions cannot directly affect the outcome $Y$, and neither the negative control actions $Z$ nor the primary action $A$ can affect the negative control outcomes $W$. 
Meanwhile, these variables are still relevant control variables as they are related to the unmeasured confounders. We can, in a sense, view them as proxies for the unmeasured confounders $U$.

\cref{fig:causal_dag} shows a typical causal diagram for this setting. To formalize our setting and allow for more generality, however, we will employ potential outcome notation.
Let
$Y(a, z)$ and $W(a, z)$ denote the corresponding counterfactual outcomes one would observe had
the primary action and negative control action taken value $(a, z) \in \Acal\times\Zcal$. We then formalize the negative control assumptions as follows. 

\begin{assumption}[Negative Controls]\label{asm:whole_assm}
\begin{enumerate}
 \item \label{whole_assm:consistency} {Consistency: $Y=Y(A,Z),W=W(A,Z)$.}
    \item \label{whole_assm:nc-action} Negative control actions: $Y(a, z) = Y(a)$, $\forall a \in \Acal$.
    \item \label{whole_assm:nc-outcome} Negative control outcomes: $W(a, z) = W$, $\forall a \in \Acal, z \in \Zcal$.
    \item \label{whole_assm:nc-unconfoundedness} Latent unconfoundedness:  $(A,Z)\perp (Y(a),W) \mid U,X, ~~ \forall a\in \Acal$. 
    \item \label{whole_assm:nc-overalp} Overlap: $\abs{\epol(a|x)/f(a|x,u)}<\infty,~~\forall a\in\Acal, x\in\Xcal, u\in\Ucal$.
\end{enumerate}
\end{assumption}
Here condition \ref{whole_assm:consistency} encodes SUTVA \citep{imbens2015causal}.
Conditions \ref{whole_assm:nc-action} and \ref{whole_assm:nc-outcome} paraphrase the definition of negative controls in terms of the potential outcome notation: the negative control action $Z$ cannot affect the primary outcome $Y$, and the negative control outcome $W$ cannot be affected by either the primary action $A$ or the negative control actions $Z$.  %
Condition \ref{whole_assm:nc-unconfoundedness} formalizes the assumption that the unmeasured variables $U$ capture all common
causes of $(A, Z)$ and $(Y, W)$ not included in $X$.
Condition \ref{whole_assm:nc-unconfoundedness} may be also satisfied by causal diagrams other than \cref{fig:causal_dag} (see table A.1 of \citealp{tchetgen2020introduction}).
Condition \ref{whole_assm:nc-overalp} requires sufficient overlap between the contrast function $\epol$ and the distribution of observed actions given both observed and unobserved confounders. This is a canonical assumption in causal inference and policy evaluation. 

Our data consist of $n$ independent and identically distributed (iid) observations of $(Z,X,W,A,Y)$. Crucially, $U$ is \emph{unobserved}. Our aim is to estimate the GACE parameter $J$ from these data.

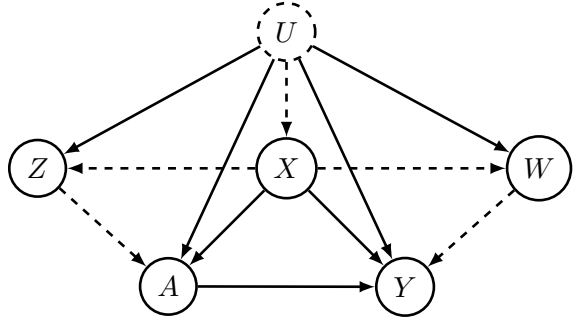
\begin{figure}
    \centering
\begin{tikzpicture}
\node[draw, circle, text centered, minimum size=0.75cm, line width= 1] (x) {$X$};
\node[draw, circle, above=1 of x,
text centered, minimum size=0.75cm, dashed,line width= 1] (u) {$U$};
\node[draw, circle,left=2.5 of x, minimum size=0.75cm, text centered,line width= 1] (z) {$Z$};
\node[draw, circle,right=2.5 of x, minimum size=0.75cm, text centered,line width= 1] (w) {$W$};
\node[draw, circle, below left = 1 and 1 of x, minimum size=0.75cm, text centered,line width= 1] (a) {$A$};
\node[draw, circle, below right = 1 and 1 of x, minimum size=0.75cm, text centered,line width= 1] (y) {$Y$};

\draw[-latex, line width= 1] (x) -- (a);
\draw[-latex, line width= 1] (x) -- (y);
\draw[-latex, line width= 1, dashed] (x) -- (z);
\draw[-latex, line width= 1, dashed] (x) -- (w);
\draw[-latex, line width= 1, dashed] (u) -- (x);
\draw[-latex, line width= 1] (u) -- (z);
\draw[-latex, line width= 1] (u) -- (w);
\draw[-latex, line width= 1] (u) -- (a);
\draw[-latex, line width= 1] (u) -- (y);
\draw[-latex, line width= 1] (a) -- (y);
\draw[-latex, line width= 1, dashed] (z) -- (a);
\draw[-latex, line width= 1, dashed] (w) -- (y);
\end{tikzpicture}
\caption{A typical causal diagram for negative controls. The dashed edges may be absent, and the dashed circle around $U$ indicates that $U$ is unobserved.}
    \label{fig:causal_dag}
\end{figure}

\paragraph*{Notation}
We let $\E$ denote expectations {w.r.t. $O=(Z,X,W,A,Y)$}, and $\E_n$ denote empirical average over the $n$ observations thereof.
For a function $g$ of $(z,x,w,a,y)$ (or a subset thereof) we often write $g$ to mean the random variable $g(O)$.
We let $L_2(O)$ denote the space of square-integrable functions of $O$. Similarly, $L_2(W, A, X)$ and $L_2(Z, A, X)$ denote the space of square-integrable functions of just $W,A,X$ and $Z,A,X$, respectively.
For a function $g$, we let $\magd g_2$ denote the norm in these spaces. For a vector $\theta$, we let $\magd\theta$ denote the Euclidean norm. We let $\magd{\cdot}_\infty$ denote the sup norm of either a variable or function, and for a class of functions we let it denote the supremum of sup norms in the class. For subsets $\mathbb{A},\mathbb{B}$ of a field, we define $\mathbb{A+B}=\{a+b:a\in \mathbb{A},b\in \mathbb{B}\},\mathbb{AB}=\{ab:a\in \mathbb{A},b \in \mathbb{B}\}$. We call a set $S$ %
symmetric if $-s\in S$ $\forall s\in S$, and we call $S$ star-shaped (around the origin) if $\alpha s\in S$ $\forall s\in S, \alpha\in [0,1]$.  Finally, we often use $O(\cdot)$ notation to denote rates w.r.t. $n$, unless otherwise specifically indicated. For a matrix $A$, we denote its Moore Penrose inverse by  $A^{+}$.

\section{Identifying GACE via Bridge Functions}\label{sec:identity}

\subsection{The Ideal Unconfounded Setting}

If the unobserved confounders $U$ \emph{were} observed, then the GACE could be identified, that is, it can be written as a function of the distribution of $(Y, A, X, U)$. To illustrate this, define the regression function $k_0(a, u, x) = \Eb{Y \mid A = a,  U = u, X = x}$, and define the generalized propensity score $f(a \mid u, x)$ as the conditional density of the distribution $A \mid U, X$ relative to the base measure $\mu$ \citep{hirano2005the}. Based on these two functions, the following lemma shows the identification of $J$ if $U$ \emph{were} observed. 
\begin{lemma}\label{lem: U-identification}
If $Y(a)\perp A \mid U,X$ and 
{$\abs{\epol(A\mid X)/f(A\mid X,U)} < \infty$},
then 
\begin{align*}\
    &J = \E{\phi_{\ipw}(Y, A, U, X; f)} = \E{\phi_{\DM}(Y, A, U, X; k_0)} = \E{\phi_{\dr}(Y, A, U, X; k_0, f)}, \\
    &\textstyle\text{where } ~ 
    \phi_{\ipw}(y, a, u, x; f) = \frac{\epol(a|x)}{f(a|x,u)}y,\quad \phi_{\DM}(y, a, u, x; k_0) = \int k_0(a', u, x)\epol(a'|x)\rd \mu(a'),\\
    &\textstyle\phantom{\text{where } ~} \phi_\dr(y, a, u, x; k_0, f) = \frac{\epol(a|x)}{f(a|x,u)}\prns{y-k_0(a, u, x)} + \int k_0(a', u, x)\epol(a'|x)\rd \mu(a').
\end{align*}
\end{lemma}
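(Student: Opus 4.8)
The plan is to prove all three identities by the same two-step argument: condition on the pair $(U,X)$, use latent unconfoundedness together with consistency to replace the counterfactual outcome $Y(a)$ by the regression $k_0(a,U,X)$, and then integrate $A$ out against the generalized propensity score via the disintegration identity $\Eb{g(A)\mid U=u,X=x}=\int g(a)f(a\mid u,x)\rd\mu(a)$.

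For the regression formulation, starting from \eqref{eq:estimand} and applying the tower property together with Fubini--Tonelli (justified by the implicit integrability of the estimand) gives $J=\E\bracks{\int \Eb{Y(a)\mid U,X}\epol(a\mid X)\rd\mu(a)}$. The conditional-independence hypothesis $Y(a)\perp A\mid U,X$ yields $\Eb{Y(a)\mid U,X}=\Eb{Y(a)\mid A=a,U,X}$, and consistency $Y=Y(A)$ turns this into $\Eb{Y\mid A=a,U,X}=k_0(a,U,X)$; substituting proves $J=\Eb{\phi_{\DM}(Y,A,U,X;k_0)}$. For the IPW formulation I would evaluate $\Eb{\phi_{\ipw}\mid U,X}$: first the tower property over $A$ replaces $Y$ by $k_0(A,U,X)$, and then the disintegration identity---legitimate because the overlap condition $\abs{\epol(a\mid x)/f(a\mid u,x)}<\infty$ keeps the integrand finite---cancels the propensity factors to give $\Eb{\phi_{\ipw}\mid U,X}=\int \epol(a\mid X)k_0(a,U,X)\rd\mu(a)$, which is exactly the integrand defining $\phi_{\DM}$; taking one more expectation and using the regression identity gives $J=\Eb{\phi_{\ipw}}$.

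The doubly robust formulation then follows by linearity once the correction term is shown to have mean zero, $\Eb{\frac{\epol(A\mid X)}{f(A\mid U,X)}\prns{Y-k_0(A,U,X)}}=0$. Conditioning on $(U,X)$ and applying the same disintegration, the integrand becomes $\epol(a\mid X)\prns{\Eb{Y\mid A=a,U,X}-k_0(a,U,X)}$, which vanishes for $\mu$-almost every $a$ by the very definition of $k_0$; hence $\Eb{\phi_\dr}=0+\Eb{\int k_0(a,U,X)\epol(a\mid X)\rd\mu(a)}=J$.

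I expect the only real difficulty to be measure-theoretic rather than conceptual: one must ensure the relevant integrands are jointly measurable and integrable so that Fubini--Tonelli and the tower property can be freely interchanged when $\Acal$ is a general space equipped with base measure $\mu$, and one must invoke the existence of the generalized propensity score as a regular conditional density of $A\mid U,X$ with respect to $\mu$ (in the sense of \citealp{hirano2005the}). Once these standard technical points are granted, the computation above is routine.
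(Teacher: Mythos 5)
Your proposal is correct and follows essentially the same route as the paper's proof: tower property plus $Y(a)\perp A\mid U,X$ plus consistency for the regression form, the disintegration of $\E[\cdot\mid U,X]$ against $f(a\mid u,x)\,\rd\mu(a)$ to pass between the IPW and regression forms (you compute $\Eb{\phi_{\ipw}\mid U,X}$ forward whereas the paper runs the same cancellation in the reverse direction starting from $J$), and the observation that the DR correction term has conditional mean zero by the definition of $k_0$. The measure-theoretic caveats you flag are glossed over in the paper but do not change the argument.
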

\cref{lem: U-identification} suggests estimators for $J$ if $U$ were observed: we can first estimate the nuisance functions $k_0(a, u, x)$ and/or $f(a \mid u, x)$, and then estimate $J$ by using any of the three estimating equations above with the estimated nuisance(s). The resulting three estimators are called the inverse propensity weighting (IPW) estimator, the regression-based (REG) estimator, and the doubly robust (DR) estimator, respectively \citep[\eg, ][]{robins94,DudikMiroslav2014DRPE}.

\subsection{The Negative-Control Setting} 

However, in this paper, we deal with the setting where $U$ is \emph{unobserved}, so estimators above are infeasible.
In particular, neither $k_0(a, u, x)$ nor $f(a \mid u, x)$ can be identified. Instead, 
{\newedit we can use their negative control analogues called \emph{bridge functions} \citep{miao2018a,CuiYifan2020Spci}.}
\begin{definition}[Bridge functions]\label{def: bridge}
An outcome bridge function is $h_0  \in L_2(W, A, X)$ with 
\begin{align}
\E[h_0(W,A,X)\mid A,U,X]&=k_0(A, U, X).\label{eq: bridge-U-h}
\end{align}
An action bridge function is $q_0$ with $\pi q_0 \in L_2(Z, A, X)$ and 
\begin{align}
\E[\epol(A\mid X)q_0(Z, A, X) \mid A,U, X]&= \frac{\epol(A\mid X)}{f(A\mid U, X)}.\label{eq: bridge-U-q}
\end{align}
\end{definition}
From \Cref{def: bridge}, we can observe that an outcome bridge function $h_0$  and an action brdige function $q_0$ can play a  similar role as the regression function $k_0$ and the generalized propensity score $f$, respectively (see \cref{lem:bridge-identification} below). 
Theses bridge functions are not necessarily unique. 
We thus define the sets of bridge functions as follows: 
\begin{equation}\label{eq:bridgefunctionsets}\begin{aligned}
    \Hbbb_0 
        &= \braces{h \in L_2(W, A, X): \E[Y-h(W,A,X)\mid A,U,X]=0}, \\
    \Qbbb_0 &=
      \{q:\pi q \in L_2(Z, A, X),  \\
      &\qquad\quad \E[\epol(A\mid X)\prns{q(Z, A, X)-1/f(A\mid U, X)} \mid A,U, X]=0\}.
\end{aligned}\end{equation}

The existence of bridge functions depends on the relationship between $(Y, Z, W)$ and the unmeasured confounders $U$. Generally, such bridge functions exist when the negative control proxies $Z, W$ are \emph{sufficiently informative} about the unmeasured confounders $U$.   

\begin{example}[Discrete setting] \label{ex: bridge-discrete}
Suppose the variables $W, Z, U$ are all discrete variables with  values $w_i, z_j, u_s$ for $i = 1, \dots, \abs{\Wcal}, j = 1, \dots, \abs{\Zcal}, s = 1, \dots, \abs{\Ucal}$. For any $(a,x)\in \Acal\times \Xcal$, let $P(\mathbf{W} \mid \mathbf{U}, a, x)$ denote a $\abs{\Wcal}\times\abs{\Ucal}$ matrix whose $(i, s)$th element is $\Prb{W = w_i \mid U = u_s, A = a, X = x}$, $P(\mathbf{Z} \mid \mathbf{U}, a, x)$ a $\abs{\Zcal}\times\abs{\Ucal}$ matrix whose $(j, s)$th element is $\Prb{Z = z_j \mid U = u_s, A = a, X = x}$, $\E\bracks{Y \mid \mathbf{U}, a, x}$ a $1 \times \abs{\Ucal}$ vector whose $s$th element is $\Eb{Y\mid U = u_s, A=a, X = x}$,  $F(a\mid \mathbf{U}, x)$ a $\abs{\Ucal}\times \abs{\Ucal}$ diagonal matrix whose $s$th diagonal element is $f(a \mid u_s, x)$, and $\mathbf{e}$ an all-one column vector of length $\abs{\Ucal}$. 

With these notations, \cref{eq: bridge-U-h,eq: bridge-U-q} translate into the following linear equation system:
\begin{equation}\label{eq:requirement}
\begin{aligned}
&h^\top_0(\mathbf{W}, a, x)P(\mathbf{W} \mid \mathbf{U}, a, x) = \E\bracks{Y \mid \mathbf{U}, a, x}, \\
&q_0^\top(\mathbf{Z}, a, x)P(\mathbf{Z} \mid \mathbf{U}, a, x)F(a \mid \mathbf{U}, x) = \mathbf{e}^\top.
\end{aligned}
\end{equation}
It is easy to show that if $P(\mathbf{W} \mid \mathbf{U}, a, x)$ and $P(\mathbf{Z} \mid \mathbf{U}, a, x)$ have full column rank (which implies that $\abs{\Wcal} \ge \abs{\Ucal}$ and $\abs{\Zcal} \ge \abs{\Ucal}$) and $f(a\mid u, x) > 0$ for any $u \in \Ucal$, then the linear equation systems above have solutions, that is, the bridge functions exist.
However, the solutions are generally {nonunique}. 
If we were to further assume that $\abs{\Wcal} = \abs{\Zcal}=  \abs{\Ucal}$, $P(\mathbf{W} \mid \mathbf{U}, a, x)$ and $P(\mathbf{Z} \mid \mathbf{U}, a, x)$ are invertible square matrices, then the bridge functions would be unique. 
\end{example}

\begin{example}\label{ex: bridge-nonparametric}
More generally, when the relevant variables are continuous %
we need to otherwise ensure the existence of solutions to conditional moment equations in \cref{eq: bridge-U-h,eq: bridge-U-q}. 
Following \citet{miao2018identifying}, we show in \cref{sec: completeness-existence} that under some additional regularity conditions,
the existence of solutions to \cref{eq: bridge-U-h,eq: bridge-U-q} can be ensured by the completeness conditions below: for any $g(U, A, X) \in L_2(U, A, X)$, 
\begin{align}\label{eq:completeness}
    &\Eb{g(U, A, X) \mid Z, A, X} = 0 \text{ only when } g(U, A, X) = 0,  \\
    &\Eb{g(U, A, X) \mid W, A, X} = 0 \text{ only when } g(U, A, X) = 0.\label{eq:completeness2}
\end{align}
These completeness conditions  mean that the negative controls $Z, W$ have sufficient variability relative to the variability
of the unobserved confounders $U$.
In this paper, we will
explicitly rely on the minimal assumption of the existence of bridge functions, rather than such stronger completeness and regularity conditions that might imply their existence.
\end{example}

{\cref{ex: bridge-discrete} illustrates that bridge functions, besides existing, are nonunique if the negative control proxies carry more information than the unmeasured confounders, namely, when the negative controls have more values than the unmeasured confounders. 
In \cref{sec:nonunique}, we show a similar phenomenon of nonunique bridge functions in linear models where the dimension of negative controls exceeds the dimension of the unmeasured confounders. 
As unmeasured confounders are unobserved in practice, we may tend to use as many negative control variables as possible to safeguard the existence of bridge functions. But this may also cause bridge functions to be nonunique. 
{Therefore, assuming uniqueness may often be too strong. Our paper avoids the uniqueness assumption as much as possible and imposes  uniqueness only when it is needed to derive stronger theoretical guarantees.}

{The lemma below shows that {even} when bridge functions are nonunique, \emph{any} one of them can identify $J$.}
We first  define an operator for the lemma below: 
$$\Tcal: L_2(W, A, X) \rightarrow L_2(W, X),\quad(\Tcal h)(w, x) = \int h(w, a, x)\epol(a|x)\rd \mu(a).$$\vspace{-2\baselineskip}
\begin{lemma}\label{lem:bridge-identification} Suppose that \cref{asm:whole_assm} holds. 
For any $h_0 \in \Hbbb_0$ and $q_0 \in \Qbbb_0$, 
\begin{align*}
&J = \E{\tilde\phi_\ipw(O;q_0)} = \E{\tilde\phi_\DM(O;h_0)} = \E{\tilde\phi_\dr(O;h_0,q_0)}, \\
&\text{where }~  \tilde\phi_\ipw(O;q_0) =  \epol(A\mid X)q_0(Z, A, X)Y, \quad\tilde\phi_\DM(O;h_0) = (\Tcal h_0)(X,W),\\
&\phantom{\text{where }~} \tilde\phi_\dr(O;h_0,q_0) = {\epol(A\mid X)q_0(Z, A, X)\prns{Y - h_0(W, A, X)}} + {(\Tcal h_0)(W, X)}.
\end{align*}
\end{lemma}

Note that the estimating equations in \cref{lem:bridge-identification} simply replace the regression function $k_0(A, X, U)$ and the inverse propensity score weight $1/f(A\mid X, U)$ in \cref{lem: U-identification} by the bridge functions $h_0(W, A, X)$ and $q_0(Z, A, X)$ respectively. Since the latter only depends on observed variables, as long as we can learn \emph{any} pair of bridge functions, we can use estimating equations in \cref{lem:bridge-identification} to  estimate $J$.

\subsection{Learning Bridge Functions from Observed Data}
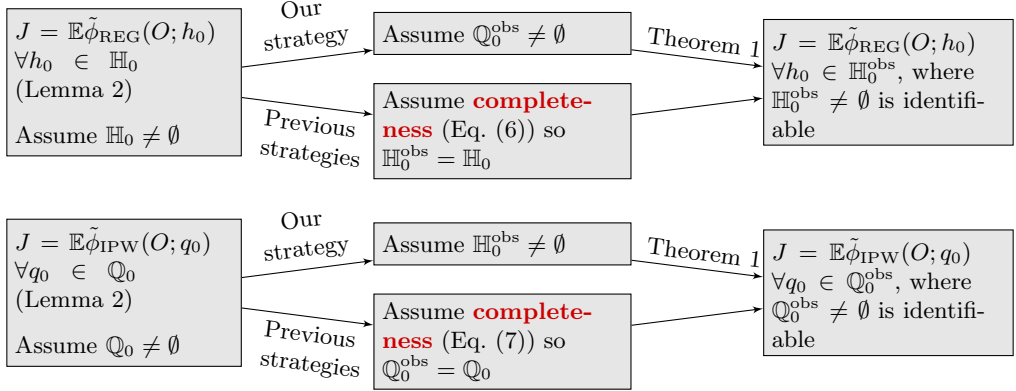
\begin{figure}
    \centering\footnotesize
    \begin{tikzpicture}[>=latex']
\node[rectangle, draw, fill=gray!20, text width=2.9cm] (a1) at (0,0)
{$J=\E{\tilde\phi_\DM(O;h_0)}$\break$\forall h_0\in\Hbbb_0$ (\cref{lem:bridge-identification})\break~\vspace{-0.5em}\break Assume $\Hbbb_0\neq\emptyset$};
\node[rectangle, draw, fill=gray!20, text width=3.2cm,right=1.75cm of a1, yshift=.65cm] (a21) 
{Assume $\Qbbb^{\obs}_0\neq\emptyset$};
\node[rectangle, draw, fill=gray!20, text width=3.2cm,right=1.75cm of a1, yshift=-.65cm] (a22) 
{Assume {\textcolor{red!80!black}{\bf completeness}} (\cref{eq:completeness}) so $\Hbbb^{\obs}_0=\Hbbb_0$};
\node[rectangle, draw, fill=gray!20, text width=3.1cm,right=1.75cm of a21, yshift=-.65cm] (a3) 
{$J=\E{\tilde\phi_\DM(O;h_0)}$\break$\forall h_0\in\Hbbb_0^{\obs}$, where $\Hbbb_0^{\obs}\neq\emptyset$ is identifiable};
\draw[->] (a1)--(a21) node[midway,above,text width=1.1cm,align=center,sloped] {Our\break strategy};
\draw[->] (a1)--(a22) node[midway,below,text width=1.1cm,align=center,sloped] {Previous\break strategies};
\draw[->] (a21)--(a3) node[midway,above,text width=1.4cm,align=center,sloped] {\cref{lemma: relaxed-ipw-dm}};
\draw[->] (a22)--(a3);
\node[rectangle, draw, fill=gray!20, text width=2.9cm] (b1) at (0,-2.8)
{$J=\E{\tilde\phi_\ipw(O;q_0)}$\break$\forall q_0\in\Qbbb_0$ (\cref{lem:bridge-identification})\break~\vspace{-0.5em}\break Assume $\Qbbb_0\neq\emptyset$};
\node[rectangle, draw, fill=gray!20, text width=3.2cm,right=1.75cm of b1, yshift=.65cm] (b21) 
{Assume $\Hbbb^{\obs}_0\neq\emptyset$};
\node[rectangle, draw, fill=gray!20, text width=3.2cm,right=1.75cm of b1, yshift=-.65cm] (b22) 
{Assume {\textcolor{red!80!black}{\bf completeness}} (\cref{eq:completeness2}) so $\Qbbb^{\obs}_0=\Qbbb_0$};
\node[rectangle, draw, fill=gray!20, text width=3.1cm,right=1.75cm of b21, yshift=-.65cm] (b3) 
{$J=\E{\tilde\phi_\ipw(O;q_0)}$\break$\forall q_0\in\Qbbb_0^{\obs}$, where $\Qbbb_0^{\obs}\neq\emptyset$ is identifiable};
\draw[->] (b1)--(b21) node[midway,above,text width=1.1cm,align=center,sloped] {Our\break strategy};
\draw[->] (b1)--(b22) node[midway,below,text width=1.1cm,align=center,sloped] {Previous\break strategies};
\draw[->] (b21)--(b3) node[midway,above,text width=1.4cm,align=center,sloped] {\cref{lemma: relaxed-ipw-dm}};
\draw[->] (b22)--(b3);
    \end{tikzpicture}
    \caption{Different identification strategies.  
    See relevant discussions below \cref{lemma: relaxed-ipw-dm} and in \cref{sec: comparison,sec: completeness,app: single-bridge}.}
    \label{fig:my_label}
\end{figure}

\Cref{def: bridge} defines bridge functions in terms of conditional moment equations\footnote{\Cref{eq: bridge-U-q,eq: observed-bridge-q} do not exactly fall into the usual conditional moment equation framework \citep[\eg, ][]{AiChunrong2003EEoM}, since they involve unknown density functions $f(A \mid U, X), f(A \mid W, X)$, respectively. We still call them conditional moment equations for simplicity, but estimating $q_0$ does require more care. See \cref{sec: strategy1}.} 
given the unobserved confounders $U$, so we cannot directly use it  to learn the bridge functions from the observed data. 
Nevertheless, the following lemma shows that the bridge functions also satisfy analogous conditional moment equations based only on observed data. 
\begin{lemma}\label{lemma: observed-bridge}
Under \cref{asm:whole_assm}, any $h_0 \in \Hbbb_0$ and $q_0 \in \Qbbb_0$ satisfy 
\begin{align}
    \E[Y-h_0(W, A,X) \mid Z, A, X]&=0, \label{eq: observed-bridge-h}\\ 
    \E[\epol(A\mid X)\prns{q_0(Z, A, X) - 1/f(A \mid W, X)} \mid W, A, X] &= 0.\label{eq: observed-bridge-q}
\end{align}
\end{lemma}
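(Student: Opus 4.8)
The plan is to start from the ``latent'' moment conditions that membership in $\Hbbb_0$ and $\Qbbb_0$ already encodes and to transport them from conditioning on the unobserved $U$ to conditioning on observed variables only, using the tower property together with the conditional independences built into \cref{asm:whole_assm}. Concretely, $h_0\in\Hbbb_0$ and $q_0\in\Qbbb_0$ mean precisely that $\E[Y-h_0(W,A,X)\mid A,U,X]=0$ and $\E[\epol(A\mid X)q_0(Z,A,X)\mid A,U,X]=\epol(A\mid X)/f(A\mid U,X)$ (the latter being \cref{eq: bridge-U-q}, and the former being \cref{eq: bridge-U-h} rewritten via $k_0(A,U,X)=\E[Y\mid A,U,X]$).

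For \cref{eq: observed-bridge-h}, I would write $\E[Y-h_0(W,A,X)\mid Z,A,X]=\E[\,\E[Y-h_0(W,A,X)\mid Z,A,U,X]\mid Z,A,X\,]$ and argue that the inner conditional expectation equals $\E[Y-h_0(W,A,X)\mid A,U,X]=0$. This reduction uses \cref{asm:whole_assm} twice: (i) consistency plus the negative-control-action condition give $Y=Y(A)$, and latent unconfoundedness $(Z,A)\perp(Y(a),W)\mid U,X$ then forces $\E[Y\mid Z,A,U,X]=\E[Y\mid A,U,X]$; (ii) the same latent unconfoundedness implies $(Z,A)\perp W\mid U,X$, so the conditional law of $W$ given $(Z,A,U,X)$ does not depend on $(Z,A)$, hence $\E[h_0(W,A,X)\mid Z,A,U,X]=\E[h_0(W,A,X)\mid A,U,X]$. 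Taking the outer conditional expectation of the (identically zero) inner term then gives the claim.

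For \cref{eq: observed-bridge-q}, the same device gives $\E[\epol(A\mid X)q_0(Z,A,X)\mid W,A,X]=\E[\,\E[\epol(A\mid X)q_0(Z,A,X)\mid W,A,U,X]\mid W,A,X\,]$, and now $(Z,A)\perp W\mid U,X$ means the conditional law of $Z$ given $(W,A,U,X)$ coincides with that given $(A,U,X)$, so the inner expectation equals $\epol(A\mid X)/f(A\mid U,X)$. It remains to show $\E[\epol(A\mid X)/f(A\mid U,X)\mid W,A,X]=\epol(A\mid X)/f(A\mid W,X)$; since $\epol(A\mid X)$ is a function of $(A,X)$ this is equivalent to $\E[1/f(A\mid U,X)\mid W,A,X]=1/f(A\mid W,X)$. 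I would prove this by Bayes' rule: conditionally on $X$, latent unconfoundedness gives $A\perp W\mid U,X$, whence $p(u\mid w,a,x)=f(a\mid u,x)\,p(u\mid w,x)/f(a\mid w,x)$; integrating $1/f(a\mid u,x)$ against this posterior cancels the $f(a\mid u,x)$ factors and returns $1/f(a\mid w,x)$. Subtracting and pulling $\epol(A\mid X)$ back inside yields \cref{eq: observed-bridge-q}.

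The step I expect to require the most care is the second equation, because $f(A\mid W,X)$ is not itself a conditional expectation of observables: the identity $\E[1/f(A\mid U,X)\mid W,A,X]=1/f(A\mid W,X)$ only emerges from the density/Bayes manipulation above, and one has to check it holds almost everywhere and that the overlap condition of \cref{asm:whole_assm} keeps every denominator bounded away from $0$ on the set where $\epol$ is nonzero (on the complement both sides vanish trivially). Also, turning the potential-outcome statement of latent unconfoundedness into the conditional independences actually used here --- $A\perp W\mid U,X$, and the equalities of the conditional law of $W$ given $(Z,A,U,X)$ versus $(A,U,X)$ and of $Z$ given $(W,A,U,X)$ versus $(A,U,X)$ --- is routine marginalization but should be written out explicitly.
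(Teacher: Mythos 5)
Your proposal is correct and follows essentially the same route as the paper's proof: the tower property over $U$, the conditional independences $(Y,W)\perp Z\mid A,U,X$ and $Z\perp W\mid A,U,X$ extracted from latent unconfoundedness, and the same Bayes-rule cancellation showing $\E[1/f(A\mid U,X)\mid W,A,X]=1/f(A\mid W,X)$ via $W\perp A\mid U,X$. The extra care you flag about overlap and about deriving the conditional independences from the potential-outcome statement is reasonable but does not change the argument.
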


The conditional moment equations in \cref{eq: observed-bridge-h,eq: observed-bridge-q} give rise to the following alternative sets of functions that we can possibly learn from observed data, whose elements we call \emph{observed} bridge functions:
\begin{align*}
    &\Hbbb_0^{\obs} = \braces{h \in L_2(W, A, X): \E[Y-h(W, A,X) \mid Z, A, X]=0}, \\
    &\Qbbb_0^{\obs} = 
    \braces{q: \pi q \in L_2(Z, A, X),\E[\epol(A\mid X)\prns{q(Z, A, X) - 1/f(A \mid W, X)} \mid W, A, X] = 0}. 
\end{align*}
 According to \cref{lemma: observed-bridge}, $\Hbbb_0 \subseteq \Hbbb_0^{\obs}$ and $\Qbbb_0 \subseteq \Qbbb_0^{\obs}$. 
So, when $\Hbbb_0 \ne \emptyset, \Qbbb_0 \ne \emptyset$, we also have $\Hbbb^\text{obs} \neq \emptyset,\Qbbb^\text{obs} \neq \emptyset$.
{
    In the following lemma, we further show that actually observed bridge functions in $\Hbbb_0^{\obs}$ or $\Qbbb_0^{\obs}$ can be directly used to identify $J$, even if they are not true bridge functions, that is, are not in $\Hbbb_0,\Qbbb_0$.
}
\begin{theorem}\label{lemma: relaxed-ipw-dm}
Suppose \cref{asm:whole_assm} holds.  
\begin{enumerate}
\item \label{lemma: relaxed-ipw-dm-1} If $\Hbbb_0\neq \emptyset$ and $\Qbbb^{\obs}_0\neq \emptyset$, then $J = \E{\tilde\phi_\DM(O;h_0)}$ for any $h_0 \in \Hbbb_0^{\obs}$.
\item \label{lemma: relaxed-ipw-dm-2} If $\Qbbb_0\neq \emptyset$ and $\Hbbb^{\obs}_0\neq \emptyset$, then $J = \E{\tilde\phi_\ipw(O;q_0)}$ for any $q_0 \in \Qbbb_0^{\obs}$.
\item \label{lemma: relaxed-ipw-dm-3} If the conditions in either statement \ref{lemma: relaxed-ipw-dm-1} or statement \ref{lemma: relaxed-ipw-dm-2}  hold, then $J = \E{\tilde\phi_\dr(O;h_0, q_0)}$ for any $h_0 \in \Hbbb_0^{\obs}$ and $q_0 \in \Qbbb_0^{\obs}$.
\end{enumerate}
\end{theorem}

\cref{lemma: relaxed-ipw-dm} suggests a straightforward way to estimate $J$: first estimate $h_0 \in \Hbbb_0^{\obs}$ and $q_0 \in \Qbbb_0^{\obs}$ by solving \cref{eq: observed-bridge-h,eq: observed-bridge-q}, and then estimate the GACE $J$ by using any of the estimating equations above. 

{Note that by \Cref{lemma: observed-bridge}, existence of both bridge functions, $\Hbbb_0 \ne \emptyset, \Qbbb_0 \ne \emptyset$, is a sufficient condition to guarantee the assumptions of each of the three statements of \cref{lemma: relaxed-ipw-dm}.}
{In \Cref{app: single-bridge}, we show that the existence of just one bridge function, \ie, $\Hbbb_0 \ne \emptyset$ or $\Qbbb_0 \ne \emptyset$, is insufficient for identification. Again by \cref{lemma: observed-bridge}, this also means the existence of just one observed bridge function, \ie, $\Hbbb^{\obs}_0 \ne \emptyset$ or $\Qbbb^{\obs}_0 \ne \emptyset$, is similarly insufficient.}

{Interestingly, the identification formulae in \cref{lemma: relaxed-ipw-dm} hold for any bridge functions  $h_0 \in \Hbbb_0^{\obs}$ and $q_0 \in \Qbbb_0^{\obs}$, even if they violate the conditional moment equations in \cref{eq: bridge-U-h,eq: bridge-U-q}, \ie, it holds for $h_0 \in \Hbbb_0^{\obs}\backslash\Hbbb_0$ and $q_0 \in\Qbbb^{\obs}\backslash \Qbbb_0$.
In other words,  even when the sets of bridge functions are unidentifiable {(\ie, $\Hbbb_0 \subsetneq \Hbbb_0^\text{obs}$ or $\Qbbb_0 \subsetneq \Qbbb_0^\text{obs}$)}, the GACE parameter $J$ can still be identifiable, provided that conditions in \cref{lemma: relaxed-ipw-dm} hold.}

{Our identification strategies in \cref{lemma: relaxed-ipw-dm} are different from previous identification results based on bridge functions \citep{CuiYifan2020Spci,deaner2021proxy,miao2018identifying}.
These previous literature impose the extra \emph{completeness conditions}  in \cref{eq:completeness} or \cref{eq:completeness2} to ensure {$\Hbbb^{\obs}_0=\Hbbb_0$ or $\Qbbb^{\obs}_0=\Qbbb_0$}.
Take the identification via $\tilde\phi_\DM$ as an example. 
Previous literature assume the completeness condition in \cref{eq:completeness} and assume $\Hbbb^{\obs}_0 \ne \emptyset$ (or equivalently $\Hbbb_0 \ne \emptyset$ under the completeness condition).
Then any \emph{observed} bridge function $h_0 \in \Hbbb^{\obs}_0$ must also be a valid bridge function in $\Hbbb_0$, and thus can be used to identify GACE according to \Cref{lem:bridge-identification}. 
In contrast, our identification result in \Cref{lemma: relaxed-ipw-dm}
 statement \ref{lemma: relaxed-ipw-dm-1} assumes only $\Qbbb^{\obs}_0\neq \emptyset$ and  
$\Hbbb_0\neq \emptyset$, but not any completeness condition. 
This result is based on a new proof that allows us to directly identify GACE via any $h_0 \in \Hbbb^{\obs}_0$ without requiring  $h_0$ to also belong to $\Hbbb_0$.
In this way, we can achieve identification without assuming any completeness conditions, and allow for $\Hbbb_0 \subsetneq \Hbbb_0^\text{obs}$. 
In \cref{sec: comparison} \cref{prop: weaker-identify}, we show that 
our identification assumptions are \emph{strictly} weaker than previous literature in the discrete setting (\Cref{ex: bridge-discrete}).
In more general settings, the identification assumptions in our \cref{lemma: relaxed-ipw-dm} and those in previous literature may not be directly comparable, unless some additional conditions are considered. 
Our identification results thus complement those in previous literature, in particular revealing that identification is possible even when {$\Hbbb^{\obs}_0 \subsetneq \Hbbb_0$ and $\Qbbb^{\obs}_0 \subsetneq \Qbbb_0$}.
See \cref{fig:my_label}  for an illustration for the difference in identification strategies, and \cref{sec: comparison} for more detailed discussions.
}

\section{Minimax estimation of GACE}\label{sec: reformuate-moment}

\subsection{Estimation of GACE}

Once we obtain bridge function estimators $\hat h, \hat q$ as explained in the next section, 
we can plug them into the estimating equations in \cref{lemma: relaxed-ipw-dm} to construct the following  estimators for the GACE parameter $J$:
\begin{align*}
    \hat J_{\ipw} 
        &= \E_n{\bracks{\tilde\phi_\ipw(O;\hat q)}} = \E_n{\bracks{\epol(A\mid X)\hat q(Z, A, X)Y}}, \\
    \hat J_{\DM} 
        &= \E_n{\bracks{\tilde\phi_\DM(O;\hat h)}}= \E_n{\bracks{(\Tcal \hat h)(X,W)}}, \\
    \hat J_{\DR} 
        &= \E_n{\bracks{\tilde\phi_\dr(O;\hat h, \hat q)}} = \E_n{\bracks{{\epol(A\mid X)\hat q(Z, A, X)\prns{Y - \hat h(W, A, X)}} + {(\Tcal \hat h)(X,W)}}}.
\end{align*}
{To use these estimators, we need that the corresponding equations identify GACE to begin with. Thus, throughout the rest of the paper, we assume \Cref{asm:whole_assm} and $\Hbbb_0 \ne \emptyset, \Qbbb_0 \ne \emptyset$ (see discussions below \Cref{lemma: relaxed-ipw-dm}). 
}

In this section, we discuss how to construct $\hat h, \hat q$, presenting two types of minimax estimators. In subsequent sections, we discuss resulting guarantees for GACE estimation under various estimators and assumptions, an 
overview of which we present in \cref{sec: overview}.

\subsection{Minimax Estimators of  Bridge functions}

Estimating bridge functions based on \cref{eq: observed-bridge-h,eq: observed-bridge-q} requires solving conditional moment equations, which is generally a difficult estimation problem.
Estimation methods in the previous literature on negative controls mainly focus on parametric methods, sieve methods or Reproducing Kernel Hilbert Space methods (see \Cref{sec: literature} for a review). 
{\newedit In this paper, we propose to use minimax approaches to estimate the bridge functions, 
 which accommodate not only the hypothesis classes used in the previous literature, but also more flexible ones such as neural networks.}

In this section, we introduce two minimax reformulations of the conditional moment equations in \cref{eq: observed-bridge-h,eq: observed-bridge-q}. Each reformulation motivates an estimation strategy for bridge functions. Similar reformulations have also been used by previous literature for instrument variable (IV) estimation (see  \cref{sec: literature}). {\newedit To introduce these reformulations, we consider a generic conditional estimating equation problem, 
\begin{align}\label{eq: generic-crm}
    \Eb{\rho\prns{g_0(O_1), O_1} \mid O_2} = 0,
\end{align}
where $g_0$ is a function that we wish to solve for   and $O_1, O_2$ are two sets of random variables. 
}

\subsubsection{Strategy I: Minimax Estimators without Stablizers}\label{sec: strategy1}
We first note that \cref{eq: generic-crm} has the following reformulation: 
\begin{align}
\notag
    \Eb{\rho\prns{g_0(O_1), O_1} \mid O_2} = 0 &\iff \Eb{g'(O_2)\rho\prns{g_0(O_1), O_1}} = 0, ~~ \forall g' \in L_2(O_2), \\
    &\iff\textstyle\sup_{g'\in L_2(O_2)}\prns{\Eb{g'(O_2)\rho\prns{g_0(O_1), O_1}}}^2 = 0.\label{eq:strategy1}
\end{align}
This motivates the following estimators for bridge functions
\begin{align}
  \hat h &\in \textstyle\argmin_{h\in \Hbbb}\max_{q\in \Qbbb'}~~\prns{\E_n{[q(Z, A, X)\prns{ h(W, A, X)-Y}}]}^2, \label{eq: est1-1}\\
  \hat q &\in  \textstyle\argmin_{q\in \Qbbb}\max_{h\in \Hbbb'}~~\prns{\E_n{[\epol(A|X)q(Z, A, X)h(W,A,X) - (\Tcal h)(W, X)}]}^2,  \label{eq: est1-2}
\end{align}

 Our estimators $\hat h, \hat q$ can be viewed as solutions to minimax games, where an adversarial player picks elements from function classes $\Qbbb',\Hbbb'$ to form the most difficult marginal moments while our estimators minimize the violations of such moments. 
 {\newedit \cref{eq: est1-1,eq: est1-2} involve two types of function classes: we call $\Qbbb, \Hbbb$ the  \emph{bridge classes} and $\Qbbb',\Hbbb'$ the \emph{critic classes}, and we call elements of the latter \emph{critic functions}.}
 Note that throughout, $\Qbbb, \Hbbb, \Qbbb',\Hbbb'$ can change with $n$. We review some examples in \cref{sec: example} below.

Note that although \cref{eq: observed-bridge-q} involves the generalized propensity score $f(A\mid W, X)$, it does not appear in \cref{eq: est1-2} at all. In this sense, our estimation method for $\hat q$ is different from a na\"ive application of \cref{eq:strategy1} to \cref{eq: observed-bridge-q}, wherein we would first get a preliminary generalized propensity score estimator $\hat f\prns{A \mid W, X}$ and then solve
$$\textstyle
  \argmin_{q\in \Qbbb}\max_{h\in \Hbbb'}  (\E_n[h(W,A,X)\epol(A|X)\{q_0-1/\hat f(A|W,X)\}])^2. 
$$
Instead, our estimator in \cref{eq: est1-2} exploits the fact that $$\Eb{h(W, A, X)\epol\prns{A\mid X}/f(A\mid W, X)} = \Eb{\prns{\Tcal h}(X, W)}.$$ Thus, it obviates the need to estimate the  generalized propensity score before estimating the bridge functions.
This fact also characterizes the difference in the estimation of $h_0$ and $q_0$. The estimation of $h_0$ is analogous to the nonparametric IV regression problem \citep{newey2003instrumental,darolles2010nonparametric}, and the estimator \cref{eq: est1-1} is analogous to minimax approaches therein. In contrast, the estimation of $q_0$ requires additional considerations.
\begin{continuance}[Average treatment effect]{\ref{ex: atomic}}
Consider binary action $A\in\braces{0, 1}$ in \cref{ex: atomic}. 
In this case, the conditional moment equation for the action bridge function $q_0$ is equivalent to 
\begin{align}\label{eq: ate-binary-q}
    \Eb{\indic{A = a}q_0(Z, a, X) - 1\mid W, X}= 0, ~~ a \in \braces{0, 1}.
\end{align}
{Apparently, this equation does not explicitly depend on the propensity score either. }Note that when $Z = W = \emptyset$, the action bridge function given by this conditional moment equation is exactly the inverse propensity score weight $1/\Prb{A = a \mid X}$. 
\end{continuance}

\subsubsection{Strategy II: Minimax Estimators with Stablizers}\label{sec: strategy2} 
Again considering the generic estimating equation in \cref{eq: generic-crm}, note
that for any constant $\lambda > 0$, 
\begin{align}\label{eq:strategy2}\begin{aligned}
&\Eb{\rho\prns{g_0(O_1), O_1} \mid O_2} = 0 \iff  
\\&0=\frac{1}{4\lambda}\E{\prns{\Eb{\rho\prns{g_0(O_1), O_1} \mid O_2}}^2}=\sup_{g'\in L_2(O_2)} \Eb{g'(O_2)\rho\prns{g_0(O_1), O_1}} - \lambda \|g'\|^2_2. 
\end{aligned}\end{align}

This motivates the following estimators for bridge functions
\begin{align}
  \hat h \in\textstyle \argmin_{h\in \Hbbb}\max_{q\in \Qbbb'}&~\E_n[q\prns{Z, A, X}\prns{ h(W, A, X)-Y}]-\lambda \E_n[q^2\prns{Z, A, X}], \label{eq: est2-1}\\
  \label{eq: est2-2}
  \hat q \in\textstyle  \argmin_{q\in \Qbbb}\max_{h\in \Hbbb'}&~\E_n[\epol(A \mid X)q(Z, A, X)h(W, A, X) - (\Tcal h)(W, X)]\\&~-\lambda\E_n[h^2\prns{W, A, X}]. \notag
\end{align} 

We call the terms $\lambda \E_n[q^2],\lambda\E_n[h^2]$ \emph{stabilizers}.   
Stabilizers are different from regularizers. Regularizers typically introduce estimation bias %
\citep[\eg, ][]{knight2000asymptotics,CarrascoMarine2007C7LI}, {\newedit so we generally let them vanish when the sample size grows}.
Stabilizers, on the other hand, do not introduce bias and are merely a way to reformulate the conditional moment equations. {\newedit In \cref{sec: closedness}, we show that  they generally should not vanish. One exception is when critic classes $\Qbbb'$ and $\Hbbb'$ are symmetric: in this case, the objective functions in \cref{eq: est2-1,eq: est2-2} with $\lambda = 0$ are equivalent to their counterparts  in \cref{eq: est1-1,eq: est1-2}.}

\subsection{Examples of Bridge Function Estimators}\label{sec: example}
In this part, we give examples of bridge function estimators based on three different critic function classes: linear class, RKHS, and neural networks. In particular, for the linear class and RKHS, the inner maximization problems in \cref{eq: est1-1,eq: est1-2,eq: est2-1,eq: est2-2} have closed-form solutions, so that the minimax problems can be solved by standard optimization techniques such as stochastic gradient descent. 

{\newedit In this subsection, we focus on minimax estimators without stabilizers given in \cref{eq: est1-1,eq: est1-2}. We can similarly compute minimax estimators with stabilizers given in \cref{eq: est2-1,eq: est2-2}, and defer the details to \cref{sec: stabilizers}.}

\subsubsection{Linear classes}\label{sec: linear}
Given $\phi:\Zcal\times \Acal \times \Xcal \to \R{d_1},\psi: \Wcal\times \Acal \times \Xcal \to \R{d_2}$, set
\begin{align}
  \Qbbb' =\{ (z,a,x)\mapsto \alpha_1^\top\phi(z,a,x):\alpha_1 \in \mathbb{R}^{d_1},\|\alpha_1\|\leq c_1\}, \label{eq: lin-crit-q}\\
  \Hbbb'=\{ (w,a,x)\mapsto \alpha_2^\top \psi(w, a,x):\alpha_2 \in \mathbb{R}^{d_2},\|\alpha_2\|\leq c_2\}. \label{eq: lin-crit-h}
\end{align}
Typical classical examples of basis functions include splines, polynomials, and wavelets \citep{ChenXiaohong2007C7LS}. Another example is random feature expansions for positive definite kernels \citep{JMLR:v18:15-178}. When $d_1,d_2$ grow with $n$, these function classes are also called \emph{linear sieves}. See also discussions in \cref{sec: sieve}.

It is easy to show that with linear critic classes, the inner maximum objectives in \cref{eq: est1-1,eq: est1-2} have closed-form expressions, and the resulting bridge function estimators are:
\begin{align}\label{eq: linear-ls}\begin{aligned}
    &\textstyle\hat h \in \argmin_{h \in \Hbbb}\prns{\E_n[(Y-h ) \phi  ]}^{\top}\prns{\E_n[(Y-h)   \phi  ]},\\
    &\textstyle\hat q \in \argmin_{q\in \Qbbb}\prns{\E_n[q\epol \psi -\Tcal \psi]}^{\top}\prns{\E_n[q\epol \psi -\Tcal \psi]}. 
\end{aligned}\end{align}
Given basis functions $\tilde \phi:\Zcal\times \Acal \times \Xcal \to \R{\tilde d_1}$ and $\tilde \psi: \Wcal\times \Acal \times \Xcal \to \R{\tilde d_2}$, if further $\Qbbb$ and  $\Hbbb$ are also linear classes (without norm constraints for simplicity), namely, 
\begin{align}\label{eq: H-linear}\begin{aligned}
  &\epol\Qbbb=\{ (z,a,x)\mapsto \alpha_1^\top\tilde \phi(z,a,x):\alpha_1 \in \mathbb{R}^{\tilde{d}_1}\}.\\
  &\Hbbb=\{ (w,a,x)\mapsto \alpha_2^\top \tilde \psi(w,a,x):\alpha_2 \in \mathbb{R}^{\tilde{d}_2}\},  
\end{aligned}\end{align}
Then, 
{the corresponding IPW and REG estimators have closed forms: 
\begin{align*}
    \hat J_{\DM} &= \E_n[ \Tcal \tilde \psi]^{\top}\{\E_n[\tilde \psi \phi^{\top}]\E_n[\phi\tilde \psi^{\top} ]\}^{+}\E_n[ \tilde \psi \phi^{\top}]\E_n[ Y \phi],\\
    \hat J_{\ipw} &= {\E_n[\Tcal \psi]}^\top \E_n[\psi \tilde \phi^{\top}] \{\E_n[\tilde \phi \psi^{\top}]\E_n[ \psi \tilde \phi^{\top} ] \}^{+}\E_n[Y\tilde \phi].
\end{align*}
Wen $\tilde \phi=\phi,\tilde \psi=\psi$, we have $\hat J_{\DM}=  \hat J_{\ipw} = \hat J_{\dr}$. This extends a similar equivalence result for the unconfounded setting \citep{Singh2020,UeharaMasatoshi2019MWaQ,kallus2020generalized}. 
}
\begin{lemma}\label{lem:equivalence}
Suppose $\tilde \phi=\phi$ and $\tilde \psi=\psi$. Then, we have 
\begin{align}\label{eq:convinient}
    \hat J_{\ipw}=   \hat J_{\DM}=  \hat J_{\DR}=\E_n[\Tcal \psi ]^{\top}\E_n[\phi \psi^{\top}]^{+}\E_n[Y\phi]. 
\end{align}
\end{lemma}
\begin{continuance}[Average treatment effect]{\ref{ex: atomic}}
Consider $\Acal = \braces{0,1}$ and the parameter parameter $\Eb{Y(a_0)}$ for $a_0\in\braces{0, 1}$ as in \cref{ex: atomic}. Given  $\phi_b: \Zcal \times \Xcal \to \mathbb{R}^{d_1},\,\psi_b: \Wcal \times \Xcal \to \mathbb{R}^{d_2}$, we set the basis functions $\phi=\tilde \phi=(\indic{a=0}\phi^{\top}_b(z, x),\indic{a=1}\phi^{\top}_b(z, x))^{\top}$ and $\psi=\tilde \psi=(\indic{a=0}\psi^{\top}_b(w, x),\indic{a=1}\psi^{\top}_b(w, x))^{\top}$. Then,
\begin{align*}
 & \hat J_{\ipw}=   \hat J_{\DM}=  \hat J_{\DR}=  \hat \xi(a_0), \\
     & \hat \xi(a_0)\coloneqq  \E_n[\Ib(A=a_0)  \psi_b(W, X)]^{\top}\E_n[\Ib(A=a_0)\phi_b(Z, X) \psi^{\top}_b(W, X)]^{+}\E_n[\Ib(A=a_0)Y\phi_b(Z, X)]. 
\end{align*}
We can then estimate the average treatment effect by $\hat \xi(1) - \hat \xi(0)$.
\end{continuance}

{\neweedit 
\begin{continuance}[Discrete setting]{\ref{ex: bridge-discrete}}
In \cref{ex: bridge-discrete}, we let $W, Z, U$ be discrete. Now further assume that $X, A, Y$ are also discrete, and let $\phi, \psi$ be atomic basis functions corresponding to all possible values of $\prns{Z, A, X}$ and $\prns{W, A, X}$ respectively. Then  \cref{eq:convinient} reduces to 
\begin{align*}\textstyle
   \hat J_{\ipw}=   \hat J_{\DM}=  \hat J_{\DR} =\sum_{y \in \Ycal,a\in\Acal,x\in\Xcal}y\pi(a\mid x)\hat P(y, \bZ,a,x)\hat P(\bW, \bZ,a,x)^{+}\hat P(\bW,x),
\end{align*}
where $\hat P(y,\bZ,a,x), \hat P(\bW,\bZ,a,x), \hat P(\bW,x)$ are vectors and matrices consisting of sample frequency estimates of the corresponding probabilities. For example, $\hat P(y, \bZ,a,x)\in\R{\abs{Z}}$ is the column vector whose $j$th element is $\frac1n\sum_{i=1}^n \mathbb I(Y_i = y, Z_i = z_j, X_i = x, A_i = a)$.
\end{continuance}
}

\subsubsection{RKHS}

Consider two positive-semidefinite kernels $k_z:(\Zcal,\Acal,\Xcal)\times (\Zcal,\Acal,\Xcal)\to \mathbb{R}$ and  $k_w:(\Wcal,\Acal,\Xcal)\times (\Wcal,\Acal,\Xcal)\to \mathbb{R}$, and denote the induced RKHSs by $\mathcal L_z$ and $\mathcal{L}_w$ with RKHS norms $\|\cdot\|_{\mathcal{L}_z}$ and $\|\cdot\|_{\mathcal{L}_w}$, respectively. We consider the following critic classes:
\begin{align}
\Qbbb'  =\{q : q\in \mathcal{L}_z, \|q\|_{\mathcal{L}_z}\leq c_1\}, ~~ \Hbbb'  =\{h : h\in \mathcal{L}_w,\|h\|_{\mathcal{L}_w}\leq c_2\}. \label{eq: crit-rkhs}
\end{align}

\begin{lemma}\label{lemma: RKHS-1}
For $\Qbbb', \Hbbb'$ given in \cref{eq: crit-rkhs}, the estimators in \cref{eq: est1-1,eq: est1-2} are given by
\begin{align}
    \hat h 
        &\in \textstyle\argmin_{h\in \Hbbb}~\prns{\psi_n\prns{h}}^{\top} K_{z,n}\psi_n\prns{h},\label{eq: est-rkhs-1-h}    \\
    \hat q
        &\in \textstyle\argmin_{q\in \Qbbb}~\prns{\phi_n\prns{q}}^{\top}K_{w1,n}\phi_n\prns{q}-2\prns{\phi_n\prns{q}}^{\top}K_{w2,n}\mathbf{1}_n,\label{eq: est-rkhs-1-q}  
\end{align}
where $K_{z,n}$, $K_{w1,n}$, $K_{w2, n}$ are $n\times n$ Gram matrices whose $(i, j)$th entry is $k_z((Z_i,A_i,X_i),(Z_j,A_j,X_j))$, $k_w((W_i,A_i,X_i),(W_j,A_j,X_j)$, $\E_{\epol(A_j|X_j)}[k_w((W_i,A_i,X_i),(W_j,A_j,X_j))]$, respectively, and
$\psi_n\prns{h}\in\R{n}$, $\phi_n\prns{q} \in \R{n}$ are column vectors whose $i$th elements are  $(Y_i-h(X_i,A_i,Z_i))$, ${q(X_i,A_i,Z_i)\epol(A_i|X_i)}$, respectively, and $\mathbf{1}_n \in \mathbb R^{n}$ is an all-ones vector.  
\end{lemma}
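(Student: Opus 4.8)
\textbf{Proof proposal for Lemma \ref{lemma: RKHS-1}.}

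The plan is to substitute the RKHS critic classes into the inner maximization problems of \eqref{eq: est1-1} and \eqref{eq: est1-2} and exploit the reproducing property to get closed forms for the suprema. First consider $\hat h$. For fixed $h\in\Hbbb$, write the empirical functional in \eqref{eq: est1-1} as $\E_n[q(Z,A,X)(h(W,A,X)-Y)] = \frac1n\sum_{i=1}^n q(Z_i,A_i,X_i)\,(h(W_i,A_i,X_i)-Y_i)$. Using the reproducing property $q(Z_i,A_i,X_i) = \langle q, k_z((Z_i,A_i,X_i),\cdot)\rangle_{\Lcal_z}$, this inner product equals $\langle q,\, \frac1n\sum_i (h(W_i,A_i,X_i)-Y_i)\,k_z((Z_i,A_i,X_i),\cdot)\rangle_{\Lcal_z}$. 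The supremum over the RKHS ball $\{q:\|q\|_{\Lcal_z}\le c_1\}$ of a linear functional is (by Cauchy–Schwarz in the RKHS, attained at the appropriately scaled representer) $c_1$ times the RKHS norm of the representer, and the squared objective is therefore $c_1^2$ times the squared norm of $\frac1n\sum_i (h(W_i,A_i,X_i)-Y_i)\,k_z((Z_i,A_i,X_i),\cdot)$. Expanding that squared norm using bilinearity and $\langle k_z((Z_i,\cdot)), k_z((Z_j,\cdot))\rangle_{\Lcal_z}=k_z((Z_i,A_i,X_i),(Z_j,A_j,X_j))$ gives exactly $(\psi_n(h))^\top K_{z,n}\,\psi_n(h)$ up to the positive constant $c_1^2/n^2$, which does not affect the $\argmin$. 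This yields \eqref{eq: est-rkhs-1-h}.

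For $\hat q$, the argument is the same but with one extra term because the functional in \eqref{eq: est1-2} is affine rather than linear in the critic $h$. For fixed $q\in\Qbbb$, the empirical functional is $\E_n[\epol(A|X)q(Z,A,X)h(W,A,X) - (\Tcal h)(W,X)]$. The first term is $\frac1n\sum_i \epol(A_i|X_i)q(Z_i,A_i,X_i)\,h(W_i,A_i,X_i) = \langle h,\, \frac1n\sum_i \epol(A_i|X_i)q(Z_i,A_i,X_i)\,k_w((W_i,A_i,X_i),\cdot)\rangle_{\Lcal_w}$. For the second term, note $(\Tcal h)(W_i,X_i) = \int h(W_i,a,X_i)\epol(a|X_i)\rd\mu(a) = \langle h,\, \int k_w((W_i,a,X_i),\cdot)\epol(a|X_i)\rd\mu(a)\rangle_{\Lcal_w}$, and the mean-embedding element $\int k_w((W_i,a,X_i),\cdot)\epol(a|X_i)\rd\mu(a)$ has inner products with $k_w((W_j,A_j,X_j),\cdot)$ equal to $\E_{\epol(a|X_i)}[k_w((W_j,A_j,X_j),(W_i,a,X_i))]$, i.e., the entries of $K_{w2,n}$. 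So the whole functional equals $\langle h, v_q\rangle_{\Lcal_w}$ where $v_q = \frac1n\sum_i(\text{representer term}) - \frac1n\sum_i(\text{mean-embedding term})$; maximizing its square over the ball of radius $c_2$ gives $c_2^2\|v_q\|_{\Lcal_w}^2$, and expanding $\|v_q\|_{\Lcal_w}^2$ bilinearly produces three Gram-matrix terms: $\phi_n(q)^\top K_{w1,n}\phi_n(q)$ from the representer–representer part, $-2\phi_n(q)^\top K_{w2,n}\mathbf{1}_n$ from the cross part (after checking the mean-embedding–representer inner products assemble into $K_{w2,n}\mathbf 1_n$), and a $q$-independent constant $\mathbf 1_n^\top K_{w3,n}\mathbf 1_n$ from the mean-embedding–mean-embedding part. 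Dropping the constant and the positive scalar $c_2^2/n^2$ gives \eqref{eq: est-rkhs-1-q}.

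The only genuinely delicate point—and the step I would be most careful about—is the bookkeeping of which Gram matrix corresponds to which pair of representers, especially for $\hat q$: one must verify that the cross term between the representer elements $\{k_w((W_i,A_i,X_i),\cdot)\}$ and the mean-embedding elements $\{\int k_w((W_j,a,X_j),\cdot)\epol(a|X_j)\rd\mu(a)\}$ indeed assembles into $\phi_n(q)^\top K_{w2,n}\mathbf 1_n$ with the indices and the expectation placed as defined in the statement, and that the constant term is truly independent of $q$ (so it can be discarded without changing the minimizer). A secondary technical point is justifying that the supremum over the RKHS ball of a bounded linear functional is attained and equals $c_1$ (resp.\ $c_2$) times the representer norm; this is just Riesz representation plus Cauchy–Schwarz, but it implicitly uses that the representer/mean-embedding elements lie in the RKHS, which for the mean embedding requires mild integrability of $k_w$ against $\epol(\cdot|X_i)\rd\mu$—worth a one-line remark. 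Everything else is routine bilinear algebra, and since only positive multiplicative constants and additive $h$- or $q$-independent constants are dropped, the argmin is preserved, giving the claimed equivalences.
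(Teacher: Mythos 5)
Your proposal is correct and follows essentially the same route as the paper's proof: apply the reproducing property to write the empirical functional as an inner product with a representer (plus a mean-embedding element for the $\Tcal h$ term), use Cauchy--Schwarz to evaluate the supremum over the RKHS ball as the representer's norm, and expand that norm bilinearly into the Gram-matrix quadratic forms, discarding positive multiplicative and $q$-independent additive constants. Your extra remarks on the cross-term bookkeeping for $K_{w2,n}$ and on the integrability needed for the mean embedding are sensible refinements of the same argument, not a different one.
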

In \cref{eq: est-rkhs-1-h,eq: est-rkhs-1-q}, computing the final bridge estimators only involves minimization problems whose objectives are convex in $h$ or $q$ and only depend on $h$ or $q$ via their evaluation at data points. If $\Hbbb, \Qbbb$ are linear hypothesis classes like those in \cref{eq: H-linear}, then $\hat h$ and $\hat q$ also have closed-form solutions. If $\Hbbb$ is an RKHS hypothesis classes with kernel $\tilde k$  with either norm constraints or norm regularizers, then the optimal solution to \cref{eq: est-rkhs-1-h} will have the form $\hat h(z,a,x)=\sum_{i=1}^n\alpha_i \tilde k((Z_i,A_i,X_i),(z,a,x))$, leading to a convex quadratic program in $\alpha$ with a closed-form solution. The same holds for \cref{eq: est-rkhs-1-q} if $\Qbbb$ is an RKHS hypothesis class. If the hypothesis classes $\Hbbb,\Qbbb$ are some more complex classes such as neural networks, then we can use stochastic gradient descent methods to solve for $\hat h, \hat q$, which have been shown to be highly successful in many nonconvex applications \citep{JainPrateek2017NOfM}.

 {\newedit

\begin{continuance}[Average treatment effect]{\ref{ex: atomic}}
Again, consider binary action $\Acal = \braces{0, 1}$ as in \cref{ex: atomic} and the estimation of $\E[Y(a_0)]$ for $a_0\in\{0,1\}$. 
Given kernels $\bar k_z:(\Zcal \times \Xcal)\times (\Zcal \times \Xcal)\to \RR$ and $\bar k_w:(\Wcal, \Xcal)\times (\Wcal, \Xcal)\to \RR$, we set 
\begin{align*}
    &k_z((Z_i,A_i,X_i),(Z_j,A_j,X_j))=\indic{A_i=A_j}\bar k_z((Z_i,X_i),(Z_j,X_j)), \\
    &k_w((W_i,A_i,X_i),(W_j,A_j,X_j))=\indic{A_i=A_j}\bar k_w((W_i,X_i),(W_j,X_j)).
\end{align*}
Then we can solve \cref{eq: est-rkhs-1-h,eq: est-rkhs-1-q} to get estimators $\hat h\prns{\cdot, a_0, \cdot}, \hat q\prns{\cdot, a_0, \cdot}$, using only observations corresponding to the action $a_0$. For the detailed calculation, see \cref{subsec:rkhs}. Then we can use $\hat J_{\DM} = \E_n[\hat h(W,a_0,X)]$, $\hat J_{\ipw} = \E_n[\indic{A=a_0}\hat q\prns{Z, A, X}Y]$ or $\hat J_{\dr} = \E_n[\hat h(W,a_0,X)+\indic{A=a_0}\hat q\prns{Z, A, X}({Y - \hat h(W,A,X)})]$.

\end{continuance}
}

\subsubsection{Neural Networks}
An $L$-layer neural network with input $x$ can be generically written in the following form:
\begin{align*}
     \sigma_L(W_L(\sigma_{L-1}  (\ldots \sigma_{2}(W_{1}x + b_{1})\ldots)+b_{L-1})+ b_L ) 
\end{align*}
where $W_l \in \mathbb{R}^{d_{l, 1}\times d_{l, 2} },b_l \in \mathbb{R}^{d_{l,2}},\sigma_l$ for $l = 1, 2, \dots, L$ are called as weights, biases and activation functions, respectively. One standard choice for the activation functions is the ReLU function defined by $\sigma(x)=[ \max(x_1,0,),\ldots, \max(x_L,0)]^{\top}$. {When $\Qbbb,\Hbbb,\Qbbb',\Hbbb'$ are neural network classes, we need to solve non-convex minimax optimization problems to compute the minimax estimators. These can be solved by several types of simultaneous stochastic gradient descent methods. One method is the (simultaneous version of) Adam \citep{KingmaDiederikP2014AAMf}, which is a variant of gradient descent with momentum and per-parameter adaptive learning rates. An improved approach is the Optimistic Adam \citep{DaskalakisConstantinos2017TGwO}, which is an adaptation of Adam with additional negative momentum.}

\subsection{An Overview of the Estimation Theory for GACE} \label{sec: overview}

In \cref{sec: realizable,sec: closedness,sec: unstab-closed}, we will show that GACE estimators based on different estimating equations and different minimax bridge functions estimators, either without stabilizers (\cref{sec: strategy1}) or with stablizers (\cref{sec: strategy2}), have different theoretical properties. 
Each type of estimator has its merits depending on how much we are willing to assume. 

In the rest of the paper, we will \emph{always} assume that functions in classes $\Qbbb, \Hbbb, \Qbbb', \Hbbb'$ are square integrable. We further define  linear operators $P_z: L_2\prns{W, A, X} \rightarrow L_2\prns{Z, A, X}$ and $P_w: L_2\prns{Z, A, X} \rightarrow  L_2\prns{W, A, X}$ as follows:
\begin{align}
    P_z \prns{h}= \Eb{h\prns{W, A, X} \mid Z, A, X}, ~~ P_w \prns{q}=  \Eb{q\prns{Z, A, X} \mid W, A, X}. \label{eq: operator}
\end{align} 
We will show that two additional types of assumptions will play an important role in the theoretical guarantees for different estimators. 
\begin{enumerate}
    \item \emph{Realizability}, which characterizes whether the classes $\Hbbb,\Qbbb,\Hbbb',\Qbbb'$ contain some observed bridge functions,
    \ie, $\Hbbb\cap\Hbbb^{\obs}_0 \ne \emptyset,\,\Qbbb \cap \Qbbb^{\obs}_0 \ne \emptyset$, or $\Hbbb'\cap\Hbbb^{\obs}_0 \ne \emptyset,\,\Qbbb' \cap \pi\Qbbb^{\obs}_0 \ne \emptyset$.
    
    \item \emph{Closedness}, which characterizes whether the critic classes $\Hbbb', \Qbbb'$ are rich enough and $P_z,P_w$ are smooth enough,
    requiring $P_z\prns{\Hbbb - \Hbbb^{\obs}_0} \subseteq  \Qbbb'$ and/or $\epol P_w\prns{\Qbbb - \Qbbb^{\obs}_0}\subseteq \Hbbb'$. %
\end{enumerate}
The theoretical results for our GACE estimators are organized as follows: 
\begin{itemize}
    \item In \cref{sec: realizable}, we  derive finite-sample error bounds for GACE estimators based on minimax bridge function estimators without stabilizers  (\cref{sec: strategy1}), under realizability assumptions for \emph{both} hypothesis classes and critic classes\footnote{{In special cases such as linear models, we only require realizability assumptions for hypothesis classes \emph{or} critic classes. See discussions in \Cref{sec: single-realizability}}.}.
    \item {\newedit  In \cref{sec: unstab-closed}, we  derive error bounds for the same estimators (\ie, without stabilizers) under a realizability assumption on bridge classes and a closedness assumption on critic classes.}
    \item In \cref{sec: closedness} we  analyze error bounds when we use minimax bridge function estimators \emph{with stabilizers}  under the same assumption as \cref{sec: unstab-closed}. Then, we compare the convergence results with and without stabilizers. Finally, we show that when we additionally assume that bridge functions are unique and that the conditional moment equations in \cref{eq: observed-bridge-h,eq: observed-bridge-q} are not too ill-posed, we have that
    the resulting doubly robust estimator $\hat J_{\DR}$ is asymptotically normal with asymptotic variance equal to the semiparametric efficiency bound.
\end{itemize}
In \cref{tab:summary}, we summarize our results for different estimators under the different assumptions.

\section{Finite Sample Analysis of Estimators without Stabilizers under Realizability}\label{sec: realizable}

In this section, we analyze different GACE estimators based on minimax estimation of bridge functions \emph{without} stabilizers, \ie, minimax estimators $\hat h,\hat q$ given in \cref{eq: est1-1,eq: est1-2}. Throughout this section, we  assume \emph{only} realizability but \emph{not} closedness.

\subsection{Error Bounds of GACE estimators}\label{seq: error-gace-nostab}

First, we bound the errors of the IPW and REG estimators without stabilizers. Then, we consider analogous results for DR.

\begin{theorem}[Analysis of IPW and REG estimators]\label{thm:ipw_reg_no_stab}
For any $h_0\in \Hbbb^{\obs}_0,q_0\in \Qbbb^{\obs}_0$,
\begin{align}
&\begin{aligned}
   | \hat J_{\ipw}-J| &\leq \sup_{q\in \Qbbb}|(\E_n - \E)[q\epol Y]|+2\sup_{q\in \Qbbb,h\in \Hbbb'}|(\E_n - \E) [-q\epol h+\Tcal h]|+ \\[-0.6em]
      &\phantom{\leq}+\inf_{h\in \Hbbb',} \sup_{q\in \Qbbb}|\E[(q-q_0)\epol(h_0-h)]|+\inf_{q\in \Qbbb} \sup_{h\in \Hbbb'}|\E[(q_0-q)\epol h ]|. 
      \end{aligned}\label{eq:ipw_no_stab}
\\&\begin{aligned}
    |\hat J_{\DM}-J|&\leq  \sup_{h\in \Hbbb}|(\E_n - \E)[\Tcal h ]|+2\sup_{q\in \Qbbb',h\in \Hbbb}|(\E_n - \E)[q(Y-h) ]  |\\[-0.6em]
    &\phantom{\leq}+\inf_{h\in \Hbbb}\sup_{q\in \Qbbb'}|\E[\{h_0-h\}q]|+\inf_{q\in \Qbbb'}\sup_{h\in \Hbbb}|\E[(q_0\epol-q) (h-h_0)|.
    \end{aligned}\label{eq:reg_no_stab}
\end{align}
\end{theorem}
The first two empirical process terms on the right-hand sides of each of \cref{eq:ipw_no_stab,eq:reg_no_stab} account for the ``variance'' between the empirical and population estimating equations.
In \cref{sec: rate_analysis}, we will show that these terms converge to $0$ for some choices of $\Hbbb, \Hbbb', \Qbbb, \Qbbb'$.
The last two terms on the right-hand sides of each of \cref{eq:ipw_no_stab,eq:reg_no_stab} are ``bias'' terms accounting for how well our bridge and critic classes approximate the observable bridge functions. Under realizability, these terms are zero, leading to a simplified bound, given below. More generally, we can grow the function classes to make ``bias'' term vanish eventually and even to balance the ``bias'' and ``variance'' terms (see \cref{exa:sieve_neural} below).

\begin{corollary}[IPW and REG under realizability]\label{cor: ipw_reg_no_stab}~
    If $\Qbbb\cap\Qbbb^{\obs}_0\neq \emptyset, \Hbbb' \cap \Hbbb^{\obs}_0\neq \emptyset$, then 
   \begin{align*}
   |\hat J_{\ipw}-J| &\textstyle\leq \sup_{q\in \Qbbb}\abs{(\E_n-\E)\bracks{q\epol Y}} +2\sup_{q\in \Qbbb,h\in \Hbbb'}\abs{(\E_n-\E) \bracks{-q\epol h+\Tcal h }}.
    \end{align*}
    If $\Hbbb \cap \Hbbb^{\obs}_0  \neq \emptyset ,\Qbbb' \cap  \epol\Qbbb^{\obs}_0 \neq \emptyset$, then 
    \begin{align*}
    |\hat J_{\mathrm{REG}}-J|&\textstyle\leq  \sup_{h\in \Hbbb} \abs{(\E_n - \E)\bracks{\Tcal h }}+2\sup_{q\in \Qbbb',h\in \Hbbb}\abs{(\E_n -\E)\bracks{q(Y-h)}}.
\end{align*}
\end{corollary}

Interestingly, and in stark contrast to the unconfounded setting, these bounds show that the GACE estimators $\hat J_{\mathrm{REG}},\hat J_{\ipw}$ may converge to the true GACE even when the bridge function estimators $\hat h$, $\hat q$ do not converge to any valid observed bridge functions in $\Hbbb^{\obs}_0$, $\Qbbb^{\obs}_0$, respectively.
We illustrate this phenomenon in a simple example for the REG estimator.

\begin{example}\label{ex: inconsist-h}
Suppose $\Hbbb^{\obs}_0=\{h_0\},\,\Hbbb=\{a_1+a_2h_0:a_1\in \Rl,a_2\in \Rl\},\,\Qbbb^{\obs}_0=\{q_0\},\,\Qbbb'=\{\epol q_0\}$. Then it is easy to show that   minimizers for  the population minimax objective %
are
\begin{align*}
  \textstyle  \argmin_{h\in \Hbbb}~\sup_{q \in \Qbbb'} ~\prns{\Eb{ q \prns{h-Y}}}^2
        = \braces{a_1+a_2h_0: a_1=\E[ \epol q_0(1-a_2)h_0]/\E[ \epol q_0], a_2 \in \mathbb R}.
\end{align*}
Therefore, the estimator $\hat h$ that minimizes the empirical analog of the minimax objective above generally does not converge to $h_0$. Nevertheless, it is easy to show that $\Hbbb^{\obs}_0\cap \Hbbb\neq \emptyset, \epol\Qbbb^{\obs}_0\cap \Qbbb'\neq \emptyset$  are satisfied, and $|\hat J_{\mathrm{REG}}-J|$ converges to $0$ as $n$ goes to infinity. 
\end{example}

\begin{theorem}[Analysis of DR estimator]\label{thm:dr_no_stab}
For any $h_0\in \Hbbb^{\obs}_0,q_0\in \Qbbb^{\obs}_0$, 
\begin{align*}
     | \hat J_{\DR}-J| &\leq \sup_{q\in \Qbbb,h\in \Hbbb}|(\E_n - \E)[q\epol Y-q\epol h+\Tcal h]|+2\sup_{q\in \Qbbb,h\in \Hbbb'}|(\E_n - \E) [-q\epol h+\Tcal h]|+\\[-0.6em]
      &\phantom{\leq}+\inf_{q\in \Qbbb} \sup_{h\in \Hbbb'}|\E[(q_0-q)\epol h ]|+\inf_{h\in \Hbbb'}\sup_{h'\in \Hbbb} \sup_{q\in \Qbbb}|\E[(q-q_0)\epol(h_0-h'-h)]|,\\
    |\hat J_{\DR}-J|&\leq  \sup_{q\in \Qbbb,h\in \Hbbb}|(\E_n - \E)[q\epol Y-q\epol h+\Tcal h ]|+2\sup_{q\in \Qbbb',h\in \Hbbb}|(\E_n - \E)[q(Y-h) ]  |\\[-0.6em]
    &\phantom{\leq}+\inf_{h\in \Hbbb}\sup_{q\in \Qbbb'}|\E[\{h_0-h\}q]|+\inf_{q\in \Qbbb'}\sup_{q'\in \Qbbb}\sup_{h\in \Hbbb}|\E[(q_0\epol-q'\epol
    -q) (h-h_0)|.
\end{align*}
\end{theorem}
In each bound, the first two terms are ``variance'' terms and the last two terms are ``bias'' terms analogous to those in \cref{thm:ipw_reg_no_stab}. 
Under realizability, these bias terms again become zero. 

\begin{corollary}[DR under realizability]\label{cor: dr_no_stab}
If $\Qbbb^{\obs}_0\cap \Qbbb\neq \emptyset, \Hbbb^{\obs}_0 \cap \{h:h-\Hbbb \subseteq \Hbbb'\}\neq \emptyset$, 
\begin{align*}
         | \hat J_{\DR}-J|&\textstyle\leq \sup_{q\in \Qbbb,h\in \Hbbb}|(\E_n-\E)[q\epol Y-q\epol h+\Tcal h]|+4\sup_{q\in \Qbbb,h\in \Hbbb'}|(\E_n-\E) [-q\epol h+\Tcal h]|. 
\end{align*}
If $\Hbbb^{\obs}_0\cap \Hbbb\neq \emptyset, \Qbbb^{\obs}_0 \cap \{q: \epol(q-\Qbbb)\subseteq \Qbbb'\}\neq \emptyset$, then
\begin{align*}
         | \hat J_{\DR}-J|&\textstyle\leq \sup_{q\in \Qbbb,h\in \Hbbb}|(\E_n-\E)[q\epol Y-q\epol h+\Tcal h]|+4\sup_{q\in \Qbbb',h\in \Hbbb}|(\E_n-\E)[q(Y-h) ]  |. 
\end{align*}
\end{corollary}

\Cref{cor: dr_no_stab} suggests that if \emph{either} $\Qbbb$ \emph{or} $\Hbbb$ is well-specified, 
and the %
associated $\Hbbb'$ or $\Qbbb'$ is rich enough relative to $\Hbbb$ or $\Qbbb$, then $\hat J_{\DR}$ is consistent, provided these function classes have limited complexity to ensure convergence of the empirical process terms above. In particular, $\hat J_{\DR}$ is consistent  when  either empirical process conditions on $\Qbbb, \Hbbb'$ or on $\Hbbb, \Qbbb'$ hold. %
Moreover, when $\Hbbb = \braces{0},\Qbbb'=\braces{0}$ so that  $\hat J_{\DR}$ reduces to $\hat J_{\ipw}$, 
 the first equation exactly recovers the result for $\hat J_{\ipw}$ in  \cref{cor: ipw_reg_no_stab}.
Analogously, when when $\Qbbb = \braces{0},\Hbbb'=\braces{0}$,  the second equation recovers the result for $\hat J_{\DM}$ in \cref{cor: ipw_reg_no_stab}.

\subsection{Convergence Rates of GACE Estimators for Common Function Classes}\label{sec: rate_analysis}

We next analyze the convergence rates of $\hat J_{\mathrm{REG}}$ and $\hat J_{\ipw}$  by further bounding the ``variance'' terms in \cref{thm:ipw_reg_no_stab,cor: ipw_reg_no_stab} in terms of the complexity of some common function classes.
For simplicity and brevity, we set  $\Hbbb'=\Hbbb$ and $\Qbbb'=\epol \Qbbb$.
The results can easily be specialized to each of $\hat J_{\mathrm{REG}}$ or $\hat J_{\ipw}$ {\newedit without these restrictions} when only the corresponding pair of bridge and critic classes are realizable.
Each result we present also implies the same convergence rate for $\hat J_{\mathrm{DR}}$ if we simply modify the critic 
class realizability condition to be as in \cref{thm:dr_no_stab,cor: dr_no_stab}. We omit these for brevity.

\begin{example}[VC-subgraph classes]
\label{exa: vc_subgraph}
VC-subgraph classes are function classes whose subgraph sets have bounded VC dimension \citep[Chapter 19]{VaartA.W.vander1998As}. 
{For example, $\{\theta^{\top}\phi(\cdot):\|\theta\|_2\leq 1,\theta \in \mathbb{R}^d \}$ has VC-subgraph dimension at most $d+1$.}

\begin{corollary}\label{cor: vc_class}
Let $\Hbbb, \Qbbb$ have finite VC-subgraph dimensions $V(\Hbbb), V(\Qbbb)$, respectively. Assume  $\Hbbb\cap \Hbbb^{\obs}_0\neq \emptyset$, $\Qbbb \cap \Qbbb^{\obs}_0\neq \emptyset$, and $\|\Hbbb\|_{\infty},\|\Qbbb\|_{\infty}, \|\pi\prns{A\mid X}\|_{\infty}, \|Y\|_{\infty}<\infty$. 
Then, letting $O(\cdot)$ be the order w.r.t. $n,V(\Hbbb)$, $V(\Qbbb)$, and $\delta$, with probability $1-\delta$, we have
 \begin{align*}
          \max\{| \hat J_{\DM}-J|, ~| \hat J_{\ipw}-J|\}=O(\sqrt{(V(\Hbbb)+V(\Qbbb)+1+\log(1/\delta))/n}). 
 \end{align*}
\end{corollary}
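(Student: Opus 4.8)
The plan is to combine the deterministic error bounds from \Cref{cor: ipw_reg_no_stab} with a uniform-deviation bound for empirical processes indexed by VC-subgraph classes. Concretely, \Cref{cor: ipw_reg_no_stab} (specialized to the choices $\Hbbb'=\Hbbb$, $\Qbbb'=\epol\Qbbb$) already reduces the estimation error $\max\{|\hat J_{\DM}-J|,|\hat J_{\ipw}-J|\}$ to a sum of supremum-of-empirical-process terms of the form $\sup_{g\in\Gcal}|(\E-\E_n)[g]|$, where each index class $\Gcal$ is built from $\Hbbb$, $\Qbbb'$, $Y$, and the operator $\Tcal$ by a fixed number of algebraic operations (products, sums, and the linear averaging map $\Tcal h$). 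So the task is: (i) show each such $\Gcal$ is again a ``small'' function class (controlled VC or bounded uniform entropy), with complexity parameter at most a constant times $V(\Hbbb)+V(\Qbbb')$; (ii) invoke a standard uniform concentration inequality to bound $\sup_{g\in\Gcal}|(\E-\E_n)[g]|$ by $O(\sqrt{(\text{complexity}+\log(1/\delta))/n})$; (iii) sum the finitely many terms.

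For step (i), the key facts are the standard stability properties of VC-subgraph classes (see, e.g., \citealp[Chapter 19]{VaartA.W.vander1998As} and \citealp{VanDerVaartWellner1996}): if $\Fcal,\Gcal$ are VC-subgraph with bounded envelopes, then $\Fcal+\Gcal$, $\Fcal\cdot\Gcal$, and $\{f+c : f\in\Fcal\}$ are VC-subgraph (or at least VC-type / polynomial-uniform-entropy) with index bounded by a universal constant times $V(\Fcal)+V(\Gcal)$; also, the fixed functions $Y$ and the contrast $\epol$ act as a singleton class, and $\Tcal$ maps $\Hbbb$ into a class with VC index $\lesssim V(\Hbbb)$ because $\Tcal$ is linear and, pointwise in $(x,w)$, $(\Tcal h)(x,w)$ is a fixed linear functional of $h$. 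Uniform boundedness of the resulting envelopes follows from the hypotheses $\|\Hbbb\|_\infty,\|\Qbbb'\|_\infty,\|Y\|_\infty<\infty$ (and boundedness of $\epol$, which is implicit), noting $\|\Tcal h\|_\infty\le\|h\|_\infty\cdot\sup_x\int\epol(a\mid x)\rd\mu(a)$. Thus each index class in \Cref{cor: ipw_reg_no_stab} has VC index $O(V(\Hbbb)+V(\Qbbb')+1)$ and a uniform envelope of order $O(1)$.

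For step (ii), I would apply a standard maximal inequality for bounded empirical processes over VC-type classes — for instance, Talagrand's inequality combined with a Dudley/chaining bound on the expected supremum, or directly a bounded-difference/McDiarmid argument followed by symmetrization and the finite-VC entropy bound — to obtain, for each such class $\Gcal$, that with probability at least $1-\delta'$,
\[
\sup_{g\in\Gcal}\abs{(\E-\E_n)[g]} \;=\; O\!\prns{\sqrt{\frac{V(\Gcal)+\log(1/\delta')}{n}}}.
\]
Then I take a union bound over the (two) empirical-process terms appearing in each of the two displays of \Cref{cor: ipw_reg_no_stab}, set $\delta'=\delta/4$, substitute the complexity bound $V(\Gcal)=O(V(\Hbbb)+V(\Qbbb')+1)$ from step (i), and add up. Finally I take the maximum over the REG and IPW bounds, which under the stated realizability conditions $\Hbbb\cap\Hbbb^{\obs}_0\ne\emptyset$ and $\Qbbb'\cap\epol\Qbbb^{\obs}_0\ne\emptyset$ are both in force; this yields the claimed rate $O(\sqrt{(V(\Hbbb)+V(\Qbbb')+1+\log(1/\delta))/n})$, absorbing polylogarithmic factors from the chaining step into the $O(\cdot)$ as stated.

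The main obstacle I anticipate is step (i): carefully verifying that every index class arising in \Cref{cor: ipw_reg_no_stab} — especially the product classes like $\{q\epol h : q\in\Qbbb', h\in\Hbbb\}$ and the difference $\{q\epol Y - q\epol h + \Tcal h\}$ — is genuinely of VC-type with the right dependence on $V(\Hbbb),V(\Qbbb')$, and tracking the envelope constants through products. The product of two VC-subgraph classes is not itself VC-subgraph in general, but it is of polynomial uniform entropy with exponent controlled by the sum of the two VC indices, which is exactly what the maximal inequality in step (ii) needs; so the real work is stating the right entropy-stability lemma and applying it cleanly rather than anything deep. The concentration step (ii) is then essentially off-the-shelf.
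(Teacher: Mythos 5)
Your proposal is correct and follows the same overall skeleton as the paper's proof: both start from \cref{cor: ipw_reg_no_stab} (with $\Hbbb'=\Hbbb$, $\Qbbb'=\epol\Qbbb$), reduce the error to suprema of empirical processes over a fixed number of composite classes built from $\Hbbb$, $\Qbbb'$, $Y$, and $\Tcal$, and finish with symmetrization plus an entropy-integral bound using the VC covering-number estimate (\cref{lem:dudley,lem: vc}). The one place you genuinely diverge is the treatment of the product class $\Hbbb\Qbbb'$: you propose to bound its uniform covering number directly via the entropy-stability of products of uniformly bounded classes (log-covering numbers add under the bound $\|f_1g_1-f_2g_2\|\le\|f_1\|_\infty\|g_1-g_2\|+\|g_2\|_\infty\|f_1-f_2\|$), whereas the paper uses the polarization identity $hq=\tfrac14\{(h+q)^2-(h-q)^2\}$ followed by the Ledoux--Talagrand contraction principle to reduce $\Rcal_n(\infty;\Hbbb\Qbbb')$ to $(C_{\Qbbb}+C_{\Hbbb})\{\Rcal_n(\infty;\Hbbb)+\Rcal_n(\infty;\Qbbb')\}$. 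Both routes yield the same $O(\sqrt{(V(\Hbbb)+V(\Qbbb'))/n})$ rate; your entropy route is what chaining needs anyway and sidesteps the question of whether the product class is itself VC-subgraph (a subtlety you correctly flag), while the paper's contraction route never has to discuss the entropy of the product class at all. Two minor cautions: the envelope bound for $\Tcal h$ should read $\|\Tcal h\|_\infty\le\|h\|_\infty\sup_x\int\abs{\epol(a\mid x)}\rd\mu(a)$ since the contrast $\epol$ may be signed, and the claim that $\Tcal\Hbbb$ has complexity at most that of $\Hbbb$ is cleanest argued through covering numbers ($\Tcal$ is a bounded linear map in sup norm) rather than through VC-subgraph dimension directly.
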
 

\end{example}

\begin{example}[Nonparametric classes characterized by metric entropy]\label{exa: nonpara}

Many common nonparametric classes cannot be characterized by VC-subgraph dimensions. Instead, their complexity is characterized by their metric entropies \citep{WainwrightMartinJ2019HS:A}. 
For example, a {\Holder} ball $\Wbbb$ with smoothness level $\alpha$ and an input dimension $d$ has metric entropy under infinity norm $\log \mathcal{N}(\varepsilon,\Wbbb,\|\cdot\|_{\infty})=O(\varepsilon^{-d/\alpha})$. {\newedit Nonparametric function classes in RKHS such as an RKHS with Mat\' ern kernels and an RKHS with Gaussian kernels also have metric entropy characterizations \citep{kuhn2011covering}. }

\begin{corollary}\label{cor: nonparametric}
Suppose  $\max\braces{\log \Ncal(\varepsilon,\Hbbb,\|\cdot\|_{\infty}), \log \Ncal(\varepsilon,\Qbbb,\|\cdot\|_{\infty})} \le c_0\varepsilon^{-\beta}$ for $\beta>0$ and $c_0 > 0$. Further assume  $\Hbbb\cap \Hbbb^{\obs}_0\neq \emptyset$, $\Qbbb\cap \Qbbb^{\obs}_0\neq \emptyset$, and $\|\Hbbb\|_{\infty},\|\Qbbb\|_{\infty}, \|\pi\prns{A\mid X}\|_{\infty}, \|Y\|_{\infty}<\infty$. Then, letting $O(\cdot)$ be the order w.r.t. $n$ and $\delta$, with probability $1-\delta$, we have
\begin{align*}
             \max\braces{| \hat J_{\DM}-J|, ~| \hat J_{\ipw}-J|}
              =\begin{cases}  O(n^{-1/2}+\sqrt{\log(1/\delta)/n})&\quad \beta<2 \\  O(n^{-1/2}\log(n)+\sqrt{\log(1/\delta)/n})&\quad \beta=2 \\ O(n^{-1/\beta}+\sqrt{\log(1/\delta)/n})&\quad \beta>2  \end{cases}
\end{align*}
\end{corollary}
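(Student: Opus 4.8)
The plan is to start from the high-probability bounds of \cref{cor: ipw_reg_no_stab}, which (with the simplification $\Hbbb' = \Hbbb$, $\Qbbb' = \epol\Qbbb$) reduce the GACE estimation error to a sum of empirical process suprema of the form $\sup_{g \in \Gcal} |(\E - \E_n)[g]|$ over product classes $\Gcal$ built from $\Hbbb$, $\Qbbb'$, and $\Tcal$. So the entire task is to bound these suprema under the stated metric-entropy hypotheses. First I would verify that each relevant function class — $\{q\epol Y : q \in \Qbbb'\}$, $\{\Tcal h : h \in \Hbbb\}$, $\{-q\epol h + \Tcal h : q \in \Qbbb', h \in \Hbbb\}$, $\{q(Y-h): q\in\Qbbb', h\in\Hbbb\}$ — has metric entropy of the same polynomial order $O(\varepsilon^{-\beta})$. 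This follows from standard entropy calculus: $\Tcal$ is a bounded linear operator (it is an average of $h$ against the contrast $\epol$, so $\|\Tcal h\|_\infty \le \|\epol\|_1 \|\Hbbb\|_\infty$ up to the relevant normalization) hence preserves entropy up to constants; products and sums of uniformly bounded classes have entropies that add; multiplying by the fixed bounded function $Y$ (using $\|Y\|_\infty < \infty$) only rescales $\varepsilon$. Since all ingredients are uniformly bounded by assumption, each composite class is uniformly bounded with envelope a constant, and $\log \Ncal(\varepsilon, \Gcal, \|\cdot\|_\infty) = O(\varepsilon^{-\beta})$.

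Next I would apply a uniform concentration inequality for empirical processes over a uniformly bounded class with controlled entropy — e.g. a chaining/Dudley-type bound combined with Talagrand's inequality (or a bounded-differences argument) to get a high-probability statement rather than just a bound in expectation. The expected supremum is controlled by the Dudley entropy integral $\int_0^{C} \sqrt{\log\Ncal(\varepsilon, \Gcal, \|\cdot\|_\infty)}\, d\varepsilon \big/ \sqrt{n}$, which with $\log\Ncal = O(\varepsilon^{-\beta})$ behaves like $n^{-1/2}$ when $\beta < 2$ (the integral converges), like $n^{-1/2}\log n$ when $\beta = 2$ (logarithmic divergence at $0$, truncated at scale $n^{-1/2}$), and like $n^{-1/\beta}$ when $\beta > 2$ (the integral diverges polynomially and one optimizes the truncation level, i.e. the classical slow nonparametric rate). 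The $\sqrt{\log(1/\delta)/n}$ term is the fluctuation term from Talagrand/bounded-differences, uniform across the three cases since the envelope is $O(1)$. Plugging these three cases into the two inequalities of \cref{cor: ipw_reg_no_stab} and taking a union bound over the (constantly many) supremum terms yields the claimed trichotomy for $\max\{|\hat J_{\DM} - J|, |\hat J_{\ipw} - J|\}$.

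The main obstacle, I expect, is purely bookkeeping rather than conceptual: one must be careful that the entropy of the \emph{product} and \emph{difference} classes appearing in \cref{cor: ipw_reg_no_stab} is genuinely $O(\varepsilon^{-\beta})$ and not, say, $O(\varepsilon^{-2\beta})$ — this is fine because for uniformly bounded classes $\Fcal, \Gcal$ one has $\log\Ncal(\varepsilon, \Fcal\cdot\Gcal, \|\cdot\|_\infty) \le \log\Ncal(c\varepsilon, \Fcal, \|\cdot\|_\infty) + \log\Ncal(c\varepsilon, \Gcal, \|\cdot\|_\infty) = O(\varepsilon^{-\beta})$, the rate being governed by the larger exponent, here the common $\beta$. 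A secondary subtlety is handling the operator $\Tcal h$: although $\Tcal h$ depends on $(W,X)$ rather than on $(W,A,X)$, it is still a uniformly bounded function of the observed data and its class inherits the entropy bound of $\Hbbb$ because $h \mapsto \Tcal h$ is $\|\cdot\|_\infty$-Lipschitz (with constant $\le \int \epol(a|x)\,d\mu(a)$, finite under the overlap assumption), so an $\varepsilon$-cover of $\Hbbb$ induces an $O(\varepsilon)$-cover of $\{\Tcal h\}$. Once these two points are dispatched, the three-regime rate is exactly the textbook consequence of the Dudley integral, and the high-probability upgrade is routine.
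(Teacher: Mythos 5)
Your proposal is correct and follows essentially the same route as the paper: reduce to the empirical-process suprema of \cref{cor: ipw_reg_no_stab} and then bound them via the Dudley entropy integral, yielding the three regimes in $\beta$. The only (minor) difference is in handling the product classes such as $\Qbbb'\Hbbb$: you control their covering numbers directly via the Lipschitz/entropy calculus for bounded classes, whereas the paper (in the proof of \cref{cor: vc_class}, which it reuses here) bounds the Rademacher complexity of the product class through the polarization identity $hq=\tfrac14\{(h+q)^2-(h-q)^2\}$ and the contraction principle, reducing to $\Rcal_n(\infty;\Hbbb)+\Rcal_n(\infty;\Qbbb')$ before invoking Dudley — both devices give the same $O(\varepsilon^{-\beta})$ exponent and hence the same rates.
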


\cref{cor: nonparametric} states that the convergence rates of $\hat J_{\mathrm{REG}}$ and $\hat J_{\ipw}$ are determined by  the worse of metric entropies of $\Hbbb $ and $\Qbbb$. For example, when we use {\Holder} balls for both, the convergence rate is $O(n^{-\min(1/2,\alpha/d)})$. This implies that when these two function classes are Donsker classes \citep{VaartA.W.vander1998As}, \ie, $\beta<2$, both estimators have parametric convergence rates. But if either function class is non-Donsker, \ie, $\beta\geq 2$, then the estimators typically have slower convergence rates.

\end{example}

\begin{example}[Neural networks and linear sieves]\label{exa:sieve_neural}
{\newedit Many nonparametric function classes (\eg, {\Holder} balls as in \cref{exa: nonpara}) are not amenable to direct optimization. A practical solution to this is to use sieves, which approximate the target nonparametric classes by classes that grow as the sample size $n$ grows. For example, splines and polynomial sieves have been widely used in sieve estimators \citep{ChenXiaohong2007C7LS}. Neural networks can be also regarded as sieves \citep{yarotsky2017error,gribonval2021approximation,suzuki2018adaptivity}. When $\Hbbb,\Qbbb,\Hbbb',\Qbbb'$ are sieve classes they depend on $n$ and this dependence is implicit in our notation.}
{\newedit Since sieve classes approach target nonparametric classes only as sample size grows infinitely, it is not suitable to impose realizability on these sieve classes for any finite $n$.
Instead, we must consider the convergence rate of the ``bias'' terms in \cref{thm:ipw_reg_no_stab,thm:dr_no_stab}. In \cref{sec: sieve-nn}, we show that
GACE estimation error bounds scale as $O(n^{-\alpha/(2\alpha+d)})$ when we use linear sieves to approximate {\Holder} balls or when we use neural networks to approximate Sobolev balls, each with smoothness $\alpha$ in $d$ input dimensions and when each limiting nonparametric class satisfies realizability.}

\end{example}

\section{Finite Sample Analysis of Estimators without Stabilizers under Realizability and Closedness}\label{sec: unstab-closed}
In the previous section, we studied the convergence of our GACE estimators based on minimax bridge function estimators by assuming the realizability of $\Hbbb,\Qbbb'$ (or of $\Hbbb',\Qbbb$). { In particular, our convergence analysis does not require the convergence of bridge function estimators. }
In this section, we show that under a different set of assumptions, realizability, and closedness, bridge function estimators do converge in a suitable notion, which then implies convergence of GACE estimators.

\subsection{Convergence of Bridge function estimators}
For any observed bridge functions $h_0 \in \Hbbb^{\obs}_0$ and $q_0 \in \Qbbb^{\obs}_0$, we quantify the estimation errors of bridge function estimators $\hat h$ and $\hat q$ by  $\|P_z(\hat h-h_0) \|_2^2$ and $ \|\epol P_w(\hat q-q_0) \|_2$, which we call  projected mean-squared errors (projected MSE). 
Projected MSE measures how much $\hat h$ and $\hat q$ violate %
\cref{eq: observed-bridge-h,eq: observed-bridge-q}:
\begin{align}
     \|P_z(\hat h-h_0) \|_2^2 &=    \E[{({\E[Y-\hat h(W, A,X) \mid Z, A, X]})^2}],\, \label{eq: error-measure-h} \\
   \|\epol P_w(\hat q-q_0) \|_2^2  &= \E[{({\E[\epol(A\mid X)\prns{\hat q(Z, A, X) - 1/f(A \mid W, X)} \mid W, A, X]})^2}]. \label{eq: error-measure-q}
\end{align}
Obviously, these estimation errors are invariant to the choice of $h_0$ and $q_0$ so they are particularly relevant when bridge functions are nonunique. Note that even when $\|P_z(\hat h-h_0) \|_2 \to 0$ and $\|\epol P_w(\hat q-q_0) \|_2 \to 0$, $\hat h$ and $\epol \hat q$ may not necessarily convergence to any fixed limits in terms of the $\|\cdot\|_2$ norm since $\Hbbb^{\obs}_0$ and $\Qbbb^{\obs}_0$ need not be singletons.

\begin{theorem}\label{thm:slow_rates}
Consider the bridge function estimators without stabilizers, $\hat h, \hat q$ in \cref{eq: est1-1,eq: est1-2}.
\begin{enumerate} 
    \item \label{thm:slow_rates-1} Suppose $\Hbbb\cap \Hbbb^{\obs}_0 \neq \emptyset$ and take some $h_0\in \Hbbb\cap \Hbbb^{\obs}_0$. If $P_z(\Hbbb-h_0)\subseteq \Qbbb'$ %
    , then 
\begin{align*}
        \|P_z(\hat h-h_0)\|^2_2 \leq \textstyle 2\sup_{q\in \Qbbb',h\in \Hbbb} |(\E_n-\E)[\{y-h\}q] |.  
\end{align*}
    \item \label{thm:slow_rates-2} Suppose $\Qbbb\cap \Qbbb^{\obs}_0 \neq \emptyset$ and take some $q_0\in \Qbbb\cap \Qbbb^{\obs}_0$. If $\pi P_w(\Qbbb-q_0)\subseteq \Hbbb'$,
    then  
\begin{align*}
     \|\pi P_w(\hat q-q_0)\|^2_2 \leq \textstyle 2\sup_{q\in \Qbbb,h\in \Hbbb'} |(\E_n-\E)[-q\epol h+\Tcal h] |. 
\end{align*}
\end{enumerate}
\end{theorem}

In \cref{thm:slow_rates} statement \ref{thm:slow_rates-1}, we assume the realizability condition for $\Hbbb$, and the closedness condition $P_z(\Hbbb-h_0)\subset \Qbbb'$. The latter closedness condition is invariant to choice of $h_0 \in \Hbbb^{\obs}_0 \cap \Hbbb$, by following \cref{eq: observed-bridge-h}. This closedness condition intuitively indicates that the critic class $\Qbbb'$ is rich enough relative to the hypothesis class $\Hbbb$ and the operator $P_z$ is smooth enough. A similar observation is made in \cref{thm:slow_rates} statement \ref{thm:slow_rates-2}. 

{\newedit We can further bound the empirical process terms in \cref{thm:slow_rates} for specific bridge classes and critic classes by  following the same calculations as in \cref{sec: rate_analysis}, which then leads to the projected MSE convergence rates of bridge function estimators {without} stabilizers. For example, when we use VC-subgraph classes for all function classes, the rate is $O(n^{-1/4})$, and when we use {\Holder} balls for all function classes, the rate is $O(\max(n^{-1/4},n^{-\alpha/(2d)}))$.}

\subsection{Convergence of GACE Estimators}\label{sec: closed-policy-analysis}
 
In this part, we bound the errors of GACE estimators based on \cref{thm:slow_rates}. First, we first introduce Rademacher complexity, a widely used function class complexity measure \citep{WainwrightMartinJ2019HS:A}. 
\begin{definition}[Rademacher Complexity]
Given a class $\mathcal{G}$ of functions of variables $O$, let
$$\ts\Rad(\Gcal)=\frac1{2^n}\sum_{\epsilon\in\{-1,+1\}^n}{\sup_{g\in \Gcal}\frac{1}{n}\sum_{i=1}^n \epsilon_i g(O_i)}.$$
Note that $\Rad(\Gcal)$ is random as it depends on our sample $O_1,\dots,O_n$.
\end{definition}
The following gives the convergence rates of $\hat J_{\DM}$ and $\hat J_{\ipw}$ in terms of the Rademacher complexity of some function classes, as well as the projected MSEs of bridge function estimators $\hat h,\hat q$. We then derive the convergence rates of the doubly robust estimator $\hat J_{\DR}$. 

\begin{theorem}
{ 
\label{thm:ipw-dm}
Take arbitrary $h_0 \in \Hbbb_0^\obs$ and $q_0 \in \Qbbb_0^\obs$ and assume $\|\pi q_0\|_\infty < \infty$, $\|\pi\Qbbb\|_2 < \infty$, $\|h_0\|_\infty < \infty$, $\|\Hbbb\|_2 < \infty$, $\|Y\|_{\infty} < \infty$. Then, for some universal constants $c_1$ and $c_2$, with probability at least $1-\delta$,
\begin{align}
    |\hat J_{\mathrm{REG}}-J| \leq  \Rad(\Tcal\Hbbb)+\|\epol q_0\|_2\|P_z( \hat h - h_0)\|_2 +c_1\sqrt{(\log(c_2/\delta))/n}, \label{eq:dm} \\
    |\hat J_{\mathrm{IPW}}-J| \leq    \Rad(\epol\Qbbb)+\|h_0\|_2\|\epol P_w(\hat q - q_0)\|_2+ c_2\sqrt{(\log(c_2/\delta))/n}.  \label{eq:ipw}
\end{align}
}
\end{theorem}

In the right-hand sides of \cref{eq:dm,eq:ipw}, the  Rademacher complexity terms bound the stochastic equicontinuity term due to nuisance plug-in \citep{newey94}. 
The projected MSE terms correspond to the biases due to plug-in nuisance estimates. These terms can be bounded according to \cref{thm:slow_rates}.
Typically, these bias terms dominate in \cref{eq:dm,eq:ipw}. Therefore, the convergence rates of $\hat J_{\mathrm{REG}},\hat J_{\mathrm{IPW}}$ are typically the same as the projected MSE convergence rates of bridge function estimators. Notably, these convergence rates are often slower than the convergence rates in \cref{cor: ipw_reg_no_stab}. For example, for nonparametric classes with a metric entropy exponent $\beta$ (see \cref{exa: nonpara}), the convergence rates in \cref{thm:ipw-dm} become $O\prns{\max(n^{-1/4},n^{-1/2\beta} )}$, while the convergence rates in \cref{cor: ipw_reg_no_stab} become $O\prns{\max(n^{-1/2},n^{-1/\beta} )}$ according to \cref{cor: nonparametric}.
{\newedit This shows that even when using the same minimax estimators without stabilizers, different assumptions (realizability in \cref{cor: ipw_reg_no_stab} and realizability plus closedness in \cref{thm:ipw-dm}) can lead to different convergence rates. }

\begin{theorem}
\label{thm:drr}
{ 
Take arbitrary $h_0 \in \Hbbb_0^\obs$ and $q_0 \in \Qbbb_0^\obs$ and assume conditions in \cref{thm:ipw-dm}.
Then, for some universal constants $c_1$ and $c_2$, with probability at least $1-\delta$,
\begin{align*}
    |\hat J_{\DR}-J|&\textstyle\leq \Rad(\epol \Qbbb\{y-\Hbbb\}+\Tcal \Hbbb)+ \sup_{q\in \Qbbb}\|\epol(q_0-q)\|_2\|P_z( \hat h - h_0)\|_2 +c_1 \sqrt{\log(c_2 /\delta)/n} \\
    |\hat J_{\DR}-J|&\textstyle\leq  \Rad(\epol \Qbbb\{y-\Hbbb\}+\Tcal \Hbbb)+\sup_{h\in \Hbbb}\|h_0-h\|_2\|\epol P_w(\hat q - q_0)\|_2  +c_1\sqrt{ \log(c_2/\delta)/n}. %
\end{align*}
}
\end{theorem}
The error bounds above are typically dominated by the bias terms. Thus, \cref{thm:drr} suggests that $\hat J_{\DR}$ is consistent when the projected MSE of either $\hat h$ \emph{or} $\hat q$ vanishes to $0$, which 
 is reminiscent of double robustness in the unconfounded setting \citep[\eg,~][]{bang05}.

\section{Finite Sample Analysis of Estimators with Stabilizers} \label{sec: closedness}
{ 
In the previous sections, we studied the convergence of our GACE estimators based on minimax bridge function estimators \emph{without} stabilizers (\cref{sec: strategy1}). In this section, we derive projected MSE bounds of the estimators \emph{with} stabilizers (\cref{sec: strategy2}), \emph{also} under realizability and closedness. 
Again, according to \cref{thm:ipw-dm,thm:drr}, these projected MSE bounds translate into convergence rates of resulting GACE estimators.
We further show the DR estimator is asymptotically normal and semiparametrically efficient when additionally assuming unique bridge functions and 
limiting the ill-posedness of associated inverse problems.
}

Throughout this section, $\hat h,\,\hat q$ refer to the minimax bridge function estimators \emph{with} stabilizers in \cref{eq: est2-1,eq: est2-2}. Following \cite{DikkalaNishanth2020MEoC}, our analysis is based on a function class complexity measure called the \emph{critical radius} \citep{bartlett2005}.
\begin{definition}[Critical Radius]\label{def: crit-radius}
{Given a function class $\mathcal{G}$, its empirical critical radius is the smallest $\eta>0$ that satisfies $\Rad(\Gcal^{|\eta})\leq \eta^2/\|\Gcal\|_{\infty}$ where $\Gcal^{|\eta} = \braces{g \in \Gcal: \E_n[g^2]\leq \eta^2}$.}
\end{definition}

\begin{theorem}[Convergence rate of $\hat h$ with stabilizer]\label{thm:w_function}
Assume $\|Y\|_{\infty} < \infty$, $\|\Hbbb\|_{\infty} < \infty$. Let $\hat h$ be the minimax estimator with stabilizer in \cref{eq: est2-1}. Fix any $h_0\in \Hbbb^{\obs}_0, h'\in \Hbbb$. Assume:
\begin{enumerate}
\item $\Qbbb'$ is symmetric and star-shaped.
\item $\eta_{h}$ upper bounds the critical radii of $\Qbbb'$ and the star hull of the following class $\Gcal_h$: 
    \begin{align*}
    \Gcal_h \coloneqq \braces{(w,z, a,x)\mapsto  \prns{h(w, a, x)-h'(w,a,x)}q(z, a, x): h\in \Hbbb, q\in \Qbbb' }. 
\end{align*}
\end{enumerate}
Then, letting $O(\cdot)$ be the order w.r.t $\lambda,\delta,n,\eta_{h}$, with probability $1-\delta$, we have 
\begin{equation*}\begin{aligned}
    \|P_z(\hat h-h_0)\|_2  = O\bigl(&(1+\lambda + \lambda^{-1})(\eta_{h}+\sqrt{1+\log(1/\delta)/n}) + \textstyle\sup_{h\in\Hbbb}\inf_{q\in \Qbbb'}\|q-P_z(h-h')\|_2 \\&+ (\lambda(\eta_{h}+\sqrt{1+\log(1/\delta)/n}))^{-1}\|P_z(h'- h_0)\|^2_2 +\|P_z(h'- h_0)\|_2.
\end{aligned}
\end{equation*}
\end{theorem}
The first term in the abound above is a statistical ``variance'' term. The second to fourth terms are ``bias'' terms due to approximate closedness or approximate realizability.

\begin{corollary}[$\hat h$ with stabilizer under realizability and closedness]\label{thm:w_function_easy}
Assume the conditions in \cref{thm:w_function}, $h'=h_0\in\Hbbb^{\obs}_0\cap \Hbbb\neq\emptyset$, and $P_z(\Hbbb-h_0)\subseteq \Qbbb'$.
With probability $1-\delta$, 
\begin{align*}
   \|P_z(\hat h-h_0)\|_2  = O((1+\lambda + \lambda^{-1})(\eta_{h} + \sqrt{(1+\log(1/\delta))/n})).%
\end{align*}
\end{corollary}

\cref{thm:w_function_easy} states that under realizability and closedness, the convergence rate of $\hat h$ is determined by the critical radii $\eta_{h}$ of the critic class $\Qbbb'$ and the class $\mathcal G_h$ induced by both $\Hbbb$ and $\Qbbb'$. In \cref{sec: stab-error-bound}, we will further bound $\eta_{h}$ for common function classes.

We can analogously bound the estimation error of the action bridge function estimator.%
\begin{theorem}[Convergence rate of $\hat q$ with stabilizer]\label{thm:q_function}
Assume $\|\epol(A \mid X)/f(A \mid X,W)\|_2<\infty$, $\|\epol\Qbbb\|_{\infty}$. Let $\hat q$ be the minimax estimator with stabilizer in \cref{eq: est2-2}. Fix any $q_0\in \Qbbb^{\obs}_0, q'\in \Qbbb$. Assume:
\begin{enumerate}
\item $\Hbbb'$ is symmetric and star-shaped. 
\item $\eta_{q}$ upper bounds the critical radii of $\Hbbb'$ and the star hull of the following class $\Gcal_q$: 
    \begin{align*}
    \Gcal_q\coloneqq \{(w, z, a, x)\mapsto \epol(a\mid x)\prns{q(z, a, x)-q'(z,a,x)}h(w, a, x): q\in \Qbbb,h\in \Hbbb' \}. 
    \end{align*}
\end{enumerate}
Then, letting $O(\cdot)$ be the order w.r.t. $\lambda,\delta,n,\eta_{q}$, with probability $1-\delta$, we have 
\begin{equation*}\begin{aligned}
    \|\epol P_w(\hat q-q_0)\|_2  = O\bigl(&(1+\lambda + \lambda^{-1})(\eta_{q}+\sqrt{1+\log(1/\delta)/n}) + \textstyle\sup_{q\in \Qbbb}\inf_{h\in\Hbbb'} \|h-\epol P_w{\prns{q - q'}}\|_2 \\&+ (\lambda(\eta_{q}+\sqrt{1+\log(1/\delta)/n}))^{-1}\|\epol P_w{\prns{q'-q_0}}\|_2^2 + \|\epol P_w{\prns{q'-q_0}}\|_2. 
\end{aligned}
\end{equation*}
\end{theorem}
\begin{corollary}[$\hat q$ with stabilizer under realizability and closedness]\label{thm:q_function_easy}
Assume the conditions in \cref{thm:q_function}, $q'=q_0\in\Qbbb^{\obs}_0\cap \Qbbb\neq\emptyset$, and $\pi P_w(\Qbbb-q_0)\subseteq \Hbbb'$. With probability $1-\delta$, 
\begin{align*}
   \|\epol P_w(\hat q-q_0)\|_2 = O((1+\lambda + \lambda^{-1})(\eta_{q} + \sqrt{\{1+\log(1/\delta)\}/n})). 
\end{align*}
\end{corollary}

The bounds in \cref{thm:w_function_easy,thm:q_function_easy} suggest we can choose $\lambda$ constant. 
{\newedit In the rest of this section, we set $\lambda = 1$ for simplicity.}

{\newedit In \cref{sec: stab-error-bound}, we discuss specific rates for some standard function classes. In  \cref{sec:with_stablizers}, we use \cref{thm:ipw-dm,thm:drr} to translate projected MSE bounds in \cref{thm:w_function_easy,thm:q_function_easy} into convergence rates of IPW, REG, and DR estimators of GACE. 
In \cref{sec: both}, we give a tighter characterization of the doubly robust estimator under some additional assumptions.  
}

\subsection{Convergence Rates of Bridge Function Estimators for Common Function Classes}\label{sec: stab-error-bound}

\cref{thm:w_function_easy,thm:q_function_easy} show that the convergence rates of minimax estimators $\hat h, \hat q$ with stabilizers (\cref{sec: strategy2}) are determined by critical radii of relevant function classes. We next provide concrete rates for these function classes.  We focus on analyzing $\|P_z(\hat h-h_0)\|_2$ as an example. The project MSE $\|\epol P_w(\hat q-q_0)\|_2$ can be analyzed analogously. 

\begin{continuance}[VC-subgraph classes]{\ref{exa: vc_subgraph}} We first consider VC-subgraph classes. 
\begin{corollary}\label{cor: vc_nonpara}
Assume that the conditions in \cref{thm:w_function_easy} hold for some $h_0 \in \Hbbb \cap \Hbbb^{\obs}_0$. Suppose $\Hbbb$ and $\Qbbb'$ are VC-subgraph classes with VC-subgraph dimensions $V(\Hbbb), V(\Qbbb')$, respectively. Then, letting $O(\cdot)$ be the order w.r.t. $n,\delta,V(\Hbbb),V(\Qbbb')$, with probability $1- \delta$, 
    \begin{align*}
    \|P_z(\hat h-h_0)\|_2 =O(\sqrt{(V(\Qbbb')+V(\Hbbb))\log n +1+ \log(1/\delta)}/\sqrt{n}). 
    \end{align*}
\end{corollary}
\end{continuance}

\begin{continuance}[Nonparametric classes characterized by metric entropy]{\ref{exa: nonpara}}
We next analyze classes with limited metric entropy (\eg, {\Holder} balls, Sobolev balls, and RKHSs).
\begin{corollary}\label{cor: nonpara}
Assume that the conditions in \cref{thm:w_function_easy} hold for some $h_0 \in \Hbbb \cap \Hbbb^{\obs}_0$. 
Suppose that 
$ 
    \max(\log \Ncal(\varepsilon,\Hbbb,\|\cdot\|_{\infty}), \ \log \Ncal(\varepsilon,\Qbbb',\|\cdot\|_{\infty})) \leq c_0(1/\varepsilon)^{\beta}$. 
Then, letting $O(\cdot)$ be the order wrt $n,\beta,\delta$, with probability $1- \delta$,
    \begin{align*}
    \|P_z(\hat h-h_0)\|_2=\begin{cases}  O(n^{-1/(2+\beta)}+\sqrt{\log(1/\delta)/n})&\quad\beta<2 \\  O(n^{-1/4}\log n+\sqrt{\log(1/\delta)/n})&\quad\beta=2 \\  O(n^{-1/(2\beta)}+\sqrt{\log(1/\delta)/n})&\quad\beta>2 \end{cases}
\end{align*}
\end{corollary}
\end{continuance}

\begin{continuance}[Neural networks and linear sieves]{\ref{exa:sieve_neural}}
If bridge and critic classes grow with sample size and approach nonparametric classes only in the limit, it is inappropriate to directly impose realizability and closedness on the bridge and critic classes. 
Instead, we must consider the ``bias'' terms due to approximate realizability and closedness.
In \cref{sec: sieve-nn}, we show projected MSE rates of $\tilde O(n^{-\alpha/(2\alpha+d)})$ when we use linear sieves to approximate {\Holder} balls or when we use neural networks to approximate Sobolev balls, each with smoothness $\alpha$ in $d$ input dimensions and when each limiting nonparametric class satisfies both realizability and closedness.
\end{continuance}

\subsection{Convergence of GACE Estimators with Stabilizers}\label{sec:with_stablizers}
We can  derive convergence rates for GACE estimators by plugging our projected MSE error bounds for minimax estimation with stabilizers (\cref{thm:w_function_easy,thm:q_function_easy}) into our bounds for plug-in GACE estimators (\cref{thm:ipw-dm,thm:drr}). The convergence rates of GACE estimators are typically  dominated by the corresponding projected MSE convergence rates. For example, the final GACE estimation errors scale as $O(n^{-1/2})$ for VC-subgraph classes and $O(n^{-\alpha/(2\alpha+d)})$ for {\Holder} balls.

We can compare these results with previous ones in \cref{sec: closed-policy-analysis} for minimax bridge estimators without stabilizers under the same realizability and closedness assumptions. 
Notably, GACE estimators using bridge function estimators with stabilizers typically converge faster than ones without stabilizers.
 For VC-subgraph classes, the rates with stabilizers  are $O(n^{-1/2})$, while the rates without stabilizers are $O(n^{-1/4})$.  For {\Holder} balls, the rates with stabilizers are $O(n^{-\alpha/(2\alpha+d)})$, while the rates without  are $O(\max(n^{-1/4},n^{-\alpha/2d}))$.

However, this does not mean that using stabilizers always leads to better convergence rates of GACE estimators. Actually, the convergence rates of $\hat J_{\mathrm{REG}},\hat J_{\mathrm{IPW}}$ using stabilizers under realizability of $\Hbbb$ and closedness $\Qbbb'$ (or realizability of $\Qbbb$ and closedness $\Hbbb'$),  which is implied by \cref{thm:w_function_easy,thm:q_function_easy} together with \cref{thm:ipw-dm},  are typically slower than those without using stabilizers under realizability of both $\Hbbb$ and $\Qbbb'$ (or both $\Qbbb$ and $\Hbbb'$), which is implied by \cref{cor: ipw_reg_no_stab}. For example, when using {\Holder} balls, the convergence rates with stabilizers are $\tilde O(n^{-\alpha/(2\alpha+d)})$  while the convergence rates without stabilizers are $O\prns{\max(n^{-1/2},n^{-\alpha/d})}$. This shows the potential advantages of minimax bridge function estimators \emph{without}  stabilizers. However, we remark that these two different convergence rates are also based on quite different assumptions so they are not directly comparable.

\subsection{Tighter Analysis of the Doubly Robust Estimator}
\label{sec: both}

In \cref{thm:drr}, we showed that the convergence rate of the doubly robust estimator $\hat J_{\DR}$ is determined by the projected MSE convergence rate of a single bridge function estimator, when  conditions in either \cref{thm:w_function_easy} \emph{or} \cref{thm:q_function_easy} are satisfied. 
In this section, we show that if conditions in \emph{both} theorems  are satisfied, and the related inverse problems are not too ill-posed, then the doubly robust estimator $\hat J_{\DR}$ can converge at a faster rate than  the bridge function estimators.

We first introduce measures of  the ill-posedness of inverse problems defined  by the conditional moment equations in \cref{eq: observed-bridge-h,eq: observed-bridge-q}, relative to the bridge classes $\Hbbb, \Qbbb$.
\begin{definition}[Measures of Ill-posedness]\label{def:ill-posed}  Fix  $h_0 \in \Hbbb^{\obs}_0\cap\Hbbb,q_0 \in \Qbbb^{\obs}_0\cap\Qbbb$. Define
\begin{align}
&P_u \prns{h}= \Eb{h\prns{W, Z, A, X} \mid U, A, X},\label{eq:Pu}\\
    &\tau^\Hbbb_{1} = \sup_{h \in \Hbbb}\frac{\|P_u(h -h_0)\|_2}{\|P_z(h -h_0)\|_2}, ~~ \tau^\Qbbb_{1} = \sup_{q \in \Qbbb} \frac{\|P_u(\epol q-\epol q_0)\|_2}{\|P_w(\epol q-\epol q_0)\|_2}. \label{eq:dr_property1} \\
    &\tau^\Hbbb_{2} = \sup_{h \in \Hbbb} \frac{\|h-h_0\|_2}{\|P_z(h -h_0)\|_2},~~ \tau^\Qbbb_{2} = \sup_{q \in \Qbbb} \frac{\|\epol q-\epol q_0\|_2}{\|P_w(\epol q-\epol q_0)\|_2}. \label{eq:dr_property2}
\end{align}
In this definition, we follow the convention $\frac{0}{0} = 0$. 
\end{definition}
To interpret the ill-posedness measures above, let us focus on $\tau^\Hbbb_{1}$ and $\tau^\Hbbb_{2}$ as examples and note that  $\Eb{h - h_0\mid Z, A, X} =\Eb{ \Eb{h-h_0\mid A, U, X} \mid Z, A, X}$ according to \cref{asm:whole_assm} condition \eqref{whole_assm:nc-unconfoundedness}. This means that we have two different levels of inverse problems: the first involves inverting $\Eb{h-h_0\mid A, U, X}$ from $\Eb{h - h_0\mid Z, A, X}$, and the second involves inverting $h-h_0$ from $\Eb{h - h_0\mid Z, A, X}$ for $h \in \Hbbb$. The degrees of ill-posedness of these two levels  relative to the hypothesis class $\Hbbb$ are quantified by $\tau^\Hbbb_{1}$ and $\tau^\Hbbb_{2}$, respectively. And, we have $1 \le \tau^\Hbbb_{1} \le \tau^\Hbbb_{2}$. %
Note that if the completeness condition in \Cref{eq:completeness} holds\footnote{Under this completeness condition, we have $\Hbbb_0^{\obs} = \Hbbb_0$ according to \Cref{sec: comparison} \Cref{lemma: valid-bridge}. Thus any $h_0 \in \Hbbb_0^{\obs} \cap \Hbbb$ must satisfy both  \cref{eq: bridge-U-h,eq: observed-bridge-h}.}, 
$\tau^\Hbbb_{1}$ is invariant to the choice of $h_0 \in \Hbbb_0^{\obs}\cap\Hbbb$, since  $\Eb{h_0 \mid Z, A, X} = \Eb{Y \mid Z, A, X}$ and $\Eb{h_0 \mid U, A, X} = \Eb{Y \mid U, A, X}$ according to \cref{eq: bridge-U-h,eq: observed-bridge-h}. So $\tau^\Hbbb_{1}$ may be finite even when the bridge function is not uniquely identified. In contrast,  $\tau^\Hbbb_{2}$ is $+\infty$ when $ \Hbbb^{\obs}_0\cap\Hbbb$ is not a singleton.
Moreover, note that the definition of $\tau^\Hbbb_{1}$ and  $\tau^\Hbbb_{2}$ depend on the choice of hypothesis class $\Hbbb$ (which can change with $n$). When specialized to linear sieves (\cref{sec: sieve}), these measures are related to the sieve ill-posedness measures introduced in \citet{blundell2003semi,chen2012estimation}.

The ill-posedness measures in \cref{def:ill-posed} directly influence the convergence behavior of minimax bridge function estimators in different error measures. For example,  if $\tau^\Hbbb_{1} < +\infty$ and $\tau^\Hbbb_{2} < +\infty$ for all $n$, then for the minimax estimator $\hat h$, the error bound on $\|P_z(\hat h - h_0)\|$ derived in \cref{thm:w_function_easy}  easily translates into bounds on 
$\|\E[\hat h -h_0 \mid A,U,X]\|_2$ %
and $\|\hat h -h_0\|_2$, with additional inflation factors of $\tau^\Hbbb_{1}$ and $\tau^\Hbbb_{2}$ respectively.
So larger $\tau^\Hbbb_{1}$ and $\tau^\Hbbb_{2}$, \ie, more ill-posed inverse problems, correspond to larger error in terms of $\|\E[\hat h -h_0 \mid A,U,X]\|_2$ and $\|\hat h -h_0\|_2$. 
The ill-posedness measures depend both on the data generating process and on the choice of hypothesis classes $\Hbbb, \Qbbb$. As is discussed in \citet{blundell2003semi} for linear sieves, larger hypothesis classes typically correspond to bigger ill-posedness measures. %

Next, we slightly revise our doubly robust estimator by cross-fitting the bridge function estimators, a technique that has been widely used to remove restrictive Donsker conditions on hypothesis classes \citep[\eg,~][]{ZhengWenjing2011CTME,ChernozhukovVictor2018Dmlf}. For simplicity, we focus on two-fold cross-fitting; it is straightforward to extend to more folds. 
\begin{definition}[Cross-fitted DR Estimator]
Randomly split the whole sample into two halves denoted as $\Dcal_0,\Dcal_1$. Then for $j = 0, 1$, fit $\hat h^{(j)}, \hat q^{(j)}$ based on $\Dcal_j$ according to \cref{eq: est2-1,eq: est2-2}, respectively. Finally, denoting $I_i = \indic{i \in \Dcal_0}$,  redefine the DR estimator by 
\begin{align*}
    \hat J_{\DR} = \frac{1}{n}\sum_{i=1}^{n}\epol(A_i|X_i)\hat q^{(I_i)}(Z_i, A_i, X_i)(Y_i-\hat h^{(I_i)}(W_i, A_i, X_i))+(\Tcal \hat h^{(I_i)})(W_i, X_i).  
\end{align*}
\end{definition}
Now we derive the property of the cross-fitted DR estimator in terms of the ill-posedness measures $\tau^\Hbbb_{1}, \tau^\Qbbb_{1}$ in \cref{eq:dr_property1}, \emph{without} assuming unique bridge functions.

\begin{theorem}[DR estimator without bridge function uniqueness]\label{thm:dr2}
Suppose the conditions in \cref{thm:w_function_easy,thm:q_function_easy} hold.
Then, letting $O(\cdot)$ be the order w.r.t. $n$, $\tau^\Hbbb_{1}$, $\tau^\Qbbb_{1}$, $\eta_{h}$, $\eta_{q}$, and $\delta$, 
with probability at least $1- \delta$, we have
\begin{align}
    |\hat  J_{\DR}-J| = 
    \prns{\tau^\Hbbb_{1}\tau^\Qbbb_{1}\eta_{h}\eta_{q} + \sqrt{\log(1/\delta)/n}}.
    \label{eq: rate-dr-stabilizer}
\end{align}
\end{theorem}
{\newedit Note that the convergence rate in \cref{thm:dr2} can be faster than the rate in \cref{cor: dr_no_stab}. For example, suppose all relevant function classes are {\Holder} balls of order $\alpha$ in $d$ dimensions. In this case, the convergence rate of $\hat  J_{\DR}$ based on bridge function estimators \emph{with} stabilizers given in \cref{thm:dr2} reduces to $\max(\tau^\Hbbb_{1}\tau^\Qbbb_{1} n^{-2\alpha/(2\alpha+d)}, n^{-1/2})$, while the convergence rate in \cref{cor: dr_no_stab} for the estimator  \emph{without} stabilizers is {$\max( n^{-\alpha/d}, n^{-1/2})$.} The former converges faster than the latter if the ill-posedness measures satisfy $\tau^\Hbbb_{1}\tau^\Qbbb_{1} = o\prns{n^{\alpha(d-2\alpha)/(2\alpha d+d^2)}}$. This faster convergence is possible primarily because \cref{thm:dr2} assumes both realizability and closedness, under which the bridge function estimators with stabilizers converge to certain valid observed bridge functions $h_0 \in \Hbbb_0^{\obs}, q_0 \in \Qbbb_0^{\obs}$ in terms of errors $\| \E[{\hat h-h_0} \mid A, U, X]\|_2,\,\|\Eb{\epol\prns{\hat q-q_0} \mid A, U, X}\|_2 $. In contrast, \cref{cor: dr_no_stab} assumes only realizability, under which the bridge function estimators may not converge to any true bridge function at all. Actually, under realizability and closedness, we can also tailor \cref{thm:dr2} to estimators without stabilizers by leveraging the projected MSE bounds in \cref{thm:slow_rates} and the ill-posedness measures $\tau^\Hbbb_{1}, \tau^\Qbbb_{1}$ defined in \cref{eq:dr_property1}.
But since the projected MSE rates of bridge function estimators without stabilizers are typically slower than their counterparts with stabilizers (see \cref{sec: stab-error-bound,sec:with_stablizers}), the  convergence rate of the resulting DR estimator will again be slower than the rate in \cref{thm:dr2}.}

{ Next, we show $\hat  J_{\DR}$ is asymptotically normal when additionally assuming unique bridge functions.}
Note that with nonunique bridge functions, the minimax estimators $\hat h, \hat q$ may not converge to any fixed limit, even when $\| \E[{\hat h-h_0} \mid A, U, X]\|_2$ and $\|\Eb{\epol\prns{\hat q-q_0} \mid A, U, X}\|_2 $ vanish to $0$ for all $h_0\in\Hbbb_0, q_0\in\Qbbb_0$. In this nonunique case, even if $\tau^\Hbbb_{1}\tau^\Qbbb_{1}{\eta_{h}\eta_{q}  } = o(n^{-1/2})$ so that $\hat  J_{\DR}$ is $\sqrt{n}$-consistent, certain challenging stochastic equicontinuity term (\ie, \cref{eq:stochastic-equicont} in \cref{sec: proof-sec-stabilizer-dr}) contributes non-negligibly to asymptotic variance. This intractable term  makes  it very difficult to derive the asymptotic distribution of $\hat  J_{\DR}$. In the following lemma, we further assume that the bridge classes single out \emph{unique} bridge functions, and show $\hat J_{\DR}$ has an asymptotically normal distribution with a  closed-form variance. 

\begin{theorem}[DR estimator with unique bridge functions]\label{thm:dr}
Suppose that conditions in \cref{thm:w_function_easy,thm:q_function_easy} hold,  
$\Hbbb\cap \Hbbb^{\obs}_0=\{h_0\}$ and $\epol\Qbbb \cap \epol\Qbbb^{\obs}_0=\{\epol q_0\}$.
If {\newedit $\tau^\Hbbb_{2}\eta_{h} = o(1)$, $\tau^\Qbbb_{2}\eta_{q} = o(1)$, and $\min(\tau^\Hbbb_{2},\tau^\Qbbb_{2})\eta_{q}\eta_{h} = o(n^{-1/2})$},  then 
\[
\sqrt{n}(\hat  J_{\DR}-J) \overset{d}{\to} \mathcal N(0, V_{\text{eff}}), \quad V_{\text{eff}} = \E{\tilde\phi^2_\DR(O;h_0,q_0)}. 
\]
\end{theorem}

For simplicity, we here focused on the case of exact realizability and closedness of the bridge and critic classes. If, following \cref{exa:sieve_neural}, we use classes that grow with $n$ and slowly approach nonparametric classes that satisfy realizability and closedness, then we can state the unique bridge function condition to be in terms of the limiting classes.
Note that the condition $\min(\tau^\Hbbb_{2},\tau^\Qbbb_{2})\eta_{h}\eta_{q} =o(n^{-1/2})$ in \cref{thm:dr} is generally incomparable to $\tau^\Hbbb_{1}\tau^\Qbbb_{1}{\eta_{h}\eta_{q}} = o(n^{-1/2})$ in \cref{thm:dr2}, while both conditions are implied by $\tau^\Hbbb_{2}\tau^\Qbbb_{2}\eta_{h}\eta_{q} =o(n^{-1/2})$.

Finally, we remark that the asymptotic variance $V_{\textit{eff}}$ in \cref{thm:dr} coincides with the semiparametric efficiency bound derived in \cref{sec: efficiency-bound} under the assumption that $\Hbbb_0^{\obs} = \braces{h_0}$ and some additional regularity conditions.
These efficiency results extend the asymptotic efficiency results in \cite{CuiYifan2020Spci}. See \cref{sec: efficiency-bound} for details.

\section{Related Literature}\label{sec: literature}
{ 
\subsection{Negative Controls}
\cite{miao2018identifying} first develop sufficient conditions for identifying counterfactual distributions with negative controls. They relax assumptions required in previous literature on identification with proxy variables in measurement error models \citep[\eg, ][]{hu2008instrumental,kuroki2014measurement}.
\cite{shi2020multiply} focus on multiply robust estimation of the average treatment effect in a discrete setting. 
\cite{miao2018a} propose to estimate the average treatment effect by first estimating the outcome bridge function using standard Generalized
Method of Moments (GMM) methods \citep{Hansen82}.
\cite{CuiYifan2020Spci} derive the semiparametric efficiency bound for average treatment effect based on both the outcome bridge function and action bridge function.
{Most of the previous works focus on average treatment effect with discrete treatment, assume some completeness conditions, require unique bridge functions in estimation, 
and  restrict to parametric estimation of bridge functions}. In contrast, our paper studies a generalized average causal effect with  general actions (continuous or discrete), develops new identification results to avoid completeness conditions, allows for {nonunique bridge functions} in estimation,  and propose flexible minimax learning approaches to accommodate both nonparametric and parametric estimation of bridge functions. More details on comparing identification strategies is given in \Cref{sec: comparison}. 
}

\cite{deaner2021proxy} studies the identification of counterfactual mean on the treated, based on the outcome bridge function or the action bridge function. They also require completeness conditions and their estimation is restricted to two-stage sieve estimators for the outcome bridge function \citep{chen2012estimation,chen2015sieve}. 
{\newedit Later, \cite{SinghRahul2020KMfU,mastouri2021proximal} propose to use a two-stage kernel estimator for the outcome bridge function, and \cite{xu2021deep} propose a neural network extension with adaptive features, all requiring unique outcome bridge function and assuming completeness conditions. 
In contrast, our paper considers minimax estimation approaches accommodating a wider array of nonparametric machine learning methods, and assumes weaker assumptions. }

{\newedit A few concurrent papers also propose minimax estimators for the bridge functions. \citet{GhassamiAmirEmad2021MKML}
studies doubly robust estimation of average treatment effect, and \cite{QiZhengling2021PLfI} studies evaluating and learning optimal treatment regimes.
These two papers focus on discrete treatments and also require completeness conditions.
Moreover, they study minimax estimators \emph{with} stabilizers, whose theoretical guarantees are in line with some of our results in \cref{sec: closedness}.
Under both completeness and uniqueness conditions, \cite{mastouri2021proximal} proposes a Maximum Moment Restriction approach, which is a special example of our minimax estimation \emph{without} stabilizers. 
Both \citet{GhassamiAmirEmad2021MKML} and \cite{mastouri2021proximal} focus on RKHS hypothesis classes. 
In contrast, our paper assumes weaker assumptions, analyzes both types of minimax estimators, and considers a wider variety of hypothesis classes.
Like \citet{GhassamiAmirEmad2021MKML,QiZhengling2021PLfI}, our analysis for minimax estimators \emph{with} stabilizers also build on \cite{DikkalaNishanth2020MEoC}, but our analysis for the minimax estimators \emph{without} stabilizers is completely different from other literature. 
Moreover, our definitions of ill-posedness measures $\tau^\Hbbb_{1}, \tau^\Qbbb_{1}$ that allow for nonunique bridge functions are also new.}

\subsection{Instrumental variables and minimax estimation}
Estimation of the outcome bridge function $h_0$ is closely related to the nonparametric instrumental variable (IV) regression problem \citep{WhitneyK.Newey2013NIVE}. Our paper nonetheless differs significantly from these previous literature. First, the conditional moment equation for the action bridge function $q_0$ in \cref{eq: observed-bridge-q} is  distinct from the IV conditional moment equation since it includes an unknown density.
See the discussions in \cref{sec: strategy1}. Second, our target estimand is GACE
rather than bridge functions, so most of our analysis is substantially different from previous literature. For example, our new analysis shows that our GACE estimator with bridge functions without stabilizers 
remains consistent even when the bridge function estimators are inconsistent (see \cref{ex: inconsist-h}). Despite of these differences, as we mentioned, the estimation of $h_0$ itself is similar to the IV regression problem, so we next summarize the relevant literature on IV regression. 

The nonparametric IV regression problem is typically cast into the framework of conditional moment equations \citep[\eg,][]{AiChunrong2003EEoM}. One classical approach to this problem is a nonparametric analogue of the two-stage least squares (2SLS) method based on sieve estimators \citep{newey2003instrumental} or kernel density estimators \citep{darolles2010nonparametric,Hall05IV}. Another classical approach is to use sieves to convert the conditional moments into unconditional moments of increasing dimension \citep[\eg,][]{ChenXiaohong2007C7LS,chenreview2016}, and then combine all unconditional moments via standard GMM method \citep{Hansen82}. 
Later, \cite{pmlr-v70-hartford17a,NEURIPS2019_17b3c706} extend the two-stage approach by employing neural network density estimator or  conditional mean embedding in RKHS respectively in the first stage.
It remains unclear how to incorporate more general hypothesis classes while still providing rigorous theoretical guarantees for nonparametric estimators. %

Recently, there have been intense interests in minimax approaches that reformulate the nonparametric IV regression problem  via  \cref{eq:strategy1}, \cref{eq:strategy2}, or other closely related variants \citep{LewisGreg2018AGMo,ZhangRui2020MMRf,DikkalaNishanth2020MEoC,ChernozhukovVictor2020AEoR,LiaoLuofeng2020PENE,NIPS2019_8615,bennett2020variational,MuandetKrikamol2019DIVR}. The resulting minimax formulation is more analogous to the empirical risk minimization framework predominant in machine learning, which naturally permits more general function classes. 
Notably, \citet{DikkalaNishanth2020MEoC} provide a thorough analysis of the convergence rates of their minimax estimators.
{Our theoretical analysis for the minimax estimators with stablizers build on their analysis  (see \cref{thm:w_function_easy,thm:q_function_easy})
.}
Minimax estimators have been also employed in a variety of other contexts, such as the estimation of the causal effects \citep{ChernozhukovVictor2020AEoR,HirshbergDavid2019AMLE} and reinforcement learning policy evaluation  \citep{FengYihao2019AKLf,YangMengjiao2020OEvt,UeharaMasatoshi2021FSAo}, when all confounders are measured. Unlike this literature, our paper focuses on the more challenging setting with unmeasured confounders and addresses the confounding problem by using negative controls.

\section{Numerical Experiments}\label{sec:experiment}

 {\newedit 

 In this section, we illustrate our estimators with and without stabilizers in both numerical simulations and real data analysis. In particular, we feature the performance of minimax bridge function estimators using neural networks.

\subsection{Simulation Study}\label{sec: simulation}

In our simulations, we adjust the data generating process (DGP) used in \citet{CuiYifan2020Spci}  to allow for multi-dimensional variables and highly nonlinear bridge functions. 
Concretely, we first start with the DGP in \citet{CuiYifan2020Spci}, which uses one-dimensional variables, to generate multi-dimensional variables $U, X', Z', W' \in \R{d}$ with $d = 60$ (and $A \in \braces{0, 1}$). This DGP ensures existence of  bridge functions $h_0\prns{W', A, X'}$ and $q_0\prns{Z', A, X'}$ that are linear in $\prns{W', A, X'}$ and $\prns{Z', A, X'}$ respectively. 
To introduce nonlinearity, we transform $\prns{W', Z', X'}$ into $\prns{W, Z, X}$ via $X = g\prns{GX'}$, $Z = g\prns{GZ'}$, $W = g\prns{GW'}$ where $G \in \R{d \times d}$ is an invertible matrix and $g\prns{\cdot}$ is a nonlinear invertible function applied elementwise to $GX', GZ', GW'$ respectively. 
In the final data, we only observe $\prns{W, Z, X}$ but not $\prns{W', Z', X'}$. The corresponding bridge functions $h_0\prns{W, A, X}$ and $q_0\prns{Z, A, X}$ exist and are nonlinear. See \cref{ape:parameters} for detail on parameter specifications. 

Our goal is to estimate the counterfactual mean parameter $J = \Eb{Y(1)}$, which is an example of the GACE parameter with $\pi\prns{a \mid x} = a$ (see \cref{ex: atomic}). We compare the performance of three different types of estimators. The first type of estimators are our proposed IPW, REG, and DR estimators based on \emph{nonlinear} minimax bridge function estimators with and without stabilizers. To construct the outcome bridge function estimator $\hat h$ (both with and without stabilizers), we use three-layer neural networks as the bridge class $\Hbbb$, and an RKHS with a product radial basis function kernel as the critic class $\Qbbb'$. 
To construct the action bridge function estimator $\hat q$ (both with and without stabilizers), we again use three-layer neural networks as the bridge class $\Qbbb$ and an RKHS with a linear kernel as the critic class $\Hbbb'$.
Throughout we set the stabilizer parameter as $\lambda = 1$. 
See \cref{ape:parameters} for more details of model specifications and hyperparameter choices.  
The second type of estimator is the DR estimator based on \emph{linear} minimax bridge function estimators without stabilizers, where all bridge and critic classes are simple linear classes described in \cref{sec: linear} with basis functions 
$\phi\prns{z, a, x}=\tilde \phi\prns{z, a, x}=(z^{\top},a,x^{\top})^{\top}$ and $\psi\prns{w, a, x}=\tilde \psi\prns{w, a, x}=\prns{w^{\top},a,x^{\top}}^{\top}$. 
This estimator has closed-form given in \cref{lem:equivalence}. 
The third type of estimator is the regular doubly robust estimator in the unconfounded setting. This estimator ignores the unmeasured confounding and does not use the negative controls. 

In \cref{tab:d2,tab:d3,tab:d4,tab:d5}, we show the performance of different estimators over $200$ replications of experiments, with sample sizes $n = 400$ and $1200$, and variable transformation maps $g\prns{t} = \sin(t)$ and $g\prns{t} = t^3$, respectively. 
We report the MSE and bias, both normalized by the true estimand value, over the $200$ replications. 
We can observe that the regular doubly robust estimator that ignores unmeasured confounding (in the last column of each table) has high MSE and high bias. This reflects the bias due to unmeasured confounding. 
The DR estimator based on linear bridge function estimators (in the second last column of each table) also have high errors. In contrast, our proposed IPW, REG, DR estimators based on neural network minimax bridge function estimators have much lower errors. 
This is not surprising because true bridge functions are indeed nonlinear. 
Therefore, these experiments show the importance of modeling bridge functions flexibly. Our proposed minimax estimators realize this by accommodating a wide variety of flexible function classes such as neural networks. 

\subsection{Real Data Analysis}

We also apply our methods to the right heart catheterization (RHC) dataset from the Study to Understand Prognoses and Preferences for Outcomes and Risks for Treatments (SUPPORT). The treatment variable $A$ indicates  whether  a patient received an RHC within $24$ hours of admission or not. A binary outcome $Y$ stands for a patient's $30$-day survival since admission. This dataset includes $5735$ participants with $72$ covariates. See \citet{hirano03} Table 2 for more details. 

A number of papers have analyzed the RHC dataset assuming no unmeasured confounding \citep{hirano03,TanZhiqiang2006ADAf,VermeulenKarel2015BDRE}. In contrast, \citet{CuiYifan2020Spci} allow  unmeasured confounders to exist and  treat four physiological status variables, pafi1, paco21, ph1, and hema1, as proxies to mitigate possible confounding. Following \citet{CuiYifan2020Spci}, we set $Z=(\text{pafi1}, \text{paco21}),W=(\text{ph1}, \text{hema1})$ and the other covariates as $X$. 

\cref{tab:real_result} shows point estimates and $95\%$ confidence intervals for the average treatment effect. The first two columns correspond to our proposed DR estimators based on minimax bridge function estimators without and with stabilizers respectively. 
The bridge classes are also three-layer neural networks, and the critic classes are specified analogously to those in \cref{sec: simulation} (albeit with different covariate and negative control dimensions).
The third column corresponds to the DR estimator in \cite{CuiYifan2020Spci} based on linear bridge function estimators.
The fourth column corresponds to the estimator in \citet{VermeulenKarel2015BDRE} that assumes no unmeasured confounding. 
All estimates in \cref{tab:real_result} suggest that applying RHC causes higher $30$-day mortality than not applying RHC. 
We can observe that when using simple linear bridge function estimators (the third column), the estimates are similar to the results previously reported in \citet{VermeulenKarel2015BDRE} (the fourth column). 
However, when using more flexible models for the bridge functions (the first two columns), the results suggest that applying RHC might not be as harmful as previous methods suggest. 
}

\section{Conclusions and Future Work}

In this paper we tackled a central challenge in doing causal inference using negative controls: estimating the bridge functions. We developed an alternative identification strategy that eschewed completeness assumptions that were imposed on bridge functions in many previous approaches and that may be dubious in practice. We proposed new minimax estimators for the bridge functions that were amenable to general function approximation. We studied the behavior of these bridge function estimators and the resulting GACE estimators under a range of different assumptions. %

Our work can be extended to tackle complex estimation in other settings. For example, both \citet{Tennenholtz_Shalit_Mannor_2020,Lee2021} study complex settings where a causal estimand is identified using a proxy or negative-control approach -- the first a partially-observable reinforcement learning setting and the second a more general directed acyclic graph setting -- but both focus on the setting of \emph{discrete} data distributions for simplicity.
By leveraging our work, which allows flexible hypothesis classes, we may be able to tackle these more complicated settings on continuous spaces. %

\begin{table}[h!]
 \caption{     \label{tab:d2} $g(t)=\sin(t)$, $n=400$. }
    \begin{tabular}{c|cccccccc  }
    \toprule 
           & IPW & IPW(sta) & REG & REG(sta) &  DR  & DR(sta) & Linear & DR (no W,Z) \\ 
    \midrule  
             MSE  & 0.0235  & 0.0539  &  0.0029  & 0.0034  & 0.0025 &  0.0024 &  0.2529 &   0.3422  \\
    Squared Bias  & 0.0194  & 0.0412   & 0.0004 &   0.0008 & 0.0004 & 0.0002   &  0.2497 &  0.3356 \\
        Variance  & 0.0041 & 0.0127   & 0.0025 &   0.0026 & 0.0021 & 0.0022   &  0.0032 &  0.0066 \\
    \bottomrule 
    \end{tabular}

\vspace{\baselineskip}

 \caption{     \label{tab:d3} $g(t)=\sin(t)$, $n=1200$.}
    \begin{tabular}{c|cccccccc }
    \toprule 
                  & IPW & IPW(sta) & REG & REG(sta) &  DR  & DR(sta) & Linear & DR (no W,Z)  \\
    \midrule  
          MSE   & 0.0151  &  0.0513 &  0.0021 & 0.0028  & 0.0020  & 0.0019 & 0.1466 & 0.3738 \\
  Squared bias  &  0.0128  &  0.0406 &  0.0004  &  0.0009 & 0.0004 & 0.0005  & 0.1455 & 0.3696\\
      Variance  & 0.0023 &  0.0107 & 0.0017  & 0.0019  & 0.0016  & 0.0014  & 0.0011  & 0.0042  \\
    \bottomrule 
    \end{tabular}

\vspace{\baselineskip}
 \caption{\label{tab:d4} $g(t) = t^3$, $n=400$.}
    \begin{tabular}{c|cccccccc }
    \toprule 
                & IPW & IPW(sta) & REG & REG(sta) &  DR  & DR(sta) & Linear & DR (no W,Z)  \\
    \midrule  
            MSE  &  0.0378   & 0.0472 & 0.0212  & 0.0240 & 0.0068 & 0.0078 &  0.3138 &   0.3416    \\
   Squared Bias  & 0.0301  & 0.0403 & 0.0118  &  0.0172 & 0.0021 & 0.0036 &  0.3103  &  0.3345 \\
       Variance  & 0.0077  & 0.0069 & 0.0094  &  0.0068 & 0.0047 & 0.0042 &  0.0035  &  0.0071 \\
    \bottomrule 
    \end{tabular}  
\vspace{\baselineskip}
 \caption{     \label{tab:d5} $g(t) = t^3$, $n=1200$.}
    \begin{tabular}{c|ccccccccc }
    \toprule 
                & IPW & IPW(sta) & REG & REG(sta) &  DR  & DR(sta) & Linear & DR (no W,Z)  \\
    \midrule  
             MSE   & 0.0301  &  0.0329  & 0.0207 &  0.0242 & 0.0055   & 0.0062 & 0.3317 &   0.3880 \\
   Squared Bias   & 0.0264  & 0.0289   & 0.0178  & 0.0219  & 0.0029 &  0.0043  & 0.3304 & 0.3791\\
       Variance   & 0.0037  & 0.0040   & 0.0029  & 0.0023  & 0.0026 &  0.0019  & 0.0013 & 0.0089\\
    \bottomrule 
    \end{tabular}
\vspace{\baselineskip}
\caption{\label{tab:real_result} Treatment effect estimates (standard errors) and $95\%$ confidence intervals of the average treatment effect}
    \begin{tabular}{lllll}
    \toprule
      &  DR  & DR(sta) &   
      CPSMT20%
      & 
      VV15%
      \\
         \midrule
     Treatment effects (SEs) & $-0.0219\,(0.00511)$   &  $-0.0271\,(0.00426)$  & $-0.0607\,(0.00546)$  & $-0.0612\,(0.0141)$   \\
     $95\%$ CIs  &    $ [-0.0319,-0.0119]$    &   $ [-0.0319,-0.0119]$   & $[-0.0714,-0.0500]$ &  $[-0.0889,-0.0335]$  \\
     \bottomrule
    \end{tabular}
\end{table}

\bibliographystyle{plainnat}
\bibliography{rc}
\newpage 
\appendix

\centerline{SUPPLEMENTARY MATERIAL}

\section{Existence of Single Bridge Functions Is Not Enough for Identification}\label{app: single-bridge}

In this section, we prove two statements: (1) the existence of action bridge functions alone ($\mathbb{H}_0\neq \emptyset$) is not enough to identify GACE, and (2) the existence of outcome bridge functions alone ($\mathbb{Q}_0 \neq \emptyset$) is not enough to identify GACE. 

\subsection{The existence of action bridge functions alone is not enough }

Suppose $A,Z,U,Y$ are binary variables and $X,W$ are empty (or, constant) variables. 
We want to construct  two different instances that satisfy the following conditions:
\begin{itemize}
    \item The joint distributions of $(Z,A,Y)$ are the same. 
    \item Both instances satisfy $Z \perp Y \mid A,U$. 
    \item The $2\times 2$ matrices corresponding to $P(\mathbf{Z}\mid \mathbf{U},A=0)$ and $P(\mathbf{Z}\mid \mathbf{U},A=1)$ are both invertible,  which ensures the existence of action bridge functions.
    \item The values of GACE corresponding to the contrast function $\pi(a \mid x)=I(a=0)$, namely the counterfactual mean parameter $\E[Y(0)]$, are different in the two instances. This target parameter can be written as 
    \begin{align*}
        \E[Y(0)]=\E[\E[Y\mid A=0,U]].
    \end{align*}
\end{itemize}

Table \ref{tab:two_different_second} describes two different instances satisfying the above conditions where the parameter $\E[Y(0)]$ is equal to $0.5532$ in one and to $0.543896$ in the other. 

\begin{table}[t!]
    \centering
        \caption{Two instances where the action bridge functions exist and the distributions of the observables $(Z, A, Y)$ are identical but the parameter $\Eb{Y(0)}$ has different values. 
         }
        \label{tab:two_different_second}
    \begin{tabular}{ccccccc} 
    \toprule
     &  U  &  A & Z & Y  &  $\mathbb P$ (instance 1) & $\mathbb P$ (instance 2)   \\ \midrule
     &  0  &  0 &  0  &  0 &  0.075  & 0.025\\ 
     &  0  &  0 &  0  &  1 &  0.05625 & 0.04375\\ 
     &  0  &  0 &  1  &  0 &  0.025 & 0.075\\
     &  0  &  0 &  1  &  1 &  0.01875 & 0.13125\\
     &  0  &  1 &  0  &  0 &  0.0375 & 0.09\\
     &  0  &  1 &  0  &  1 &  0.01875 &0.12\\
     &  0  &  1 &  1  &  0 &  0.1125 & 0.03\\
     &  0  &  1 &  1  &  1 &  0.05625 & 0.04\\
     &  1  &  0 &  0  &  0 &  0.025 & 0.075\\
     &  1  &  0 &  0  &  1 &  0.04375 & 0.05625\\
     &  1  &  0 &  1  &  0 &  0.075 & 0.025\\
     &  1  &  0 &  1  &  1 &  0.13125 & 0.01875\\
     &  1  &  1 &  0  &  0 &  0.1125 & 0.06\\
     &  1  &  1 &  0  &  1 &  0.13125 & 0.03\\
     &  1  &  1 &  1  &  0 &  0.0375 & 0.12\\
     &  1  &  1 &  1  &  1 &  0.04375 & 0.06 \\ \bottomrule
    \end{tabular}
\end{table}

\subsection{The existence of outcome bridge functions alone is not enough}

Suppose $A,W,U,Y$ are binary variables and $X,Z$ are empty (or, constant) variables. 
We want to construct  two different instances that satisfy the following conditions:
\begin{itemize}
    \item The joint distributions of $(W,A,Y)$ are the same. 
    \item Both instances satisfy $W \perp A \mid U$. 
    \item The $2 \times 2$ matrices corresponding to $P(\mathbf{W} \mid \mathbf{U},A=0)$ and $P(\mathbf{W} \mid \mathbf{U},A=1)$  are  both invertible, which ensures the existence of outcome bridge functions. 
    \item The GACE parameter $\E[Y(0)]$ have different values in two instances.
\end{itemize}

Table \ref{tab:two_different} describes two different instances satisfying the above conditions where the parameter $\E[Y(0)]$ is equal to $0.51$ in one and to $0.61$ in the other.

\begin{table}[t!]
    \centering
        \caption{Two instances where the outcome bridge functions exist and the distributions of the observables $(W, A, Y)$ are identical but the parameter $\Eb{Y(0)}$ has different values. 
        }
        \label{tab:two_different}
    \begin{tabular}{ccccccc}
        \toprule
     &  U  &  Y & A & W  &  $\mathbb P$ (instance 1) & $\mathbb P$ (instance 2)\\ \midrule
     &  0  &  0 &  0  &  0 &  0.0525  & 0.0775\\ 
     &  0  &  0 &  0  &  1 &  0.0625 & 0.0875\\ 
     &  0  &  0 &  1  &  0 &  0.0275 & 0.0125\\
     &  0  &  0 &  1  &  1 &  0.0175 & 0.0125\\
     &  0  &  1 &  0  &  0 &  0.06 & 0.11\\
     &  0  &  1 &  0  &  1 &  0.05 & 0.1\\
     &  0  &  1 &  1  &  0 &  0.01 & 0.05\\
     &  0  &  1 &  1  &  1 &  0.02 & 0.05\\
     &  1  &  0 &  0  &  0 &  0.0475 & 0.0225\\
     &  1  &  0 &  0  &  1 &  0.0375 & 0.0125\\
     &  1  &  0 &  1  &  0 &  0.1225 & 0.0875\\
     &  1  &  0 &  1  &  1 &  0.1325 & 0.0875\\
     &  1  &  1 &  0  &  0 &  0.04 & 0.04\\
     &  1  &  1 &  0  &  1 &  0.05 & 0.05\\
     &  1  &  1 &  1  &  0 &  0.14 & 0.1\\
     &  1  &  1 &  1  &  1 &  0.13 & 0.1  \\
     \bottomrule
    \end{tabular}

\end{table}

\section{Comparing Identification Strategies}\label{sec: comparison}

One notable difference between our paper and previous literature is that our paper does not assume any completeness condition but previous papers do. {In this section, we take the identification strategy in \citet{CuiYifan2020Spci} as an example to illustrate the  role of completeness conditions in the previous literature and compare the previous identification strategy with  our proposed identification strategy. }

\subsection{Conditions in \citet{CuiYifan2020Spci}  }

We first recall the sets of bridge functions given by the conditional moment equations in  \cref{eq: bridge-U-h,eq: bridge-U-q}:
\begin{align*}
    &\Hbbb_0 
        = \braces{h \in L_2(W, A, X): \E[Y-h(W,A,X)\mid A,U,X]=0}, \\
&\Qbbb_0 =
      \braces{q : \pi q \in L_2(Z, A, X), \E[\epol(A\mid X)\prns{q(Z, A, X)-1/f(A\mid U, X)} \mid A,U, X]=0},
\end{align*}
and the sets of bridge functions given by the observed data conditional moment equations in  \cref{eq: observed-bridge-h,eq: observed-bridge-q}:
\begin{align*}
    &\Hbbb_0^{\obs}
        = \braces{h \in  L_2(W, A, X): \E[Y-h(W,A,X)\mid Z,A,X]=0}, \\
    &\Qbbb_0^{\obs} =
      \braces{q :  \pi q\in  L_2(Z, A, X), \E[\epol(A\mid X)\prns{q(Z, A, X)-1/f(A\mid W, X)} \mid W, A, X]=0}.
\end{align*}
In \cref{lem:bridge-identification}, we already show that any bridge functions in $\Hbbb_0$ and $\Qbbb_0$ can identify the causal parameter. However, we can not directly learn functions in $\Hbbb_0$ and $\Qbbb_0$ from the observed data, because they depend on the unmeasured confounders $U$. Instead, we can at most learn functions in $\Hbbb_0^{\obs}$ and $\Qbbb_0^{\obs}$ from the observed data.
Although in \cref{lemma: observed-bridge}, we already show that $\Hbbb_0 \subseteq \Hbbb_0^{\obs}$ and $\Qbbb_0 \subseteq \Qbbb_0^{\obs}$,
the converse may not be true. In other words, functions in $\Hbbb_0^{\obs}$ (or $\Qbbb_0^{\obs}$) that we can possibly learn from observed data may not necessarily belong to $\Hbbb_0$ (or $\Qbbb_0$). Thus without further conditions, we cannot use  \cref{lem:bridge-identification} for identification, since this lemma only applies to  functions in $\Hbbb_0$ and $\Qbbb_0$.

\cite{CuiYifan2020Spci} handles this problem by assuming  the following  completeness conditions.
\begin{assumption}\label{assump: full-completeness}
Consider the following conditions:
\begin{enumerate}
    \item  \label{assump: completeness-1} For any $g(A, U, X) \in L_2(A, U, X)$, $\Eb{g(A, U, X) \mid W, A, X} = 0$ only when $g(A, U, X) = 0$.
    \item \label{assump: completeness-2} For any $g(A, U, X) \in L_2(A, U, X)$, $\Eb{g(A, U, X) \mid Z, A, X} = 0$ only when $g(A, U, X) = 0$.
\end{enumerate}
\end{assumption}
With these completeness conditions, we can show that $\Qbbb_0^{\text{obs}} = \Qbbb_0$ and $\Hbbb_0^{\text{obs}} = \Hbbb_0$, namely, any bridge functions that we can learn from the observed data (\ie, functions in $\Qbbb_0^{\text{obs}}$ and $\Hbbb_0^{\text{obs}}$) are indeed bridge function defined by unmeasured confounders (\ie, functions in $\Qbbb_0$ and $\Hbbb_0$). 
\begin{lemma}\label{lemma: valid-bridge}
Suppose \cref{asm:whole_assm} holds. 
\begin{enumerate}
    \item If \cref{assump: full-completeness} condition \ref{assump: completeness-1} further holds, then 
    $\Qbbb_0^{\text{obs}} = \Qbbb_0$.
    \item If \cref{assump: full-completeness} condition \ref{assump: completeness-2} further holds, then $\Hbbb_0^{\text{obs}} = \Hbbb_0$.
\end{enumerate}
\end{lemma}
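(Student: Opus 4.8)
Since \cref{lemma: observed-bridge} already gives $\Hbbb_0\subseteq\Hbbb_0^{\obs}$ and $\Qbbb_0\subseteq\Qbbb_0^{\obs}$, in each part it remains only to establish the reverse inclusion, and both parts follow the same template. If $\Hbbb_0^{\obs}$ (resp.\ $\Qbbb_0^{\obs}$) is empty the claim is immediate from the forward inclusion, so assume it is nonempty. For part 2, given $h\in\Hbbb_0^{\obs}$ I would introduce the residual $g(A,U,X)\coloneqq\E[Y-h(W,A,X)\mid A,U,X]$; proving $g\equiv 0$ a.s.\ is exactly the statement $h\in\Hbbb_0$, and by \cref{assump: full-completeness} condition 2 (applied to $g\in L_2(A,U,X)$) it suffices to verify $\E[g(A,U,X)\mid Z,A,X]=0$. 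For part 1, given $q\in\Qbbb_0^{\obs}$ I would set $g(A,U,X)\coloneqq\E[\epol(A|X)\{q(Z,A,X)-1/f(A|U,X)\}\mid A,U,X]$; proving $g\equiv 0$ a.s.\ is exactly $q\in\Qbbb_0$, and by \cref{assump: full-completeness} condition 1 it suffices to verify $\E[g(A,U,X)\mid W,A,X]=0$.

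The common toolkit is a small set of conditional independences implied by \cref{asm:whole_assm}. From condition \ref{whole_assm:nc-unconfoundedness}, $(Z,A)\ci(Y(a),W)\mid U,X$, the graphoid decomposition axiom gives $A\ci W\mid U,X$, and the weak-union axiom gives $Z\ci(Y(a),W)\mid A,U,X$ for every $a$; combining the latter with consistency (condition \ref{whole_assm:consistency}) and conditions \ref{whole_assm:nc-action}--\ref{whole_assm:nc-outcome} to replace $Y(a)$ by the observed $Y$ (the standard potential-outcome-to-observed translation) upgrades it to $Z\ci(Y,W)\mid A,U,X$, and in particular $Z\ci W\mid A,U,X$. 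For part 2 this already suffices: writing $V\coloneqq Y-h(W,A,X)$, a measurable function of $(Y,W)$ once $(A,X)$ is fixed, $Z\ci(Y,W)\mid A,U,X$ gives $\E[V\mid Z,A,U,X]=\E[V\mid A,U,X]=g(A,U,X)$, so by the tower rule $\E[g\mid Z,A,X]=\E[\E[V\mid Z,A,U,X]\mid Z,A,X]=\E[V\mid Z,A,X]=0$, the last equality because $h\in\Hbbb_0^{\obs}$; completeness condition 2 then yields $g\equiv 0$, hence $h\in\Hbbb_0$.

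For part 1 the extra---and main---technical step is the identity $\E[\epol(A|X)/f(A|U,X)\mid W,A,X]=\epol(A|X)/f(A|W,X)$. I would prove it by Bayes' rule: conditioning fixes $(A,X)=(a,x)$, and $p_{U\mid W,A,X}(u\mid w,a,x)=f(a\mid u,w,x)\,p_{U\mid W,X}(u\mid w,x)/f(a\mid w,x)$ with $f(a\mid u,w,x)=f(a\mid u,x)$ by $A\ci W\mid U,X$, so integrating $1/f(a\mid u,x)$ against this density cancels the $f(a\mid u,x)$ factors and returns $1/f(a\mid w,x)$ (all ratios are finite by the overlap condition \ref{whole_assm:nc-overalp}, and points with $\epol(a|x)=0$ contribute nothing to either side). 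Granting this, I compute $\E[g\mid W,A,X]=\epol(A|X)\,\E[\E[q(Z,A,X)\mid A,U,X]\mid W,A,X]-\epol(A|X)\,\E[1/f(A|U,X)\mid W,A,X]$; the first term equals $\epol(A|X)\,\E[q(Z,A,X)\mid W,A,X]$ because $Z\ci W\mid A,U,X$ lets me insert $U$ into the inner conditioning and then apply the tower rule, and the second equals $\epol(A|X)/f(A|W,X)$ by the identity, so $\E[g\mid W,A,X]=\E[\epol(A|X)\{q(Z,A,X)-1/f(A|W,X)\}\mid W,A,X]=0$ since $q\in\Qbbb_0^{\obs}$; completeness condition 1 then gives $g\equiv 0$ and $q\in\Qbbb_0$. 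The only remaining bookkeeping is to confirm the residual lies in $L_2(A,U,X)$ so that \cref{assump: full-completeness} is applicable---trivial in part 2 from $h\in L_2$ and square-integrability of $Y$, and in part 1 from $\epol q\in L_2(Z,A,X)$ together with square-integrability of $\epol(A|X)/f(A|U,X)$ under overlap. I expect the potential-outcome translation and the Bayes identity for $1/f(A|U,X)$ to be the only places requiring genuine care; everything else is tower-rule bookkeeping.
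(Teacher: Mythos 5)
Your proposal is correct and follows essentially the same route as the paper: the forward inclusions come from \cref{lemma: observed-bridge}, and the reverse inclusions are obtained by writing the observed-data conditional moment as the projection of the $U$-conditional residual $g(A,U,X)$ onto $(Z,A,X)$ (resp.\ $(W,A,X)$) via the latent unconfoundedness and the tower rule, then invoking the corresponding completeness condition in \cref{assump: full-completeness}. The paper's proof only writes out the $\Hbbb$ case and dismisses the $\Qbbb$ case with ``similarly''; your explicit treatment of the Bayes-rule identity $\E[\epol(A|X)/f(A|U,X)\mid W,A,X]=\epol(A|X)/f(A|W,X)$ and the $L_2$ bookkeeping simply fills in details the paper relegates to its proof of \cref{lemma: observed-bridge}.
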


Therefore, by assuming completeness conditions in \cref{assump: full-completeness}, previous literature can use \cref{lem:bridge-identification} to identify the causal parameter by any $q_0 \in \Qbbb_0^{\text{obs}} = \Qbbb_0$ and/or $h_0 \in \Hbbb_0^{\text{obs}} = \Hbbb_0$. 

\begin{remark}
 Although \cite{shi2020multiply,miao2018a} do not exactly assume the completeness conditions in \cref{assump: full-completeness}, their identification strategy is the same as those when assuming \cref{assump: full-completeness}: namely,
 they impose conditions that ensure  $\Hbbb_0^{\text{obs}} = \Hbbb_0$, so that they can still use \cref{lem:bridge-identification} to achieve identification. \cite{miao2018a} assumes that $\Hbbb_0 \ne \emptyset$ and that $\Hbbb_0^{\text{obs}}$ is a singleton. Since $\Hbbb_0 \subseteq \Hbbb_0^{\text{obs}}$ (\cref{lem:bridge-identification}), we must have $\Hbbb_0=\Hbbb_0^{\text{obs}}$ is equal to the singleton.
\cite{shi2020multiply} studies discrete negative controls and unmeasured confounders (see \cref{ex: bridge-discrete}), and focuses on the setting where these variables have the same number of values, \ie, $|\Wcal| = |\Zcal| = |\Ucal|$. In this case, they assume that the matrix $P(\mathbf{W} \mid \mathbf{Z}, a, x)$ is invertible for any $a \in \Acal, x \in \Xcal$ (see \cref{assump: obs-completeness} condition 1 below). It is easy to show that this condition  implies that $P(\mathbf{U} \mid \mathbf{Z}, a, x)$ is also invertible, namely, \cref{assump: full-completeness} condition \ref{assump: completeness-2} holds. Thus, \cite{shi2020multiply} implicitly requires  $\Hbbb_0^{\text{obs}} = \Hbbb_0$ as well. 
\end{remark}

{\newedit As a concrete example, we tailor the identification results in \cite{CuiYifan2020Spci} to the GACE parameter $J$ in the following proposition.
\begin{proposition}[\cite{CuiYifan2020Spci}]\label{prop: cui2020}
Suppose that \cref{asm:whole_assm} holds.  
\begin{enumerate}
\item \label{prop: cui2020-1} If $\Hbbb^{\obs}_0\neq \emptyset$ (or, if $\Hbbb_0\neq \emptyset$) and \cref{assump: full-completeness} condition \ref{assump: completeness-2} holds, then $J = \E{\tilde\phi_\DM(O;h_0)}$ for any $h_0 \in \Hbbb_0^{\obs}$.
\item \label{prop: cui2020-2} If $\Qbbb^{\obs}_0\neq \emptyset$ (or, if $\Qbbb_0\neq \emptyset$) and \cref{assump: full-completeness} condition \ref{assump: completeness-1} holds, then $J = \E{\tilde\phi_\ipw(O;q_0)}$ for any $q_0 \in \Qbbb_0^{\obs}$.
\item If conditions in either statement \ref{prop: cui2020-1} or statement \ref{prop: cui2020-2} hold, then $J = \E{\tilde\phi_\dr(O;q_0)}$ for any $h_0 \in \Hbbb_0^{\obs}$ and any $q_0 \in \Qbbb_0^{\obs}$.
\end{enumerate}
\end{proposition}
The proof for \cref{prop: cui2020} goes as follows: given the conditions in statement \ref{prop: cui2020-1}, we must have $\Hbbb_0^{\text{obs}} = \Hbbb_0$ according to \cref{lemma: valid-bridge}, so any $h_0 \in \Hbbb_0^{\obs}$ must satisfy $h_0 \in \Hbbb_0$ and thus can be used to identify $J$ according to \cref{lem:bridge-identification}. Statement \ref{prop: cui2020-2} can be proved analogously by noting $\Qbbb_0^{\text{obs}} = \Qbbb_0$ given the conditions therein. The completeness conditions in \cref{prop: cui2020} are crucial since they ensure $\Hbbb_0^{\text{obs}} = \Hbbb_0$ and $\Qbbb_0^{\text{obs}} = \Qbbb_0$.
Note that as \cite{CuiYifan2020Spci} remarked, \Cref{prop: cui2020} can also start with $\Hbbb_0 \ne \emptyset$ and $\Qbbb_0 \ne \emptyset$. These two conditions are equivalent to $\Hbbb_0^{\obs} \ne \emptyset$  and $\Qbbb_0^{\obs} \ne \emptyset$, respectively, under the completeness conditions in \cref{assump: full-completeness}.
}

\subsection{Comparing Our Conditions and Conditions in \cite{CuiYifan2020Spci}}
{\newedit 
In the following proposition, we consider the discrete setting in \cref{ex: bridge-discrete}.
We show that our identification strategies in 
\cref{lemma: relaxed-ipw-dm} require \emph{strictly} weaker conditions than the counterparts in  \cref{prop: cui2020}. In particular, we show that conditions in \Cref{prop: cui2020} statement \ref{prop: cui2020-1} and statement \ref{prop: cui2020-2} imply conditions in \Cref{lemma: relaxed-ipw-dm}  statement \ref{lemma: relaxed-ipw-dm-1} and  condition \ref{lemma: relaxed-ipw-dm-2}, respectively. However, the converse is not true: there exist instances where conditions in \Cref{lemma: relaxed-ipw-dm} statement \ref{lemma: relaxed-ipw-dm-1} and statement \ref{lemma: relaxed-ipw-dm-2} hold but conditions in \Cref{prop: cui2020} statement \ref{prop: cui2020-1} and statement \ref{prop: cui2020-2}  are violated.

\begin{proposition}\label{prop: weaker-identify}
Consider the discrete setting in \cref{ex: bridge-discrete}. Suppose that \cref{asm:whole_assm} holds. 
\begin{enumerate}
\item \label{prop: weaker-identify-1} If
 $\Hbbb^{\obs}_0\neq \emptyset$ and \cref{assump: full-completeness} condition \ref{assump: completeness-2} holds, then $\Hbbb_0\neq \emptyset$ and $\Qbbb^{\obs}_0\neq \emptyset$. However, there exist instances such that $\Hbbb_0\neq \emptyset$ and $\Qbbb^{\obs}_0\neq \emptyset$ but \cref{assump: full-completeness} condition \ref{assump: completeness-2} does not hold and $\Hbbb_0 \subsetneq \Hbbb_0^{\obs}$.
\item \label{prop: weaker-identify-2} If $\Qbbb^{\obs}_0\neq \emptyset$ and \cref{assump: full-completeness} condition \ref{assump: completeness-1} holds, then $\Qbbb_0\neq \emptyset$ and $\Hbbb^{\obs}_0\neq \emptyset$. However, there exist instances such that $\Qbbb_0\neq \emptyset$ and $\Hbbb^{\obs}_0\neq \emptyset$ but \cref{assump: full-completeness} condition \ref{assump: completeness-1} does not hold and $\Qbbb_0 \subsetneq \Qbbb_0^{\obs}$.
\end{enumerate}
\end{proposition}
}
\begin{proof}
{\newedit We prove \cref{prop: weaker-identify} statement \ref{prop: weaker-identify-1} here. Statement \ref{prop: weaker-identify-2} can be proved analogously.

First, according to \cref{lemma: valid-bridge}, under \cref{assump: full-completeness} condition \ref{assump: completeness-2}, $\Hbbb_0 = \Hbbb_0^{\obs}$ so
$\Hbbb_0^{\obs} \ne \emptyset$ implies $\Hbbb_0 \ne \emptyset$. Moreover, in the discrete setting given in \cref{ex: bridge-discrete}, \cref{assump: full-completeness} condition \ref{assump: completeness-2} amounts to $P(\bU\mid \bZ,a,x)$ having full row rank for any $a, x$. By Bayes rule, this implies that $P(\bZ\mid \bU,a, x)$ has full column rank for any $a, x$. It then follows  from \cref{eq:requirement} that $\Qbbb_0 \ne \emptyset$. Since $\Qbbb_0\subseteq \Qbbb_0^{\obs}$ according to \cref{lemma: observed-bridge}, we must have $\Qbbb_0^{\obs} \ne \emptyset$ as well. This proves the first part of statement \ref{prop: weaker-identify-1}. 

Now we consider the second part of statement \ref{prop: weaker-identify-1}. 
 Now we construct instances such that $\Hbbb_0 \ne \emptyset$ and $\Qbbb_0^{\obs} \ne \emptyset$  but \cref{assump: full-completeness} condition \ref{assump: completeness-2} does not hold. 
For simplicity, we assume that there are no covariates $X$. 
The idea is as follows: according to \cref{eq:requirement}, $\Hbbb_0 \ne \emptyset, \Qbbb_0 \ne \emptyset$ as long as solutions $h_0\prns{\bW, a}$ and $q_0\prns{\bZ, a}$ to the linear equation systems below exist for any $a \in \braces{0, 1}$. 
\begin{align*}
  h^\top_0(\mathbf{W}, a)P(\mathbf{W} \mid \mathbf{U}, a) = \E\bracks{Y \mid \mathbf{U}, a},\,q_0^\top(\mathbf{Z}, a)P(\mathbf{Z} \mid \mathbf{U}, a)F(a \mid \mathbf{U}) = \mathbf{e}^\top. 
\end{align*}
Thus 
once we specify $P(\mathbf{W} \mid \mathbf{U}, a), \E\bracks{Y \mid \mathbf{U}, a}, P(\mathbf{Z} \mid \mathbf{U}, a), F(a \mid \mathbf{U})$ in such a way that solutions to the linear equations above exist, we have 
$\Hbbb_0\neq \emptyset$ and $\Qbbb^{\obs}_0\neq \emptyset$ (because $\Qbbb_0 \subseteq \Qbbb_0^{\obs}$ according to \cref{lemma: observed-bridge}). At the same time, we can specify them in such a way that $P(\mathbf{Z} \mid \mathbf{U}, a)$ has  deficient column rank, which in turn implies that $P(\mathbf{U} \mid \mathbf{Z}, a)$ has deficient row rank so the completeness condition given in \cref{assump: full-completeness} condition \ref{assump: completeness-2} does not hold. 
This is not difficult as long as $F(a \mid \mathbf{U})$ is chosen appropriately such that $\mathbf{e}^\top F^{-1}(a \mid \mathbf{U})$ belongs the the row space of $P(\mathbf{Z} \mid \mathbf{U}, a)$ and thus $q_0\prns{\bZ, a}$ exists.

Let us consider a simplified setting where $Y \in \braces{0, 1}$, $U, W, Z \in \braces{0, 1, 2}$, $A \in \braces{0, 1}$.  
We provide a concrete instance for $a = 0$ that violates \cref{assump: full-completeness} condition \ref{assump: completeness-2} but satisfies $\Hbbb_0\neq \emptyset$ and $\Qbbb^{\obs}_0\neq \emptyset$. 
It is easy to construct an instance for $a = 1$ analogously so we omit it here.

We specify the following: for $a = 0$, 
\begin{align*}
P\prns{\bU \mid \bZ, a} = 
\begin{bmatrix}
\frac{1}{4} & \frac{1}{4} & \frac{1}{4} \\
\frac{1}{4} & \frac{1}{4} & \frac{1}{4} \\
\frac{1}{2} & \frac{1}{2} & \frac{1}{2} 
\end{bmatrix}, 
~~ 
P\prns{\bW \mid \bU, a} =
\begin{bmatrix}
\frac{1}{3} & \frac{1}{6} & \frac{1}{4} \\
\frac{1}{3} & \frac{1}{2} & \frac{5}{12} \\
\frac{1}{3} & \frac{1}{3} & \frac{1}{3}
\end{bmatrix}, \\
P\prns{\bZ \mid a} = \begin{bmatrix}
\frac{1}{2} \\ \frac{1}{3} \\ \frac{1}{6}
\end{bmatrix}, 
~~ 
P\prns{Y = 1 \mid \bU, a} = 
\begin{bmatrix}
\frac{1}{5}  \\
\frac{2}{5} \\
\frac{3}{10}
\end{bmatrix},
~~ 
F\prns{a \mid \bU} = 
\begin{bmatrix}
\frac{3}{8} & 0 & 0 \\
0 & \frac{3}{8} & 0 \\
0 & 0 & \frac{3}{8}
\end{bmatrix}.
\end{align*}
In this instance, the rank of $P\prns{\bU \mid \bZ, a}$ is $1$ so \cref{assump: full-completeness} condition \ref{assump: completeness-2} does not hold. However, we can easily check that 
\begin{align*}
q_0\prns{\bZ, a} = 
\begin{bmatrix}
2 \\
3 \\
4
\end{bmatrix} \in \Qbbb_0 \subseteq \Qbbb_0^{\obs},
~~ 
h_0\prns{\bW, a}
= 
\begin{bmatrix}
-\frac{3}{10} \\ 
\frac{9}{10} \\
0
\end{bmatrix}
\in \Hbbb_0.
\end{align*}
Thus $\Hbbb_0\neq \emptyset$ and $\Qbbb^{\obs}_0\neq \emptyset$ but \cref{assump: full-completeness} condition \ref{assump: completeness-2} does not hold. In this instance, actually $\Hbbb_0 \subsetneq \Hbbb_0^{\obs}$. For example:
\begin{align*}
h_0\prns{\bW, a} = 
\begin{bmatrix}
\frac{6}{5} \\ 0 \\ 0 
\end{bmatrix}
\in \Hbbb_0^{\obs} \setminus \Hbbb_0.
\end{align*}
In contrast, if the condition in \cref{prop: cui2020} statement \ref{prop: cui2020-1} were true, we must have $\Hbbb_0 = \Hbbb_0^{\obs}$. This again confirms that conditions in our \cref{lemma: relaxed-ipw-dm} statement \ref{lemma: relaxed-ipw-dm-1} are strictly weaker than the counterparts in \cref{prop: cui2020} statement \ref{prop: cui2020-1}.
}
\end{proof}

\begin{remark}
In \Cref{prop: weaker-identify}, we focus on the discrete setting described in \Cref{ex: bridge-discrete} to compare the IPW identification assumptions and REG identification assumptions in \Cref{lemma: relaxed-ipw-dm} and \Cref{prop: cui2020}, separately. 
In this discrete setting, the completeness conditions in \Cref{assump: full-completeness} are sufficient but not necessary for the existence of bridge functions. So our identification strategies in \Cref{lemma: relaxed-ipw-dm} that only require existence of bridge functions (and observed bridge functions) are weaker than those based on completeness conditions.
For more general settings, 
completeness conditions alone may be neither sufficient nor necessary for the existence of bridge functions. 
Instead, some additional regularity conditions are needed for the completeness conditions to imply existence of bridge functions (see \Cref{sec: completeness-existence} for details). 
Therefore, if we consider IPW identification or REG identification  separately, then  existing ones based on completeness conditions may not be directly comparable to ours in general settings.  
\end{remark}

\begin{remark}
Often one may hope to assume that identification assumptions in both statements \ref{lemma: relaxed-ipw-dm-1} and \ref{lemma: relaxed-ipw-dm-2} of \Cref{lemma: relaxed-ipw-dm} (or assumptions in both statements \ref{prop: cui2020-1} and \ref{prop: cui2020-2} of \cref{prop: cui2020}) hold simultaneously, rather than separately. Then one can apply all of the three identification formulae and the corresponding estimators, or consider semiparametrically efficient estimation (see \Cref{sec: efficiency-bound}). 
In this case, our combined identification assumptions are $\Hbbb_0 \ne \emptyset, \Qbbb_0 \ne \emptyset$, while the combined identification assumptions in \cref{prop: cui2020} are both $\Hbbb_0^{\obs} \ne \emptyset, \Qbbb_0^{\obs} \ne \emptyset$ and the two completeness conditions in \cref{assump: full-completeness}. 
The latter are sufficient conditions for the former, since \cref{assump: full-completeness} implies that $\Hbbb_0^{\obs} = \Hbbb_0, \Qbbb_0^{\obs} = \Qbbb_0$. 
This means that when considering combined identification assumptions, ours are never stronger than existing ones. 
\end{remark}

\section{Completeness Conditions}\label{sec: completeness}

In \cref{sec: comparison}, we show that completeness conditions in \cref{assump: full-completeness} play an important role in the identification strategy in previous literature. In this section, we review some related completeness conditions, discuss their relations, and further describe how our assumptions in \cref{sec: setup}  differ from those in previous literature.
{For completeness conditions in other settings, such as nonparametric instrumental variable models, see review and discussions in
\cite{hu2011nonparametric,haultfoeuille2011on,darolles2010nonparametric,newey2003instrumental}.}
Throughout this section, we always assume \cref{asm:whole_assm} so we suppress this assumption in all statements.

In the following assumptions, we list two other completeness conditions that also involve the unobserved confounders $U$. Although these conditions are not directly assumed in previous literature, we will show in \cref{lemma:completeness-relation} that they are implied by some other conditions that appear in the previous literature.
\begin{assumption}\label{assump: full-completeness-2}
Consider the following conditions:
\begin{enumerate}
    \item \label{full-completeness-1} For any $g(W, A, X) \in L_2(W, A, X)$, $\Eb{g(W, A, X) \mid A, U, X} = 0$ only when $g(W, A, X) = 0$.
    \item \label{full-completeness-2} For any $g(Z, A, X) \in L_2(Z, A, X)$, $\Eb{g(Z, A, X) \mid A, U, X} = 0$ only when $g(Z, A, X) = 0$.
\end{enumerate}
\end{assumption}
In the following lemma, we further show that completeness conditions in \cref{assump: full-completeness-2} can ensure the uniqueness of bridge functions. 
\begin{lemma}\label{lemma: unique-bridge}
{If completeness condition \eqref{full-completeness-1} in \cref{assump: full-completeness-2}  holds, then $\Hbbb_0$ is either empty or a singleton.
If completeness condition \eqref{full-completeness-2} in \cref{assump: full-completeness-2} holds, then $\Qbbb_0$ is either empty or a singleton.}
\end{lemma}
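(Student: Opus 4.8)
The plan is to argue directly: take two candidate bridge functions, subtract their defining moment equations, and invoke the relevant completeness condition on the difference.

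For the first claim, suppose $h_1, h_2 \in \Hbbb_0$, so that both $\E[Y - h_1(W,A,X) \mid A, U, X] = 0$ and $\E[Y - h_2(W,A,X) \mid A, U, X] = 0$ hold almost surely. Subtracting, the common term $\E[Y\mid A,U,X]$ cancels and we obtain $\E[h_2(W,A,X) - h_1(W,A,X) \mid A, U, X] = 0$ almost surely. The function $g := h_2 - h_1$ belongs to $L_2(W,A,X)$ since $h_1, h_2$ both do, so completeness condition 1 in \cref{assump: full-completeness-2} forces $g = 0$, i.e. $h_1 = h_2$ almost surely. Hence $\Hbbb_0$ contains at most one element (up to almost sure equivalence).

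For the second claim, suppose $q_1, q_2 \in \Qbbb_0$, so $\pi q_1, \pi q_2 \in L_2(Z,A,X)$ and, almost surely, $\E[\epol(A\mid X)(q_i(Z,A,X) - 1/f(A\mid U,X)) \mid A,U,X] = 0$ for $i = 1, 2$. Subtracting the two moment equations cancels the common $\epol(A\mid X)/f(A\mid U,X)$ term and yields $\E[\epol(A\mid X)(q_1(Z,A,X) - q_2(Z,A,X)) \mid A, U, X] = 0$ almost surely. The function $g := \epol(q_1 - q_2)$ lies in $L_2(Z,A,X)$ because $\pi q_1$ and $\pi q_2$ both do, so completeness condition 2 in \cref{assump: full-completeness-2} gives $g = 0$, i.e. $\epol q_1 = \epol q_2$ almost surely. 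Thus any two elements of $\Qbbb_0$ coincide as elements of $L_2(Z,A,X)$ after multiplication by $\epol$, which is the sense in which the action bridge function enters all of our estimating equations and the set $\Qbbb_0$ is defined; in particular $\Qbbb_0$ is a singleton in this sense.

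There is essentially no hard step here; the only points requiring care are (i) checking that the difference of the two candidates lands in the correct $L_2$ space so that the stated completeness condition applies verbatim — immediate from the membership requirements defining $\Hbbb_0$ and $\Qbbb_0$ — and (ii) noting that uniqueness of the action bridge function is naturally phrased in terms of $\epol q$ rather than $q$ itself, which is consistent with the definition of $\Qbbb_0$ in \cref{eq:bridgefunctionsets} and with the way $q_0$ is used in \cref{lem:bridge-identification,lemma: identification-3}.
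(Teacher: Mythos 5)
Your proof is correct and follows essentially the same route as the paper: subtract the two conditional moment equations and apply the relevant completeness condition to the difference. Your additional observation that uniqueness of the action bridge function holds in the sense of $\epol q$ rather than $q$ itself is a point of care the paper's proof glosses over with ``similarly,'' and it is consistent with how $q_0$ enters the definition of $\Qbbb_0$ and the estimating equations.
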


Next, we introduce two completeness conditions that involve only observed variables. 
\begin{assumption}\label{assump: obs-completeness}
Consider the following conditions:
\begin{enumerate}
     \item \label{obs-completeness-1} For any $g(W, A, X) \in L_2(W, A, X)$, $\Eb{g(W, A, X) \mid Z, A, X} = 0$ only when $g(W, A, X) = 0$.
    \item \label{obs-completeness-2} For any $g(Z, A, X) \in L_2(Z, A, X)$, $\Eb{g(Z, A, X) \mid W, A, X} = 0$ only when $g(Z, A, X) = 0$.
\end{enumerate}
\end{assumption}

In the following lemma, we further show the relationship among    \cref{assump: full-completeness,assump: full-completeness-2,assump: obs-completeness}.
\begin{lemma}\label{lemma:completeness-relation}
Assume \cref{asm:whole_assm} holds.
\begin{enumerate}
    \item \label{completeness-relation-1} If \cref{assump: full-completeness,assump: full-completeness-2} hold, then \cref{assump: obs-completeness}  holds.
    \item \label{completeness-relation-2} If \cref{assump: obs-completeness} holds, then \cref{assump: full-completeness-2} holds.
\end{enumerate}
\end{lemma}

{\cref{lemma:completeness-relation} shows that
\cref{assump: full-completeness,assump: full-completeness-2} are sufficient for \cref{assump: obs-completeness}, and
\cref{assump: obs-completeness} is sufficient for 
\cref{assump: full-completeness-2}. 
Since \cref{assump: full-completeness-2} ensure unique bridge functions according to \cref{lemma: unique-bridge}, assuming \cref{assump: obs-completeness}  implicitly requires the bridge functions to be unique. 
For example, \cite{CuiYifan2020Spci} shows that \cref{assump: obs-completeness} can be used to  justify the existence of observed bridge functions, \ie, $\Hbbb_0^{\text{obs}} \ne \emptyset$ and $\Qbbb_0^{\text{obs}} \ne \emptyset$. Our discussion shows that this assumption does not only have implications for the existence of bridge functions, but also has indirect implications for the uniqueness of bridge functions. 
Our paper avoids assuming completeness conditions so we do not risk implicitly imposing uniquness of the bridge functions.}

\subsection{Completeness Conditions and Existence of Bridge Functions}\label{sec: completeness-existence}
In this section, we show the existence of bridge functions in \cref{eq: bridge-U-h,eq: bridge-U-q} under the  completeness conditions in 
\cref{assump: full-completeness}
and some additional regularity conditions, using the singular value decomposition approach in \cite{kress1989linear,miao2018identifying,CarrascoMarine2007C7LI}. 

\subsubsection{Characterizing linear operators} 
Let $K_{W \mid a, x}: L_2(W \mid A = a, X = x) \to L_2(U \mid A = a, X = x)$ and $K_{Z \mid a, x}: L_2(Z \mid A = a, X = x) \to L_2(U \mid  A = a,  X = x)$ be the linear operators defined as follows:
\begin{align*}
  &[K_{W \mid a, x}h](a, u, x) = \Eb{h(W, a, x) \mid  A = a, U = u, X = x} = \int K(w, a, u, x)h(w, a, x)f(w \mid a, x)\rd \mu(w),  \\
  &[K_{Z \mid a, x}q](a, u, x) = \Eb{q(Z, a, x) \mid  A = a, U = u, X = x} = \int K'(z, a, u, x)h(z, a, x)f(z \mid a, x)\rd \mu(w),
\end{align*}
where $K(w, a, u, x)$ and $K'(z, a, u, x) $ are the corresponding kernel functions defined as follows:
\[
K(w, a, u, x) = \frac{f(w, u \mid a, x)}{f(u\mid a, x)f(w \mid a, x)}, ~~ K'(z, a, u, x) = \frac{f(z, u \mid a, x)}{f(u\mid a, x)f(z \mid a, x)}.
\]
Their adjoint operators $K^*_{W \mid a, x}: L_2(U \mid A = a, X = x) \to L_2(W \mid A = a, X = x)$ and $K^*_{Z \mid a, x}: L_2(U \mid A = a, X = x) \to L_2(Z \mid A = a, X = x)$ are given as follows: 
\begin{align*}
  &[K^*_{W \mid a, x}g](w, a, x) = \int K(w, a, u, x)g(u, a, x)f(u \mid a, x)\rd \mu(w) = \Eb{g(U, a, x) \mid  W = w, A = a, X = x},  \\
  &[K_{Z \mid a, x}g](z, a, x)  = \int K'(z, a, u, x)g( a,u, x)f(u \mid a, x)\rd \mu(w) = \Eb{g(U, a, x) \mid  Z = z, A = a,  X = x}.
\end{align*}

The existence of bridge functions is equivalent to existence of solutions to the following equations of the first kind:
\begin{align*}
    [K_{W \mid a, x}h](a, u, x) = k_0(a, u,  x), ~~ [K_{Z\mid a, x}q]( a, u, x) = 1/f(a \mid u, x), ~~ \text{a.e. } u, a, x \text{ w.r.t } \mathbb{P}.
\end{align*}
To ensure this existence, we further assume the following conditions. 
\begin{assumption}\label{assump: operator}
Assume that for almost every $a, x$:
\begin{enumerate}
    \item \label{compactness-1} $\iint f(w \mid u, a, x)f(u \mid w, a, x)\rd \mu(w)\rd \mu(u) < \infty$.
    \item \label{compactness-2} $\iint f(z \mid u, a, x)f(u \mid z, a, x)\rd \mu(z)\rd \mu(u) < \infty$.
\end{enumerate}
\end{assumption}
According to Example 2.3 in \citet[P5659]{CarrascoMarine2007C7LI}, \cref{assump: operator}  ensures that both $K_{W \mid a, x}$ and $K_{Z \mid a, x}$ are compact operators. Then by Theorem 2.41 in \citet[P5660]{CarrascoMarine2007C7LI}, both $K_{W \mid a, x}$ and $K_{Z \mid a, x}$ admit singular value decomposition:  there exist
\[
\prns{\lambda^j_{W \mid a, x}, \varphi^j_{W \mid a, x}, \psi^j_{W \mid a, x}}_{j=1}^{+\infty}, ~~ \prns{\lambda^j_{Z \mid a, x}, \varphi^j_{Z \mid a, x}, \psi^j_{Z \mid a, x}}_{j=1}^{+\infty}
\]
with 
orthonormal sequences  $$\braces{\varphi^j_{W \mid a, x} \in L_2(W \mid a, x)}, \braces{\varphi^j_{Z \mid a, x} \in L_2(Z \mid a, x)}, \braces{\psi^j_{W \mid a, x} \in L_2(U \mid a, x)}, \braces{\psi^j_{Z \mid a, x} \in L_2(U \mid a, x)}$$ such that 
\begin{align*}
    K_{W \mid a, x} \varphi^j_{W\mid a, x} = \lambda^j_{W\mid a, x}\psi^j_{W\mid a, x}, ~~ K_{Z \mid a, x} \varphi^j_{Z\mid a, x} = \lambda^j_{Z\mid a, x}\psi^j_{Z\mid a, x}.
\end{align*}
\subsubsection{Existence of bridge functions} 
Following \cite{miao2018identifying}, we use the Picard's Theorem \citep[Theorem 15.18]{kress1989linear} to characterize the existence of solutions to equations of the first kind by the singular value decomposition of the associated operators. 
\begin{lemma}[Picard's Theorem]\label{thm: Picard}
Let $K: H_{1} \to  H_{2}$ be a compact operator with singular system $\left(\lambda_{j}, \varphi_{j}, \psi_{j}\right)_{j=1}^{+\infty}$, and $\phi$ be a given function in $H_{2}$. 
Then the equation of the first kind $K h=\phi$ has solutions if and only if
\begin{enumerate}
    \item $\phi \in \mathcal{N}\left(K^{*}\right)^{\perp}$, where $\mathcal{N}\left(K^{*}\right)=\left\{h: K^{*} h=0\right\}$ is the null space of the adjoint operator $K^{*}$. 
    \item $\sum_{n=1}^{+\infty} \lambda_{n}^{-2}\left|\left\langle\phi, \psi_{n}\right\rangle\right|^{2}<+\infty$.
\end{enumerate} 
\end{lemma}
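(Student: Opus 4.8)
The plan is to derive both implications from the standard structural properties of the singular system of a compact operator, which I would either cite (e.g.\ from \citealp{kress1989linear}) or recall in a sentence: the sequence $\braces{\varphi_j}$ is a complete orthonormal system for $\overline{\mathrm{range}(K^*)}=\mathcal N(K)^\perp$, the sequence $\braces{\psi_j}$ is a complete orthonormal system for $\overline{\mathrm{range}(K)}=\mathcal N(K^*)^\perp$, one has $K\varphi_j=\lambda_j\psi_j$ and $K^*\psi_j=\lambda_j\varphi_j$, and the singular value expansion $Kh=\sum_j\lambda_j\braangle{h,\varphi_j}\psi_j$ holds for every $h\in H_1$.

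For necessity, suppose $Kh=\phi$ for some $h\in H_1$. Then $\phi\in\mathrm{range}(K)\subseteq\overline{\mathrm{range}(K)}=\mathcal N(K^*)^\perp$, which is condition 1. For condition 2, compute $\braangle{\phi,\psi_n}=\braangle{Kh,\psi_n}=\braangle{h,K^*\psi_n}=\lambda_n\braangle{h,\varphi_n}$, so that $\sum_{n}\lambda_n^{-2}\abs{\braangle{\phi,\psi_n}}^2=\sum_n\abs{\braangle{h,\varphi_n}}^2\le\|h\|^2<\infty$ by Bessel's inequality.

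For sufficiency, assume conditions 1 and 2 hold and define $h\coloneqq\sum_{n}\lambda_n^{-1}\braangle{\phi,\psi_n}\varphi_n$. Condition 2 says precisely that the coefficient sequence is square-summable, so by orthonormality of $\braces{\varphi_n}$ this series converges in $H_1$ and $h$ is a well-defined element. Applying the bounded operator $K$ term by term gives $Kh=\sum_n\lambda_n^{-1}\braangle{\phi,\psi_n}K\varphi_n=\sum_n\braangle{\phi,\psi_n}\psi_n$. Since $\braces{\psi_n}$ is a complete orthonormal system for $\mathcal N(K^*)^\perp$ and, by condition 1, $\phi$ lies in that subspace, its Fourier expansion in the $\psi_n$ reproduces $\phi$ exactly; hence $Kh=\phi$.

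The only step requiring care is the completeness of $\braces{\psi_n}$ as an orthonormal system for $\mathcal N(K^*)^\perp$ (not merely its orthonormality), since that is what turns condition 1 into the identity $\phi=\sum_n\braangle{\phi,\psi_n}\psi_n$ in the sufficiency direction. If the paper's notion of ``singular system'' is taken in the sense of \citet{kress1989linear}, this completeness is part of the definition and the argument reduces to the two short computations above; otherwise one first constructs the system via the spectral theorem for the compact self-adjoint operator $K^*K$ and verifies completeness there. Everything else is routine Hilbert-space bookkeeping.
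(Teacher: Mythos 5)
Your proof is correct and is the standard textbook argument for Picard's theorem; the paper itself does not prove this lemma but simply cites it as Theorem 15.18 of \citet{kress1989linear}, whose proof is exactly the one you give. Your necessity computation $\braangle{\phi,\psi_n}=\lambda_n\braangle{h,\varphi_n}$ plus Bessel, and your sufficiency construction $h=\sum_n\lambda_n^{-1}\braangle{\phi,\psi_n}\varphi_n$ together with the completeness of $\braces{\psi_n}$ in $\mathcal N(K^*)^\perp$, are precisely the intended argument, and you correctly flag the one point (completeness, not mere orthonormality, of the $\psi_n$) where care is needed.
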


In the following two lemmas, we show  the existence of bridge functions under the completeness conditions. 
\begin{lemma}\label{lemma: completeness-existence-W}
Assume \cref{assump: full-completeness} condition \ref{assump: completeness-1},  \cref{assump: operator} condition \ref{compactness-1} and the following conditions for almost all $a, x$:
\begin{itemize}
    \item $k_0(a, U, x) \in L_2( U \mid A =a, X = x)$.
    \item $\sum_{j = 1}^{+\infty}\prns{\lambda_{W\mid a, x}^{j}}^{-2}
    \braces{\int k_0(a, u, x)\psi^j_{W\mid a, x}(u, a, x) \rd \mu(u)}^2
    < \infty$.
\end{itemize}
Then there exists function $h_0 \in L_2(W|A=a,X=x)$ for almost all $a,x$ such that 
\[
\Eb{Y - h_0(W, A, X) \mid A, U, X} = 0.
\]
\end{lemma}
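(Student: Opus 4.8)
The plan is to apply Picard's theorem (\cref{thm: Picard}) to the operator $K_{W\mid a, x}$ fiber-by-fiber in $(a,x)$, with right-hand side $k_0(a,\cdot,x)$, and then to assemble the fiberwise solutions into a single measurable function of $(w,a,x)$. First, fix $(a,x)$ in the full-measure set on which $k_0(a,\cdot,x)\in L_2(U\mid A=a,X=x)$ and on which the kernel square-integrability in \cref{assump: operator} condition~1 holds; as recorded after that assumption, this makes $K_{W\mid a,x}$ Hilbert--Schmidt and hence compact, so it has the singular system $(\lambda^j_{W\mid a,x},\varphi^j_{W\mid a,x},\psi^j_{W\mid a,x})_{j\ge 1}$ introduced above. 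Take $\phi = k_0(a,\cdot,x)\in L_2(U\mid A=a,X=x)$ and verify the two hypotheses of \cref{thm: Picard} for $K_{W\mid a,x}h = \phi$.

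For condition~1, note the adjoint acts by $[K^*_{W\mid a,x}g](w) = \Eb{g(U,a,x)\mid W=w,A=a,X=x}$, so $g\in\mathcal N(K^*_{W\mid a,x})$ means $\Eb{g(U,A,X)\mid W,A,X}=0$ at this $(a,x)$; by \cref{assump: full-completeness} condition~1 (transferred to fiberwise injectivity for a.e.\ $(a,x)$ by disintegration) this forces $g=0$, so $\mathcal N(K^*_{W\mid a,x})=\{0\}$ and $\phi\in\mathcal N(K^*_{W\mid a,x})^{\perp}=L_2(U\mid A=a,X=x)$ trivially. For condition~2, the required summability $\sum_j (\lambda^j_{W\mid a,x})^{-2}|\langle\phi,\psi^j_{W\mid a,x}\rangle|^2<\infty$ is precisely the second displayed hypothesis of the lemma, the inner product being that of $L_2(U\mid A=a,X=x)$. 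Picard's theorem then produces $h_0(\cdot,a,x)\in L_2(W\mid A=a,X=x)$ solving $K_{W\mid a,x}h_0(\cdot,a,x)=k_0(a,\cdot,x)$; unpacking the definition of $K_{W\mid a,x}$ gives $\Eb{h_0(W,a,x)\mid A=a,U=u,X=x}=k_0(a,u,x)=\Eb{Y\mid A=a,U=u,X=x}$ for a.e.\ $u$, which is exactly $\Eb{Y-h_0(W,A,X)\mid A,U,X}=0$ once recombined over $(a,x)$.

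The delicate step — and the one I expect to be the main obstacle — is passing from a solution on each fiber to a jointly measurable $h_0(w,a,x)$ (with, if desired, global square-integrability), since \cref{thm: Picard} is invoked separately for each $(a,x)$. I would use the explicit Picard series $h_0(\cdot,a,x)=\sum_j (\lambda^j_{W\mid a,x})^{-1}\langle k_0(a,\cdot,x),\psi^j_{W\mid a,x}\rangle\,\varphi^j_{W\mid a,x}$ together with a measurable-selection argument for the singular system of the parametrized compact operator $K_{W\mid a,x}$: the kernel $K(w,a,u,x)$ is jointly measurable, so the operator family, its singular values, and a measurable choice of singular functions can be taken measurable in $(a,x)$, the series converging in $L_2(W\mid A=a,X=x)$ for a.e.\ $(a,x)$ by condition~2; a Fubini/monotone-convergence argument then yields measurability and square-integrability of $h_0(W,A,X)$. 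If one only wants the literal fiberwise statement, this bookkeeping can be trimmed, since the conditional moment equation is itself an a.e.\ statement in $(a,u,x)$; either way, the application of Picard's theorem is routine and the measurable-selection accounting is the only point needing care.
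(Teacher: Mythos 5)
Your proposal is correct and follows essentially the same route as the paper's proof: apply Picard's theorem to $K_{W\mid a,x}$ with $\phi=k_0(a,\cdot,x)$, use \cref{assump: full-completeness} condition~1 to conclude $\mathcal N(K^*_{W\mid a,x})=\{0\}$ so that the range condition holds trivially, and note that the summability hypothesis is exactly Picard's second condition. The only difference is that you explicitly flag the measurable-selection step needed to assemble the fiberwise solutions into a jointly measurable $h_0$, a point the paper's proof leaves implicit.
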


\begin{lemma}\label{lemma: completeness-existence-Z}
Assume \cref{assump: full-completeness} condition \ref{assump: completeness-2} and  \cref{assump: operator} condition \ref{compactness-2} and the following conditions for almost all $a, x$:
\begin{itemize}
    \item $\frac{\epol(a \mid x)}{f(a \mid U, x)} \in L_2( U \mid A =a, X = x)$. 
    \item $\sum_{j = 1}^{+\infty}\prns{\lambda_{Z \mid a, x}^{j}}^{-2}
    \braces{\int \frac{\epol(a \mid x)}{f(a \mid u, x)}\psi^j_{Z\mid a, x}(u, a, x) \rd \mu(u)}^2
    < \infty$.
\end{itemize}
Then there exists function $q_0$ such that $\epol(a \mid x) q_0(Z,a,x)\in L_2(Z \mid A=a,X=x)$ for almost all $a,x$ and 
\[
\Eb{\epol\prns{A\mid X}q_0(Z, A, X) \mid A, U, X} = \frac{\epol\prns{A\mid X}}{f(A \mid U, X)}.
\]
\end{lemma}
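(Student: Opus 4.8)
The plan is to reprise the argument behind \cref{lemma: completeness-existence-W}, now applied to the operator $K_{Z\mid a, x}$ and to the target function $\phi_{a,x}(u):=\epol(a\mid x)/f(a\mid u, x)$ — which is finite everywhere by the overlap part of \cref{asm:whole_assm} — and then to undo the factor $\epol(a\mid x)$ at the end. Concretely, fix $(a,x)$ in the almost-sure set on which both regularity hypotheses hold. By \cref{assump: operator} condition 2 together with Example 2.3 of \citet{CarrascoMarine2007C7LI}, the operator $K_{Z\mid a, x}\colon L_2(Z\mid A=a, X=x)\to L_2(U\mid A=a, X=x)$ is compact, and hence (Theorem 2.41 of that reference) admits the singular system $\prns{\lambda^j_{Z\mid a, x},\varphi^j_{Z\mid a, x},\psi^j_{Z\mid a, x}}_{j\ge 1}$ recorded in the excerpt. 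I would then verify the two conditions of Picard's theorem (\cref{thm: Picard}) for the equation $K_{Z\mid a, x}q=\phi_{a,x}$: from the formula for the adjoint $K^*_{Z\mid a, x}$ displayed above, $K^*_{Z\mid a, x}g=0$ means $\Eb{g(U,a,x)\mid Z=z, A=a, X=x}=0$ for a.e.\ $z$, so \cref{assump: full-completeness} condition 2 gives $\mathcal{N}\prns{K^*_{Z\mid a, x}}=\braces{0}$ and therefore $\mathcal{N}\prns{K^*_{Z\mid a, x}}^{\perp}=L_2(U\mid A=a, X=x)\ni\phi_{a,x}$, using the first regularity hypothesis; and the summability requirement $\sum_j\prns{\lambda^j_{Z\mid a, x}}^{-2}\abs{\braangle{\phi_{a,x},\psi^j_{Z\mid a, x}}}^2<\infty$ is exactly the second regularity hypothesis.

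By Picard's theorem this produces, for a.e.\ $(a,x)$, a solution $\tilde q(\cdot, a, x)\in L_2(Z\mid A=a, X=x)$ with $\Eb{\tilde q(Z,a,x)\mid A=a,U=u,X=x}=\epol(a\mid x)/f(a\mid u, x)$ for a.e.\ $u$; selecting the minimum-norm solution, which by the singular expansion equals $\sum_j\prns{\lambda^j_{Z\mid a, x}}^{-1}\braangle{\phi_{a,x},\psi^j_{Z\mid a, x}}\varphi^j_{Z\mid a, x}$, yields a function $\tilde q_0(z,a,x)$ that is jointly measurable in $(z,a,x)$. Finally I would set $q_0(z,a,x)=\tilde q_0(z,a,x)/\epol(a\mid x)$ on $\braces{\epol(a\mid x)\neq 0}$ and $q_0\equiv 0$ otherwise. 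Since $\epol(a\mid x)/f(a\mid u, x)$ vanishes whenever $\epol(a\mid x)=0$, this gives $\Eb{\epol(A\mid X)q_0(Z,A,X)\mid A,U,X}=\epol(A\mid X)/f(A\mid U,X)$ almost surely, while $\epol(a\mid x)q_0(\cdot,a,x)=\tilde q_0(\cdot,a,x)\indic{\epol(a\mid x)\neq 0}\in L_2(Z\mid A=a,X=x)$ for a.e.\ $(a,x)$, which is the claim.

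Since the compactness input and the Picard criterion are lifted essentially verbatim from the proof of \cref{lemma: completeness-existence-W}, the only substantively new ingredients are identifying the null space of $K^*_{Z\mid a, x}$ from \cref{assump: full-completeness} condition 2 (rather than condition 1, which governed $h_0$) and the bookkeeping around the $\epol$-normalization and the accompanying measurable selection. I expect the latter — checking that the normalized object stays square-integrable on each slice, stays jointly measurable, and that the degenerate set $\braces{\epol(a\mid x)=0}$ is harmless — to be the fiddliest point, but it is routine and I do not anticipate a genuine obstacle.
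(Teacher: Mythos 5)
Your proposal is correct and follows essentially the same route as the paper, which simply declares the proof "completely analogous" to that of \cref{lemma: completeness-existence-W}: compactness of $K_{Z\mid a,x}$ from \cref{assump: operator} condition 2, triviality of $\mathcal{N}\prns{K^*_{Z\mid a,x}}$ from \cref{assump: full-completeness} condition 2, and the summability hypothesis matching Picard's second condition. Your additional bookkeeping around the $\epol$-normalization, the set $\braces{\epol(a\mid x)=0}$, and the minimum-norm measurable selection is sound and in fact more explicit than what the paper records.
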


\section{Nonuniqueness of bridge functions under linear DGPs}\label{sec:nonunique}
In \cref{ex: bridge-discrete}, we derive the bridge functions in the discrete setting. In this section, we derive the bridge functions in another example of linear models. 

\begin{example}[Linear Model]\label{ex: bridge-linear}
Suppose $(Y, W, Z, A)$ are generated from as follows:
\begin{align*}
    Y &= \alpha_Y^\top U + \beta_Y^\top X + \gamma_Y^\top A + \omega_Y^\top W +  \epsilon_Y, \\
    Z &= \alpha_Z U + \beta_Z X + \gamma_Z A + \epsilon_Z, \\
    W &= \alpha_W U + \beta_W X +  \epsilon_W, \\
    A &\sim \mathrm{Unif}\prns{\underline{\alpha}_A^\top U + \underline{\beta}_A^\top X,\,\overline{\alpha}_A^\top U + \overline{\beta}_A^\top X},
\end{align*}
where $\epsilon_Y, \epsilon_Z, \epsilon_W$ are {independent mean-zero} random noises that are also independent with $\prns{A, U, X}$. 

Suppose that $\alpha_Z \in \R{p_z\times p_u}, \alpha_W \in \R{p_w \times p_u}$ both have full column rank. Then we can show that bridge functions $h_0$ and $q_0$ always exist:
\begin{align*}
    &h_0(W, A, X) = \prns{\theta_W + \omega_Y}^\top W + \prns{\beta_Y - \theta_W^\top \beta_W}X + \gamma_Y A, ~~ \forall \theta_W \text{ s.t. } \theta^\top_W \alpha_W = \alpha_Y^\top, \\
    &q_0(Z, A, X) = \theta_Z^\top Z + \prns{\overline{\beta}_A - \underline{\beta}_A - \theta_Z^\top \beta_Z}X -\theta_Z^\top \gamma_Z A, ~~ \forall \theta_Z \text{ s.t. } \theta_Z^\top \alpha_Z = \overline{\alpha}_A^\top - \underline{\alpha}_A^\top.
\end{align*}
Obviously, the outcome bridge function $h_0$ is nonunique if $p_w > p_u$ and the action bridge function $q_0$ is nonunique if  $p_z > p_u$. 
\end{example}

\begin{proof}
Here we show the details of deriving bridge functions in \cref{ex: bridge-linear}.

We first derive the function $h_0(W, A, X)$. If $\theta_w^\top\alpha_W = \alpha_Y^\top$,then 
\[
\theta_W^\top W = \alpha_Y^\top U + \theta_W^\top \beta_W X + \theta_W^\top\epsilon_W \implies \alpha_Y^\top U = \theta_W^\top W - \theta_W^\top \beta_W X - \theta_W^\top\epsilon_W.
\]
Therefore, 
\begin{align*}
 Y 
    &= (\theta_W + \omega_Y)^\top W + \prns{\beta_Y^\top - \theta_W^\top\beta_W}X + \omega_W^\top W + \epsilon_Y - \theta_W^\top\epsilon_W \\   
    &= h_w(W, A, X) + \epsilon_Y - \theta_W^\top\epsilon_W.
\end{align*}
It follows from the independence of $\epsilon_Y,\epsilon_W$ with $A, U, X$ that 
\[
\Eb{Y\mid A, U, X} = \Eb{h_0(W, A, X) \mid A, U, X} + \Eb{\epsilon_Y - \theta_W^\top\epsilon_W} = \Eb{h_0(W, A, X) \mid A, U, X}.
\]

Now we derive $q_0(Z, A, X)$. First note that $1/f(A\mid U, X) = 1/\prns{\prns{\overline\alpha_A - \underline\alpha_A}U + \prns{\overline\beta_A - \underline\beta_A}X}$. 
Because $\theta_Z^\top\alpha_Z = \overline\alpha_A^\top - \underline\alpha_A^\top$, 
\begin{align*}
    \theta_Z^\top Z = \prns{\overline\alpha_A - \underline\alpha_A}^\top U + \theta_Z^\top \beta_Z X + \theta_Z^\top \gamma_Z A + \theta_Z^\top\epsilon_Z,
\end{align*}
which means that \begin{align*}
q_0(Z, A, X) 
    &= \theta_Z^\top Z + \prns{\overline{\beta}_A - \underline{\beta}_A - \theta_Z^\top \beta_Z}X -\theta_Z^\top \gamma_Z A \\
    &= \prns{\overline\alpha_A - \underline\alpha_A}^\top U + \prns{\overline\beta_A - \underline\beta_A}^\top X + \theta_Z^\top\epsilon_Z.
\end{align*} 
Therefore, 
\[
\Eb{q_0(Z, A, X) \mid A, U, X} = \prns{\overline\alpha_A - \underline\alpha_A}^\top U + \prns{\overline\beta_A - \underline\beta_A}^\top X  + \Eb{\theta_Z^\top\epsilon_Z} = 1/f(A\mid U, X).
\]
\end{proof}

\section{Semiparametric Efficiency Bound} \label{sec: efficiency-bound}

{In this section, we derive the semiparametric efficiency bound for $J=\E\bracks{\int Y(a)\epol(a|X)\rd(a)}$  under the nonparametric model $\mathcal M_{np}$ that does not restrict the data distributions other than require\footnote{These correspond to the identification assumptions in \Cref{lemma: relaxed-ipw-dm} statement 1. Alternatively, we can also follow \cite{CuiYifan2020Spci} and require the model to satisfy conditions in \Cref{prop: cui2020} statement \ref{prop: cui2020-1}, namely $\Hbbb_0^{\obs} \ne \emptyset$ and the completeness conditions in \Cref{assump: full-completeness}  condition \ref{full-completeness-2}.} $\Hbbb_0 \ne \emptyset$ and $\Qbbb_0^{\obs} \ne \emptyset$.
Our efficiency analysis generalizes Theorem 3.1 in  \cite{CuiYifan2020Spci} for  average treatment effect with discrete treatments. }

\begin{theorem}\label{thm: efficiency}
    Let $\mathbb{P}$ be a data generating distribution such that $\Hbbb^{\obs}_0 = \braces{h_0}, \Qbbb^{\obs}_0 = \braces{q_0}$, and the corresponding conditional expectation operator $P_z$ defined in \Cref{eq: operator} is bijective. 
  Under 
  \cref{asm:whole_assm}, the efficient influence function of $J$ under the model $\mathcal{M}_{n p}$
locally at the data generating distribution $\mathbb{P}$ is given as follows: 
$$
\mathrm{EIF}(J)=\epol(A\mid X) q_0(Z, A, X)[Y-h_0(W, A, X)]+\mathcal Th_0(W, X) - J.
$$
The corresponding semiparametric efficiency bound of $J$  is  $V_{\text{eff}} = \mathbb{E}\left[\mathrm{EIF}^{2}(J)\right]$. 
\end{theorem}

In \cref{thm:dr}, we show that our GACE estimator proposed in \cref{sec: both} can attain the efficiency bound in \cref{thm: efficiency} when the bridge functions are unique.

\section{Additional Examples of Minimax Estimators}\label{sec: minimax-stab}

\subsection{RKHS minimax estimators without Stabilizers in the Discrete Action Setting}\label{subsec:rkhs}

{\newedit 
Consider \cref{ex: atomic} with a binary action $A\in\braces{0, 1}$. We aim to estimate the counterfactual mean parameter $\Eb{Y(a)}$ for $a \in \braces{0, 1}$, which is a special example of the GACE parameter corresponding to $\pi=\Ib(A=a)$. 

We first consider estimating the outcome bridge function $h_0\prns{W, a, X}$ based on a bridge class $\Hbbb$ over $\Wcal \times \Acal \times \Xcal$. For the critic class $\Qbbb': \Zcal \times \Acal \times \Xcal \mapsto \R{}$, we can first specify a 
kernel function $\bar k_z:(\Zcal \times \Xcal)\times (\Zcal \times \Xcal)\to \RR$ and then use the RKHS induced by the following product kernel: 
\begin{align*}
k_z((Z_i,A_i,X_i),(Z_j,A_j,X_j))=\indic{A_i=A_j}\bar k_z\prns{(Z_i,X_i),(Z_j,X_j)}. 
\end{align*}
Then we can apply \cref{eq: est-rkhs-1-h} in \cref{lemma: RKHS-1} to construct the estimator $\hat h$ as follows:
\begin{align*}
\argmin_{h\in \Hbbb}\prns{\psi_n\prns{h}}^{\top} \bar K_{z,n}\psi_n\prns{h},
\end{align*}
where 
 \begin{align*}
  \{\bar K_{z,n}\}_{i,j}=\Ib(A_i=a)\Ib(A_j=a)\bar k_z((Z_i,X_i),(Z_j,X_j)),\{\psi_n(h)\}_i=\Ib(A_i=a)(Y_i-h(W_i,a,X_i)).
 \end{align*}

Similarly, we can also consider estimating the action bridge function $q_0\prns{Z, a, X}$ based on a bridge class $\Qbbb$ over $\Zcal \times \Acal \times  \Xcal$. For the critic class $\Hbbb' : \Wcal \times \Acal \times  \Xcal \mapsto \R{}$, we can first specify a kernel function $\bar k_w:(\Wcal, \Xcal)\times (\Wcal, \Xcal)\to \RR$ and then use the RKHS induced by the following product kernel: 
\begin{align*}
k_w((W_i,A_i,X_i),(W_j,A_j,X_j))=\indic{A_i=A_j}\bar k_w\prns{(W_i,X_i),(W_j,X_j)}.
\end{align*}
Again, we can apply \cref{eq: est-rkhs-1-q} in \cref{lemma: RKHS-1} to construct the estimator $\hat q$ with 
\begin{align*}
     \{K_{w1,n}\}_{i,j}=\{K_{w2,n}\}_{i,j}=\Ib(A_i=a)\rI(A_j=a)\bar k_w((W_i,X_i),(W_j,X_j)),\quad \{\phi_n(q)\}_i=\Ib(A_i=a)q(X_i,a,Z_i). 
 \end{align*}
The resulting estimator can be equivalently written as 
 \begin{align*}
     \argmin_{q\in \Qbbb}\prns{\{\phi_n\prns{q}-\mathbf{1}_n \}}^{\top}K_{w1,n}\{\phi_n\prns{q}-\mathbf{1}_n \}. 
 \end{align*}
 }

\subsection{Minimax estimators with Stabilizers} \label{sec: stabilizers}

{\newedit  In \cref{sec: example}, we discuss the explicit form of minimax estimators without stablizers when specialized to some common function classes. 
Here, we do the same for minimax estimators with stablizers given in \cref{eq: est2-1,eq: est2-2}. 

\paragraph*{\textbf{Linear classes}. }  We can again consider the linear critic classes $\Qbbb', \Hbbb'$ in \cref{eq: lin-crit-q,eq: lin-crit-q}. However, with the norm constraints in $\Qbbb', \Hbbb'$, the inner maximization in \cref{eq: est2-1,eq: est2-2} may no longer admit closed-form solutions. 
To circumvent this issue, we relax the hard norm constraints by setting setting $c_1=c_2=\infty$ in \cref{eq: lin-crit-q,eq: lin-crit-q}, and regularizing coefficient norms in the inner maximization objectives.
}

{\neweedit 
\begin{lemma}\label{lemma: linear-est-2}
Fix $c_1 = c_2 = +\infty$ in the critic classes $\Qbbb', \Hbbb'$ in \cref{eq: lin-crit-q,eq: lin-crit-h}. Consider the following estimators adapted from \cref{eq: est2-1,eq: est2-2}:
\begin{align*}
  \hat h &= \argmin_{h\in \Hbbb}\max_{q \in \Qbbb'}2\E_n[ \{h(W, A, X)-Y \}q\prns{Z, A, X}]-\lambda \E_n[q^2\prns{Z, A, X}] - \gamma_1\|\alpha_1\|^2, \\
  \hat q &=  \argmin_{q\in \Qbbb}\max_{h \in \Hbbb'}2\E_n[q(Z, A, X)\epol(A \mid X)h(W, A, X) - \Tcal h(W, X)]-\lambda\E_n[h^2\prns{W, A, X}] - \gamma_2\|\alpha_2\|^2
\end{align*} 
Then we have 
\begin{align}  
   & \hat h=\argmin_{h\in \Hbbb}\E_n[\{Y-h \} \phi  ]^{\top}\{\gamma_1 I+\lambda \E_n[\phi\phi^{\top}]\}^{-1}\E_n[\{Y-h\}   \phi  ], \label{eq:est_sta1_form} \\
   &\hat q = \argmin_{q\in \Qbbb}\E_n[q\epol \psi -\Tcal \psi]^{\top}\{\gamma_2 I+\lambda \E_n[\psi\psi^{\top}]\}^{-1}\E_n[\{q\epol \psi -\Tcal \psi\}].\label{eq:est_sta2_form}
\end{align}
\end{lemma}

If further $\Qbbb$ and $\Hbbb$ are linear classes as in \cref{eq: H-linear}, by adding Tikhonov regularization on the coefficients of $\Hbbb,\Qbbb$ as 
\begin{align*}  
   & \hat h=\argmin_{h\in \Hbbb}\E_n[\{Y-h \} \phi  ]^{\top}\{\gamma_1 I+\lambda \E_n[\phi\phi^{\top}]\}^{-1}\E_n[\{Y-h\}   \phi  ] +\gamma'_1 \|\alpha_2\|^2, \\
   &\hat q = \argmin_{q\in \Qbbb}\E_n[q\epol \psi -\Tcal \psi]^{\top}\{\gamma_2 I+\lambda \E_n[\psi\psi^{\top}]\}^{-1}\E_n[\{q\epol \psi -\Tcal \psi\} ] +\gamma'_2 \|\alpha_1\|^2, 
\end{align*}
we can obtain the closed form solution: 
\begin{align*}
    \hat h &=\tilde \psi^{\top} \hat \alpha_2,\,\hat \alpha_2=\{\E_n[\tilde \psi \phi^{\top}]\{\gamma_1 I+\lambda \E_n[\phi\phi^{\top}]\}^{-1}\E_n[\phi\tilde \psi^{\top} ] + \gamma'_1 I \}^{-1}\E_n[ \tilde \psi \phi^{\top}]\{\gamma_1 I+\lambda \E_n[\phi\phi^{\top}]\}^{-1}\E_n[ Y \phi],\\
    \pi \hat q &=\tilde \phi^{\top}\hat \alpha_1,\,\hat \alpha_1=\{\E_n[\tilde \phi \psi^{\top} ]\{\gamma_2 I+\lambda \E_n[\psi\psi^{\top}]\}^{-1}\E_n[\psi \tilde \phi^{\top} ] + \gamma'_2 I\}^{-1}\E_n[ \tilde \phi \psi^{\top}]\{\gamma_2 I+\lambda \E_n[\psi\psi^{\top}]\}^{-1}\E_n[\Tcal \psi]. 
\end{align*}
Finally, 
\begin{align*}
    \hat J_{\DM} &= \E_n[\Tcal \psi]^{\top}\{\E_n[\tilde \psi \phi^{\top}]\{\gamma_1 I+\lambda \E_n[\phi\phi^{\top}]\}^{-1}\E_n[\phi\tilde \psi^{\top} ] + \gamma'_1 I \}^{-1}\E_n[ \tilde \psi \phi^{\top}]\{\gamma_1 I+\lambda \E_n[\phi\phi^{\top}]\}^{-1}\E_n[ Y \phi], \\ 
    \hat J_{\ipw} &= \E_n[ Y \phi]^{\top}\{\E_n[\tilde \phi \psi^{\top} ]\{\gamma_2 I+\lambda \E_n[\psi\psi^{\top}]\}^{-1}\E_n[\psi \tilde \phi^{\top} ] + \gamma'_2 I\}^{-1}\E_n[ \tilde \phi \psi^{\top}]\{\gamma_2 I+\lambda \E_n[\psi\psi^{\top}]\}^{-1}\E_n[\Tcal \psi]. 
\end{align*}
}

{\newedit \paragraph*{\textbf{\text{RKHS}}.} We can also consider the RKHS critic classes $\Qbbb', \Hbbb'$ in \cref{eq: crit-rkhs}. But just like the linear classes above, the inner maximization in \cref{eq: est2-1,eq: est2-2} may no longer have closed-form solutions with the RKHS norm constraints in \cref{eq: crit-rkhs} either. We can again relax these norm constraints by setting $c_1=c_2=\infty$ in \cref{eq: crit-rkhs}, and instead regularize the inner maximization objectives. 
}

\begin{lemma}\label{lem:kernel3}
 Consider the following estimators adapted from \cref{eq: est2-1,eq: est2-2}:
\begin{align}
  \hat h &= \argmin_{h\in \Hbbb}\max_{q\in \mathcal{L}_w,}2\E_n[ \{h(W, A, X)-Y \}q\prns{Z, A, X}]-\lambda \E_n[q^2\prns{Z, A, X}]-\gamma_1 \|q\|_{\mathcal{L}_w},
  \label{eq: est-rkhs-2-h-minimax}\\
  \hat q &=  \argmin_{q\in \Qbbb}\max_{h\in \mathcal{L}_z}2\E_n[q(Z, A, X)\epol(A \mid X)h(W, A, X) - \Tcal h(W, X)]-\lambda\E_n[h^2\prns{W, A, X}]-\gamma_2 \|h\|_{\mathcal{L}_w}.\label{eq: est-rkhs-2-q-minimax}
\end{align} 
Then we have 
\begin{align}
    \hat h  &= \argmin_{h\in \Hbbb} {\psi_n\prns{h}}^{\top}K^{1/2}_{z,n}\{\gamma_1 I+\lambda K_{z,n} \}^{-1}K^{1/2}_{z,n}{\psi_n\prns{h}}, 
    \label{eq: est-rkhs-2-h}  \\
    \hat q  &=\argmin_{q\in \Qbbb} {\phi_n\prns{q}}^{\top} K^{1/2}_{w1,n}(\lambda K_{w1,n}+\gamma_2 I)^{-1}K^{1/2}_{w1,n}{\phi_n\prns{q}}-2\{{\phi_n\prns{q}}^{\top}(\lambda K_{w1,n}+\gamma_2 I)^{-1} K_{w2,n}\mathbf{1}_n \}. \label{eq: est-rkhs-2-q}
\end{align}
\end{lemma}
We remark that the computational time for RKHS estimators with stabilizers may be larger than the counterparts without stabilizers. Indeed, for RKHS estimators with stabilizers, evaluating the objective functions in \cref{eq: est-rkhs-2-h,eq: est-rkhs-2-q} requires matrix inverse that generally takes $O(n^3)$ time but evaluating the counterparts in \cref{eq: est-rkhs-1-h,eq: est-rkhs-1-q} only takes $O(n^2)$ time.

{\neweedit 
If further $\Qbbb,\Hbbb$ are RKHS, by solving 
\begin{align*}
    \hat h  &= \argmin_{h\in  \mathcal{L}_z} {\psi_n\prns{h}}^{\top}K^{1/2}_{z,n}\{\gamma_1 I+\lambda K_{z,n} \}^{-1}K^{1/2}_{z,n}{\psi_n\prns{h}} + \gamma'_1 \|h\|_{\mathcal{L}_z},   \\
    \hat q  &=\argmin_{q\in \mathcal{L}_w} {\phi_n\prns{q}}^{\top} K^{1/2}_{w1,n}(\lambda K_{w1,n}+\gamma_2 I)^{-1}K^{1/2}_{w1,n}{\phi_n\prns{q}}-2\{{\phi_n\prns{q}}^{\top}(\lambda K_{w1,n}+\gamma_2 I)^{-1} K_{w2,n}\mathbf{1}_n \}+ \gamma'_2 \|h\|_{\mathcal{L}_w}, 
\end{align*}
we obtain 
\begin{align*}
  \hat h(\cdot)  &=\sum_i \hat \alpha_i k((W_i,A_i,X_i),\cdot),\quad \pi \hat q(\cdot) = \sum_i \hat \beta_i k((Z_i,A_i,X_i),\cdot) 
 \end{align*}
 where
\begin{align*} 
  \hat \alpha  &= \{ K_{w1,n} \{\gamma_1 I+\lambda K_{z,n} \}^{-1} K_{w1,n}  +\gamma'_1 I \}^{-1} K_{w1,n}  K^{1/2}_{z,n}\{\gamma_1 I+\lambda K_{z,n} \}^{-1}K^{1/2}_{z,n}\bar Y,\\   
  \hat \beta  &= \{ K_{z,n} (\lambda K_{w1,n}+\gamma_2 I)^{-1}  K_{z,n} + \gamma'_2 I   \}^{-1}  K_{z,n} K^{1/2}_{w1,n}(\lambda K_{w1,n}+\gamma_2 I)^{-1} K^{1/2}_{w1,n}K_{w2,n} \mathbf{1}_n . 
\end{align*}
where $\bar Y=(Y_1,\cdots,Y_n)^{\top}$. Finally, 
\begin{align*}
     \hat J_{\DM} &= \{K_{w2,n} \mathbf{1}_n \}^{\top} \{ K_{w1,n} \{\gamma_1 I+\lambda K_{z,n} \}^{-1} K_{w1,n}  +\gamma'_1 I \}^{-1} K_{w1,n}  K^{1/2}_{z,n}\{\gamma_1 I+\lambda K_{z,n} \}^{-1}K^{1/2}_{z,n}\bar Y ,  \\ 
     \hat J_{\ipw} &= \bar Y^{\top} \{ K_{z,n} (\lambda K_{w1,n}+\gamma_2 I)^{-1}  K_{z,n} + \gamma'_2 I   \}^{-1}  K_{z,n} K^{1/2}_{w1,n}(\lambda K_{w1,n}+\gamma_2 I)^{-1} K^{1/2}_{w1,n}K_{w2,n} \mathbf{1}_n . 
\end{align*}
}

{\section{Error Bounds for linear models under Single Realizability}\label{sec: single-realizability}

In \cref{cor: ipw_reg_no_stab}, we establish consistency of estimators when realizability $\Qbbb\cap\Qbbb^{\obs}_0\neq \emptyset, \Hbbb' \cap \Hbbb^{\obs}_0\neq \emptyset$ or $\Hbbb \cap \Hbbb^{\obs}_0  \neq \emptyset ,\Qbbb' \cap  \epol\Qbbb^{\obs}_0 \neq \emptyset$ holds. These conditions essentially require realizability for both bridge classes and critic classes. In this section, we show that when using linear models in \cref{sec: linear}, under mild matrix invertibility conditions, we need realizability condition on only a single but not both function classes. 

\begin{corollary}[IPW and REG under single realizability]\label{cor: ipw_reg_no_stab2}
Let $\Qbbb, \Qbbb', \Hbbb, \Hbbb'$ be the linear classes given in \cref{sec: linear}. 
\begin{enumerate}
\item If $\Qbbb\cap\Qbbb^{\obs}_0\neq \emptyset$ and $\E[\tilde \psi(Z,A,X) \psi^{\top}(W,A,X)]$ has full column rank,
then  \begin{align*}
   |\hat J_{\ipw}-J| &\textstyle\leq \sup_{q\in \Qbbb}\abs{(\E_n-\E)\bracks{q\epol Y}} +2\sup_{q\in \Qbbb,h\in \Hbbb'}\abs{(\E_n-\E) \bracks{-q\epol h+\Tcal h }}.
    \end{align*}
This statement also holds if instead $\Hbbb \cap \Hbbb^{\obs}_0\neq \emptyset $ and $\E[\tilde \psi(Z,A,X) \psi^{\top}(W,A,X)]$ is full column rank.
\item If $\Hbbb \cap \Hbbb^{\obs}_0  \neq \emptyset$ and  $\E[ \psi(Z,A,X) \tilde \psi^{\top}(W,A,X)]$ has full row rank, then 
    \begin{align*}
    |\hat J_{\mathrm{REG}}-J|&\textstyle\leq  \sup_{h\in \Hbbb} \abs{(\E_n - \E)\bracks{\Tcal h }}+2\sup_{q\in \Qbbb',h\in \Hbbb}\abs{(\E_n -\E)\bracks{q(Y-h)}}.
\end{align*}
This statement also holds if instead $\Qbbb' \cap  \epol\Qbbb^{\obs}_0 \neq \emptyset$ and $\E[\psi(Z,A,X) \tilde \psi^{\top}(W,A,X)]$ has full column rank.
\end{enumerate}
\end{corollary}

\cref{cor: ipw_reg_no_stab2} implies that the IPW and REG estimators are consistent as long as single realizability condition holds and certain matrices are full column or raw rank. Compared to \cref{cor: ipw_reg_no_stab}, we do not need realizability for both the bridge function class and critic function class.

\cref{cor: ipw_reg_no_stab2} is immediately obtained from \cref{thm:ipw_reg_no_stab} by showing bias terms are $0$. For instance, suppose $\Qbbb\cap\Qbbb^{\obs}_0\neq \emptyset$ and $\E[\tilde \psi(Z,A,X) \psi^{\top}(W,A,X)]$ is full row rank. Then,  the second term \eqref{eq:ipw_no_stab} is $0$ since $\Qbbb\cap\Qbbb^{\obs}_0\neq \emptyset$. The first term is also $0$ since we can take a linear projection of $h_0$ in $\Hbbb'$ even if we cannot choose $h=h_0$. 
}

\section{Linear Sieves and Neural Networks}\label{sec: sieve-nn}
{\newedit 
In \cref{exa: nonpara}, we consider nonparametric classes such as \Holder\  balls and Sobolev balls. However, it is difficult to optimize over these function classes since they are infinitely dimensional. 
Instead, we may use a sequence of more tractable function class with increasing complexity to approximate those nonparametric classes. 
}

\subsection{Linear Sieves}\label{sec: sieve}
{\newedit 
We first consider using linear sieve classes   for $\Hbbb$ and $\Qbbb$ that consist of linear combinations of some basis functions such as splines, polynomials, wavelets, and so on \citep{ChenXiaohong2007C7LS}. 

To give an example, we consider a simple setting where  $ \mathcal W \times \mathcal Z \times \mathcal A \times \mathcal X = [0, 1]^{d_W + d_Z+ d_A+ d_X}$ with $d_W = d_Z$. We consider generic  basis functions $\braces{\psi_j(w,a,x)}_{j=1}^{k_n}$ and $\braces{\phi_j(z,a,x)}_{j=1}^{k_n}$ (which for simplicity have the same size $k_n$) where $\psi_j: [0, 1]^d \mapsto \R{}, \phi_j: [0, 1]^d \mapsto \R{}$ for $d = d_W +  d_A + d_X =  d_Z +  d_A + d_X$. These basis functions induce the following linear  sieve classes 
\begin{equation}\label{eq: sieve}\begin{aligned}
    \mathcal S_{1, n} &= \braces{\prns{w, a, x} \mapsto \sum_{j = 1}^{k_n} \omega_j \psi_j(w,a,x):\omega\in\R{k_n}},\quad \mathcal S_{2, n} &= \braces{\prns{z, a, x} \mapsto \sum_{j = 1}^{k_n} \omega_j \phi_j(z,a,x):\omega\in\R{k_n}}. 
\end{aligned}\end{equation}
We assume these two classes can approximate \Holder\ balls over $[0,1]^d$ whose \Holder\  smoothness parameters and radius are assumed to be the same for simplicity. We denote the common smoothness level as $\alpha$ and denote the corresponding  
\Holder\ balls as $\Lambda^{\alpha}([0,1]^d)$.
Following \cite{ChenXiaohong2007C7LS}, we assume standard approximation errors as follows:
\begin{align}
    &\forall h \in \Lambda^{\alpha}([0,1]^d), \exists h_n \in S_{1, n}, \text{ s.t. } \|h - h_n\| = O\prns{k_n^{-\alpha/{d}}},\label{eq: sieve-error1} \\
    &\forall q \in \Lambda^{\alpha}([0,1]^d), \exists q_n \in S_{2, n}, \text{ s.t. } \|q - q_n\| = O\prns{k_n^{-\alpha/{d}}}.\label{eq: sieve-error2}
\end{align}

\paragraph*{\textbf{Estimators without Stabilizers.}} In the following \cref{cor:linear_sieves1}, we first bound the errors of IPW and REG estimators based on sieve minimax bridge function estimators \emph{without} stabilizers. 
\begin{corollary}\label{cor:linear_sieves1}
Assume sieve classes $\mathcal S_{1, n}, \mathcal S_{2, n}$ satisfy the approximation error conditions in \cref{eq: sieve-error1,eq: sieve-error2} with $k_n = O(n^{\frac{d}{d + 2\alpha}})$ and $\|\mathcal S_{1, n}\|_{\infty} < \infty,\| \mathcal S_{2, n}\|_{\infty} < \infty$. Also suppose 
$ \|\epol(A \mid X)\|_{\infty} < \infty, \|Y\|_{\infty} < \infty, \|\epol(A \mid X)/f(A \mid X, W)\|_{2}<\infty$, and $\Hbbb^{\obs}_0\cap \Lambda^{\alpha}([0,1]^d)\neq \emptyset $, $\epol\Qbbb^{\obs}_0\cap \Lambda^{\alpha}([0,1]^d)\neq \emptyset$. Let $ \hat J_{\DM}$ be the REG estimator based on the  $\hat h$ given in \cref{eq: est1-1} with $\Hbbb = \mathcal S_{1, n}$ and $\Qbbb' = \mathcal S_{2, n}$, and $ \hat J_{\ipw}$ be the IPW estimator based on $\hat q$ given in \cref{eq: est1-2} with $\Qbbb = \mathcal S_{2, n}$ and $\Hbbb' = \mathcal S_{1, n}$. Then with probability $1-\delta$, we have 
\begin{align*}
                  \max\braces{| \hat J_{\DM}-J|, ~| \hat J_{\ipw}-J|}        = O\prns{n^{-\alpha/(2\alpha+d)}+\sqrt{\log(1/\delta)/n}}.
\end{align*}
\end{corollary}
Compared to \cref{cor: vc_class,cor: nonparametric}, \cref{cor:linear_sieves1} assumes realizability on $\Lambda^{\alpha}(([0,1]^d)$ rather than on $\Hbbb, \Qbbb, \Hbbb', \Qbbb'$. Therefore, we cannot prove \cref{cor:linear_sieves1} by  directly applying \cref{cor: ipw_reg_no_stab}, as the latter requires realizability assumptions for $\Hbbb, \Qbbb, \Hbbb', \Qbbb'$. Instead, we need to use the more general \cref{thm:ipw_reg_no_stab}, which do not require such realizability assumptions and characterize the bias due to violations of these assumptions. 
These bias terms can be upper-bounded by sieve approximation errors in \cref{eq: sieve-error1,eq: sieve-error2}, and the variance terms can be bounded in terms of VC dimensions that depend on $k_n$. Choosing the sieve size $k_n = O(n^{\frac{d}{d + 2\alpha}})$ to balance the bias upper bounds and the variance upper bounds in \cref{thm:ipw_reg_no_stab}, we can obtain the convergence rate in \cref{cor:linear_sieves1}. 

\paragraph*{\textbf{Estimators with Stabilizers.}} Below we also bound the projected MSEs of sieve minimax bridge function estimators with stabilizers. These bounds can easily translate into error bounds of the corresponding GACE estimators according to \cref{thm:ipw-dm,thm:drr}. Alternatively, they can also translate into more refined error bounds for the DR estimator by leveraging the ill-posedness measures in \cref{def:ill-posed} and following \cref{thm:dr2,thm:dr}.
\begin{corollary}\label{cor:linear_sieves2}
Assume sieve classes $\mathcal S_{1, n}, \mathcal S_{2, n}$ satisfy the approximation error conditions in \cref{eq: sieve-error1,eq: sieve-error2} with $k_n = O(n^{\frac{d}{d + 2\alpha}})$ and $\|\mathcal S_{1, n}\|_{\infty} < \infty,\| \mathcal S_{2, n}\|_{\infty} < \infty$. Also suppose 
$ \|\epol(A \mid X)\|_{\infty} < \infty, \|Y\|_{\infty} < \infty, \|\epol(A \mid X)/f(A \mid X, W)\|_{2}<\infty$.
\begin{enumerate}
 \item Let $\hat h$ be the outcome bridge function estimator given in \cref{eq: est2-1} with $\Hbbb = \mathcal S_{1, n}, \Qbbb' = \mathcal S_{2, n}$ and $\lambda = 1$. 
 If $\Hbbb^{\obs}_0\cap \Lambda^{\alpha}([0,1]^d)\neq \emptyset $ and there exists $h_0\in \Hbbb^{\obs}_0\cap \Lambda^{\alpha}([0,1]^d)$ such that $P_z(\Lambda^{\alpha}([0,1]^d)-h_0 )\subset  \Lambda^{\alpha}([0,1]^d)$, then with probability at least $1 - \delta$, 
 \begin{align*}
    \|P_z(\hat h-h_0)\|_2=\tilde O(n^{-\alpha/(2\alpha+d)}+\sqrt{\log(1/\delta)/n}). 
\end{align*}
\item Let $\hat q$ be the action bridge function estimator given in \cref{eq: est2-2} with $\Qbbb = \mathcal S_{2, n}, \Hbbb' = \mathcal S_{1, n}$ and $\lambda = 1$. 
If $\Qbbb^{\obs}_0\cap \Lambda^{\alpha}([0,1]^d)\neq \emptyset $ and there exists $q_0\in \Qbbb^{\obs}_0\cap \Lambda^{\alpha}([0,1]^d)$ such that $\epol P_w(\Lambda^{\alpha}([0,1]^d)-q_0 )\subset  \Lambda^{\alpha}([0,1]^d)$, then with probability at least $1 - \delta$, 
 \begin{align*}
    \|P_w(\hat q-q_0)\|_2=\tilde O(n^{-\alpha/(2\alpha+d)}+\sqrt{\log(1/\delta)/n}). 
\end{align*}
 \end{enumerate} 
\end{corollary}
Similarly, \cref{cor:linear_sieves2}
assumes realizability on $\Lambda^{\alpha}(([0,1]^d)$ rather than on $\Hbbb, \Qbbb, \Hbbb', \Qbbb'$. 
So we cannot prove \cref{cor:linear_sieves2} by directly applying \cref{thm:w_function_easy,thm:q_function_easy}, as the latter requires ealizability assumptions for $\Hbbb, \Qbbb, \Hbbb', \Qbbb'$. Instead, we need to use the more general \cref{thm:w_function,thm:q_function} which do not require such realizability assumptions and characterize the bias due to violations of these assumptions. 
By appropriately balancing the bias and variance terms, we can obtain the projected MSE bounds above. 
}

\subsection{Neural Networks}\label{sec: NN}
{\newedit 
Alternatively, we can consider neural networks in the minimax estimation of bridge functions. 
For example, we again consider  $ \mathcal W \times \mathcal Z \times \mathcal A \times \mathcal X = [0, 1]^{d_W + d_Z+ d_A+ d_X}$ with $d_W = d_Z$ and denote
$d = d_W +  d_A + d_X =  d_Z +  d_A + d_X$
. 
We consider $\mathcal{F}_{1, n}$ and $\mathcal{F}_{2, n}$ as classes of neural networks over $\mathcal{W} \times \mathcal A \times \mathcal X$ and $\mathcal Z \times \mathcal A \times \mathcal X$ respectively, both with 
$L$ layers, $\Omega$ parameters and $1-$Lipschitz continuous activation function (\eg, ReLU activation functions). 

It is known from the universal approximation property of neural networks \citep{yarotsky2017error} that these neural network classes can approximate Sobolev balls over $[0, 1]^d$. For simplicity, we assume that the two limiting Sobolev balls have the same  smoothness parameter $\alpha$, which we denote as $\Scal^{\alpha}([0,1]^d)$. 
\begin{lemma}[Theorem 1 in \cite{yarotsky2017error}] \label{lem:yaro}
    Suppose $\mathcal{F}_{1, n}, \mathcal{F}_{2, n}$ have at most $L=O(\log \Omega)$ layers and $\Omega$ parameters. Then for any $h, q \in \Scal^{\alpha}([0,1]^d)$, there exists $h_n \in \mathcal{F}_{1, n}$ and $q_n \in \mathcal{F}_{2, n}$ such that 
    \begin{align*}
        \|h - h_n\|_\infty =\tilde{O}( \Omega^{-\alpha/ d}), ~~ \|q - q_n\|_\infty =\tilde{O}( \Omega^{-\alpha/ d}).
    \end{align*}
\end{lemma}

\paragraph*{\textbf{Estimators without Stabilizers.}} 
We first regard neural networks as sieve estimators which approximate Sobolev balls. Thus similar to the analysis for \cref{cor:linear_sieves2}, we can also bound the errors of IPW and REG estimators based on neural network minimax bridge function estimators \emph{without} stabilizers. 

\begin{corollary}\label{cor:neural_realiza}
Assume neural network classes $\mathcal{F}_{1, n}, \mathcal{F}_{2, n}$ both have layers $L=\Theta(\log(n))$ and  $\Omega=\Theta(n^{d/(2\alpha+d)})$ weight parameters bounded in $[-B, B]$ for a positive constant $B$ and $\|\mathcal{F}_{1, n}\|_{\infty},\|\mathcal{F}_{2, n}\|_{\infty} < \infty$. Also suppose 
$ \|\epol(A \mid X)\|_{\infty} < \infty, \|Y\|_{\infty} < \infty, \|\epol(A \mid X)/f(A \mid X, W)\|_{2}<\infty$, 
and $\Hbbb^{\obs}_0\cap \mathcal{S}^{\alpha}([0,1]^d)\neq \emptyset $, $\epol\Qbbb^{\obs}_0\cap \mathcal{S}^{\alpha}([0,1]^d)\neq \emptyset$. Let $ \hat J_{\DM}$ be the REG estimator based on the  $\hat h$ given in \cref{eq: est1-1} with $\Hbbb = \mathcal F_{1, n}$ and $\Qbbb' = \mathcal F_{2, n}$, and $ \hat J_{\ipw}$ be the IPW estimator based on $\hat q$ given in \cref{eq: est1-2} with $\Qbbb = \mathcal F_{2, n}$ and $\Hbbb' = \mathcal F_{1, n}$. Then with probability $1-\delta$, we have 
\begin{align}\label{eq: nn-err-bound-1}
                  \max\braces{| \hat J_{\DM}-J|, ~| \hat J_{\ipw}-J|}        = O\prns{n^{-\alpha/(2\alpha+d)}+\sqrt{\log(1/\delta)/n}}.
\end{align}
\end{corollary}
Note that in \cref{cor:neural_realiza}, we assume realizability on the Sobolev balls $\mathcal{S}^{\alpha}([0,1]^d)$ rather than  on $\Hbbb, \Qbbb, \Hbbb', \Qbbb'$.
To prove \cref{cor:neural_realiza}, we bound the bias terms in \cref{thm:ipw_reg_no_stab} by the neural network approximation errors given in \citet{yarotsky2017error}. 
The variance terms therein can be upper bounded by calculating Rademacher complexities of some function classes based on neural networks (see \cref{lem:covering_neural,lem:dudley}). 
By choosing the neural network size appropriately to balance these terms, we can arrive at the final convergence rate in  \cref{eq: nn-err-bound-1}. 

Note that the analysis in \cref{cor:neural_realiza} requires the number of parameters in neural networks to be smaller than the sample size $n$. In practice, neural networks are sometimes over-parameterized to achieve better performance so that the number of parameters can indeed exceed the sample size. In the over-parameterized case, we may leverage the results in \cite{golowich2018size} to compute Rademacher complexities of neural networks and obtain similar conclusions.

\paragraph*{\textbf{Estimators with Stabilizers.}} Below we also bound the projected MSEs of neural network minimax bridge function estimators with stabilizers. These bounds again can be easily translated into error bounds of the corresponding GACE estimators according to \cref{thm:ipw-dm,thm:drr,thm:dr2,thm:dr}. 
\begin{corollary}\label{cor:linear_sieves3}
Assume neural network classes $\mathcal{F}_{1, n}, \mathcal{F}_{2, n}$ both have layers $L=\Theta(\log(n))$ and  $\Omega=\Theta(n^{d/(2\alpha+d)})$ weight parameters bounded in $[-B, B]$ for a positive constant $B$ and $\|\mathcal{F}_{1, n}\|_{\infty},\|\mathcal{F}_{2, n}\|_{\infty} < \infty$. Also suppose 
$ \|\epol(A \mid X)\|_{\infty} < \infty, \|Y\|_{\infty} < \infty, \|\epol(A \mid X)/f(A \mid X, W)\|_{2}<\infty$.
\begin{enumerate}
 \item Let $\hat h$ be the outcome bridge function estimator given in \cref{eq: est2-1} with $\Hbbb = \mathcal F_{1, n}$ and $\Qbbb' = \mathcal F_{2, n}$. 
 If $\Hbbb^{\obs}_0\cap \mathcal{S}^{\alpha}([0,1]^d)\neq \emptyset $ and there exists $h_0\in \Hbbb^{\obs}_0\cap \mathcal{S}^{\alpha}([0,1]^d)$ such that $P_z(\mathcal{S}^{\alpha}([0,1]^d)-h_0 )\subset  \mathcal{S}^{\alpha}([0,1]^d)$, then with probability at least $1 - \delta$, 
 \begin{align*}
    \|P_z(\hat h-h_0)\|_2=\tilde O(n^{-\alpha/(2\alpha+d)}+\sqrt{\log(1/\delta)/n}). 
\end{align*}
\item Let $\hat q$ be the action bridge function estimator given in \cref{eq: est2-2} with $\Qbbb = \mathcal F_{2, n}$ and $\Hbbb' = \mathcal F_{1, n}$. 
If $\Qbbb^{\obs}_0\cap \mathcal{S}^{\alpha}([0,1]^d)\neq \emptyset $ and there exists $q_0\in \Qbbb^{\obs}_0\cap \mathcal{S}^{\alpha}([0,1]^d)$ such that $\epol P_w(\mathcal{S}^{\alpha}([0,1]^d)-q_0 )\subset  \mathcal{S}^{\alpha}([0,1]^d)$, then with probability at least $1 - \delta$, 
 \begin{align*}
    \|P_w(\hat h-h_0)\|_2=\tilde O(n^{-\alpha/(2\alpha+d)}+\sqrt{\log(1/\delta)/n}). 
\end{align*}
 \end{enumerate} 
\end{corollary}
Again, we assume realizability on the Sobolev balls $\mathcal{S}^{\alpha}([0,1]^d)$ rather than  on $\Hbbb, \Qbbb, \Hbbb', \Qbbb'$. So we use the more general \cref{thm:w_function,thm:q_function} and choose the size of neural networks appropriately to get the bounds above. 
}

\section{Additional Experiment Details}\label{ape:parameters}

{\newedit 
Tailoring the data generating process in \cite{CuiYifan2020Spci} to multi-dimensional variables, we generate $U, X', Z', W' \in \R{d}$ with $d = 60$ and $A \in \braces{0, 1}$ as follows:
\begin{enumerate}
\item $X' \sim  \Ncal(0,0.5\mathbb{I}_d)$ where $\mathbb{I}_d$ is a $d\times d$ identity matrix. 
\item $A \mid X' \sim \mathrm{Ber}\prns{p(X')}$ where
\begin{align*}
p\prns{X'} = \frac{1}{1 + \exp\prns{0.5 - 0.05\mathbf{e}_d}^\top X'},
\end{align*}
where $\mathbf{e}_d\in\R{d}$ is an all-one vector. 
\item Draw $W',Z', U$ from 
 \begin{align*}
     W',Z', U\mid A,X' \sim \Ncal \prns{ \begin{bmatrix}  \mu_0+\mu_a A+\mu_x X' \\  \alpha_0+\alpha_a A+\alpha_x X' \\  \kappa_0+\kappa_a A+\kappa_x X'\end{bmatrix}   , \begin{bmatrix} \sigma^2_w, \sigma^2_{wz},\sigma^2_{wu} \\ \sigma^2_{wz}, \sigma^2_{z},\sigma^2_{zu} \\ \sigma^2_{wu}, \sigma^2_{zu},\sigma^2_{u}  \end{bmatrix}}.
 \end{align*}
Here we set the parameters above as $\mu_0=\alpha_0=\kappa_0=0.2\mathbf{e}_d$, $\alpha_a=\kappa_a=\mu_x=\alpha_x=\kappa_x=\Ib_d$, $\sigma^2_z=\sigma^2_u=\sigma^2_w=0.1(\Ib_d+\mathbf{e}_d\mathbf{e}_d^\top),\sigma^2_{wu}=\sigma^2_{zu}=0.1\mathbf{e}_d\mathbf{e}_d^\top$. 
Finally, we choose $\sigma^2_{wz}$ and $\mu_a$ to ensure that $W'\perp \prns{A',Z'} \mid U,X'$, which corresponds to condition \ref{whole_assm:nc-unconfoundedness} in \cref{asm:whole_assm}. To achieve this, note that 
\begin{align}\label{eq: specification}
   \E[W' \mid U,X',A,Z']=\mu_0 + \mu_a A+ \mu_x X'+\Sigma_{w(z,u)}\Sigma^{-1}_{z,u}\begin{bmatrix}  Z'-\alpha_0-\alpha_aA-\alpha_x X'\\
   U-\kappa_0-\kappa_a A-\kappa_x X' 
   \end{bmatrix} 
\end{align}
where 
\begin{align*}
   \Sigma_{w (z,u)}=(\sigma^2_{wz},\sigma^2_{wu}),\quad \Sigma_{z,u}=\begin{bmatrix} \sigma^2_z, \sigma^2_{zu} \\ \sigma^2_{zu} , \sigma^2_u \end{bmatrix}.  
\end{align*}
We simply select $\sigma^2_{wz}$ and $\mu_a$ so that \cref{eq: specification} does not depend on $A$ and $Z'$. 
\item Draw $Y$ from 
\begin{align*}
Y \mid X', U, W' \sim \mathcal{N}\prns{A + \mathbf{e}_d^\top X' + \mathbf{e}_d^\top U + \mathbf{e}_d^\top W', 1}.
\end{align*}
\item Transform $\prns{W', Z', X'}$ into $\prns{W, Z, X}$ via $X = g\prns{GX'}$, $Z = g\prns{GZ'}$, $W = g\prns{GW'}$ where $G \in \R{d \times d}$ is an invertible matrix and $g\prns{\cdot}$ is a nonlinear invertible function applied elementwise to $GX', GZ', GW'$ respectively. 
\item Make only $\prns{W, Z, X, A, Y}$ observable in the final data. 
\end{enumerate}
Following \citet{CuiYifan2020Spci}, we can show that there exist a linear function $\tilde h(W', a, X')$ and a generalized linear function $\tilde q(Z', a, X')$ satisfying 
\begin{align*}
    \E[Y\mid U,a,X']=\E[\tilde h(W',a,X')\mid U, A= a, X'],\quad  1/\mathrm{Pr}(A=a\mid U,X')=\E[\tilde q(Z',a,X')\mid U,A = a,X'], \, a=0,1.
\end{align*}
These induce bridge functions 
\begin{align*}
h_0\prns{W, A, X} = \tilde h\prns{G^{-1}g^{-1}\prns{W}, a, G^{-1}g^{-1}\prns{X}}, ~~ q_0\prns{Z, A, X} = \tilde q\prns{G^{-1}g^{-1}\prns{Z}, a, G^{-1}g^{-1}\prns{X}}.
\end{align*}

For the experiments in \cref{sec: simulation}, we use three-layer neural networks  of the following form as the bridge class $\Hbbb$:
\begin{align} 
    x_1   &\leftarrow \mathrm{ReLU}(\Omega_1x_0+B_1),\Omega_1 \in \mathbb{R}^{2d\times 2d},B_1 \in \mathbb{R}^{2d},\, x_2 \leftarrow \mathrm{ReLU}(\Omega_2x_1+B_2),\Omega_2 \in \mathbb{R}^{d\times d},B_2 \in \mathbb{R}^{d} \\ 
        x_3 & \leftarrow  (\Omega_3x_2+B_3),\Omega_3 \in \mathbb{R}^{1 \times d},B_3 \in \mathbb{R}. \nonumber
\end{align}
The bridge class for $\Qbbb$ is the same neural network class except that the last layer is replaced by $\mathrm{Softplus} (\Omega_3x_2+B_3)$. 
For the critic class $\Qbbb'$, we use an RKHS with an RBF kernel $k(a,b)=\exp(-\|a-b\|^2/(2\iota))$ where the variance parameter $\iota$ is set to the median of pairwise distances in the data.
When we use stabilizers, we set the parameters as $\gamma_1 = 5$ and $\lambda=1$. Recall these hypterparameters appear in \cref{eq:est_sta1_form} and \cref{eq:est_sta2_form}. 
For the critic class $\Hbbb'$, we use an RKHS with a linear kernel. Since the critic classes are both RKHS, computing the bridge function estimators only require optimizing over $\Hbbb, \Qbbb$ as we show in \cref{lemma: RKHS-1}.
We implement this using RMSprop \citep{Tieleman2012} with a learning rate $0.0002$, momentum $0.95$ and batch size $n/40$.

To evaluate the estimation performance, we run  $200$ replications of each experiment. Thus for each estimator, we obtain $200$ estimated values $\hat J_k$ for $k = 1, \dots, 200$. We report the normalized mean squared errors (MSE) $\frac{1}{200}\sum_{k=1}^{200}M_k$ with $M_k=\frac{|\hat J_k-J|^2}{J^2}$, normalized squared bias ${\abs{\frac{1}{200}\sum_{k=1}^{200}\hat J_k - J}^2/J^2}$, and the variance of $\braces{\hat J_k: k = 1, \dots, 200}$ normalized by the true $J$.  
}

\section{Proofs}
In all of the proofs, when we assume $\|Y\|_{\infty},\|\Hbbb\|_{\infty},\|\Qbbb\|_{\infty}, \|\Hbbb'\|_{\infty},\|\Qbbb'\|_{\infty}, 
\|\epol\|_{\infty}, 
\|\epol/f(a|x,w)\|_2$ are finite, we denote their upper bounds as $C_Y,C_{\Hbbb},C_{\Qbbb},C_{\Hbbb'},C_{\Qbbb'}, C_\pi, C_w$ respectively.

\subsection{Supporting Lemmas}
\begin{lemma}[Dudley integral]\label{lem:dudley}
\begin{align*}
    \Rad(\Fcal)\lesssim \inf_{\tau\geq 0}\braces{\tau+\int_{\tau}^{\sup_{f\in \Fcal}\sqrt{\E_n[f^2]}}\sqrt{\frac{\log \Ncal(\tau,\Fcal,\|\cdot\|_n)}{n}}}d \tau. 
\end{align*}
\end{lemma}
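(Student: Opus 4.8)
The plan is to recognize $\Rcal_n(\infty;\Fcal)$ as the expected supremum of a sub-Gaussian process and then run Dudley's chaining argument, working throughout conditionally on the sample $\{O_1,\dots,O_n\}$ (which is exactly what the quantity $\Rcal_n$ already does). First I would write $Z_f = \tfrac1n\sum_i\epsilon_i f(O_i)$ for $f\in\Fcal$, with $\epsilon$ uniform on $\{-1,+1\}^n$, so that $\Rcal_n(\infty;\Fcal)=\E_\epsilon[\sup_{f\in\Fcal}Z_f]$; fixing any center $f_0\in\Fcal$, note $\E_\epsilon[\sup_f Z_f]=\E_\epsilon[\sup_f(Z_f-Z_{f_0})]$ since $Z_{f_0}$ is a mean-zero constant shift inside the supremum. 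The key increment estimate is then immediate: for fixed $f,g$, $Z_f-Z_g=\tfrac1n\sum_i\epsilon_i(f-g)(O_i)$ is a sum of independent zero-mean bounded terms, so by Hoeffding's lemma it is sub-Gaussian over $\epsilon$ with parameter $\tfrac1{\sqrt n}\|f-g\|_n$, where $\|g\|_n=\sqrt{\E_n[g^2]}$ is the empirical $L_2$ norm.

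Next I would carry out the chaining. Set $D=\sup_{f\in\Fcal}\|f\|_n$, fix $\tau\in(0,D]$, let $\varepsilon_k=2^{-k}D$, and for each $k$ pick a minimal $\varepsilon_k$-cover of $(\Fcal,\|\cdot\|_n)$ with cardinality $\Ncal_k=\Ncal(\varepsilon_k,\Fcal,\|\cdot\|_n)$ together with the nearest-point projection $\pi_k$ onto it; choose $K$ with $\varepsilon_K\le\tau<\varepsilon_{K-1}$ and set $\pi_0 f:=f_0$. The telescoping identity $Z_f-Z_{f_0}=(Z_f-Z_{\pi_K f})+\sum_{k=1}^K(Z_{\pi_k f}-Z_{\pi_{k-1}f})$ splits the supremum into a deterministic ``last link'' and a chaining sum. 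For the last link, Cauchy--Schwarz gives $|Z_f-Z_{\pi_K f}|\le\tfrac1n\sum_i|f-\pi_K f|(O_i)\le\|f-\pi_K f\|_n\le\varepsilon_K\le\tau$ uniformly in $f$, so it contributes at most $\tau$; this deterministic bound on the truncated link is the trick that makes the argument close cleanly. For level $k$ there are at most $\Ncal_k\Ncal_{k-1}\le\Ncal_k^2$ distinct increments $Z_{\pi_k f}-Z_{\pi_{k-1}f}$, each sub-Gaussian with parameter $\le\tfrac1{\sqrt n}(\varepsilon_k+\varepsilon_{k-1})=\tfrac{3}{\sqrt n}\varepsilon_k$, so the finite-maximum sub-Gaussian inequality $\E\max_{j\le m}|Y_j|\le c\,\sigma\sqrt{\log m}$ yields $\E_\epsilon\sup_f|Z_{\pi_k f}-Z_{\pi_{k-1}f}|\le c'\,\tfrac{\varepsilon_k}{\sqrt n}\sqrt{\log\Ncal_k}$.

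Summing over $k=1,\dots,K$ and using $\varepsilon_k-\varepsilon_{k+1}=\varepsilon_{k+1}$ together with monotonicity of $\varepsilon\mapsto\log\Ncal(\varepsilon,\Fcal,\|\cdot\|_n)$ to compare the dyadic sum $\sum_k\varepsilon_k\sqrt{\log\Ncal_k}$ with the entropy integral, I get $\E_\epsilon\sup_f Z_f\le c''\bigl(\tau+\tfrac1{\sqrt n}\int_{\tau}^{D}\sqrt{\log\Ncal(\varepsilon,\Fcal,\|\cdot\|_n)}\,d\varepsilon\bigr)$; since $\tau$ was arbitrary, taking the infimum over $\tau$ gives the claimed bound with $D=\sup_{f\in\Fcal}\sqrt{\E_n[f^2]}$. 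The main obstacle is purely the bookkeeping of the chaining step — handling the truncation at resolution $\tau$ correctly and turning the geometric sum over scales into the entropy integral with clean universal constants — rather than anything conceptually deep, since everything reduces to Hoeffding's lemma and the sub-Gaussian maximal inequality. In the write-up I would simply invoke the corresponding bound from Wainwright's book (already cited as \citealp{WainwrightMartinJ2019HS:A}), adapted to this sample-conditional setting.
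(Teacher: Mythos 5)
Your proof is correct. Note that the paper states this lemma without proof, treating it as the standard Dudley entropy-integral bound for the (empirical, sample-conditional) Rademacher complexity, so there is no in-paper argument to compare against; your chaining derivation is exactly the canonical proof of that standard result. The one place that deserves a word of care in a full write-up is the lower limit of the entropy integral: with the dyadic scales $\varepsilon_k=2^{-k}D$ and $K$ chosen so that $\varepsilon_K\le\tau<\varepsilon_{K-1}$, the comparison $\sum_{k\le K}\varepsilon_k\sqrt{\log\Ncal_k}\lesssim\int\sqrt{\log\Ncal(\varepsilon)}\,d\varepsilon$ naturally produces a lower limit of order $\tau/2$ rather than $\tau$; since the bound is stated with $\lesssim$ and an infimum over $\tau$, reparametrizing $\tau$ absorbs this into the constant, so the conclusion is unaffected. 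Everything else — the sub-Gaussian increment bound via Hoeffding with parameter $\|f-g\|_n/\sqrt n$, the deterministic Cauchy--Schwarz bound $|Z_f-Z_{\pi_Kf}|\le\tau$ on the truncated link, and the maximal inequality over at most $\Ncal_k^2$ increments per level — is sound, and deferring to the corresponding result in \citet{WainwrightMartinJ2019HS:A} is consistent with how the paper uses the lemma.
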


Note $\sup_{f\in \Fcal}\sqrt{\hP[f^2]}$ is upper bounded by the envelope $\|\Fcal\|_{\infty}$.

\begin{lemma}{Covering number of VC-subgraph classes \citep[Lemma 19.15]{VaartA.W.vander1998As} }\label{lem: vc}
For a VC class of functions $\Fcal$ with measurable envelope function $F$ and $r\geq 1$, one has for any probability measure $Q$ with $\|F\|_{Q,r}>0$, 
\begin{align*}
    \Ncal(\epsilon \|F\|_{Q,r},\Fcal,L_r(Q))\lesssim V(\Fcal)(4e)^{V(\Fcal)}\prns{\frac{2}{\epsilon}}^{rV(\Fcal)}. 
\end{align*}        
\end{lemma}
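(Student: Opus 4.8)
The statement is the classical covering-number bound of \citet[Lemma 19.15]{VaartA.W.vander1998As}, so the plan is to reconstruct its proof, which proceeds by reducing the function-covering problem to a set-covering problem for the subgraphs and then invoking the combinatorial Sauer--Shelah bound. First I would recall that, by the very definition of a VC-subgraph class, the collection of subgraphs $\mathcal{C} = \{C_f : f \in \Fcal\}$, where $C_f = \{(x,t) \in \Xcal \times \Rl : t \le f(x)\}$, is a VC class of sets whose VC index equals $V(\Fcal)$. The key point of this reduction is that $L_r(Q)$ distances between functions can be expressed as $L_1$ distances between the indicators of their subgraphs under an auxiliary probability measure built from the envelope.

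For the base case $r = 1$, after translating by the envelope so that all functions lie in $[0, 2F]$ (which leaves both $V(\Fcal)$ and every $|f - g|$ unchanged), I would use the elementary identity $|f(x) - g(x)| = \int_{\Rl} |\mathbf{1}_{C_f}(x,t) - \mathbf{1}_{C_g}(x,t)|\,dt$ and define the probability measure $\mu$ on $\Xcal \times \Rl$ with density proportional to $\mathbf{1}\{0 \le t \le 2F(x)\}$ relative to $dt \times dQ(x)$, normalized by $2\|F\|_{Q,1}$. This yields $\|f - g\|_{Q,1} = 2\|F\|_{Q,1}\,\mu(C_f \triangle C_g)$, so that an $(\epsilon/2)$-cover of $\mathcal{C}$ in $L_1(\mu)$ induces an $\epsilon\|F\|_{Q,1}$-cover of $\Fcal$ in $L_1(Q)$, giving $\Ncal(\epsilon\|F\|_{Q,1}, \Fcal, L_1(Q)) \le \Ncal(\epsilon/2, \mathcal{C}, L_1(\mu))$.

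The combinatorial heart, which I expect to be the main obstacle, is the $L_1$ covering bound for an arbitrary VC class of sets $\mathcal{C}$ with index $V$: $\Ncal(\delta, \mathcal{C}, L_1(\mu)) \lesssim V(4e)^V(1/\delta)^V$ for every probability measure $\mu$. I would prove this by a packing/random-sampling argument: take a maximal $\delta$-separated family $C_1, \dots, C_m$ (so $\mu(C_i \triangle C_j) > \delta$), draw $N$ i.i.d. points from $\mu$, and bound $\mathbb{P}(C_i, C_j \text{ agree on all } N \text{ points}) \le (1-\delta)^N \le e^{-\delta N}$; a union bound over the $\binom{m}{2}$ pairs shows that once $N \gtrsim \delta^{-1}\log m$, with positive probability all pairs are separated on the sample, so the sample realizes at least $m$ distinct traces. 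The Sauer--Shelah lemma caps the number of traces of a VC-$V$ class on $N$ points by $(eN/V)^V$, and solving the resulting self-referential inequality $m \le (eN/V)^V$ with $N \sim \delta^{-1}\log m$ yields the stated polynomial-in-$1/\delta$ bound.

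Finally, to pass from $r = 1$ to general $r \ge 1$, I would use $|f - g|^r \le (2F)^{r-1}|f - g|$ and reweight: setting $d\tilde Q \propto F^{r-1}\,dQ$ gives $\|f - g\|_{Q,r}^r \le 2^{r-1}\|F\|_{Q,r}^{r-1}\|f - g\|_{\tilde Q, 1}$, where I use the monotonicity $\|F\|_{Q, r-1} \le \|F\|_{Q,r}$ for probability measures. Consequently $\|f - g\|_{Q,r} \le \epsilon\|F\|_{Q,r}$ is guaranteed once $f$ and $g$ are within a normalized $L_1(\tilde Q)$ distance of order $\epsilon^r$; applying the base-case bound at this scale $\sim \epsilon^r$ turns the exponent $V$ into $(1/\epsilon^r)^V = (1/\epsilon)^{rV}$, which is exactly the $rV(\Fcal)$ power in the statement, and tracking the accumulated constants recovers the $V(\Fcal)(4e)^{V(\Fcal)}(2/\epsilon)^{rV(\Fcal)}$ form. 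The only delicate bookkeeping is that the envelope $F$ is a pointwise bound and hence serves as an envelope under every reweighted measure, so the base-case estimate applies verbatim to $\tilde Q$.
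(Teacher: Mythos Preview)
The paper does not prove this lemma at all; it is stated purely as a citation of \citet[Lemma 19.15]{VaartA.W.vander1998As} and is used as a black box in the proofs of \cref{cor: vc_class} and \cref{cor: vc_nonpara}. Your proposal correctly reconstructs the classical proof of that textbook result (reduction to subgraph-set covering via the envelope-weighted auxiliary measure, the Sauer--Shelah random-sampling packing argument for VC classes of sets, and the $r \ge 1$ lift by $|f-g|^r \le (2F)^{r-1}|f-g|$ reweighting), so there is nothing to compare against in the paper itself.
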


\begin{lemma}[Theorem 1 in \cite{golowich2018size}] \label{lem:golowich}
    Let $\mathcal{F}_{NN}$ be a class of neural networks with $L$ layers and activation functions that are $1$-Lipschitz continuous and positive-homogeneous, that is, $\sigma(az) = a \sigma(z)$ holds with $a > 0$.
    Further, let $A_\ell$ be a weight matrix of an $\ell$-th layers for $\ell = 1,...,L$ and assume that $\|A_\ell\|_{F} \leq M_F(\ell)$ with a bound $M_F(\ell)$.
    Then, we have $\mathcal{R}_n(\infty; \mathcal{F}_{NN}) \leq \max_x\|x\|_2 (2\sqrt{\log L} + 1) \prod_{\ell=1}^L M_F(\ell)/\sqrt{n}$.
\end{lemma}

\begin{lemma}[Corollary 14.3 in \citet{WainwrightMartinJ2019HS:A}]\label{lem:critical_basic}
        Let $\Ncal(\tau;\Gcal^{\mid \eta},\|\cdot\|_n)$ denote the $\tau$-covering number of $\Gcal^{\mid \eta}= \{g \in \Gcal \mid \|g\|_n\leq \delta\}$ under the $\|\cdot\|_n$ norm, where $\Gcal$ is a uniformly bounded function class. Then, the empirical version of critical inequality $\mathcal{R}_n\prns{\Gcal^{\mid \eta}} \le \eta^2/\|\Gcal\|_\infty$ is satisfied for any $\eta>0$ such that 
        \begin{align*}
            \frac{1}{\sqrt{n}}\int^{\eta}_{\eta^2/(2\|\Gcal\|_{\infty})}\sqrt{\log  \Ncal(t,\Gcal^{\mid \eta},\|\cdot\|_n)}\rd t\leq \frac{\eta^2}{\|\Gcal\|_{\infty}}.
        \end{align*}
\end{lemma}

\begin{lemma}[Theorem 14.1 in \citet{WainwrightMartinJ2019HS:A}]\label{lem:support1}
Given a star-shaped and $b$-uniformly bounded function class $\Gcal$, let $\eta_n$ be any positive solution of the inequality $\mathcal{R}_n(\Gcal^{\mid \eta})\leq \eta^2/b$. Then, for any $t\geq \eta_n$, we have 
\begin{align*}
    \abs{\|g\|^2_n-\|g\|^2_2} \leq \frac{1}{2}\|g\|^2_2+\frac{1}{2} t^2, ~~ \forall g\in \Gcal, 
\end{align*}
with probability at least $1 - c_1 \exp\prns{-c_2\frac{nt^2}{b^2}}$.
\end{lemma}

\begin{lemma}[Lemma 7 in  \citep{FosterDylanJ.2019OSL}]\label{lem:support2}
Consider a uniformly bounded function class $\Fcal$ and pick an arbitrary $f^{*}\in \Fcal$. Let $\eta_n$ be solution to
\begin{align*}
   \Rad(\mathrm{star}^{\mid \eta}(\Fcal-f^*))\leq \frac{\eta^2}{\|\Fcal\|_\infty},  \text{ where } \mathrm{star}^{\mid \eta}(\Fcal-f^*) \coloneqq  \braces{\alpha\prns{f-f^*}: f\in\Fcal, \alpha\in[0, 1], \|\alpha\prns{f-f^*}\|_n \le \eta}. 
\end{align*}
Moreover, assume that the loss $l(\cdot,\cdot)$ is $L$-Lipschitz in the first argument. Then, for $\tilde \eta_n=\eta_n+\sqrt{c_0\log(c_1/\delta)/n}$ with some universal constants $c_0,c_1$, the following holds with probability at least $1-\delta$,
\begin{align*}
     |(\E_n[l(f(x),z)]-\E_n[l(f^{*}(x),z)])-(\E[l(f(x),z)]-\E[l(f^{*}(x),z)])|\lesssim L \tilde\eta_n (\|f-f^{*}\|_2+\tilde\eta_n). 
\end{align*}
\end{lemma}

\begin{lemma}[Neural network; Lemma 21 in \cite{JMLR:v21:20-002}] \label{lem:covering_neural} 
    Let $\Fcal_{NN}$ be a set of functions by a neural network with $L$ layers, $\Omega$ weights in $[-B,B]$, and $1-$Lipschitz continuous activation function. 
    Then, for $\tau \in (0,1]$, we have
    \begin{align*}
        \log \Ncal(\tau, \Fcal_{NN}, \|\cdot\|_\infty) \leq \Omega \log \left( \frac{ 2LB^L (\Omega+1)^L}{\tau} \right). 
    \end{align*}
\end{lemma}

\begin{lemma}\label{lemma: identification-3}
Suppose that \cref{asm:whole_assm} holds. 
\begin{enumerate}
\item If $\Qbbb_0 \ne \emptyset$ and $\Hbbb_0^{\obs} \ne \emptyset$, then for any $h_0 \in \Hbbb^{\obs}_0$ and $q \in L_2(Z, A, X)$, 
\begin{align*}
\Eb{\tilde\phi_{\ipw}\prns{O; q}} - J = \Eb{h_0(W, A, X)\Eb{\epol(A\mid X)\prns{q(Z, A, X) - 1/{f(A \mid W, X)}} \mid W, A, X}}.
\end{align*}
\item If $\Hbbb_0 \ne \emptyset$ and $\Qbbb_0^{\obs} \ne \emptyset$, then for any $q_0 \in \Qbbb^{\obs}_0$ and $h \in L_2(W, A, X)$, 
\begin{align*}
\Eb{\tilde\phi_{\DM}\prns{O; h}} - J = \Eb{\epol(A\mid X)q_0(Z, A, X)\Eb{h(W, A, X) - Y \mid Z, A, X}}.
\end{align*}
\end{enumerate}
\end{lemma}

\begin{proof}[Proof for \cref{lemma: identification-3}]
Before proving the conclusions, note that for any $h \in L_2(W, A, X)$ and $q_0 \in \Qbbb_0^{\obs}$, 
\begin{align}\label{eq: T-operator}
    \E\bracks{q_0(Z,A,X)\epol(A \mid X)h(W, A,X)}
        &=\E\bracks{\E\bracks{\Eb{q_0(Z,A,X)\epol(A|X)\mid W, A, X}h(W, A, X)}} \nonumber \\
        &= \Eb{\frac{\epol(A\mid X)}{f(A \mid W, X)}h(W, A, X)} \nonumber \\
        &= \Eb{\Eb{\frac{\epol(A\mid X)}{f(A \mid W, X)}h(W, A, X) \mid W, X}} \nonumber \\
        &= \Eb{\int \frac{\epol(a\mid X)}{f(a \mid W, X)}h(W, a, X)f(a \mid W, X) \rd\mu(a)} \nonumber \\
        &= \Eb{(\mathcal T h)(W, X)}. 
\end{align}
Here the second equality follows because $q_0$ satisfies \cref{eq: observed-bridge-q}.

\textbf{IPW.} First, by taking some element $h_0$ in $\Hbbb^{\obs}_0$, we have 
\begin{align*}
    \Eb{\tilde\phi_\ipw(O;q)} 
        &= \Eb{\epol(A\mid X)q(Z, A, X)Y}  \\
        &= \Eb{\epol(A\mid X)q(Z, A, X)\Eb{Y\mid Z,A,X}} \\
        &= \Eb{\epol(A\mid X)q(Z, A, X)h_0(W, A, X)} \\
        &= \Eb{\Eb{\epol(A\mid X)q(Z, A, X)\mid W, A, X}h_0(W, A, X)}.
\end{align*}
Moreover, by taking some element $q_0'\in \Qbbb_0$, 
\begin{align*}
    J   &=  \Eb{\epol(A\mid X)q_0'(Z, A, X)Y }  \tag{\cref{lem:bridge-identification}}\\ 
     &=  \Eb{\epol(A\mid X)q_0'(Z, A, X)\E[Y|A,X,W] }  \tag{$Y \perp Z \mid A, X,W$}\\ 
        &= \Eb{\Eb{\epol(A\mid X)q_0'(Z, A, X)\mid W, A, X}h_0(W, A, X)} \\
        &= \Eb{\frac{\epol(A\mid X)}{f(A \mid W, X)}h_0(W, A, X)}.
\end{align*}

It follows that 
\begin{align*}
    \Eb{\tilde\phi_\ipw(O;q)} - J =\Eb{\{\Eb{\epol(A\mid X)q(Z, A, X)\mid W, A, X} - \frac{\epol(A\mid X)}{f(A \mid W, X)}\}h_0(W, A, X)}.
\end{align*}

\textbf{REG.}
For any $q_0\in \Qbbb^{\obs}_0$, note that 
\begin{align*}
    \Eb{\tilde\phi_\DM(O;h)}  
        &= \Eb{(\mathcal T h)(W, X)} \\
        &= \Eb{\epol(A\mid X)q_0(Z, A, X)h(W, A, X)} \tag{\cref{eq: T-operator}} \\
        &= \Eb{\epol(A\mid X)q_0(Z, A, X)\Eb{h(W, A, X) \mid Z, A, X}}.
\end{align*}
Moreover, by taking some element $h_0'\in \Hbbb_0$, 
\begin{align*}
    J 
        &= \Eb{(\mathcal T h_0')(W, X)} \tag{\cref{lem:bridge-identification}} \\
        &= \Eb{\epol(A\mid X)q_0(Z, A, X)h_0'(W, A, X)} \tag{\cref{eq: T-operator}}\\
        &= \Eb{\epol(A\mid X)q_0(Z, A, X)\Eb{h_0'(W, A, X) \mid Z, A, X}} \\
        &= \Eb{\epol(A\mid X)q_0(Z, A, X)\Eb{Y \mid Z, A, X}}. 
\end{align*}
It follows that 
\[
\Eb{\tilde\phi_{\DM}\prns{O; h}} - J = \Eb{\epol(A\mid X)q_0(Z, A, X)\Eb{h(W, A, X) - Y \mid Z, A, X}}.
\]
\end{proof}

\subsection{Proofs for \cref{sec: setup}}\label{sec: proof-setup}
\begin{proof}[Proof of \cref{lem: U-identification}]
We first prove the identification via $\phi_{\DM}$: 
\begin{align*}
    J &=\E\braces{\int Y(a)\epol(a|X)\rd\mu(a) } \\
    &=\E\braces{\int \E[Y(a)|U,X]\epol(a|X)\rd\mu(a) } \tag{Tower property}\\
    &=\E\braces{\int \E[Y(a)|U,X,A=a]\epol(a|X)\rd\mu(a) } \tag{$Y(a)\perp A|U,X$}\\
    &=\E\braces{\int \E[Y|U,X,A=a]\epol(a|X)\rd\mu(a) } = \Eb{\phi_{\DM}\prns{Y, A, U, X; k_0}} \tag{Consistency}. 
\end{align*}
We next prove the identification via $\phi_{\ipw}$:
\begin{align*}
    J&=\E\braces{\int \E[Y|U,X,A=a]\epol(a|X)\rd\mu(a) }\\ 
    &=\E\braces{\int \E[Y|U,X,A=a]\{\epol(a|X)/f(a|X,U)\}\times f(a|X,U) \rd\mu(a) }\\
    &=\E\braces{\E[Y|U,X,A]\epol(A|X)/f(A|X,U) }\\ 
    &=\E\braces{Y\epol(A|X)/f(A|X,U) } = \Eb{\phi_{\ipw}\prns{Y, A, U, X; f}}.
\end{align*}
Finally, the identification via $\phi_{\DR}$ follows from the fact that 
\begin{align*}
    \Eb{\phi_{\DR}\prns{Y, A, U, X; k_0, f}} = \Eb{\phi_{\DM}\prns{Y, A, U, X; k_0}} = J.
\end{align*}
\end{proof}

\begin{proof}[Proof for \cref{lem:bridge-identification}]
We derive the equations in \cref{lem:bridge-identification} one by one.  

First, 
\begin{align*}
    \Eb{\tilde\phi_\ipw(O;q_0)} 
        &= \Eb{\Eb{\epol(A\mid X)q_0(Z, A, X)Y \mid A, U, X}} \\
        &= \Eb{\Eb{\epol(A\mid X)q_0(Z, A, X) \mid A, U, X}\Eb{Y \mid A, U, X}} \\
        &= \Eb{\Eb{\epol(A\mid X)/f(A\mid U, X) \mid A, U, X}\Eb{Y \mid A, U, X}} \\
        &= \Eb{\epol(A\mid X)Y/f(A\mid U, X)} \\
        &= \Eb{\int \epol\prns{a\mid X}\Eb{Y(a) \mid A = a, U, X}\rd\mu(a)}\\
        &= \Eb{\int \epol\prns{a\mid X}\Eb{Y(a) \mid  U, X}\rd\mu(a)} = J.
\end{align*}
Here the second equality follows from $Y \perp Z \mid A, U, X$ and the third equality follows from the definition of $q_0$ according to \cref{eq: bridge-U-q}.

Second, 
\begin{align*}
    \Eb{\tilde\phi_\DM(O;h_0)} 
        &= \Eb{\int \epol(a\mid X)h_0(W, a, X)\rd\mu(a)} \\
        &= \Eb{\int \epol(a\mid X)\Eb{h_0(W, a, X)\mid A = a, U, X}\rd\mu(a)} \\
         &=\Eb{\int \epol(a\mid X)\Eb{Y\mid A = a, U, X}\rd\mu(a)} \\
        &= J.
\end{align*}
Here the second equality follows from $W \perp A \mid U, X$ and the third equality follows from the definition of $h_0$ according to \cref{eq: bridge-U-h}. 

Finally, 
\begin{align*}
    \Eb{\tilde\phi_\dr(O;h_0,q_0)} 
    &= \Eb{\tilde\phi_\DM(O;h_0,q_0)} + \Eb{{\epol(A\mid X)q_0(Z, A, X)\prns{Y - h_0(W, A, X)}}} \\
    &= J + \Eb{\Eb{{\epol(A\mid X)q_0(Z, A, X)\prns{Y - h_0(W, A, X)}}\mid A, U, X}} \\
    &= J + \Eb{\Eb{{\epol(A\mid X)q_0(Z, A, X)\mid A, U, X}}\Eb{{Y -  h_0(W, A, X)}\mid A, U, X}} \\
    &= J.
\end{align*}
Here the third equality follows from that $Z\perp Y \mid A, U, X$ and the last equality follows from the definition of $h_0$ according to \cref{eq: bridge-U-h}.
\end{proof}

\begin{proof}[Proof for \cref{lemma: observed-bridge}]
For any $h_0 \in \Hbbb_0$, we have that
\[
\Eb{Y-h_0(W, A, X) \mid Z, A, U, X} = \Eb{Y-h_0(W, A, X) \mid A, U, X} = 0,
\]
where the first equality holds because $\prns{W, Y} \perp Z \mid A, U, X$. Therefore, 
\[
\E[Y-h_0(W, A,X) \mid Z, A, X]=\Eb{\Eb{Y-h_0(W, A, X) \mid Z, A, U, X}\mid Z, A, X} = 0.
\]

For any $q_0 \in\Qbbb_0$, we have that 
\[
\Eb{\epol(A\mid X){q_0(Z, A, X) } \mid W, A, U, X} = \Eb{\epol(A\mid X){q_0(Z, A, X) } \mid A, U, X} = \frac{\epol(A\mid X)}{f(A\mid U, X)},
\]
where the first equality follows from $Z\perp W \mid A, U, X$. Therefore,  
\begin{align*}
    \Eb{\epol(A\mid X){q_0(Z, A, X) } \mid W, A, X} 
        &= \Eb{\Eb{\epol(A\mid X){q_0(Z, A, X) } \mid W, A, U, X}\mid W, A, X}\\
        &= \Eb{\frac{\epol(A\mid X)}{f(A\mid U, X)} \mid W, A, X}.
\end{align*}
Then the conclusion follows from the fact that 
\begin{align*}
    \Eb{\frac{1}{f(A\mid U, X)} \mid W, A, X} 
        &= \int \frac{1}{f(A\mid u, X)}f(u \mid W, A, X)\rd(u) \\
        &= \int \frac{f(W, A\mid u, X)f(u, X)}{f(A\mid u, X)f(W, A, X)}\rd\mu(u) \\
        &= \int \frac{f(W \mid u, X)f(u, X)}{f(W, A, X)}\rd\mu(u) \\
        &= \frac{1}{f(A\mid W, X)}.
\end{align*}
Here the third equality follows from the fact that $W \perp A \mid U, X$. 

In summary, the condition that $\prns{W, Y} \perp Z \mid A, U, X$ ensures \cref{eq: observed-bridge-h} holds, while the condition that $\prns{Z, A} \perp W \mid U, X$ ensures \cref{eq: observed-bridge-q} holds.
\end{proof}

\begin{proof}[Proof for \cref{lemma: relaxed-ipw-dm}]
Obviously, the conclusions in statements \ref{lemma: relaxed-ipw-dm-1} and \ref{lemma: relaxed-ipw-dm-2} directly follow from \cref{lemma: identification-3}. We only need to prove the statement \ref{lemma: relaxed-ipw-dm-3}. 

When the condition in statement \eqref{lemma: relaxed-ipw-dm-1} holds, for any $h_0 \in\Hbbb_0^{\obs}$ and $q_0  \in \Qbbb_0^{\obs}$, 
\begin{align*}
    \Eb{\tilde\phi_\DR(O;h_0,q_0)} 
        &= \Eb{\epol(A\mid X)q_0(Z, A, X)\prns{Y - h_0(W, A, X)}} + \Eb{(\Tcal h_0)(X,W)} \\
        &= \Eb{\epol(A\mid X)q_0(Z, A, X)\Eb{Y - h_0(W, A, X) \mid Z, A, X}} + \Eb{\tilde\phi_\DM(O;h_0)}   \\
        &= \Eb{\tilde\phi_\DM(O; h_0)}  = J.
\end{align*}
When the condition in statement \eqref{lemma: relaxed-ipw-dm-2} holds, for any $h_0 \in\Hbbb_0^{\obs}$ and $q_0  \in \Qbbb_0^{\obs}$,
\begin{align*}
\Eb{\tilde\phi_\DR(O;h_0,q_0)} 
    &= \Eb{\tilde\phi_\ipw(O; q_0)} + \Eb{(\Tcal h_0)(X,W)} - \Eb{\epol(A\mid X)q_0(Z, A, X){h_0(W, A, X)}} \\
    &= \Eb{\tilde\phi_\ipw(O; q_0)} = J.
\end{align*}
Here the second equality follows from \cref{eq: T-operator}. 
\end{proof}

\subsection{Proofs for \cref{sec: realizable}}
\subsubsection{Proofs for \cref{seq: error-gace-nostab}}

\begin{proof}[Proof of \cref{thm:ipw_reg_no_stab}]
We start by proving \cref{eq:ipw_no_stab}.
We fix functions $h_0\in \Hbbb^{\obs}_0,q_0\in \Qbbb^{\obs}_0$. Recall that 
\begin{align*}
    \hat q 
        &\in  \argmin_{q\in \Qbbb}\max_{h\in \Hbbb'}~~\prns{\E_n{[\epol(A|X)q(Z, A, X)h(W,A,X) - (\Tcal h)(W, X)}]}^2 = \argmin_{q\in \Qbbb}\max_{h\in \Hbbb'} ~~\abs{\E_n{f_q\prns{q, h}}},
\end{align*}
where 
\begin{align*}
    f_q(q,h) \coloneqq -\pi q h +\Tcal h.
\end{align*}

\textit{First Step.}
We first define $J_{\ipw}\coloneqq \E[\hat q\epol Y]$. Obviously, 
\begin{align*}
    |  J_{\ipw}-\hat  J_{\ipw}|&\leq \sup_{q\in \Qbbb}|(\E-\E_n)[q\epol Y]|. 
\end{align*}
In addition, we have 
\begin{align*}
 |J_{\ipw}-J|&\leq |\E[(\hat q-q_0)\epol Y]|= |\E[\E[(\hat q-q_0)\epol Y|A,X,U]]|\\
 &=|\E[(\hat q-q_0)\epol h_0]| \tag{$Z\perp Y \mid A, X,U$}\\
  &\leq \sup_{h\in \Hbbb'}|\E[(\hat q-q_0)\epol h]|+\inf_{h\in \Hbbb'}|\E[(\hat q-q_0)\epol(h_0-h)]| \\
  &\le |\Eb{f_q(\hat q, \overline h)}|+\sup_{q\in \Qbbb}\inf_{h\in \Hbbb'}|\E[(q-q_0)\epol(h_0-h)]| , \qquad\qquad\qquad  \bar h \in  \argmax_{h\in \Hbbb'}|\E[(\hat q-q_0)\epol h]|\\
   &\leq |\Eb{f_q(\hat q, \overline h)}| +\inf_{h\in \Hbbb'}\sup_{q\in \Qbbb}|\E[(q-q_0)\epol(h_0-h)]|. 
\end{align*}
\textit{Second Step.}
Define $q'=\argmin_{q\in \Qbbb}\sup_{h\in \Hbbb'}|\E[f_q(q,h)]|$. Then 
\begin{align*}
|\Eb{f_q(\hat q, \overline h)}| 
    &\le |\mathbb{E}_n\bracks{-\pi \hat q \overline h +\Tcal \overline h}| + |\prns{\mathbb{E}_n - \mathbb{E}}\bracks{-\pi \hat q \overline h +\Tcal \overline h}|   \\
    &\le \sup_{h \in \Hbbb'}|\mathbb{E}_n\bracks{-\pi \hat q  h +\Tcal  h}| + |\prns{\mathbb{E}_n - \mathbb{E}}\bracks{-\pi \hat q \overline h +\Tcal \overline h}| \\
    &\le \sup_{h \in \Hbbb'}|\mathbb{E}_n\bracks{-\pi q'  h +\Tcal  h}| + |\prns{\mathbb{E}_n - \mathbb{E}}\bracks{-\pi \hat q \overline h +\Tcal \overline h}| \tag{Definition of $\hat q$}\\
    &\le \sup_{h \in \Hbbb'}|\mathbb{E}\bracks{-\pi  q'  h +\Tcal  h}| + \sup_{h \in \Hbbb'}|\prns{\mathbb{E}_n - \mathbb{E}}\bracks{-\pi  q'  h +\Tcal  h}|  + |\prns{\mathbb{E}_n - \mathbb{E}}\bracks{-\pi \hat q \overline h +\Tcal \overline h}| \\
    &\le \sup_{h \in \Hbbb'}|\mathbb{E}\bracks{-\pi  q'  h +\Tcal  h}| + 2\sup_{q\in\Qbbb, h \in \Hbbb'}|\prns{\mathbb{E}_n - \mathbb{E}}\bracks{-\pi  q  h +\Tcal  h}| \\
    &= \inf_{q\in\Qbbb}\sup_{h \in \Hbbb'}|\mathbb{E}\bracks{-\pi  q  h +\Tcal  h}| + 2\sup_{q\in\Qbbb, h \in \Hbbb'}|\prns{\mathbb{E}_n - \mathbb{E}}\bracks{-\pi  q  h +\Tcal  h}| \tag{Definition of $q'$}\\
    &= \inf_{q\in\Qbbb}\sup_{h \in \Hbbb'}|\mathbb{E}\bracks{\pi \prns{q_0- q}  h}| + 2\sup_{q\in\Qbbb, h \in \Hbbb'}|\prns{\mathbb{E}_n - \mathbb{E}}\bracks{-\pi  q  h +\Tcal  h}|
\end{align*}

\textit{Third Step.}
Combining all results,
\begin{align*}
    |J-\hat  J_{\ipw}|&\leq  |J_{\ipw}-\hat  J_{\ipw}|+|J_{\ipw}-J|\\
    &\leq \sup_{q\in \Qbbb}|(\E-\E_n)[q\epol Y]|+|\E[f_q(\hat q,\bar h)]|+\inf_{h\in \Hbbb'} \sup_{q\in \Qbbb}|\E[(q-q_0)\epol(h_0-h)]|\\
    &\le \sup_{q\in \Qbbb}|(\E-\E_n)[q\epol Y]| + \inf_{q\in\Qbbb}\sup_{h \in \Hbbb'}|\mathbb{E}\bracks{\pi \prns{q_0- q}  h}|  \\
    &+ 2\sup_{q\in\Qbbb, h \in \Hbbb'}|\prns{\mathbb{E}_n - \mathbb{E}}\bracks{-\pi  q  h +\Tcal  h}| + \inf_{h\in \Hbbb'} \sup_{q\in \Qbbb}|\E[(q-q_0)\epol(h_0-h)]|.
\end{align*}

Next, we turn to proving \cref{eq:reg_no_stab}.
We fix functions $h_0\in \Hbbb^{\obs}_0,q_0\in \Qbbb^{\obs}_0$. Recall that 
\begin{align*}
    \hat h 
    &\in \argmin_{h\in \Hbbb}\sup_{q\in \Qbbb'}~~\prns{\E_n{[q(Z, A, X)\prns{ h(W, A, X)-Y}}]}^2 = \argmin_{h\in \Hbbb}\sup_{q\in \Qbbb'}~~\abs{\E_n{f_h\prns{q, h}}},
\end{align*}
where $$f_h(q,h) \coloneqq q \prns{Y - h}.$$

\textit{First step }
We first define $J_{\DM}\coloneqq \E[\Tcal h]$.
Then,
\begin{align*}
     |J_{\DM}-\hat J_{\DM}|\leq \sup_{h\in \Hbbb}(\E-\E_n)|[\Tcal h ]|. 
\end{align*}
In addition,
\begin{align*}
    |J_{\DM}-J|&\leq |\E[\Tcal (\hat h-h_0)|=|\E[q_0\epol(\hat h-h_0)|\\
    &\leq  \sup_{q\in \Qbbb'}|\E[q (\hat h-h_0)|+\inf_{q\in \Qbbb'}|\E[(q_0\epol-q) (\hat h-h_0)|\\ 
    &\leq  |\E[\bar q (\hat h-h_0)|+\sup_{h\in \Hbbb}\inf_{q\in \Qbbb'}|\E[(q_0\epol-q) ( h-h_0)|,\qquad\qquad \bar q \in \argmax_{q\in \Qbbb'}|\E[q (\hat h-h_0)|\\
      &\leq  |\E[\bar q (\hat h-h_0)|+\inf_{q\in \Qbbb'}\sup_{h\in \Hbbb}|\E[(q_0\epol-q) ( h-h_0)|. 
\end{align*}

\textit{Second step.}
{ 
Define $h'=\argmin_{h\in \Hbbb}\sup_{q\in \Qbbb'}|\E[f_h(q,h)]|$.
Then, 
\begin{align*}
   & |\E[\bar q (\hat h-h_0)|=  |\E[\bar q (\hat h-Y)|\\
   &\leq |(\E-\E_n)[\bar q (\hat h-Y)|+|\E_n[\bar q (\hat h-Y)]|\\
   &\leq |(\E-\E_n)[\bar q (\hat h-Y)|+\sup_{q\in \Qbbb'}|\E_n[q (\hat h-Y)]|\\
    &\leq |(\E-\E_n)[\bar q (\hat h-Y)|+\sup_{q\in \Qbbb'}|\E_n[q (h'-Y)]| \tag{Definition of $\hat h$}\\    
  &\leq |(\E-\E_n)[\bar q (\hat h-Y)|+\sup_{q\in \Qbbb'}|(\E_n-\E)[q (h'-Y)]|+\sup_{q\in \Qbbb'}|\E[q (h'-Y)]|\\    
   &\leq2\sup_{q\in \Qbbb',h\in \Hbbb}|(\E_n-\E)[q(Y - h)]|+\inf_{h\in \Hbbb}\sup_{q\in \Qbbb'}|\E[q(h_0 - h)]|. \tag{Definition of $h'$} 
\end{align*}

}

\textit{Third step.}

Combining all results,
\begin{align*}
    |J-\hat J_{\DM}|&\leq     |J_{\DM}-\hat J_{\DM}|+ |J_{\DM}-J|\\
    &\leq \sup_{h\in \Hbbb}|(\E-\E_n)[\Tcal h ]|+|\E[\bar q (\hat h-h_0)|+\inf_{q\in \Qbbb'}\sup_{h\in \Hbbb}|\E[(q_0\epol-q) (h-h_0)|\\
     &\leq \sup_{h\in \Hbbb}|(\E-\E_n)[\Tcal h ]|+2\sup_{q\in \Qbbb',h\in \Hbbb}|(\E-\E_n)[ q(Y-h) ]  |\\
    &+\inf_{h\in \Hbbb}\sup_{q\in \Qbbb'}|\E[\{h_0-h\}q]|+\inf_{q\in \Qbbb'}\sup_{h\in \Hbbb}|\E[(q_0\epol-q) (h-h_0)|. 
\end{align*}

\end{proof}

\begin{proof}[Proof of \cref{thm:dr_no_stab}]
We fix functions $h_0\in \Hbbb^{\obs}_0,q_0\in \Qbbb^{\obs}_0$.
~

\textit{First Statement.}
First, we prove 
\begin{align*}
     | \hat J_{\DR}-J| &\leq \sup_{q\in \Qbbb,h\in \Hbbb}|(\E-\E_n)[q\epol Y-q\epol h+\Tcal h]|+2\sup_{q\in \Qbbb,h\in \Hbbb'}|(\E-\E_n) [-q\epol h+\Tcal h]|+\\
      &+\inf_{q\in \Qbbb} \sup_{h\in \Hbbb'}|\E[(q_0-q)\epol h ]|+\inf_{h\in \Hbbb'}\sup_{h'\in \Hbbb} \sup_{q\in \Qbbb}|\E[(q-q_0)\epol(h_0-h'-h)]|.
\end{align*}

We define 
\begin{align*}
 J_{\dr}\coloneqq \E[\hat q\epol \{Y-\hat h\}+\Tcal \hat h]. 
\end{align*}
Then,
\begin{align*}
    |  J_{\dr}-\hat  J_{\dr}|&\leq \sup_{q\in \Qbbb,h\in \Hbbb}|(\E-\E_n)[q\epol \{Y-h\}+\Tcal h]|. 
\end{align*}
In addition, 
\begin{align*}
 |J_{\dr}-J|&\leq |\E[(\hat q-q_0)\epol (h_0-\hat h)]|\\
  &\leq \sup_{h\in \Hbbb'}|\E[(\hat q-q_0)\epol h]|+\inf_{h\in \Hbbb'}|\E[(\hat q-q_0)\epol(h_0-\hat h-h)]| \\
  &=|\E[(\hat q-q_0)\pi \bar h]|+\sup_{h'\in \Hbbb}\sup_{q\in \Qbbb}\inf_{h\in \Hbbb'}|\E[(q-q_0)\epol(h_0-h'-h)]| , \qquad\qquad \bar h\coloneqq \argmax_{h\in \Hbbb'}|\E[(\hat q-q_0)\epol h]|\\
    &=|\E[(\hat q-q_0)\pi \bar h]|+\inf_{h\in \Hbbb'}\sup_{h'\in \Hbbb,q\in \Qbbb}|\E[(q-q_0)\epol(h_0-h'-h)]| \\
    &= |\E[f_q\prns{\hat q, \overline h}]|+\inf_{h\in \Hbbb'}\sup_{h'\in \Hbbb,q\in \Qbbb}|\E[(q-q_0)\epol(h_0-h'-h)]|
\end{align*}
The rest of the proof is the same as that of \cref{eq:ipw_no_stab} in \cref{thm:ipw_reg_no_stab}. 

\textit{Second Statement.}
Second, we prove 
\begin{align*}
        |\hat J_{\DR}-J|&\leq  \sup_{q\in \Qbbb,h\in \Hbbb}|(\E-\E_n)[q\epol Y-q\epol h+\Tcal h ]|+2\sup_{q\in \Qbbb',h\in \Hbbb}|(\E-\E_n)[q(Y-h) ]  |\\
    &+\inf_{h\in \Hbbb}\sup_{q\in \Qbbb'}|\E[\{h_0-h\}q]|+\inf_{q\in \Qbbb'}\sup_{h\in \Hbbb,q'\in \Qbbb}|\E[(q_0\epol-q'-q) ( h-h_0)|.
\end{align*}

We define 
\begin{align*}
 J_{\dr}\coloneqq \E[\hat q\epol \{Y-\hat h\}+\Tcal \hat h]. 
\end{align*}
Then,
\begin{align*}
    |  J_{\dr}-\hat  J_{\dr}|&\leq \sup_{q\in \Qbbb,h\in \Hbbb}|(\E-\E_n)[q\epol \{Y-h\}+\Tcal h]|. 
\end{align*}
In addition,
\begin{align*}
 |J_{\dr}-J|&\leq |\E[(\hat q-q_0)\epol (h_0-\hat h)]|\\
    &\leq  \sup_{q\in \Qbbb'}|\E[q (\hat h-h_0)|+\inf_{q\in \Qbbb'}|\E[(q_0\epol-\hat q\epol- q) \hat h-h_0)|\\ 
         &\leq  |\E[\bar q (\hat h-h_0)]|+\sup_{h\in \Hbbb,q'\in \Qbbb}\inf_{q\in \Qbbb'}|\E[(q_0\epol-q'\epol-q) ( h-h_0)|,\qquad\qquad \bar q\coloneqq \argmax_{q\in \Qbbb'}|\E[q (\hat h-h_0)|\\
        &\leq  |\E[\bar q (\hat h-h_0)]|+\inf_{q\in \Qbbb'}\sup_{h\in \Hbbb,q'\in \Qbbb}|\E[(q_0\epol-q'\epol-q)( h-h_0)| \\
        &= |\E[f_h({\overline q, \hat h})]|+\inf_{q\in \Qbbb'}\sup_{h\in \Hbbb,q'\in \Qbbb}|\E[(q_0\epol-q'\epol-q)( h-h_0)|
\end{align*}
The rest of the proof is the same as that of \cref{eq:reg_no_stab} in \cref{thm:ipw_reg_no_stab}. 
\end{proof}

\subsubsection{Proofs for \cref{sec: rate_analysis}}

\begin{proof}[Proof of \cref{cor: vc_class}]
Given that $\Hbbb'=\Hbbb$ and $\Qbbb' = \pi\Qbbb$, \cref{cor: ipw_reg_no_stab} states that 
\begin{align*}
   &| \hat J_{\ipw}-J| \leq \sup_{q\in \Qbbb}|(\E-\E_n)[q\epol Y]|+2\sup_{q\in \Qbbb,h\in \Hbbb}|(\E-\E_n) [-q\epol h+\Tcal h ]|, \\
  &|\hat J_{\mathrm{REG}}-J|\leq  \sup_{h\in \Hbbb} |(\E-\E_n)[\Tcal h ]|+2\sup_{q\in \Qbbb,h\in \Hbbb}|(\E-\E_n)[q\pi(Y-h) ]|.
\end{align*}
We define three function classes for the analysis: 
\begin{align*}
    \Abbb_1 &=  \{(y, z, a, x)\mapsto q(z, a, x)\pi\prns{a \mid x}y:q\in \Qbbb\}, \\ 
     \Abbb_2 &= \{(y, w, z, a, x)\mapsto  q(z, a, x)\pi\prns{a \mid x}h(w, a, x): q\in \Qbbb, h \in \Hbbb  \}, \\ 
     \Abbb_3 &= \{(w, a, x)\mapsto (\Tcal h)(w, x) \in \Hbbb  \}. 
\end{align*}
Then, from Theorem 4.10 in \citet{WainwrightMartinJ2019HS:A} and \cref{cor: ipw_reg_no_stab},there exists a universal positive constant $c$ such that 
\begin{align*}
    \max\braces{|\hat J_{\DM}-J|, |\hat J_{\ipw}-J|}\leq c\{\Rad(\infty;\Abbb_1)+\Rad(\infty;\Abbb_2)+\Rad(\infty;\Abbb_3) +\sqrt{\log(1/\delta)/n}\}. 
\end{align*}
We first note that 
\begin{align*}
    \Rad(\infty;\Abbb_2) \le \|\epol(A\mid X)\|_{\infty}  \Rad(\infty;\Hbbb\Qbbb)
\end{align*}
and 
\begin{align*}
    \Rad(\infty;\Hbbb\Qbbb)& \leq  \Rad(\infty;0.25\braces{ (\Hbbb+\Qbbb)^2 - (\Hbbb-\Qbbb)^2 } ) \\
    & \leq 0.25\braces{\Rad(\infty;(\Hbbb-\Qbbb)^2)+ \Rad(\infty;(\Hbbb+\Qbbb)^2)}\\
      & \leq 0.5(\|\Hbbb\|_{\infty} + \|\Qbbb\|_{\infty})\{\Rad(\infty;(\Hbbb-\Qbbb))+ \Rad(\infty;(\Hbbb+\Qbbb))\} \tag{Contraction property \citep{mendelson2002improving}}\\ 
  & \leq (\|\Hbbb\|_{\infty} + \|\Qbbb\|_{\infty})\prns{\Rad(\infty;\Hbbb)+ \Rad(\infty;\Qbbb)}
\end{align*}
Thus 
\begin{align*}
    \Rad(\infty;\Abbb_2) \le  \|\epol(A\mid X)\|_{\infty}  \prns{\|\Hbbb\|_{\infty} + \|\Qbbb\|_{\infty}}\prns{\Rad(\infty;\Hbbb)+ \Rad(\infty;\Qbbb)}
\end{align*}
Moreover, we have 
\begin{align*}
     \Rad(\infty;\Abbb_1) \le \|\epol(A\mid X)\|_{\infty}\|Y\|_{\infty} \Rad(\infty;\Qbbb). 
\end{align*}

From the Dudley integral (\cref{lem:dudley}),  the covering number of the VC-subgraph class (\cref{lem: vc}), and the boundedness of $\|\pi(A\mid X)\|_\infty$ and $\|Y\|_{\infty}$, we have 
\begin{align*}
    \Rad(\infty;\Hbbb) = O\prns{\sqrt{V\prns{\Hbbb}/n}}, \Rad(\infty;\Qbbb) = O\prns{\sqrt{V\prns{\Qbbb}/n}}. 
\end{align*}
In addition, 
\begin{align*}
      \Rad(\infty;\Abbb_3)= O\prns{\sqrt{V\prns{\Abbb_3}/n}}= O\prns{\sqrt{V\prns{\Hbbb}/n}}. 
\end{align*}
In the second equality, we use the proof of \citep[Corollary 9]{UeharaMasatoshi2021FSAo}
Therefore, we have 
\begin{align*}
     \max\braces{|\hat J_{\DM}-J|, |\hat J_{\ipw}-J|} = O\prns{\sqrt{V( \Qbbb)/n}+ \sqrt{V(\Hbbb)/n}+\sqrt{\log(1/\delta)/n}}.
\end{align*}

\end{proof}

\begin{proof}[Proof of \cref{cor: nonparametric}]
From the proof of \cref{cor: vc_class}, we have
\begin{align*}
  |\hat J-J|\leq c ( \Rad(\infty;\Qbbb)+ \Rad(\infty;\Hbbb)+\Rad(\infty;\Tcal \Hbbb)+\sqrt{\log(1/\delta)/n}). 
\end{align*}
for some universal constant $c$. According to Lemma 9 in \citet{UeharaMasatoshi2021FSAo}, we have  
\begin{align*}
    \log \Ncal(\epsilon,\Tcal\Hbbb,\|\cdot\|_{\infty})\leq     \log \Ncal(\epsilon,\Hbbb,\|\cdot\|_{\infty}).   
\end{align*}
Then, from \cref{lem:dudley}, we have 
\begin{align*}
      |\hat J-J|&\leq c\inf_{\tau\geq 0}\braces{\tau+\int_{\tau}^{\sup_{f\in \Qbbb}\sqrt{\E_n[f^2]}}\sqrt{\frac{\log \Ncal(\tau,\Qbbb,\|\cdot\|_n)}{n}}}d \tau  \\ 
       &+ c\inf_{\tau\geq 0}\braces{\tau+\int_{\tau}^{\sup_{f\in \Hbbb}\sqrt{\E_n[f^2]}}\sqrt{\frac{\log \Ncal(\tau,\Hbbb,\|\cdot\|_n)}{n}}}d \tau+c\sqrt{\log(1/\delta)/n}. 
\end{align*}
By calculating Dudley integral for each term, the statement is concluded following the proof of Corollary 2 in \citet{UeharaMasatoshi2021FSAo}. 
\end{proof}

\subsection{Proofs for \cref{sec: unstab-closed}}

\begin{proof}[Proof of \cref{thm:slow_rates}]

We use notation in the proof of \cref{thm:ipw_reg_no_stab}. Then, 
\begin{align*}
    \E[\{P_z(\hat h-h_0)\}^2]
        &\leq \sup_{q\in \Qbbb'}\abs{\E[P_z(\hat h-h_0)q]} \tag{$P_z(\Hbbb-h_0)\subseteq \Qbbb'$}\\ 
        &\leq \sup_{q\in \Qbbb'}\abs{\E[(\hat h-h_0)q]} \\ 
        &\leq 2 \sup_{q\in \Qbbb',h\in \Hbbb} |(\E-\E_n)[\{y-h\}q] |,
\end{align*}
where the last inequality follows from the second step in the proof for $\hat J_{\DM}$ in \cref{thm:ipw_reg_no_stab}.
\begin{align*}
      \E[\{\pi P_w(\hat q-q_0)\}^2]&\leq \sup_{h\in \Hbbb'}\abs{\E[\pi P_w(\hat q-q_0)h]} \tag{$\pi P_w(\Qbbb-q_0)\subseteq \Hbbb'$}\\ 
      &\leq \sup_{h\in \Hbbb'}\abs{\E[\pi(\hat q-q_0)h]}\\ 
      &\leq 2 \sup_{q\in \Qbbb,h\in \Hbbb'} |(\E-\E_n)[-q\epol h+\Tcal h] |,
\end{align*}
where the last inequality follows from the second step in the proof for $\hat J_{\ipw}$ in  \cref{thm:ipw_reg_no_stab}.
\end{proof}

\begin{proof}[Proof of \cref{thm:ipw-dm}]
~
\textit{IPW estimators}
The error is decomposed into the three terms: 
\begin{align}
    &|\hat J_{\ipw}-J| \nonumber \\
    =&  |\E_n[\hat q\epol y]-J| \nonumber \\
    \leq& |\{\E_n[(\hat q-q_0)\epol y]-\E[(\hat q-q_0)\epol y]\}|+ |\{\E[\hat q\epol y]-\E[q_0\epol y] \}|+|\E_n[q_0\epol y]-J| \nonumber \\
    \leq&  |\{\E_n[\hat q\epol y]-\E[\hat q\epol y]\}|+ |\{\E[\hat q\epol y]-\E[q_0\epol y] \}| +  |\{\E_n[q_0\epol y]-\E[q_0\epol y]\}| +|\E_n[q_0\epol y]-J|  \label{eq: ipw} 
\end{align}
The first term in \eqref{eq: ipw} is upper bounded by 
\begin{align*}
    \sup_{q\in \Qbbb}|(\E_n-\E)[q\epol y]|. 
\end{align*}
The second term in \eqref{eq: ipw} is
\begin{align*}
    \E[\hat q\epol Y]-\E[q_0\epol Y]&=\E[\{(\hat q-q_0)\epol\}Y]=\E[\{(\hat q-q_0)\epol\}\E[Y \mid Z, A, X]]\\
    &=\E[\{(\hat q-q_0)\epol\}\E[h_0 \mid Z,A,X]] \tag{Use the assumption $\Hbbb_0\neq  \emptyset$}\\
    &=\E[\{(\hat q-q_0)\epol\}h_0]\\
    &= \E[P_w\{(\hat q-q_0)\epol\}h_0]. 
\end{align*}
Thus, from CS inequality, this term is upper-bounded by 
\begin{align*}
    \|P_w\{(\hat q-q_0)\epol\}\|_2\|h_0\|_2. 
\end{align*}
The third and fourth terms in \eqref{eq: ipw} are upper-bounded by Bernstein inequality. This concludes 
\begin{align}
    |\hat J_{\mathrm{IPW}}-J|\leq  c_1 \Rad(\infty;\epol\Qbbb)+    \|P_w\{(\hat q-q_0)\epol\}\|_2\|h_0\|_2+ c_1\sqrt{\log(c_2/\delta)/n}. 
\end{align}

\textit{REG estimators}

The error is decomposed into the three terms: 
\begin{align}
   &|\hat J_{\mathrm{REG}}-J| \nonumber  \\
   \leq& |\{\E_n[\Tcal(\hat h-h_0)]-\E[\Tcal (\hat h-h_0)]\}|+|\{\E[\Tcal \hat h]-\E[\Tcal h_0]\}|+|\E_n[\Tcal  h_0]-J|  \nonumber \\
   \leq& |\{\E_n[\Tcal\hat h]-\E[\Tcal\hat h]\}|+|\{\E[\Tcal \hat h]-\E[\Tcal h_0]\}|+|\{\E_n[\Tcal h_0]-\E[\Tcal h_0]\}| + |\E_n[\Tcal  h_0]-J|   \label{eq: dm}
\end{align}
The first term in \cref{eq: dm} is upper bounded by 
\begin{align*}
|\sup_{h\in \Hbbb}(\E_n-\E)[\Tcal h]|. 
\end{align*}
The second term in \cref{eq: dm} is upper-bounded as follows:
\begin{align*}
    |\E[\Tcal \hat h]-\E[\Tcal h_0]|&=  |\E[\epol(A \mid X)/f(A\mid W,X)\{\hat h-h_0\}]|\\
    &=  |\E[\E[\epol(A \mid X)q_0(Z, A, X)\mid W, A, X]\{\hat h-h_0\}]| \tag{Use the assumption $\Qbbb_0\neq  \emptyset$}\\
    &=  |\E[\epol(A \mid X)q_0(Z, A, X)\{\hat h-h_0\}]|\\
    &=|\E[\epol(A \mid X)q_0(Z, A, X)P_z\{\hat h-h_0\}]|\leq \|q_0\epol\|_2\|P_z\{\hat h-h_0\}\|_2. 
\end{align*}
The third and fourth terms in \cref{eq: dm} are upper-bounded by Bernstein inequality. This concludes 
\begin{align}
    |\hat J_{\mathrm{REG}}-J|\leq  c_1\Rad(\infty;\Tcal\Hbbb)+\|q_0\epol\|_2\|P_z\{\hat h-h_0\}\|_2+ c_1\sqrt{\log(c_2/\delta)/n}).  
\end{align}
\end{proof}

\begin{proof}[Proof of \cref{thm:drr}]
The proof is similar to \cref{thm:ipw-dm}. 
The error is decomposed into the three terms: 
\begin{align} 
   |\hat J_{\mathrm{DR}}-J|&\leq \abs{(\E_n-\E)\bracks{\{\epol \hat q\{Y-\hat h\}+\Tcal \hat h\}-\{\epol q_0\{Y-h_0\}+\Tcal h_0\}}} \nonumber \\
   &+|\{\E[\{\epol \hat q\{Y-\hat h\}+\Tcal \hat h\}]-\E[\{\epol q_0\{Y-h_0\}+\Tcal h_0\}]\}| \nonumber \\
   &+|\E_n[\{\epol q_0\{Y-h_0\}+\Tcal h_0\}]-J| \nonumber
\end{align}
It follows that 
\begin{align}
    |\hat J_{\mathrm{DR}}-J|&\leq \abs{(\E_n-\E)\bracks{\{\epol \hat q\{Y-\hat h\}+\Tcal \hat h\}}} \nonumber \\
   &+|\{\E[\{\epol \hat q\{Y-\hat h\}+\Tcal \hat h\}]-\E[\{\epol q_0\{Y-h_0\}+\Tcal h_0\}]\}| \nonumber \\
   &+ \abs{(\E_n-\E)\bracks{\{\epol q_0\{Y-h_0\}+\Tcal h_0\}}} \nonumber \\
   &+ |\E_n[\{\epol q_0\{Y-h_0\}+\Tcal h_0\}]-J| 
\end{align}
The first term above is upper-bounded by 
\begin{align*}
    |\sup_{h\in \Hbbb,q\in \Qbbb}(\E_n-\E)[\epol q\{Y-h\}+\Tcal h]|. 
\end{align*}
The second term above is equal to 
\[
\abs{\Eb{\epol \prns{\hat q - q_0}\prns{\hat h - h_0}}}.
\]
This term can be upper bounded by both 
\begin{align*}
  \|P_w\{(\hat q-q_0)\epol\}\|_2\   \sup_{h\in \Hbbb}\|h_0-h\|_2
\end{align*}
and 
\begin{align*}
  \|P_z\{\hat h-h_0\}\|_2  \sup_{q\in \Qbbb}\|\{q_0-q\}\epol\|_2. 
\end{align*}
The third and fourth terms are upper-bounded by Bernstein's inequality. Then, $|\hat J_{\mathrm{DR}}-J|$ is upper-bounded as the statements. 
\end{proof}

\subsection{Proofs for \cref{sec: closedness}}

Here we prove \cref{thm:q_function,thm:w_function}. 
Conclusions in \cref{thm:w_function_easy,thm:q_function_easy} then follow immediately. 

\begin{proof}[Proof of \cref{thm:w_function}]
Define 
\begin{align*}
    \Phi(h,q) &=\E[\{Y-h(W, A, X)\}q(Z, A, X)]\\ 
    \Phi_n(h,q) &=\E_n[\{Y-h(W, A, X)\}q(Z, A, X)]\\ 
    \Phi^{\lambda}(h,q)&=\E[\{Y-h(W, A, X)\}q(Z, A, X)]-\lambda \|q\|^2_n\\ 
    \Phi^{\lambda}_n(h,q)&=\E_n[\{Y-h(W, A, X)\}q(Z, A, X)]-\lambda \|q\|^2_2. 
\end{align*}
where $\|q\|_n=\{\E_n[q^2]\}^{1/2}$ and $\|q\|_2=\{\E[q^2]\}^{1/2}$. From Lemma \ref{lem:support1}, we have 
\begin{align}\label{eq:upper22}
    \forall q\in \Qbbb',|\|q\|^2_n-\|q\|^2_2|\leq 0.5\|q\|^2_2+0.5(\eta'_{h})^2
\end{align}
for our choice of $\eta'_{h}:=\eta_{h}+\sqrt{c_0\log(c_1/\delta)/n}$, where $\eta_{h}$ upper bounds the critical radius of $\Qbbb'$. 

\paragraph{Step I.}

By definition of the estimator $\hat h$ and the assumption $h'\in \Hbbb$, we have 
\begin{align}\label{eq:upper33}
    \sup_{q\in \Qbbb'} \Phi^{\lambda}_n(\hat h,q)\leq \sup_{q\in \Qbbb'}\Phi^{\lambda}_n(h',q). 
\end{align}
From \cref{lem:support2}, we have that there exists a positive constant $c$ such that 
\begin{align}\label{eq:upper_q}
    \forall q\in \Qbbb': | \Phi_n(h',q)- \Phi(h',q)  |\le c C_1\{\eta'_{h} \|q\|_2+(\eta'_{h})^2\}. 
\end{align}
To prove this, we apply \cref{lem:support2} to $l(a_1,a_2):=a_1a_2,a_1=q(Z, A, X),a_2=Y-h'(W, A, X)$ that is $C_1$-Lipschitz with respect to $a_1$ by  noting $Y-h'(W, A, X)$ is in $[-C_1,C_1]$ with some constants $C_1 = \|Y\|_{\infty} + \|\Hbbb\|_{\infty}$:
\begin{align*}
    |l(a_1,a_2)-l(a'_1,a_2)|\leq C_1|a_1-a'_1|. 
\end{align*}
Thus, 
\begin{align*}
    \sup_{q\in \Qbbb'}\Phi^{\lambda}_n(h',q)&\leq    \sup_{q\in \Qbbb'}\{\Phi_n(h',q)-\lambda \|q\|^2_n\}   \\
&\leq    \sup_{q\in \Qbbb'}\{\Phi(h',q)+  cC_1\{\eta'_{h} \|q\|_2+(\eta'_{h})^2\} -\lambda \|q\|^2_n\}  && \text{From \cref{eq:upper_q}} \\
&\leq    \sup_{q\in \Qbbb'}\{\Phi(h',q)+  cC_1\{\eta'_{h} \|q\|_2+(\eta'_{h})^2\} -0.5\lambda \|q\|^2_2+ 0.5\lambda (\eta'_{h})^2 \}  && \text{From \cref{eq:upper22}} \\
&\leq    \sup_{q\in \Qbbb'}\{\Phi(h',q)-0.25\lambda \|q\|^2_2+  cC_1\{\eta'_{h} \|q\|_2+(\eta'_{h})^2\} -0.25\lambda \|q\|^2_2+\lambda (\eta'_{h})^2 \}   \\
&\leq  \sup_{q\in \Qbbb'}\{\Phi(h',q)-0.25\lambda \|q\|^2_2+ (\lambda+c^2C^2_1/\lambda+cC_1)(\eta'_{h})^2\}. 
\end{align*}
In the last line, we use a general inequality that for any $a,b>0$: 
\begin{align}\label{eq: ab-inequality}
    \sup_{q\in \Qbbb'}(a\|q\|_2-b\|q\|^2_2)\leq a^2/4b. 
\end{align}
Moreover, 
\begin{align*}
    \sup_{q\in \Qbbb'}\Phi^{\lambda}_n(\hat h,q)&=  \sup_{q\in \Qbbb'}\{ \Phi_n(\hat h,q)- \Phi_n(h',q)  + \Phi_n(h',q)-\lambda \|q\|^2_n\}\\
    &\geq  \sup_{q\in \Qbbb'}\{ \Phi_n(\hat h,q)- \Phi_n(h',q)-2\lambda\|q\|^2_n\}+\inf_{q\in \Qbbb'}\{\Phi_n(h',q)+ \lambda\|q\|^2_n\}\\
   &= \sup_{q\in \Qbbb'}\{ \Phi_n(\hat h,q)- \Phi_n(h',q)-2\lambda\|q\|^2_n\}+\inf_{-q\in \Qbbb'}\{\Phi_n(h',-q)+ \lambda\|q\|^2_n\}\\
   &=  \sup_{q\in \Qbbb'}\{ \Phi_n(\hat h,q)- \Phi_n(h',q)-2\lambda\|q\|^2_n\}+\inf_{-q\in \Qbbb'}\{-\Phi_n(h',q)+ \lambda\|q\|^2_n\}\\
      &=  \sup_{q\in \Qbbb'}\{ \Phi_n(\hat h,q)- \Phi_n(h',q)-2\lambda\|q\|^2_n\}-\sup_{-q\in \Qbbb'}\{\Phi_n(h',q)- \lambda\|q\|^2_n\}\\
     &=  \sup_{q\in \Qbbb'}\{ \Phi_n(\hat h,q)- \Phi_n(h',q)-2\lambda\|q\|^2_n\}-\sup_{-q\in \Qbbb'}\{\Phi^{\lambda}_n(h',q)\}\\
          &=  \sup_{q\in \Qbbb'}\{ \Phi_n(\hat h,q)- \Phi_n(h',q)-2\lambda\|q\|^2_n\}-\sup_{q\in \Qbbb'}\{\Phi^{\lambda}_n(h',q)\}, 
\end{align*}
where in the last equation we use the symmetry of $\Qbbb'$.

Therefore, 
\begin{align*}
 \sup_{q\in \Qbbb'}\{ \Phi_n(\hat h,q)- \Phi_n(h',q)-2\lambda\|q\|^2_n\}& \leq     \sup_{q\in \Qbbb'}\{\Phi^{\lambda}_n(\hat h,q)\}+\sup_{q\in \Qbbb'}\{\Phi^{\lambda}_n(h',q)\}\\
 &\leq 2\sup_{q\in \Qbbb'}\{\Phi^{\lambda}_n(h',q)\} \tag{From \eqref{eq:upper33}}\\
 &\leq 2\sup_{q\in \Qbbb'}\{\Phi(h',q)-0.25\lambda \|q\|^2_2+ (\lambda+c^2 C^2_1/\lambda+cC_1)(\eta'_{h})^2\} \\
  &\leq 2\sup_{q\in \Qbbb'}\{\|P_z(h_0-h')\|_2\|q\|_2-0.25\lambda \|q\|^2_2+ (\lambda+c^2C^2_1/\lambda+cC_1)(\eta'_{h})^2\} \\
   &\leq 2\{\|P_z(h_0-h')\|^2_2/\lambda+(\lambda+c^2C^2_1/\lambda+cC_1)(\eta'_{h})^2\}, 
\end{align*}
where the last inequality again uses  \cref{eq: ab-inequality}. 

\paragraph{Step II.}
For the fixed $h'\in \Hbbb$ and for any $h\in\Hbbb$ define $$q_{h}\coloneqq \argmin_{q\in \Qbbb'}\|q-P_z(h'- h)\|.$$ 
Further define
$$\epsilon_n=\sup_{h\in\Hbbb}\inf_{q\in \Qbbb'}\|q-P_z(h-h')\|_2.$$
According to the asserted assumptions, we have $\|q_h-P_z(h'- h)\| \le \epsilon_{n}$.

Suppose $\|q_{\hat h}\|_2\geq \eta'_{h}$, and let $r=\eta'_{h}/\{2\|q_{\hat h}\|_2\}\in [0,0.5]$. Then, noting $\Qbbb'$ is star-shaped and symmetric, we have $rq_{\hat h}\in \Qbbb$ and 
\begin{align*}
    r^2\|q_{\hat h}\|^2_n&\le r^2\{1.5\|q_{\hat h} \|^2_2+0.5 (\eta'_{h})^2 \}   && \text{From \cref{eq:upper22}   }\\
    &\le (\eta'_{h})^2.     && \text{From definition of $r$}     . 
\end{align*}
It follows that 
\begin{align*}
    \sup_{q \in \Qbbb'}\{\Phi_n(\hat h,q)-\Phi_n(h',q)-2\lambda \|q\|^2_n  \}
        &\geq r\{\Phi_n(\hat h,q_{\hat h})-\Phi_n(h',q_{\hat h})\}-2\lambda r^2\|q_{\hat h}\|^2_n \\
        &\geq r\{\Phi_n(\hat h,q_{\hat h})-\Phi_n(h',q_{\hat h}) \}-2\lambda (\eta'_{h})^2. 
\end{align*}
Observe that 
\begin{align*}
    \Phi_n( h,q_{\hat h})-\Phi_n(h',q_{\hat h})=\E_n[\{h'-h \}q_{\hat h}(Z,A,X)].  
\end{align*}
We now invoke \cref{lem:support2} with $l(a_1,a_2),a_1=(h-h')q$ and $a_2 = 1$, by noting that $\eta'_{h}$ upper bounds the critical radius of $\Star(\Gcal_h)$:
\begin{align*}
    & |  \Phi_n(h,q_h)- \Phi_n(h',q_h)-\{ \Phi(h,q_h)- \Phi(h',q_h)\}|\\
    &\leq c(\eta'_{h} \|\{h-h' \}q_{\hat h}  \|_2  +(\eta'_{h})^2) \le (cC_2\eta'_{h} \|q_{\hat h}   \|_2  +c(\eta'_{h})^2),
\end{align*}
where the last line uses $\|h -h'\|_{\infty}\le C_2$ for $C_2 = 2\|\Hbbb\|_{\infty}$. 

Thus,
\begin{align*}
    r\{\Phi_n(\hat h,q_{\hat h})-\Phi_n(h',q_{\hat h})\} & \geq     r\{\Phi(\hat h,q_{\hat h})-\Phi(h',q_{\hat h})\}-rc(C_2\eta'_{h} \|q_{\hat h}  \|_2  +(\eta'_{h})^2)\\ 
    & \geq   r\{\Phi(\hat h,q_{\hat h})-\Phi(h',q_{\hat h})\}-rcC_2\eta'_{h} \|q_{\hat h}    \|_2-0.5c(\eta'_{h})^2 \tag{$r \in [0, 0.5]$} \\ 
    &\overset{(a)}{=} r\E[ P_z(- \hat h+h')q_{\hat h}  ]-rcC_2\eta'_{h} \|q_{\hat h}    \|_2-0.5c(\eta'_{h})^2 \\
       &=\frac{\eta'_{h}}{2\|q_{\hat h}\|_2}\{\E[ P_z(- \hat h+h')q_{\hat h}  ]-cC_2\|q_{\hat h}   \|_2\eta'_{h}\} -0.5c (\eta'_{h})^2 \tag{Definition of $r$}\\ 
    &\overset{(b)}{\geq} 0.5\eta'_{h}\{\|P_z(- \hat h+h')\|_2  -2\epsilon_n\}-0.5c(1+C_2) (\eta'_{h})^2,
\end{align*}
where  (a) follows from
\begin{align*}
    \Phi(\hat h,q_{\hat h})-\Phi(h',q_{\hat h})&=\E[ \{-\hat h+h' \}q_{\hat h}  ]= \E[ P_z(- \hat h+h')q_{\hat h}  ],
\end{align*}
and (b) follows from 
\begin{align*}
    \frac{ \E[ P_z(- \hat h+h')q_{\hat h}  ]}{\|q_{\hat h}\|_2}&=    \frac{ \E[ \{-q_{\hat h} +q_{\hat h} + P_z(- \hat h+h')\}q_{\hat h}  ]}{\|q_{\hat h}\|_2}\\
    &\geq    \frac{ \| q_{\hat h}\|^2_2- \|\{-q_{\hat h} + P_z(- \hat h+h')\}\|_2\|q_{\hat h}\|_2}{\|q_{\hat h}\|_2}\\
    &=    \| q_{\hat h}\|_2- \|\{-q_{\hat h} + P_z(- \hat h+h')\}\|_2\\
       &\ge   \| q_{\hat h}\|_2- \epsilon_{n}\geq   \| P_z(- \hat h+h')\|_2- 2\epsilon_{n}. 
\end{align*}

\paragraph{Step III: Combining all results.}

Thus, we have either $\|q_{\hat h}\|_2\leq \eta'_{h}$ or with high probability $1-\delta$,
\begin{align*}
      0.5\eta'_{h}\{\|P_z(\hat h-h')   \|_2-2\epsilon_n\}-(0.5c+0.5cC_2+2\lambda)(\eta'_{h})^2 \le  \frac{2\|P_z(h_0-h')   \|^2_2}{\lambda}+2(\lambda+c^2C^2_1/\lambda+cC_1)(\eta'_{h})^2. 
\end{align*}
Therefore, we have either 
\begin{align*}
    \|P_z(\hat h-h') \|_2\le \|P_z(h_0-h')   \|_2 + \|q_{\hat h}\|_2+ \|P_z(\hat h-h')-q_{\hat h}\|_2\leq \|P_z(h_0-h')   \|_2 + \eta'_{h}+\epsilon_n
\end{align*}
or 
\begin{align*}
    \|P_z(\hat h-h')   \|_2\lesssim (1+\lambda+1/\lambda)\eta'_{h}+ \frac{\|P_z(h_0-h')   \|^2_2}{\eta'_{h}\lambda}+\epsilon_n. 
\end{align*}
Thus, from triangle inequality, 
\begin{align*}
    \|P_z(h_0-\hat h)   \|_2\lesssim (1+\lambda+1/\lambda)\eta'_{h}+ \frac{\|P_z(h_0-h')   \|^2_2}{\eta'_{h}\lambda}+\epsilon_n+\|P_z(h_0-h')   \|_2.  
\end{align*}
\end{proof}

\begin{proof}[Proof of \cref{thm:q_function}]
Define 
\begin{align*}
    \Phi(q,h)  &=\E[-q(Z,A,X)\epol(A\mid X)h(W, A, X)+\E_{a\sim \epol(A\mid X)}[h(W, A, X)] ]\\
    \Phi_n(q,h)&=\E_n[-q(Z,A,X)\epol(A\mid X)h(W, A, X)+\E_{a\sim \epol(A\mid X)}[h(W, A, X)] ]\\ 
    \Phi^{\lambda}(q,h)  &=\E[-q(Z,A,X)\epol(A\mid X)h(W, A, X)+\E_{a\sim \epol(A\mid X)}[h(W, A, X)]]-\lambda \E[h^2]\\
    \Phi^{\lambda}_n(q,h)&=\E_n[-q(Z,A,X)\epol(A\mid X)h(W, A, X)+\E_{a\sim \epol(A\mid X)}[h(W, A, X)]]-\lambda\E_n[h^2].  
\end{align*}
From \cref{lem:support1}, we have 
\begin{align}\label{eq: auxi}
    \forall h\in \Hbbb',\, |\|h\|^2_n-\|h\|^2_2 |\leq 0.5\|h\|^2_2+0.5\prns{\eta'_{q}}^2 
\end{align}
for our choice of $\eta'_{q}:=\eta_{q}+\sqrt{c_0\log(c_1/\delta)/n}$, where  $\eta_{q}$ upper bounds the critical radius of $\Hbbb'$. 

\paragraph{First Part}

By definition of $\hat q$ and $q'\in \Qbbb$, we have 
\begin{align}\label{eq: hatq-opt}
    \sup_{h\in \Hbbb'} \Phi^{\lambda}_n(\hat q;h) \le     \sup_{h\in \Hbbb'} \Phi^{\lambda}_n(q';h). 
\end{align}
From \cref{lem:support2}, there exists a universal constant $c>0$ and $C_3 > 0$ such that with probability $1-\delta$, 
\begin{align}\label{eq:partial}
    \forall h\in \Hbbb', |\Phi_n(q',h)- \Phi(q',h)|\le   cC_{3}\{\eta'_{q}\|h\|_2+\prns{\eta'_{q}}^2\}.
\end{align}
To prove this, we first apply \cref{lem:support2} to $ l(a_1,a_2)=a_1a_2$ with $a_1=-q'(Z, A, X)\epol(A \mid X),a_2=h(W, A, X)$. This function is $C_{3, 1}$-Lipschitz in $a_2$ where $C_{3, 1}=\|\pi\Qbbb\|_{\infty}$. This means that  
\begin{align*}
   | \E_n[-q'(Z, A, X)\epol(A \mid X)h(W, A, X)]-\E[-q'(Z, A, X)\epol(A \mid X)h(W, A, X)]|\le cC_{3,1}\{\eta'_{q}\|h\|_2+\prns{\eta'_{q}}^2\}. 
\end{align*}
Similarly, we have 
\begin{align*}
    |\E_n[\Tcal h]-\E[\Tcal h]|\le  c\{\eta'_{q}\|\Tcal h\|_2+\prns{\eta'_{q}}^2\}\le cC_{3,2}\{\eta'_{q} \|h\|_2+\prns{\eta'_{q}}^2\}, ~~ C_{3,2} = \|\pi(A\mid X)/f(A \mid X, W)\|_2.
\end{align*}
Combining the above two equations gives \cref{eq:partial} with $C_3 = C_{3,1 }+ C_{3,2}$. 

Thus,
\begin{align*}
    \sup_{h\in \Hbbb'}\Phi^{\lambda}_n(q',h) &=  \sup_{h\in \Hbbb'}\{\Phi_n(q',h)-\lambda \|h\|^2_n\}\\
    &\leq \sup_{h\in \Hbbb'}\{\Phi(q',h)+cC_3( \eta'_{q} \|h\|_2+\prns{\eta'_{q}}^2 )-\lambda \|h\|^2_n\} \tag{Use \cref{eq:partial}}\\
    &\leq \sup_{h\in \Hbbb'}\{\Phi(q',h)+cC_3( \eta'_{q} \|h\|_2+\prns{\eta'_{q}}^2 ) -0.5\lambda \| h\|^2_2+0.5\lambda \prns{\eta'_{q}}^2\} \tag{Use \cref{eq: auxi}}\\
    &\leq \sup_{h\in \Hbbb'}\{\Phi(q',h)-0.25\lambda \| h\|^2_2+cC_3( \eta'_{q} \|h\|_2+\prns{\eta'_{q}}^2 ) -0.25\lambda \| h\|^2_2+\lambda \prns{\eta'_{q}}^2\} \\
    &\leq \sup_{h\in \Hbbb'}\{\Phi(q',h)-0.25\lambda \| h\|^2_2+\{\lambda + c^2 C^2_3/\lambda+cC_3\}\prns{\eta'_{q}}^2\}. 
\end{align*}
In the last line, we use a general inequality that for any $a,b>0$,
\begin{align*}
  a\|h\|_2-b\|h\|^2_2\leq a^2/4b. 
\end{align*}
Moreover, 
\begin{align*}
&\sup_{h\in \Hbbb'}\Phi^{\lambda}_n(\hat q,h)\\
&=\sup_{h\in \Hbbb'}\{\Phi_n(\hat q,h)-\Phi_n(q',h)+\Phi_n(q',h)- \lambda\|h\|^2_n\}\\
&\geq \sup_{h\in \Hbbb'}\{\Phi_n(\hat q,h)-\Phi_n(q',h)- 2\lambda\|h\|^2_n\}+\inf_{h\in \Hbbb'}\{\Phi_n(q',h)- \lambda\|h\|^2_n\}\\
&=\sup_{h\in \Hbbb'}\{\Phi_n(\hat q,h)-\Phi_n(q',h)- 2\lambda\|h\|^2_n\}+\inf_{-h\in \Hbbb'}\{\Phi(q',-h)- \lambda\|h\|^2_n\}\\
&=\sup_{h\in \Hbbb'}\{\Phi_n(\hat q,h)-\Phi_n(q',h)- 2\lambda\|h\|^2_n\}-\sup_{-h\in \Hbbb'}\{\Phi_n(q',h)+ \lambda\|h\|^2_n\}\\
&=\sup_{h\in \Hbbb'}\{\Phi_n(\hat q,h)-\Phi_n(q',h)- 2\lambda\|h\|^2_n\}-\sup_{h\in \Hbbb'}\{\Phi^{\lambda}_n(q',h)\}. 
\end{align*}
where in the last equation we use the symmetry of $\Hbbb'$.

Therefore,
\begin{align*}
    \sup_{h\in \Hbbb'}\{\Phi_n(\hat q,h)-\Phi_n(q',h)- 2\|h\|^2_n\}&\leq \sup_{h\in \Hbbb'}\{\Phi^{\lambda}_n(\hat q,h)\}+\sup_{h\in \Hbbb'}\{\Phi^{\lambda}_n(q',h)\}\\
    &\leq 2\sup_{h\in \Hbbb'}\{\Phi^{\lambda}_n(q',h)\} \tag{From \cref{eq: hatq-opt}}\\
    &\leq 2\sup_{h\in \Hbbb'}\{\Phi(q',h)-0.25\lambda \| h\|^2_2+\{\lambda + c^2 C^2_3/\lambda+cC_3\}\prns{\eta'_{q}}^2\}\\
    &\leq\sup_{h\in \Hbbb'}2\{\|\pi P_w(q'-q_0)\|_2\|h\|_2-0.25\lambda \| h\|^2_2+c\{C^2_2/\lambda+ \lambda+C_2\}\prns{\eta'_{q}}^2\}\\
    &\leq 2\{\|\pi P_w(q'-q_0)\|^2_2/\lambda +c\{C^2_2/\lambda+ \lambda+C_2\}\prns{\eta'_{q}}^2\}
\end{align*}

\paragraph{Second Part}
For the fixed $q'$ and for any $q \in \Qbbb$ we define 
$$h_q\coloneqq \argmin_{h\in \Hbbb'}\|h-P_w\{(q-q')\epol\}\|.$$
Further define
$$\tilde \epsilon_{n}=\sup_{q\in \Qbbb}\inf_{h\in\Hbbb'} \|h-\epol P_w{\prns{q - q'}}\|_2.$$
According to the asserted assumptions, we have $\|h_q-P_w\{(q-q')\epol\}\| \le \tilde\epsilon_{n}$.

Suppose $\| h_{\hat q}\|\geq \eta'_{q}$, and let $r=\eta'_{q}/\{2\| h_{\hat q}\|_2\}\in (0,0.5]$. Then, noting $\Hbbb'$ is star-shaped and symmetric, we have $rh_{\hat q}\in \Hbbb'$ and 
\begin{align*}
    r^2\|h_{\hat q}\|^2_n &\leq r^2\{1.5\|h_{\hat q}\|^2_2+0.5\prns{\eta'_{q}}^2 \} \\
      & \leq \prns{\eta'_{q}}^2  \tag{Definition of $r$}.
\end{align*}
It follows that 
\begin{align*}
        \sup_{h\in \Hbbb'}\{\Phi_n(\hat q,h)-\Phi_n(q',h)- 2\lambda \|h\|^2_n\}
            &\geq 
            r\{\Phi_n(\hat q,h_{\hat q})-\Phi_n( q',h_{\hat q})\}-2\lambda r^2\|h_{\hat q}\|^2_n \\
            &\geq r\{\Phi_n(\hat q,h_{\hat q})-\Phi_n( q',h_{\hat q})\}-2\lambda \eta_n^2.
\end{align*}
Observe that 
\begin{align*}
    \Phi_n(q,h_q)-\Phi_n(q',h_q)&=\E_n[(-q(Z, A, X)+q'(Z, A, X) )\epol(a|x) h_q(W, A, X) ].
\end{align*}
We now invoke \cref{lem:support2} with $l(a_1,a_2)=a_1a_2$ with $a_1=(q-q')\epol h_q, a_2 = 1$, by noting that $\eta'_{q}$ upper bounds the critical radius of $\Gcal_q$:
\begin{align*}
     &|  \Phi_n(q,h_q)- \Phi_n(q',h_q)-\{\Phi(q,h_q)-\Phi(q',h_q)\}|\\
    &\leq c(\eta'_{q} \E[\{(q'(Z, A, X)-q(Z, A, X) )\epol(A \mid X) h_q(W, A, X)\}^2]^{1/2}+\prns{\eta'_{q}}^2)\\
    &\leq c(C_4\eta'_{q} \|h_q\|_2 +\prns{\eta'_{q}}^2). 
\end{align*}
where $\|(q'(Z, A, X)-q(Z, A, X) )\epol(A \mid X)\|_{\infty}\leq C_4$ for $C_4 = 2\|\pi\Qbbb\|_{\infty}$.

Thus, 
\begin{align*}
     &r\{\Phi_n(\hat q,h_{\hat q})-\Phi_n( q',h_{\hat q})\}\\
    &\geq r\{\Phi(\hat q,h_{\hat q})-\Phi( q',h_{\hat q})\}-rc(C_4\eta'_{q} \|h_{\hat q}\|_2+\prns{\eta'_{q}}^2)\\
    &\geq r\{\Phi(\hat q,h_{\hat q})-\Phi( q',h_{\hat q})\}-rc(C_4\eta'_{q} \|h_{\hat q}\|_2)-0.5c\prns{\eta'_{q}}^2 \tag{$r \in [0, 0.5]$}\\
      &\overset{(a)}{=}r \E[P_w\bracks{(-q'+\hat q)\pi h_{\hat q}}]-rc(C_4\eta'_{q} \|h_{\hat q}\|_2)-0.5c\prns{\eta'_{q}}^2\\
        &=\frac{\eta'_{q}}{2\{\| h_{\hat q}\|_2\}}(\E[P_w\bracks{(-q'+\hat q)\pi h_{\hat q}}]-cC_4\eta'_{q} \| h_{\hat q}\|_2)-0.5c\prns{\eta'_{q}}^2 \tag{Definition of $r$}\\
    &\overset{(b)}{\geq}0.5\eta'_{q}(\|P_w\{(-q'+\hat q)\epol\} \|_2-2\tilde\epsilon_{n})-0.5 c\{1+C_4\}\prns{\eta'_{q}}^2,
\end{align*}
where (a) follows from 
\begin{align*}
    \Phi( q,h_{ q})-\Phi( q',h_{q})&=\E[\{ -q(Z, A, X)+q'(Z, A, X)\}\epol(A \mid X)h_q(W, A, X)] \\ 
    &=\E[\{\E[-q(Z, A, X)+q'(Z, A, X)\mid W, A, X]\}\epol(A \mid X)h_q(W, A, X)]\\
    &= \E[P_w\{q'-q\}(W, A, X)\pi(A \mid X) h_q(W, A, X)], 
\end{align*}
and (b) follows from 
\begin{align*}
    \frac{\E[\epol P_w\{-q'+\hat q\} h_q]}{\| h_{\hat q}\|_2}&=   \frac{\E[[-h_{\hat q}+h_{\hat q}+\epol P_w\{-q'+\hat q\} ]h_q]}{\| h_{\hat q}\|_2}\\
    &=   \frac{\| h_{\hat q}\|^2_2-\| h_{\hat q}\|_2\|-h_{\hat q}+\epol P_w\{-q'+\hat q\}  \|_2 }{\| h_{\hat q}\|_2}\\
    &\geq \| h_{\hat q}\|_2-\tilde\epsilon_{n}\geq \|\epol P_w\{-q'+\hat q\} \|_2-2\tilde\epsilon_{n}. 
\end{align*}

\paragraph{Combining all results}
Thus, $\|h_{\hat q} \|\leq \eta'_{q}$ or 
\begin{align*}
0.5\eta'_{q}(\|P_w\{(-q'+\hat q)\epol\} \|_2-2\tilde\epsilon_{n})&-\{0.5 c+0.5 cC_4 + 2\lambda\}\prns{\eta'_{q}}^2 \\
    & \le 2\{\|\pi P_w(q'-q_0)\|^2_2/\lambda +c\{C^2_2/\lambda+ \lambda+C_2\}\prns{\eta'_{q}}^2\}.
\end{align*}
Therefore, we have either 
\begin{align*}
    \|P_w(\epol\hat q-\epol q')\|_2\le  \|P_w(\epol q_0-\epol q')\|_2+  \|h_{\hat q}-\epol P_w(\epol\hat q-\epol q')\|_2+\|h_{\hat q}\|_2\leq  \|P_w(\epol q_0-\epol q')\|_2 + \eta'_{q}+\tilde\epsilon_{n}. 
\end{align*}
or 
\begin{align*}
    \|P_w(\epol\hat q-\epol q')\|_2\lesssim   \|\epol P_w(q'-q_0)\|^2_2/\lambda\eta'_{q}+\{1+1/\lambda+ \lambda\}\eta'_{q}+\tilde\epsilon_{n}.
\end{align*}
Finally, from triangle inequality, 
\begin{align*}
      \|P_w(\epol\hat q-\epol q_0)\|_2 \lesssim  \|\epol P_w(q'-q_0)\|^2_2/\lambda\eta'_{q}+\{1+1/\lambda+ \lambda\}\eta'_{q}+\tilde\epsilon_{n}+    \|P_w(\epol q_0-\epol q')\|_2.
\end{align*}
\end{proof}

\begin{proof}[Proof of \cref{cor: vc_nonpara}]
For a function $f(O)$, we denote its empirical $L^2$-norm as $\|f\|_{n,2}=\{1/n\sum_{i=1}^n f(O_i)^2\}^{1/2}$ and $L^{\infty}$-norm as $\|f(O)\|_{n,\infty}=\max_{1\leq i\leq n} |f(O_i)|$. 

Note that 
\begin{align*}
    & \log \Ncal(t, (\Hbbb-h_0)\Qbbb', \|\cdot\|_{n,\infty}) \\
    &\leq \log \Ncal(t, \Hbbb\Qbbb', \|\cdot\|_{n,\infty})+\log \Ncal(t, h_0\Qbbb', \|\cdot\|_{n,\infty})\\
    &\leq \log \Ncal(t/\{2\|\Hbbb\|_{\infty}\}, \Qbbb', \|\cdot\|_{n,\infty})+\log \Ncal(t/\{2\|\Qbbb'\|_{\infty}\}, \Hbbb, \|\cdot\|_{n,\infty})+\log \Ncal(t/\|\Hbbb\|_{\infty}, \Qbbb', \|\cdot\|_{n,\infty}). 
\end{align*}
It follows that  
\begin{align*}
    & \int_{0}^{\eta}\sqrt{\frac{\log \Ncal(t, (\Hbbb-h_0)\Qbbb', \|\cdot\|_{n,2}) }{n}}\rd(t)\\
    &\leq \int_{0}^{\eta}\sqrt{\frac{\log \Ncal(t, (\Hbbb-h_0)\Qbbb', \|\cdot\|_{n,\infty}) }{n}}\rd(t) \tag{$\|f\|_{n,2}\leq \|f\|_{n,\infty}$ for any $f$}\\
    &\leq \int_{0}^{\eta}\sqrt{\frac{2\log \Ncal(t/2\|\Hbbb\|_{\infty}, \Qbbb', \|\cdot\|_{n,\infty}) }{n}}+\sqrt{\frac{\log \Ncal(t/2\|\Qbbb'\|_{\infty}, \Hbbb, \|\cdot\|_{n,\infty}) }{n}} \rd(t) \\
    &\leq \int_{0}^{\eta}\sqrt{\frac{2\log \Ncal(t/2\|\Hbbb\|_{\infty}, \Qbbb', \sqrt{n}\|\cdot\|_{n,2}) }{n}}+\sqrt{\frac{\log \Ncal(t/2\|\Qbbb'\|_{\infty}, \Hbbb, \sqrt{n}\|\cdot\|_{n,2}) }{n}} \rd(t). \tag{$\|f\|_{n,\infty}\leq \|f\|_{n,2}\sqrt{n}$ for any $f$  } \\
    &=O\left(\frac{\log n}{\sqrt{n}}\int_{0}^{\eta}\sqrt{\max(V(\Hbbb),V(\Qbbb')\log(1/t)}\,\rd(t) \right) \tag{\cref{lem: vc}}\\
    &=O\left(\sqrt{\max(V(\Hbbb),V(\Qbbb')}\eta \log(1/\eta)\frac{\log n}{\sqrt{n}}\right). 
\end{align*}
Then, the critical inequality in \cref{lem:critical_basic} becomes 
\begin{align*}
    O\left(\sqrt{\max(V(\Hbbb),V(\Qbbb)}\eta \log(1/\eta)\frac{\log n}{\sqrt{n}}\right)\leq \eta^2. 
\end{align*}
This is satisfied with $$\eta_h=O\left(\sqrt{\max(V(\Hbbb),V(\Qbbb)}\frac{\log n}{\sqrt{n}}\log\left(\sqrt{\max(V(\Hbbb),V(\Qbbb)}\frac{\log n}{\sqrt{n}}\right)\right) . $$

According to \cref{lem:critical_basic}, the $\eta_h$ above upper bounds the critical radius of $\mathcal{G}_h$. By a similar calculation, it can be shown that this $\eta_h$ can also upper bound the critical radius of $\Qbbb'$. Plugging it  into \cref{thm:w_function_easy} then proves the conclusion of \cref{cor: vc_nonpara}.
\end{proof}

\begin{proof}[Proof for \cref{cor: nonpara}]
Again, we have 
\begin{align*}
    & \log \Ncal(t, (\Hbbb-h_0)\Qbbb', \|\cdot\|_{n,\infty}) \\
    &\leq \log \Ncal(t, \Hbbb\Qbbb', \|\cdot\|_{n,\infty})+\log \Ncal(t, h_0\Qbbb', \|\cdot\|_{n,\infty})\\
    &\leq 2\log \Ncal(t/\{2\|\Hbbb\|_{\infty}\}, \Qbbb', \|\cdot\|_{n,\infty})+\log \Ncal(t/\{2\|\Qbbb'\|_{\infty}\}, \Hbbb, \|\cdot\|_{n,\infty}). 
\end{align*}
and 
\begin{align*}
   & \int_{0}^{\eta}\sqrt{\frac{\log \Ncal(t, (\Hbbb-h_0)\Qbbb', \|\cdot\|_{n,2}) }{n}}\, \rd(t)\\
   &\leq \int_{0}^{\eta}\sqrt{\frac{\log \Ncal(t, (\Hbbb-h_0)\Qbbb', \|\cdot\|_{\infty}) }{n}}\, \rd(t)\\
    &\leq \int_{0}^{\eta}\sqrt{\frac{2\log \Ncal(t/2\|\Hbbb\|_{\infty}, \Qbbb', \|\cdot\|_{\infty}) }{n}}+\sqrt{\frac{\log \Ncal(t/2\|\Qbbb'\|_{\infty}, \Hbbb, \|\cdot\|_{\infty}) }{n}}\, \rd(t). \\
    &= \begin{cases}
    O(n^{-1/2}\eta^{1-\frac{1}{2}\beta }),\,(\beta \ge 2) \\
    O(n^{-1/2}\log(\eta)), (\beta =2)\\
    O(n^{-1/2}\eta^{-\beta +2}),(\beta \le 2). 
    \end{cases}
\end{align*}
Then we solve the following inequalities: 
\begin{align*}
    \begin{cases}
    n^{-1/2}\eta^{1-\frac{1}{2}\beta }&\leq \eta^2,\,(\beta \ge 2) \\
       n^{-1/2}\log(\eta)&\leq \eta^2, (\beta =2)\\
       n^{-1/2}\eta^{-\beta +2}&\leq \eta^2,(\beta \le 2). 
    \end{cases}
\end{align*}
which gives 
\begin{align*}
\eta_h = \begin{cases}  O(n^{-1/(2+\beta)})&\quad\beta<2 \\  O(n^{-1/4}\log n)&\quad\beta=2 \\  O(n^{-1/(2\beta)})&\quad\beta>2 \end{cases}
\end{align*}
According to \cref{lem:critical_basic}, the $\eta_h$ above upper bounds the critical radius of $\mathcal{G}_h$. By a similar calculation, it can be shown that this $\eta_h$ can also upper bound the critical radius of $\Qbbb'$. Plugging it  into \cref{thm:w_function_easy} then proves the conclusion of \cref{cor: vc_nonpara}.
\end{proof}

\subsection{Proofs for \cref{sec: both}}\label{sec: proof-sec-stabilizer-dr}

\begin{proof}[Proof of \cref{thm:dr2}]
We denote 
\begin{align*}
    \eta'_{h} = \eta_{h}+\sqrt{1+\log(1/\delta)/n},\,\eta'_{q} = \eta_{q}+\sqrt{1+\log(1/\delta)/n}. 
\end{align*}
Recall that we use sample splitting with two data subsamples $\Dcal_1$ and $\Dcal_0$, and the estimator is 
\begin{align*}
    \hat J_{\DR} = \frac{1}{2}\E_{n_1}[\tilde\phi_{\DR}(O; \hat q^{(0)},\hat h^{(0)})]+  \frac{1}{2}\E_{n_0}[\tilde\phi_{\DR}(O; \hat q^{(1)},\hat h^{(1)})]. 
\end{align*}
where $\E_{n_1}[\cdot]$ is the empirical average over $\Dcal_1$, $\E_{n_0}[\cdot]$ is the empirical average over $\Dcal_0$, and 
\begin{align*}
    \tilde\phi_{\DR}(O; h,q)=\epol(A\mid X)q(Z,A, X)\{Y-h(W, A, X)\}+(\Tcal h)(W, X). 
\end{align*}
We take arbitrary elements $h_0,q_0$ s.t. $h_0\in \Hbbb^{\obs}_0,q_0\in \Qbbb^{\obs}_0$ and $J = \Eb{\tilde\phi_{\DR}(O; q_0,h_0)}$ and denote $\hat h = \hat h^{\prns{0}}$, $\hat q = \hat q^{\prns{0}}$. Then, the bias is decomposed into the three terms: 
\begin{align}
    \E_{n_1}[\tilde\phi_{\DR}(O; \hat q,\hat h)] -J
        &=(\E_{n_1}-\E)[ \tilde\phi_{\DR}(O; \hat q,\hat h)- \tilde\phi_{\DR}(O; q_0,h_0)]+\E[\tilde\phi_{\DR}(O; \hat q,\hat h)- \tilde\phi_{\DR}(O;  q_0,h_0)]\nonumber \\
        &+\E_{n_1}[ \tilde\phi_{\DR}(O; q_0,h_0)]-J. \label{eq: three3}
\end{align}
According to \cref{thm:w_function_easy,thm:q_function_easy}, we have that with probability at least $1-\delta$,
\begin{align*}
\|P_z(\hat h-h_0)\|_2\leq O(\eta'_{h}), ~~ \|P_w(\hat q-q_0)\|_2\leq O(\eta'_{q}).
\end{align*}
Given this event, \cref{eq:dr_property1} implies that 
\begin{align*}
&\|\E[\hat h(W, A, X)-Y \mid A,U,X]\|_2= O(\tau^{\Hbbb}_{1,n}\eta'_{h}), \\
&\|\E[\epol(A \mid X)\{\hat q(Z, A, X)-1/f(A \mid X,W)\}\mid A, U, X]\|_2= O(\tau^{\Qbbb}_{1,n}\eta'_{q}).
\end{align*}
In the rest of the proof, we always condition on this event. Now we bound each term in \eqref{eq: three3} respectively. 
\paragraph{First Term} Note that $\| \tilde\phi_{\DR}\prns{O; {h, q}}\|_{\infty} \le \|\pi\Qbbb\|_{\infty}\prns{\|Y\|_{\infty} + \|\Hbbb\|_{\infty}} + \|\Hbbb\|_{\infty}$. So by Bernstein inequality, the first term in \eqref{eq: three3} is $O(\sqrt{\log(1/\delta) /n_1})$ with probability at least $1-\delta$.

\paragraph{Second Term}
To bound the second term in \eqref{eq: three3}, we note that 
\begin{align*}
     &\E[ \tilde\phi_{\DR}(O; \hat q,\hat h)- \tilde\phi_{\DR}(O; q_0,h_0)|\Dcal_0]  \\
     &=\E[\epol (\hat q-q_0)(Y-h_0)|\Dcal_0]+\E[\epol q_0\{-\hat h+h_0\}+\Tcal (\hat h-h_0) |\Dcal_0]+\E[\epol \{\hat q-q_0\}\{h_0-\hat h\} |\Dcal_0] \\
    &=\E[\epol \{\hat q-q_0\}\{h_0-\hat h\} |\Dcal_0]  \\
       &=\E[\E[\epol\{\hat q-q_0\} \mid A, U, X]\E[\{h_0-\hat h\} \mid A, U, X] |\Dcal_0]  \tag{$Z\perp W \mid A,U, X$ }\\
     &\leq \|\E[\hat h(W, A, X)-Y \mid A,U,X]\|_2\|\E[\epol(A\mid X)\{\hat q(Z, A,X)-1/f(A \mid X, W)\} \mid A,U,X]\|_2 \tag{CS inequality }\\
     &=O( \tau^{\Hbbb}_{1,n}\tau^{\Qbbb}_{1,n}\eta'_{h}\eta'_{q} ). 
\end{align*}

\paragraph{Third Term} Again, we can use Bernstein inequality to bound the third term by $O(\sqrt{\log(1/\delta) /n_1})$ with probability at least $1-\delta$.

\paragraph{Combining all terms}
The above proves that with probability at least $1-\delta$, we have 
\begin{align*}
|\E_{n_1}[\tilde\phi_{\DR}(O; \hat q^{(0)},\hat h^{(0)})] -J|  =O\prns{\tau^\Hbbb_{1}\tau^\Qbbb_{1}\eta_{h}\eta_{q}+\sqrt{\{1+\log(1/\delta)\}/n}}.
\end{align*}
Similarly, we can prove the same bound for $|\E_{n_0}[\tilde\phi_{\DR}(O; \hat q^{(1)},\hat h^{(1)})] -J|$. Combining these two proves \cref{eq: rate-dr-stabilizer}.
\end{proof}

\begin{proof}[Proof of \cref{thm:dr}]
Again, according to \cref{thm:w_function_easy,thm:q_function_easy}, we have that with probability at least $1-2\delta$,
\begin{align*}
\|P_z(\hat h-h_0)\|_2\leq O(\eta'_{h}), ~~ \|P_w(\hat q-q_0)\|_2\leq O(\eta'_{q}).
\end{align*}
Given this event, the assumption \eqref{eq:dr_property2} implies that 
\begin{align*}
 \|\hat h-h_0\|_2\leq O(\tau^{\Hbbb}_{2,n}\eta'_{h}) ,\, \|\hat q-q_0\|_2\leq O(\tau^{\Qbbb}_{2,n}\eta'_{q}) .
\end{align*}
We again condition on this event in the rest of the proof, and analyze each term in \cref{eq: three3} respectively. 

\paragraph{First Term.} To bound the first term in \cref{eq: three3}, we first note that 
\begin{align}
       & (\E_{n_1}-\E)[ \phi_{\DR}(\hat q,\hat h)- \phi_{\DR}(q_0,h_0)|\Dcal_0] \label{eq:stochastic-equicont} \\
       &=(\E_{n_1}-\E)[\epol (\hat q-q_0)(Y-h_0)|\Dcal_0]  \label{eq:first1}\\
       &+(\E_{n_1}-\E)[\epol q_0\{-\hat h+h_0\}+\Tcal (\hat h-h_0) |\Dcal_0] \label{eq:first2} \\
       &+(\E_{n_1}-\E)[\epol \{\hat q-q_0\}\{h_0-\hat h\} |\Dcal_0].\label{eq:first3} 
\end{align}
From Bernstein's inequality, with probability $1-\delta$, \cref{eq:first1} is 
\begin{align*}
   &| (\E_{n_1}-\E)[\epol(\hat q-q_0)(Y-h_0)|\Dcal_0]  |\\
   &\lesssim \sqrt{\frac{2\E[\{\epol(\hat q-q_0)\}^2\{Y-h_0\}^2|\Dcal_0]\log(1/\delta) }{n_1}}+\frac{2\|\epol(\hat q-q_0)(Y-h_0) \|_{\infty}\log(1/\delta)}{3n_1}\\
   & = O\prns{\sqrt{\frac{(\tau^{\Qbbb}_{2,n}\eta'_{q})^2\log(1/\delta)}{n_1}} + \frac{\log \prns{1/\delta}}{n_1}},
\end{align*}
and similarly 
\cref{eq:first2,eq:first3} are  
\begin{align*}
&|(\E_{n_1}-\E)[q_0\{-\hat h+h_0\}+\Tcal (\hat h-h_0) |\Dcal_0] | = O\prns{\sqrt{\frac{(\tau^{\Hbbb}_{2,n}\eta'_{h})^2\log(1/\delta)}{n_1}} + \frac{\log \prns{1/\delta}}{n_1}} \\
&(\E_{n_1}-\E)[\pi\epol \{\hat q-q_0\}\{h_0-\hat h\} |\Dcal_0] = O\prns{\sqrt{\frac{(\tau^{\Hbbb}_{2,n}\eta'_{h})^2\log(1/\delta)}{n_1}} + \frac{\log \prns{1/\delta}}{n_1}}.
\end{align*}
It follows that with probability at least $1-3\delta$, 
\begin{align*}
| (\E_{n_1}-\E)[\epol(\hat q-q_0)(Y-h_0)|\Dcal_0]  | = O\prns{\sqrt{\frac{\log\prns{1/\delta}}{n}}\prns{\tau^{\Hbbb}_{2,n}\eta'_{h} + \tau^{\Qbbb}_{2,n}\eta'_{q}} + \frac{\log \prns{1/\delta}}{n}}
\end{align*}
\paragraph{Second Term.} To bound the second term in \eqref{eq: three3}, we note that  
\begin{align*}
     &\E[ \phi_{\DR}(\hat q,\hat h)- \phi_{\DR}(q_0,h_0)|\Dcal_0]\\
     &=\E[\epol (\hat q-q_0)(y-h_0)|\Dcal_0] +\E[\epol q_0\{-\hat h+h_0\}+\Tcal (\hat h-h_0) |\Dcal_0]+\E[\epol \{\hat q-q_0\}\{h_0-\hat h\} |\Dcal_0] \\
    &=\E[\epol \{\hat q-q_0\}\{h_0-\hat h\} |\Dcal_0].
\end{align*}
Thus we have 
\begin{align*}
|\E[ \phi_{\DR}(\hat q,\hat h)- \phi_{\DR}(q_0,h_0)|\Dcal_0]| = |\E[\epol \{\hat q-q_0\}\E[\{h_0-\hat h\} \mid Z, A, X] |\Dcal_0]| \le \|\{q_0-\hat q\}\epol\|_2\|P_z(h_0-\hat h)\|_2,
\end{align*}
and 
\begin{align*}
|\E[ \phi_{\DR}(\hat q,\hat h)- \phi_{\DR}(q_0,h_0)|\Dcal_0]| = |\E[\Eb{\epol \{\hat q-q_0\}\mid W, A, X}\{h_0-\hat h\} |\Dcal_0]| \le \|\epol P_w\{q_0-\hat q\}\|_2\|h_0-\hat h\|_2.
\end{align*}
It follows that 
\begin{align*}
\E[ \phi_{\DR}(\hat q,\hat h)- \phi_{\DR}(q_0,h_0)|\Dcal_0] = O\prns{\min(\tau^{\Hbbb}_{2,n},\tau^{\Qbbb}_{2,n})\eta'_{h}\eta'_{q}}.
\end{align*}

\paragraph{Combining all terms.}
The above proves that with probability at least $1-\tilde c\delta$ for a universal positive constant $\tilde c$, we have 
\begin{align*}
&|\E_{n_1}[\tilde\phi_{\DR}(O; \hat q^{(0)},\hat h^{(0)})] -\E_{n_1}[ \tilde\phi_{\DR}(O; q_0,h_0)]|  \\
&\qquad\qquad =O\prns{\sqrt{\frac{\log\prns{1/\delta}}{n}}\prns{\tau^{\Hbbb}_{2,n}\eta'_{h} + \tau^{\Qbbb}_{2,n}\eta'_{q}} + \frac{\log \prns{1/\delta}}{n}+\min(\tau^{\Hbbb}_{2,n},\tau^{\Qbbb}_{2,n})\eta'_{h}\eta'_{q}}.
\end{align*}
Similarly, we can show 
\begin{align*}
&|\E_{n_0}[\tilde\phi_{\DR}(O; \hat q^{(1)},\hat h^{(1)})] -\E_{n_1}[ \tilde\phi_{\DR}(O; q_0,h_0)]|  \\
&\qquad\qquad=O\prns{\sqrt{\frac{\log\prns{1/\delta}}{n}}\prns{\tau^{\Hbbb}_{2,n}\eta'_{h} + \tau^{\Qbbb}_{2,n}\eta'_{q}} + \frac{\log \prns{1/\delta}}{n}+\min(\tau^{\Hbbb}_{2,n},\tau^{\Qbbb}_{2,n})\eta'_{h}\eta'_{q}}.
\end{align*}
Given that $\min(\tau^\Hbbb_{2},\tau^\Qbbb_{2})\eta_{q}\eta_{h} = o(n^{-1/2})$,  $\tau^\Hbbb_{2}\eta_{h} = o(1)$ and $\tau^\Qbbb_{2}\eta_{q} = o(1)$, we have 
\begin{align*}
\hat J - J = \E_{n}[\tilde\phi_{\DR}(O;  q_0, h_0)] - J + o_p\prns{n^{-1/2}}.
\end{align*}
By central limiting theorem, we have 
\begin{align*}
\hat J_{\DR} - J \overset{d}{\to} \mathcal N\prns{0, \Eb{\tilde\phi_{\DR}^2(O;h_0,q_0)}}. 
\end{align*}

\end{proof}

\subsection{Proofs for \cref{sec: comparison}}
\begin{proof}[Proof for \cref{lemma: valid-bridge}]
According to \cref{lemma: observed-bridge}, we have that  $\Hbbb_0 \subseteq \Hbbb_0^{\obs}$. For any $h_0 \in \Hbbb_0^{\obs} \subseteq \Hbbb$,
\begin{align*}
    \Eb{Y-h_0(W, A,X) \mid Z, A, X} = \Eb{\Eb{Y-h_0(W, A,X)\mid A, U, X} \mid Z, A, X} =0.
\end{align*}
By \cref{assump: full-completeness} condition \eqref{assump: completeness-2}, we have that $\Eb{Y-h_0(W, A,X)\mid A, U, X} = 0$, \ie, $h_0\in \Hbbb_0$.  It follows that $\Hbbb_0^{\obs} = \Hbbb_0$. Similarly, we can prove that $\Qbbb_0^{\obs} = \Qbbb_0$ under \cref{assump: full-completeness} condition \eqref{assump: completeness-1}. 
\end{proof}

\subsection{Proofs for \cref{sec: completeness}}

\begin{proof}[Proof for \cref{lemma: unique-bridge}]
Consider two bridge functions $h_0, h_0' \in \Hbbb_0$:
\[
\Eb{Y - h_0(W, A, X) \mid U, A, X} = \Eb{Y - h_0'(W, A, X) \mid U, A, X} = 0.
\]
Thus we have 
\[
\Eb{h_0(W, A, X) - h_0'(W, A, X) \mid U, A, X} = 0.
\]
Then \cref{assump: full-completeness-2} condition \eqref{full-completeness-1} implies that $h_0(W, A, X) = h_0'(W, A, X)$. Therefore, $\Hbbb_0$ is at most a singleton. We can similarly prove that $\Qbbb_0$ is at most a singleton. 
\end{proof}

\begin{proof}[Proof for \cref{lemma:completeness-relation}]
Under \cref{asm:whole_assm}, we have that \begin{align*}
    &\Eb{g(W, A, X) \mid Z, A, X}  = \Eb{\Eb{g(W, A, X) \mid U, A, X} \mid Z, A, X}, \\
    &\Eb{g(Z, A, X) \mid W, A, X}  = \Eb{\Eb{g(Z, A, X) \mid U, A, X} \mid W, A, X}.
\end{align*}
We first prove statement \eqref{completeness-relation-1}. According to \cref{assump: full-completeness} condition \eqref{full-completeness-2}, $\Eb{g(W, A, X) \mid Z, A, X} = 0$ if and only if $\Eb{g(W, A, X) \mid U, A, X} = 0$. \cref{assump: full-completeness-2} condition \eqref{full-completeness-1} further ensures that this holds if and only if $g(W, A, X) = 0$. In other words, \cref{assump: full-completeness} conditions \eqref{full-completeness-2} and \cref{assump: full-completeness-2} condition \eqref{full-completeness-1} are sufficient for \cref{assump: obs-completeness} condition \eqref{obs-completeness-1}. Similarly, we can show that \cref{assump: full-completeness} condition \eqref{full-completeness-1} and \cref{assump: full-completeness-2} condition \eqref{full-completeness-2} are sufficient for \cref{assump: obs-completeness} condition \eqref{obs-completeness-2}.

Next, we prove statement \eqref{completeness-relation-2}. If $\Eb{g(W, A, X) \mid U, A, X} = 0$, then $\Eb{g(W, A, X) \mid Z, A, X} = 0$ as well. By \cref{assump: obs-completeness} condition \eqref{obs-completeness-1}, this holds if and only if $g(W, A, X) = 0$. Therefore, \cref{assump: obs-completeness} condition \eqref{obs-completeness-1} is sufficient for \cref{assump: full-completeness-2} condition \eqref{full-completeness-1}. Similarly, we can prove that \cref{assump: obs-completeness} condition \eqref{obs-completeness-2} is sufficient for \cref{assump: full-completeness-2} condition \eqref{full-completeness-2}.
\end{proof}

\begin{proof}[Proof for \cref{lemma: completeness-existence-W}]
We need to prove that the following equation of the first kind is solvable:
\[
[K_{W \mid a, x}h](a, u, x) = k_0(a, u,  x), ~~ \text{a.e. } u, a, x \text{ w.r.t } \mathbb{P}.
\]
Thus we only need to verify the assumptions in the Picard's Theorem in \cref{thm: Picard} with $K = K_{W\mid a, x}$ and $\phi = k_0$. Note that condition $2$ in  \cref{thm: Picard} is satisfied by our asserted assumptions. Thus we only need to show $k_0 \in \mathcal{N}\prns{K^*_{W \mid a, x}}^\perp$.

Since $(K^*_{W \mid a, x} g)(w, a, x) = \Eb{g(U, a, x) \mid W = w, A = a, X = x}$. By \cref{assump: full-completeness} condition 1,  $\mathcal{N}\prns{K^*_{W \mid a, x}}= \braces{0}$, which means that $\mathcal{N}\prns{K^*_{W \mid a, x}}^\perp = \operatorname{dom} K^*_{W \mid a, x} = L_2(U \mid A = a, X = x)$. Therefore, $k_0 \in \mathcal{N}\prns{K^*_{W \mid a, x}}^\perp$. 
\end{proof}

\begin{proof}[Proof for \cref{lemma: completeness-existence-Z}]
The proof is completely analogous to the proof for \cref{lemma: completeness-existence-W}. 
\end{proof}

\subsection{Proofs for \cref{sec: efficiency-bound}}

Before proving \Cref{thm: efficiency}, we first introduce a generalized implicit function theorem below.
\begin{lemma}[Implicit Function Theorem, Theorem 7.13-1 in \cite{ciarlet2013linear}]\label{lemma: implicit}
Let $\mathcal{X}$ and $\mathcal{Y}$ be Banach spaces, $O \subset \mathcal{X} \times \mathcal{Y}$ be an open neighborhood containing $(\bar{x}, \bar{y})$. Consider a mapping $F$ satisfies the following conditions:
\begin{enumerate}
\item $F(\bar{x}, \bar{y})=0$.
\item For any $(x, y) \in O$, $F$ is continuous at $(x, y)$, and $\frac{\partial F}{\partial y}(x, y)$ exists and  is a continuous linear mapping over $O$.  
\item The linear mapping $\frac{\partial F}{\partial y}(\bar{x}, \bar{y})$ is a bijection. 
\item The mapping $F$ is differentiable at $(\bar{x}, \bar{y})$. 
\end{enumerate}
Then there exists an open neighborhood $U$ of $\bar{x}$ in $\mathcal{X}$, an open neighborhood $V$ of $\bar{y}$ in $\mathcal{Y}$ and a continuous implicit function $f: U \mapsto V$ such that $U \times V \subseteq O$, 
\begin{align*}
F(x, f(x))= 0, ~~ \text{for any } x \in U,
\end{align*}
and $f$ is differentiable at $\bar{x}$ with derivative 
\begin{align*}
f'(\bar{x}) = -\prns{\frac{\partial F}{\partial y}(\bar{x}, \bar{y})}^{-1}\frac{\partial F}{\partial x}(\bar{x}, \bar{y}).
\end{align*}
\end{lemma}

\begin{proof}[Proof for \cref{thm: efficiency}]
\textbf{Step I: deriving the tangent space.} 
First, consider regular parametric submodel $\mathcal P_{t} = \braces{f_t(y, w, z, a, x): t \in \R s}$ with $f_0(y, w, z, a, x)$ equals the true density $f(y, w, z, a, x)$ (with respect to an appropriate dominating measure). The associated score function is denoted as $S(y, w, z, a, x) = \partial_t \log f_t(y, w, z, a, x) \vert_{t = 0}$. The expectation w.r.t the distribution $f_t(y, w, z, a, x)$ is denoted by $\E_t$.
We can similarly denote the score functions for any component of this parametric submodel. For example, the score function for $f_t(y, w \mid z, a, x)$ is denoted as $S(y, w \mid z, a, x) = \partial_t \log f_t(y, w \mid z, a, x) \vert_{t = 0}$. 
It is easy to show that 
\[
S(Y, W, Z, A, X) = S\prns{Z, A, X} + S\prns{Y, W \mid Z, A, X}, \Eb{S(Z, A, X)} = 0, \Eb{S(Y, W\mid Z, A, X)\mid Z, A, X} = 0.
\]

Let $h_t$ be a curve such that $h_t \vert_{t = 0} = h_0$ and $\partial_t h_t\vert_{t = 0}$ exists, and  
\begin{align}
    \E_t\bracks{Y - h_t(W, A, X) \mid Z, A, X} = 0 \label{eq: restrict1}.
\end{align}

\cref{eq: restrict1} implies that 
\begin{align*}
    \partial_t \E_t\bracks{Y - h_t(W, A, X) \mid Z, A, X}\vert_{t = 0} = 0,
\end{align*}
which in turn implies that 
\begin{align}\label{eq: h-derivative}
  \Eb{\prns{Y - h_0(W, A, X)}S(W, Y\mid Z, A, X) \mid Z, A, X} = \Eb{\partial_t h_t(W, A, X)\vert_{t = 0} \mid Z, A, X}.
\end{align}
This means that $S(W, Y \mid Z, A, X)$ must satisfy that $\Eb{\prns{Y - h_0(W, A, X)}S(W, Y\mid Z, A, X) \mid Z, A, X} \in \text{Range}\prns{P_z}$. 
Therefore, all score vectors under $\mathcal M_{np}$ must lie in the  set $\text{Closure}\prns{\mathcal S}$ where:
\begin{align}
    \mathcal S =\bigg \{
    &S(Y, W, Z, A, X) = S\prns{Z, A, X} + S\prns{Y, W \mid Z, A, X}: \nonumber \\
    &S\prns{Z, A, X} \in L_2(Z, A, X), S\prns{Y, W \mid Z, A, X} \in L_2(Y, W \mid Z, A, X), \nonumber \\
    &\Eb{S(Z, A, X)} = 0, \Eb{S(Y, W\mid Z, A, X)\mid Z, A, X} = 0, \nonumber \\
    &\Eb{\prns{Y- h_0(W, A, X)}S(Y, W\mid Z,A,X) \mid Z, A, X} \in \text{Range}\prns{P_z}\bigg\}. \label{eq: tangent-set}
\end{align}
Now we show that for any $S(Y, W, Z, A, X) \in \mathcal{S}$, we can find a parametric submodel $f_t$ whose score function is $S(Y, W, Z, A, X) = S\prns{Z, A, X} + S\prns{Y, W \mid Z, A, X}$ such that \Cref{eq: restrict1} holds for a certain $h_t$ for $t$ near $0$ and $\partial_t h_t(W, A, X)\vert_{t}$ exists. 
We consider the parametric submodel of the following form:  
\begin{align*}
f_t(z, a, x) = f(z, a, x)\prns{1 + tS\prns{z, a, x}}, ~~ f_t(y, w \mid z, a, x) = f(y, w \mid z, a, x)\prns{1 + S(y, w \mid z, a, x)},
\end{align*}
where $t \in \R{}$ is small enough such that ${1 + tS\prns{z, a, x}} \ge 0$ and ${1 + S(y, w \mid z, a, x)} \ge 0$ for any $y, w, z, a, x$ so that $f_t$ defined above is a valid density function. It is easy to check that $\partial_t \log f_t(z, a, x)\vert_{t = 0} = S(z, a, x)$ and $\partial_t \log f_t(y, w \mid z, a, x)\vert_{t = 0} = S(y, w \mid z, a, x)$. 

Consider the mapping $F(t, h) = P_{z, t} h - \E_t\bracks{Y \mid Z, A, X}$ for $t \in \R{}$ and $h \in {L}_2(W, A, X)$, where $P_{z, t}: {L}_2(W, A, X) \mapsto {L}_2(Z, A, X)$ is the conditional expectation operator defined as $$[P_{z, t} h](Z, A, X) = \E_t\bracks{h(W, A, X) \mid Z, A, X}.$$ 
Note that $F(0, h_0) = 0$, $\partial_h F(0, h_0) = P_z$ and $\partial_t F(0, h_0) = -\Eb{(Y - h_0(W, A, X))S(Y,W,Z,A,X)\mid Z,A,X}$. Since $P_z$ is a bijective linear operator, \Cref{lemma: implicit} implies that for all $t$ close to $0$, there exists $h_t \in L_2(W, A, X)$ such that $F(t, h_t) = 0$ and $\partial_t h_t\vert_{t = 0}$ exists with 
\begin{align*}
\partial_t h_t(W, A, X)\vert_{t = 0} = - \prns{\partial_h F(0, h_0)}^{-1}\partial_t F(0, h_0).
\end{align*}
This is equivalent to $\partial_t h_t\vert_{t = 0}$ satisfying \Cref{eq: h-derivative}. 

Therefore, we have shown that the tangent space for the model $\mathcal{M}_{np}$ is $\text{Closure}\prns{\mathcal{S}}$. 

\textbf{Step II: deriving a preliminary influence function.}
Denote the target parameter under distribution $f_t(y, w, z, a, x)$ as $J_t = \E_t\bracks{\mathcal T h_t(W, X)}$. We need to derive 
\begin{align*}
    \partial_t J_t \vert_{t = 0} = \Eb{\mathcal T h_0(W, X)S(W, X)} + \Eb{\partial_t \mathcal T h_t(W, X)\vert_{t = 0}}
\end{align*}

Note that 
\begin{align*}
    \Eb{\partial_t \mathcal T h_t(W, X)\vert_{t = 0}} 
        &= \Eb{\epol(A\mid X)q_0(Z, A, X)\Eb{\partial_t h_t(W, A, X)\vert_{t = 0} \mid Z, A, X}}  \\
        &= \Eb{\epol(A\mid X)q_0(Z, A, X)\prns{Y - h_0(W, A, X)}S(W, Y\mid Z, A, X)}  \\
        &= \Eb{\epol(A\mid X)q_0(Z, A, X)\prns{Y - h_0(W, A, X)}S(W, Y\mid Z, A, X)}  \\
        &+ \Eb{\epol(A\mid X)q_0(Z, A, X)\prns{Y - h_0(W, A, X)}S(Z, A, X)}  \\
        &= \Eb{\epol(A\mid X)q_0(Z, A, X)\prns{Y - h_0(W, A, X)}S(W, Y, Z, A, X)}.
\end{align*}
Here the second equality follows from \cref{eq: h-derivative}, and the third equality follows from the fact that $$\Eb{Y - h_0(W, A, X) \mid Z, A, X} = 0.$$ 

Moreover, 
\begin{align*}
    \Eb{\mathcal T h_0(W, X)S(W, X)} 
        &= \Eb{\mathcal T h_0(W, X)S(W, X)} + \Eb{\mathcal T h_0(W, X)S(Y, Z, A \mid W, X)} \\
        &= \Eb{\mathcal T h_0(W, X)S(Y, W, Z, A, X)} \\
        &= \Eb{\prns{\mathcal T h_0(W, X) - J}S(Y, W, Z, A, X)}.
\end{align*}
Here the first equality follows from the fact that $\Eb{S(Y, Z, A \mid W, X) \mid W, X} = 0$, and the last equality follows from $\Eb{S(Y, Z, A, W, X) \mid W, X} = 0$.

Therefore, we have that 
\begin{align*}
    \partial_t J_t \vert_{t = 0} = \Eb{\prns{\epol(A\mid X) q_0(Z, A, X)[Y-h_0(W, A, X)]+\mathcal Th_0(W, X) - J}S(Y, W, Z, A, X)}.
\end{align*}
This means that $\epol(A\mid X) q_0(Z, A, X)[Y-h_0(W, A, X)]+\mathcal Th_0(W, X) - J$ is a valid influence function for $J$ under the model $\mathcal M_{np}$. 

\textbf{Step III: verifying efficient influence function.}
Now we verify that $\epol(A\mid X) q_0(Z, A, X)[Y-h_0(W, A, X)]+\mathcal Th_0(W, X) - J$ also belongs to $\text{Closure}\prns{\mathcal S}$ so that it is also the efficient influence function for $J$ relative to the tangent space $\text{Closure}\prns{\mathcal S}$.  

First, note that we can decompose this influence function in the following way:
\begin{align*}
    \tilde{S}(Y, W, Z, A, X) 
        &\coloneqq \epol(A\mid X) q_0(Z, A, X)[Y-h_0(W, A, X)]+\mathcal Th_0(W, X) - J \\ 
        &= \tilde{S}(Z, A, X) + \tilde{S}(Y, W \mid Z, A, X),
\end{align*}
where 
\begin{align*}
    \tilde{S}(Z, A, X) 
        &= \Eb{\mathcal Th_0(W, X) - J \mid Z, A, X}, \\
    \tilde{S}(Y, W \mid Z, A, X) 
        &= \mathcal Th_0(W, X) - \Eb{\mathcal Th_0(W, X)\mid Z, A, X} + \epol(A\mid X) q_0(Z, A, X)[Y-h_0(W, A, X)].
\end{align*}
It is easy to show that  $\Eb{\tilde{S}(Z, A, X)} = 0$ and $\Eb{\tilde{S}(Y, W \mid Z, A, X) \mid Z, A, X} = 0$. Since $P_z$ is bijective, we have that 
\begin{align*}
    \Eb{\prns{Y - h_0(W, A, X)}\tilde{S}(Y, W \mid Z, A, X)\mid Z, A, X} \in \text{Closure}\prns{\text{Range}\prns{P_z}}.
\end{align*}

Therefore, 
$$
\mathrm{EIF}(J) = \epol(A\mid X) q_0(Z, A, X)[Y-h_0(W, A, X)]+\mathcal Th_0(W, X) - J \in \text{Closure}\prns{\mathcal S}, 
$$
which means that 
$\mathrm{EIF}(J)$
 is the efficient influence function of $J$  and $\mathbb{E}\left[\mathrm{EIF}^{2}(J)\right]$ is
the corresponding semiparametric efficiency bound.
\end{proof}

\subsection{Proofs for \cref{sec: minimax-stab}}
\begin{proof}[Proof for \cref{lemma: linear-est-2}]
The conclusion follows easily by simple algebra. 
\end{proof}

\begin{proof}[Proof of \cref{lem:kernel3}]
From the representer theorem, an  solution of the inner maximization in \cref{eq: est-rkhs-2-h-minimax}  should be $q^{*}(\cdot)=\sum_i \alpha_i k_z((Z_i,A_i,X_i),\cdot )$. Thus, this inner maximization problem can be reduced to  solving 
\begin{align*}
    &\max_{\alpha\in \mathbb{R}^n}~\psi^{\top}_nK_{z,n}\alpha-\alpha^{\top}(\lambda K_{z,n}+\gamma_1 I)K_{z,n}\alpha \\
    =& \max_{\tilde\alpha\in \mathbb{R}^n} \psi^{\top}_nK^{1/2}_{z,n}\tilde\alpha - \tilde\alpha^\top (\lambda K_{z,n}+\gamma_1 I)\tilde\alpha
     \tag{$\tilde \alpha = K_{z,n}^{1/2}\alpha$} \\
     =& \frac{1}{4}\psi^{\top}_nK_{z,n}^{1/2}(\lambda K_{z,n}+\gamma_1 I)^{-1}K_{z,n}^{1/2}\psi_n,
\end{align*}
where the last maximum is achieved by 
\begin{align*}
    \tilde \alpha^*_q =\frac{1}{2}(\lambda K_{z,n}+\gamma_1 I)^{-1}\psi_n.
\end{align*}

From the representer theorem, a solution of the inner maximization problem in \cref{eq: est-rkhs-2-q-minimax} should be $h^{*}(\cdot)=\sum_i \alpha_i k_w((X_i,A_i,W_i),\cdot )$. Thus, this inner maximization problem can be reduced to solving 
\begin{align*}
  \max_{\alpha \in \mathbb{R}^n}\phi^{\top}_n K_{w1,n}\alpha-\mathbf{1}_n^\top K_{w2,n}\alpha-\alpha^{\top}(\lambda K_{w1,n}+\gamma_2 I)K_{w1,n}\alpha. 
\end{align*}
Assuming $K_{w1,n}$ is a positive definite matrix, the optimization problem above is solved by
\begin{align*}
   \tilde \alpha^*_{h}=\frac{1}{2} (\lambda K_{w1,n}+\gamma I)^{-1}(\phi_n-K^{-1}_{w1,n}K_{w2,n}\mathbf{1}_n). 
\end{align*}
The corresponding optimal value is   
\begin{align*}
    \frac{1}{4} \phi^{\top}_n K^{1/2}_{w1,n}(\lambda K_{w1,n}+\gamma I)^{-1}K^{1/2}_{w1,n}\phi_n-\frac{1}{2}\{\phi_n^{\top}(\lambda K_{w1,n}+\gamma I)^{-1} K_{w2,n}\mathbf{1}_n \}. 
\end{align*}
\end{proof}

\subsection{Proofs for \cref{sec: single-realizability}}
We prove the second statement. The first statement is similarly proved. 

We only need to prove 
\begin{align*}
    \inf_{h \in \Hcal'}\sup_{q\in \Qcal}|\E[(q-q_0)\pi(h-h_0)]|=0.
\end{align*}
Here, letting $\langle \alpha_1 , \phi(\cdot) \rangle =(q-q_0)$ and $h(\cdot)=\langle \beta_h,\psi \rangle$, 
\begin{align*}
    \E[(q-q_0)\pi(h-h_0)]=\alpha^{\top}_1\E[\tilde \phi(Z,A,X)\{\psi^{\top}(W,X,A)\beta_h-h_0\} ]. 
\end{align*}
Hence, there exists $\beta_h$ such that 
\begin{align*}
    0=\alpha^{\top}_1\E[\tilde \phi(Z,A,X)\{\psi^{\top}(W,X,A)\beta_h-h_0\} ]
\end{align*}
when $ \E[\tilde \phi(Z,A,X)\psi^{\top}(W,X,A)]$ is full column rank. This shows 
\begin{align*}
    \inf_{h \in \Hcal'}\sup_{q\in \Qcal}|\E[(q-q_0)\pi(h-h_0)]|=0.
\end{align*}

\subsection{Proofs for \cref{sec: sieve-nn}}

\begin{proof}[Proof of \cref{cor:linear_sieves1}]
We prove the result for $\hat J_{\ipw}$. The result for $\hat J_{\dm}$ can be proved analogously so is omitted here. 

Fix some element in $h_0\in \Hbbb^{\obs}_0 , \pi q_0\in \epol\Qbbb^{\obs}_0$. Then, from \cref{thm:ipw_reg_no_stab}, with $1-\delta$, we have the following for a universal constant $c > 0$: 
\begin{align*}
    |\hat J_{\ipw}-J|\leq c\bigg\{
        &\Rad(\infty;\Qbbb)+ \Rad(\infty;\Hbbb)+\Rad(\infty;\Tcal \Hbbb)+ \\
        &\inf_{h\in \Hbbb'} \sup_{q\in \Qbbb}|\E[(q-q_0)\epol(h_0-h)]|+\inf_{q\in \Qbbb} \sup_{h\in \Hbbb'}|\E[(q_0-q)\epol h ]|  +\sqrt{\frac{\log(1/\delta)}{n}}\bigg\}. 
\end{align*}
Thee first three terms are upper bounded by $O(\sqrt{k_n/n})$ since the VC dimension of $\Qbbb = \mathcal{S}_{2,n}$ and $\Hbbb' = \mathcal{S}_{1,n}$ are both  $k_n$. Besides, from the assumptions \eqref{eq: sieve-error1} and \eqref{eq: sieve-error2}, 
\begin{align*}
  & \inf_{h\in \Hbbb'} \sup_{q\in \Qbbb}|\E[(q-q_0)\epol(h_0-h)]|\leq   2\|\mathcal{S}_{2, n}\|_{\infty}\|\pi\prns{A\mid X}\|_{\infty}\inf_{h\in \Hbbb'}\E[|h_0-h|]=O(k^{-\alpha/d}_n), \\
    &\inf_{q\in \Qbbb} \sup_{h\in \Hbbb'}|\E[\epol(q_0-q) h ]| \leq 2\|\mathcal{S}_{1, n}\|_{\infty}\|\pi\prns{A\mid X}\|_{\infty}\inf_{q\in \Qbbb'}\E[|q-q_0|]=O(k^{-\alpha/d}_n).  
\end{align*}
In the end, the final error becomes 
\begin{align*}
    O(k^{-\alpha/d}_n+\sqrt{k_n/n}+\sqrt{\log(1/\delta)/n}),
\end{align*}
where the second and third term are statistical error terms, which is derived in the proof of \cref{cor: vc_class}. By setting $k_n = O(n^{\frac{d}{d + 2\alpha}})$, we obtain the optimal rate $O(n^{-\alpha/(2\alpha+d)} + \sqrt{\log(1/\delta)/n})$. 
\end{proof}

\begin{proof}[Proof of \cref{cor:linear_sieves2}]
According to \cref{thm:w_function}, for some $h'\in \Hbbb$  to be specified later and $h_0\in \Hbbb^{\obs}_0\cap \Lambda^{\alpha}([0,1]^d)$, we have that with probability $1- \delta$, 
\begin{align*}
    \|P_z(\hat h-h_0)\|_2=O\left(\eta'_{h}+\epsilon_n+\frac{\|P_z(h'-h_0)\|^2_2}{\eta'_{h}}+    \|P_z(h'-h_0)\|_2\right),
\end{align*}
where $\eta'_{h}=\eta_{h}+c_0\sqrt{\log(c_1/\delta)/n}$, and $\eta_{h}$ is the maximum of critical radii of $\Qbbb' = \mathcal{S}_{2, n}$ and $\Gcal_{h}$, and $\epsilon_n=\sup_{h\in \Hbbb}\inf_{q\in \Qbbb'}\|q-P_z(h-h')\|_2.$ 

First, from \cref{cor: vc_nonpara}, we have 
$$\eta'_{h}=O\prns{\sqrt{k_n}\frac{\log n}{\sqrt{n}}\log(\sqrt{k_n}\frac{\log n}{\sqrt{n}})+\sqrt{ (1+\log(1/\delta))/n} }.$$ 
Next, according to \cref{eq: sieve-error1}, we can find $h'$ such that $\|h_0-h'\|=O(k^{-\alpha/d}_n)$. It follows that  
\begin{align*}
    \|P_z(h'-h_0)\|\leq  \|h'-h_0\|=O(k^{-\alpha/d}_n).  
\end{align*}
Besides, $P_z(\Lambda^{\alpha}([0,1]^d)-h_0)\subset \Lambda^{\alpha}([0,1]^d)$ implies that $P_z(h-h_0) \in \Lambda^{\alpha}([0,1]^d)$ for any $h \in \Hbbb$. According to \cref{eq: sieve-error2}, for any $h \in \Hbbb$, we can find $q\in \mathbb{Q}'$ such that $\|q- P_z(h-h_0)\| = O(k^{-\alpha/d}_n)$ and thus
\begin{align*}
   \|q- P_z(h-h')\|&\leq   \|q- P_z(h-h_0)\|+    \|P_z(h'-h_0)\|= O(k^{-\alpha/d}_n).
\end{align*}
This implies that $\epsilon_n = O(k^{-\alpha/d}_n)$.

Combining all terns above, we have 
\begin{align*}
      \|P_w(h_0-\hat h)\|_2=\tilde O\prns{\sqrt{k_n/n}+k^{-\alpha/d}_n+\frac{ k^{-2\alpha/d}_n}{\sqrt{k_n/n}} + \sqrt{ (1+\log(1/\delta))/n}  }. 
\end{align*}
By setting $k_n = O(n^{\frac{d}{d + 2\alpha}})$, we can obtain the optimal rate $\tilde O(n^{-\alpha/(2\alpha+d)})$. 
\end{proof}

\begin{proof}[Proof of \cref{cor:neural_realiza}]
We prove the result for $\hat J_{\ipw}$. The result for $\hat J_{\dm}$ can be proved analogously so is omitted here. 

Fix some element in $h_0\in \Hbbb^{\obs}_0 , \pi q_0\in \epol\Qbbb^{\obs}_0$. Then, from \cref{thm:ipw_reg_no_stab}, with $1-\delta$, we have the following for a universal constant $c > 0$: 
\begin{align*}
    |\hat J_{\ipw}-J|\leq c\bigg\{
        &\Rad(\infty;\Qbbb)+ \Rad(\infty;\Hbbb)+\Rad(\infty;\Tcal \Hbbb)+ \\
        &\inf_{h\in \Hbbb'} \sup_{q\in \Qbbb}|\E[(q-q_0)\epol(h_0-h)]|+\inf_{q\in \Qbbb} \sup_{h\in \Hbbb'}|\E[(q_0-q)\epol h ]|  +\sqrt{\frac{\log(1/\delta)}{n}}\bigg\}. 
\end{align*}

By \cref{cor: nonpara,lem:covering_neural}, we can compute the covering number of neural networks and upper bound the first three terms by $\tilde O(\sqrt{\Omega L/n})$.

According to \cref{lem:yaro}, 
\begin{align*}
  & \inf_{h\in \Hbbb'} \sup_{q\in \Qbbb}|\E[(q-q_0)\epol(h_0-h)]|\leq   2\|\mathcal{F}_{2, n}\|_{\infty}\|\pi\prns{A\mid X}\|_{\infty}\inf_{h\in \Hbbb'}\E[|h_0-h|]=\tilde O(\Omega^{-\alpha/d}), \\
    &\inf_{q\in \Qbbb} \sup_{h\in \Hbbb'}|\E[\epol(q_0-q) h ]| \leq 2\|\mathcal{F}_{1, n}\|_{\infty}\|\pi\prns{A\mid X}\|_{\infty}\inf_{q\in \Qbbb'}\E[|q-q_0|]=\tilde O(\Omega^{-\alpha/d}).  
\end{align*}
Combining the results above, the total error can be bounded by  
\begin{align*}
    \tilde O(  \sqrt{\Omega L/n} +\Omega^{-\alpha/d}+\sqrt{\log(1/\delta)/n}). 
\end{align*}
By setting $L=\Theta(\log(n)),\Omega=\Theta(n^{d/(2\alpha+d)})$, we can obtain the optimal rate as $\tilde O(n^{-\alpha/(2\alpha+d})$.
\end{proof}

\begin{proof}[Proof of \cref{cor:linear_sieves3}]
According to \cref{thm:w_function}, for some $h'\in \Hbbb$  to be specified later and $h_0\in \Hbbb^{\obs}_0\cap \Lambda^{\alpha}([0,1]^d)$, we have that with probability $1- \delta$, 
\begin{align*}
    \|P_z(\hat h-h_0)\|_2=O\left(\eta'_{h}+\epsilon_n+\frac{\|P_z(h'-h_0)\|^2_2}{\eta'_{h}}+    \|P_z(h'-h_0)\|_2\right),
\end{align*}
where $\eta'_{h}=\eta_{h}+c_0\sqrt{\log(c_1/\delta)/n}$, and $\eta_{h}$ is the maximum of critical radii of $\Qbbb' = \mathcal{F}_{2, n}$ and $\Gcal_{h}$, and $\epsilon_n=\sup_{h\in \Hbbb}\inf_{q\in \Qbbb'}\|q-P_z(h-h')\|_2.$ 

First, from \cref{cor: vc_nonpara,lem:covering_neural}, we have $$\eta_{h}'=\tilde O(\sqrt{\Omega L/n} + c_0\sqrt{\log(c_1/\delta)/n}).$$ 

Next, according to \cref{lem:yaro}, we can find $h'$ such that $\|h_0-h'\|_{\infty}=\tilde O(\Omega^{-\alpha/d})$. It follows that  
\begin{align*}
    \|P_z(h'-h_0)\|\leq  \|h'-h_0\|=\tilde O(\Omega^{-\alpha/d}).  
\end{align*}
Besides, $P_z(\mathcal{S}^{\alpha}([0,1]^d)-h_0)\subset \mathcal{S}^{\alpha}([0,1]^d)$ implies that $P_z(h-h_0) \in \mathcal{S}^{\alpha}[0,1]^d)$ for any $h \in \Hbbb$. According to \cref{lem:yaro}, for any $h \in \Hbbb$, we can find $q\in \mathbb{Q}'$ such that $\|q- P_z(h-h_0)\| = \tilde O(\Omega^{-\alpha/d})$ and thus
\begin{align*}
   \|q- P_z(h-h')\|&\leq   \|q- P_z(h-h_0)\|+    \|P_z(h'-h_0)\|= \tilde O(\Omega^{-\alpha/d}).
\end{align*}
This implies that $\epsilon_n = \tilde O(\Omega^{-\alpha/d})$.

Combining all terns above, we have 
\begin{align*}
      \|P_w(h_0-\hat h)\|_2=\tilde O\prns{\sqrt{\Omega L/n}+ c_0\sqrt{\log(c_1/\delta)/n}+\Omega^{-\alpha/d}+\frac{ \Omega^{-2\alpha/d}}{\sqrt{\Omega L/n}}}. 
\end{align*}
By setting $L=\Theta(\log(n)),\Omega=\Theta(n^{d/(2\alpha+d)})$, we can obtain the optimal rate as $\tilde O(n^{-\alpha/(2\alpha+d})$.

\end{proof}

\end{document}